\documentclass[11pt,oneside]{article}

\usepackage{config_mjh_arxiv}

\usepackage[numbers]{natbib}
\usepackage{amsmath}
\usepackage{amsthm}
\theoremstyle{definition} \newtheorem{defn}{Definition}
\theoremstyle{plain} 
\theoremstyle{plain} \newtheorem{thm}[defn]{Theorem}
\theoremstyle{plain} \newtheorem{lem}[defn]{Lemma}
\theoremstyle{plain} 
\theoremstyle{remark} \newtheorem{rmk}[defn]{Remark}
\theoremstyle{remark} 

\usepackage{enumitem} %
\makeatletter
\def\namedlabel#1#2{\begingroup
    #2%
    \def\@currentlabel{#2}%
    \phantomsection\label{#1}\endgroup
}
\makeatother

\usepackage[pdfauthor={Matthew J. Holland},%
colorlinks=true,%
linkcolor=blue,%
citecolor=blue]{hyperref}
\hypersetup{pdftitle={Holland (2020): Better scalability under potentially heavy-tailed feedback}} %

\begin{document}

\title{\textbf{Better scalability under potentially\\heavy-tailed feedback}}
\author{
  Matthew J.~Holland\thanks{Please direct correspondence to \texttt{matthew-h@ar.sanken.osaka-u.ac.jp}.}\\
  Osaka University
}
\date{} %

\maketitle

\begin{abstract}
We study scalable alternatives to robust gradient descent (RGD) techniques that can be used when the losses and/or gradients can be heavy-tailed, though this will be unknown to the learner. The core technique is simple: instead of trying to robustly aggregate gradients at each step, which is costly and leads to sub-optimal dimension dependence in risk bounds, we instead focus computational effort on robustly choosing (or newly constructing) a strong candidate based on a collection of cheap stochastic sub-processes which can be run in parallel. The exact selection process depends on the convexity of the underlying objective, but in all cases, our selection technique amounts to a robust form of boosting the confidence of weak learners. In addition to formal guarantees, we also provide empirical analysis of robustness to perturbations to experimental conditions, under both sub-Gaussian and heavy-tailed data, along with applications to a variety of benchmark datasets. The overall take-away is an extensible procedure that is simple to implement, trivial to parallelize, which keeps the formal merits of RGD methods but scales much better to large learning problems.
\end{abstract}

\tableofcontents

\section{Introduction}\label{sec:intro}

Obtaining ``strong contracts'' for the performance of machine learning algorithms is difficult.\footnote{This notion was described lucidly in a keynote lecture by L.~Bottou \citep{bottou2015oral}.} Classical tasks in computer science, such as sorting integers or simple matrix operations, come with lucid worst-case guarantees. With enough resources, the job \emph{can} be done correctly and completely. In machine learning, things are less simple. Since we only have access to highly impoverished information regarding the phenomena or goal of interest, inevitably the learning task is uncertain, and any meaningful performance guarantee can only be stated with some degree of confidence, typically over the random draw of the data used for training. This uncertainty is reflected in the standard formulation of machine learning tasks as ``risk minimization'' problems \citep{vapnik1982EDBED,haussler1992a}. Here we consider risk minimization over some set of candidates $\WW \subseteq \RR^{d}$, where the \term{risk} of $w$ is defined as the expected loss to be incurred by $w$, namely
\begin{align*}
\risk_{\ddist}(w) \defeq \exx_{\ddist}\loss(w;Z) = \int_{\ZZ} \loss(w;z) \, \ddist(\dif z), \qquad w \in \WW.
\end{align*}
Here we have a loss function $\loss: \WW \times \ZZ \to \RR_{+}$, and random data $Z \sim \ddist$ takes values in a set $\ZZ$. At most, any learning algorithm will have access to $n$ data points sampled from $\ddist$, denoted $Z_{1},\ldots,Z_{n}$. Write $(Z_{1},\ldots,Z_{n}) \mapsto \what_{n}$ to denote the output of an arbitrary learning algorithm. The usual starting point for analyzing algorithm performance is the \term{estimation error} $\risk_{\ddist}(\what_{n})-\risk_{\ddist}^{\ast}$, where $\risk_{\ddist}^{\ast} \defeq \inf\{ \risk_{\ddist}(w): w \in \WW \}$, or more precisely, the distribution of this error. Since we never know much about the underlying data-generating process, typically all we can assume is that $\ddist$ belongs to some class $\PP$ of probability measures on $\ZZ$, and typical guarantees are given in the form of
\begin{align*}
\prr\left\{ \risk_{\ddist}(\what_{n})-\risk_{\ddist}^{\ast} > \varepsilon\left(n,\delta,\ddist,\WW\right)  \right\} \leq \delta, \qquad \forall \, \ddist \in \PP.
\end{align*}
Flipping the inequalities around, this says that the algorithm generating $\what_{n}$ enjoys $\varepsilon$-good performance with $(1-\delta)$-high confidence over the draw of the sample, where the error level depends on the sample size $n$, the desired confidence level $\delta$, the underlying data distribution $\ddist$, and any constraints encoded in $\WW$, not to mention the nature of loss $\loss$. Ideally, we would like formal guarantees to align as closely as possible with performance observed in the real world by machine learning practitioners. With this in mind, the following properties are important to consider.
\begin{enumerate}
\item \textbf{Transparency:} can we actually compute the output $\what_{n}$ that we study in theory?

\item \textbf{Strength:} what form do bounds on $\varepsilon(n,\delta,\ddist,\WW)$ take? How rich is the class $\PP$?

\item \textbf{Scalability:} how do computational costs scale with the above-mentioned factors?
\end{enumerate}
Balancing these three points is critical to developing guarantees for \emph{algorithms that will actually be used in practice}. If strong assumptions are made on the data distribution (i.e., $\PP$ is a ``small'' class), then most of the data any practitioner runs into will fall out of scope. If the error bound grows too quickly with $1/\delta$ or shrinks too slowly with $n$, then either the guarantees are vacuous, or the procedure is truly sub-optimal. If the procedure outputting $\what_{n}$ cannot be implemented, then we run into a gap between what we code, and what we study formally.

\paragraph{Our problem setting}

In this work, we consider the setup of \term{potentially heavy-tailed} data. More concretely, all the learner can know is that for some $m < \infty$,
\begin{align}\label{eqn:potentially_heavy}
\PP \subseteq \left\{ \ddist: \sup_{w \in \WW} \exx_{\ddist}|\loss(w;Z)|^{m} < \infty \right\},
\end{align}
where typically $m = 2$. Thus, it is unknown whether the losses (or partial derivatives, etc.) are congenial in a sub-Gaussian sense (where (\ref{eqn:potentially_heavy}) holds for all $m$), or heavy-tailed in the sense that all higher-order moments could be infinite or undefined. The goal then comes down to obtaining the strongest possible guarantees for a tractable learning algorithm, given (\ref{eqn:potentially_heavy}). We next review the related technical literature, and give an overview of our contributions.

\section{Context and contributions}\label{sec:context_contribs}

With the three properties of transparency, strength, and scalability highlighted in the previous section in mind, for the next few paragraphs we look at the characteristics of several important families of learning algorithms.

\paragraph{ERM: can scale well, but lacks robustness}

Classical learning theory is primarily centered around \term{empirical risk minimization} (ERM) \citep{vapnik1998SLT,anthony1999NNTheory}, and studies the statistical properties that hold for \emph{any} minimizer of the empirical risk, namely
\begin{align}
\what_{n} \in \argmin_{w \in \WW} \, \frac{1}{n} \sum_{i=1}^{n} \loss(w;Z_{i}).
\end{align}
Clearly, this leaves all algorithmic aspects of the problem totally abstract, and opens up the possibility for substantial gaps between the performance of ``good'' and ``bad'' ERM solutions, as studied by \citet{feldman2017a}. Furthermore, the empirical mean is sensitive to outliers, and formally speaking is sub-optimal in the sense that it cannot achieve sub-Gaussian error bounds under potentially heavy tails, while other practical procedures can; see \citet{catoni2012a} and \citet{devroye2016a} for comprehensive studies. Roughly speaking, the empirical mean cannot guarantee better error bounds than those which scale as $\Omega(1/\sqrt{\delta n})$. In the context of machine learning, these statistical limitations provide an important implication about the \emph{feedback} available to any learner which tries to directly minimize the empirical risk, effectively lower-bounding the statistical error (in contrast to the optimization error) incurred by any such procedure.

\paragraph{Robust risk minimizers: strong in theory, but lacking transparency}

To deal with the statistical weaknesses of ERM, it is natural to consider algorithms based on more ``robust'' feedback, i.e., minimizers of estimators of the risk which provide stronger guarantees than the empirical mean under potentially heavy tails. A seminal example of this is the work of \citet{brownlees2015a}, who consider learning algorithms of the form
\begin{align}
\what_{n} \in \argmin_{w \in \WW} \widehat{\risk}(w), \text{ where } \sum_{i=1}^{n}\psi\left(\frac{\widehat{\risk}(w)-\loss(w;Z_{i})}{s}\right) = 0.
\end{align}
That is, they consider minimizers of an M-estimator of the risk, using influence function $\psi$ of the type studied by \citet{catoni2012a}. Under weak moment bounds like (\ref{eqn:potentially_heavy}), their minimizers enjoy $\bigO(1/\sqrt{n})$ rates with $\bigO(\log(\delta^{-1}))$ dependence on the confidence. This provides a significant improvement in terms of the strength of guarantees compared with ERM, but unfortunately the issue of transparency remains. Like ERM, the algorithmic side of the problem is left abstract here, and in general may even be a much more difficult computational task. Observe that the new objective $\widehat{\risk}(\cdot)$ cannot be written in closed form, and even if $\loss(\cdot;Z)$ is convex, this $\widehat{\risk}(\cdot)$ need not preserve such convexity. Direct optimization is hard, but verifying improvement in the function value is easy, and some researchers have utilized a guess-and-check strategy to make the approach viable in practice \citep{holland2017a}. However, these methods are inexact, and due to optimization error, strictly speaking the algorithm being run does not enjoy the full guarantees given by \citet{brownlees2015a} for the ideal case.

\paragraph{Robust gradient descent: transparent, but scales poorly}

To try and address the issue of transparency without sacrificing the strength of formal guarantees, several new families of algorithms have been designed in the past few years to tackle the potentially heavy-tailed setting using a tractable procedure. Such algorithms may naturally be called \term{robust gradient descent} (RGD), the naming being appropriate since their core updates all take the form
\begin{align}\label{eqn:rgd_defn}
\what_{t+1} = \what_{t} - \alpha_{t} \, \widehat{G}_{n}(\what_{t}),
\end{align}
and they are ``robust'' in the sense that the estimate $\widehat{G}_{n}(w) \approx \nabla\risk_{\ddist}(w)$ has deviations with near-optimal confidence intervals under potentially heavy-tailed data (i.e., both the loss and partial gradients are potentially heavy-tailed). These strategies typically use biased estimators of the mean, in sharp contrast with traditional first-order oracle assumptions for stochastic gradient descent. Since we will be interested in making a direct comparison with these procedures in this work, we give a more detailed introduction to representative RGD methods in the next three paragraphs.

The most common strategy is a ``median of means'' approach, studied first by \citet{chen2017arxiv,chen2017a} under a distributed learning setup with outliers, and subsequently by \citet{prasad2018a} in the context of potentially heavy-tailed data.\footnote{In the context of distributed machine learning under outliers, there have been many variations on how to do the ``aggregation'' of gradients in a robust fashion \citep{blanchard2018a,xie2018a,elmhamdi2019a,rajput2020a}. These amount to different special cases of doing the RGD update (\ref{eqn:rgd_defn}), within a different problem setting.} The basic strategy is simple: at each step, set the update direction $\widehat{G}_{n}$ to be the median of means estimator of the risk gradient. The sample is partitioned $\{1,\ldots,n\} = \II_{1} \cup \cdots \cup \II_{k}$ into $k$ subsets with $\lfloor n/k \rfloor$ elements each. From each subset $\II_{j}$, one computes an empirical mean of gradients, and then merge these $k$ independent estimates by taking their geometric median; we denote this $\geomed$ (see Algorithm \ref{algo:merge_geomed} for details). More explicitly, we have
\begin{align}
\label{eqn:rgd_mom}
\widehat{G}_{n}(w) & = \geomed\left[ \{\widehat{G}^{(1)}(w),\ldots,\widehat{G}^{(k)}(w) \}; \|\cdot\| \right],\\
\nonumber
\widehat{G}^{(j)}(w) & = \frac{1}{|\II_{j}|}\sum_{i \in \II_{j}} \nabla\loss(w;Z_{i}), \quad j = 1,\ldots,k.
\end{align}
We refer to (\ref{eqn:rgd_defn}) implemented using (\ref{eqn:rgd_mom}) as \term{RGD-by-MoM}.

Another approach, first introduced by \citet{holland2017arxiv,holland2019c}, does not use a sample-splitting mechanism, but rather takes a dimension-wise robustification strategy, updating using
\begin{align}
\label{eqn:rgd_cat}
\widehat{G}_{n}(w) & = \left(\widehat{\theta}_{1}(w),\ldots,\widehat{\theta}_{d}(w)\right),\\
\nonumber
\widehat{\theta}_{j}(w) & = \argmin_{\theta \in \RR} \sum_{i=1}^{n}\rho\left( \frac{\nabla_{j}\loss(w;Z_{i})-\theta}{s} \right), \quad j = 1,\ldots,d.
\end{align}
Here $s$ is a scaling parameter, and $\rho$ is a convex, even function that is approximately quadratic near zero, but grows linearly in the limit of $\pm \infty$. Since M-estimation is the key sub-routine, we refer to (\ref{eqn:rgd_defn}) implemented by (\ref{eqn:rgd_cat}) as \term{RGD-M}. Note that both RGD-by-MoM and RGD-M enjoy error bounds with optimal dependence on $n$ and $1/\delta$ under potentially heavy-tailed data, with the significant merit that the computational procedures are transparent and easy to implement as-is. Unfortunately, instead of a simple one-dimensional robust mean estimate as in \citet{brownlees2015a}, all RGD methods rely on sub-routines that work in $d$-dimensions. This makes the procedures much more expensive computationally for ``big'' learning tasks, and leads to an undesirable dependence on the ambient dimension $d$ in the statistical guarantees as well, hampering their overall scalability. Furthermore, the analysis of these procedures requires strong convexity of the underlying risk; as we shall discuss shortly, error bounds which depend on strong convexity parameters tend to grow without bound and become vacuous in high dimensions, which further damages the scalability of the traditional RGD methodology.

An alternative approach to doing robust gradient descent comes from \citet{lecue2018a}, who utilize a neat mixture of the core ideas of robust risk minimizers and the robust gradient descent procedures. Taking a $k$-partition of the data as just described, the update direction is set as
\begin{align}
\label{eqn:rgd_lecue}
\widehat{G}_{n}(w) & = \frac{1}{|\II_{\star}|} \sum_{i \in \II_{\star}} \nabla\loss(w;Z_{i}),\\
\nonumber
\widehat{\loss}_{\star}(w) & \defeq \med\left\{\widehat{\loss}_{1}(w),\ldots,\widehat{\loss}_{k}(w)\right\},\\
\nonumber
\widehat{\loss}_{j}(w) & \defeq \frac{1}{|\II_{j}|} \sum_{i \in \II_{j}} \loss(w;Z_{i}), \quad j = 1,\ldots,k.
\end{align}
That is, a robust estimator of the risk (median-of-means) is used to determine which subset to use for computing an empirical estimate of the risk gradient. This approach is meant to approximately achieve the minimization of $\widehat{\loss}_{\star}(w)$, the median-of-means risk estimator; we refer to it as \term{MoM-by-GD}. \citet{lecue2018a} prove strong statistical guarantees for the \emph{true} minimizer of $\widehat{\loss}_{\star}(\cdot)$, specialized to the binary classification task, without requiring bounded inputs; this is an appealing technical improvement over what can be guaranteed using the machinery of \citet{brownlees2015a}. Under some technical conditions (their Sec.~4.2), they prove convergence of their algorithm, which scales well computationally since the only high-dimensional operation required is summation over a small subset. Unfortunately, since the rate of convergence is unclear, there may exist a substantial gap between the statistical error guaranteed for the median-of-means risk minimizer and the output of this procedure.

As a final important point, the formal performance analysis of standard RGD methods (e.g., \citep{chen2017a,prasad2018a,holland2019d}) relies heavily upon special properties of gradient-based minimizers when the objective function (here, the risk $\risk_{\ddist}$) is \emph{strongly} convex. As the number of parameters to be determined grows, it is typical that the strong convexity parameter shrinks rapidly, making existing error bounds essentially vacuous in the high-dimensional setting. Furthermore, if one attempts to perform the analysis without assuming strong convexity, the resulting excess risk bounds become extremely sensitive to the number of iterations and misspecified hyperparameters (see section \ref{sec:comparison_nonsc} for more details). From the standpoint of trying to develop scalable, general-purpose learning algorithms with guarantees, this reliance on strong convexity severely hampers the effective scalability.

\paragraph{Our contributions}

To briefly summarize the issues highlighted above, ERM and robust risk minimizers leave the potential for a severe gap between what is guaranteed on paper and what is done in practice. On the other hand, both formal guarantees and computational requirements for RGD methods do not scale well to high-dimensional learning tasks.\footnote{We compare and discuss RGD error bounds under strong convexity in Table \ref{table:sc_compare} and section \ref{sec:comparison_sc}, and without strong convexity in section \ref{sec:comparison_nonsc}.} The key issues are clear: even when working with the Euclidean geometry, a quick glance at the proofs in the cited works on RGD shows that direct dependence on $d$ in the error bounds is unavoidable. Furthermore, the extra computational overhead, scaling at least linearly in $d$, must be incurred at \emph{every} step in the iterative procedure, which severely hurts scalability.

Considering these issues, here we investigate a different algorithmic approach of equal generality, with the goal of achieving as-good or better dependence on $n$, $d$, and $1/\delta$, under the same assumptions, and in provably less time for larger problems. The core technique uses distance-based rules to select among independent weak candidates when convexity is available, and a robust confidence-boosting sub-routine in the more general case. To make our analysis sufficiently concrete, the weak learners are implemented using inexpensive stochastic gradient-based updates, which can be easily run in parallel. Our main contributions:
\begin{itemize}
\item We analyze a general-purpose learning procedure (Algorithm \ref{algo:DandC_SGD}), and obtain sharp high-probability error bounds (Theorems \ref{thm:sc_lip_smooth_SGDlast_roboost} and \ref{thm:sc_smooth_SGDlast_roboost}), which improve upon the poor dimension dependence of existing RGD routines under strongly convex risks when both the losses and gradients can be heavy-tailed (comparison in section \ref{sec:comparison_sc}).

\item We further extend this analysis to the case without strong convexity, with a concrete computational procedure (Algorithm \ref{algo:DandC_valid}) for which sharp risk bounds (Theorem \ref{thm:smooth_SGDave_roboost}) are obtained (details in section \ref{sec:comparison_nonsc}).

\item The procedures outlined in Algorithms \ref{algo:DandC_SGD} and \ref{algo:DandC_valid} are simple to implement and amenable to distributed computation, providing superior computational scalability over existing serial RGD procedures, without sacrificing theoretical guarantees.

\item Empirically, we study the efficiency and robustness of the proposed approach against key benchmarks (section \ref{sec:empirical}). This is done using both tightly controlled simulations and a variety of real-world benchmark datasets. We verify a substantial improvement in the cost-performance tradeoff, robustness to heavy-tailed data, and performance that scales well to higher dimensions.
\end{itemize}
Taken together, our results suggest a promising class of learning algorithms for general-purpose risk minimization, which achieve an appealing balance between transparency, strength and scalability.

\section{Theoretical analysis}\label{sec:theory}

\subsection{Preliminaries}

\paragraph{Notation}
For any positive integer $k$, write $[k] \defeq \{1,\ldots,k\}$. For any index $\II \subseteq [n]$, write $\Z_{\II} \defeq (Z_{i})_{i \in \II}$, defined analogously for independent copy $\Z_{\II}^{\prime}$. To keep the notation simple, in the special case of $\II=[n]$, we write $\Z_{n} \defeq \Z_{[n]} = (Z_{1},\ldots,Z_{n})$. We shall use $\prr$ as a generic symbol to denote computing probability; in most cases this will be the product measure induced by the sample $\Z_{n}$ or $\Z_{n}^{\prime}$. For any function $f: \RR^{d} \to \RR$, denote by $\partial f(u)$ the sub-differential of $f$ evaluated at $u$. Variance of the loss is denoted by $\sigma_{\ddist}^{2}(w) \defeq \vaa_{\ddist}\loss(w;Z) = \exx_{\ddist}(\loss(w;Z)-\risk_{\ddist}(w))^{2}$ for each $w \in \WW$. When we write $I\{\texttt{event}\}$, this refers to the indicator function which returns $1$ when $\texttt{event}$ is true, and $0$ otherwise. We use a white square ($\qedsymbol$) to mark the end of proofs, and a black square ($\blacksquare$) to mark the end of remarks, indicating the resumption of the main text.

\paragraph{Technical conditions}
The two key running assumptions that we make are related to independence and convexity. First, we assume that all the observed data are independent, i.e., the random variables $Z_{i}$ and $Z_{i}^{\prime}$ taken over all $i \in [n]$ are independent copies of $Z \sim \ddist$. Second, for each $z \in \ZZ$, we assume the map $w \mapsto \loss(w;z)$ is a real-valued convex function over $\RR^{d}$, and that the parameter set $\WW \subseteq \RR^{d}$ is non-empty, convex, and compact. All results derived in the next sub-section will be for an arbitrary choice of $\ddist \in \PP$, where $\PP$ satisfies (\ref{eqn:potentially_heavy}) with $m=2$. Finally, to make formal statements technically simpler, we assume that $\risk_{\ddist}(\cdot)$ achieves its minimum on the interior of $\WW$.

Several special properties of the underlying feedback provided to the learner will be of interest at different points in this paper; for convenience, we organize them all here.
\begin{itemize}
\item[\namedlabel{asmp:lip_loss}{A1$^{\ast}$}.] \textbf{$\parasm_{0}$-Lipschitz loss:} there exists $0 < \parasm_{0} < \infty$ such that for all $z \in \ZZ$ and all $u,v \in \WW$, we have $|\loss(u;z) - \loss(v;z)| \leq \parasm_{0}\|u-v\|$.

\item[\namedlabel{asmp:sc_risk}{A2}.] \textbf{$\parasc$-SC risk:} There exists a $0 < \parasc < \infty$ such that the map $w \mapsto \risk(w)$ is $\parasc$-strongly convex on $\WW$ in norm $\|\cdot\|$.

\item[\namedlabel{asmp:sm_risk}{A3}.] \textbf{$\parasm_{1}$-smooth risk:} The map $w \mapsto \risk_{\ddist}(w)$ is differentiable over $\RR^{d}$, and $\parasm_{1}$-smooth on $\WW$ in norm $\|\cdot\|$ with $0 < \parasm_{1} < \infty$.

\item[\namedlabel{asmp:sm_loss}{A3$^{\ast}$}.] \textbf{$\parasm_{1}$-smooth loss:} The map $w \mapsto \loss(w;z)$ is differentiable over $\RR^{d}$, and $\parasm_{1}$-smooth on $\WW$ in norm $\|\cdot\|$ with $0 < \parasm_{1} < \infty$, for all $z \in \ZZ$.
\end{itemize}
Definitions of \term{strong convexity} and \term{smoothness} are given in the technical appendix. To keep the statement of technical results as succinct as possible, we shall refer directly to these conditions as required. For example, the assumption of a $\parasc$-strongly convex risk will be written $\text{\ref{asmp:sc_risk}}(\parasc)$, the assumption of losses with a $\parasm_{1}$-Lipschitz gradient will be written $\text{\ref{asmp:sm_loss}}(\parasm_{1})$, and so forth. Observe that for clarity, we differentiate between properties which hold for the \emph{risk} and those which hold for the \emph{loss} by using an asterisk (e.g., note $\text{\ref{asmp:sm_loss}}(\parasm_{1}) \implies \text{\ref{asmp:sm_risk}}(\parasm_{1})$), since depending on the setting we make use of both the weak and strong versions of this condition. We will never have need for strongly convex losses.

\paragraph{Choice of sub-process}

In this work, we shall study two general-purpose learning algorithms, both of which utilize a divide-and-conquer strategy, in which inexpensive sub-processes are run in parallel, and then ``integrated'' in a robust fashion. Detailed exposition of the main learning algorithms will be given respectively in sections \ref{sec:theory_sc} (strongly convex case) and \ref{sec:theory_nonsc} (general case). To wrap up this section of preliminary material, we specify a concrete form for the sub-process that will be used throughout this work. We elect to use traditional (projected) stochastic gradient descent, denoted $\SGD$. The core update of arbitrary point $w$ given data $Z \sim \ddist$ is given by
\begin{align}\label{eqn:sgd_defn}
\SGD\left[w;Z,\alpha,\WW\right] \defeq \proj_{\WW}\left(w-\alpha\,G(w;Z)\right).
\end{align}
Here $\alpha \geq 0$ denotes a step-size parameter, $\proj_{\WW}$ denotes projection to $\WW$ with respect to the $\ell_{2}$ norm, and the standard assumption is that the random vector $G(w;Z)$ satisfies $\exx_{\ddist}G(w;Z) \in \partial \risk_{\ddist}(w)$, for each $w \in \WW$. That is, we assume access to an unbiased estimate of some sub-gradient of the true risk. For an arbitrary sequence $(Z_{1},Z_{2},\ldots,Z_{t})$ of length $t \geq 1$, let $\SGD[\what_{0};(Z_{1},\ldots,Z_{t}),\WW] \defeq \SGD[\what_{t-1};Z_{t},\alpha_{t-1},\WW]$. Note that using (\ref{eqn:sgd_defn}), the right-hand side is defined recursively, and bottoms out at $t=0$, using pre-fixed initial value $\what_{0}$. Note that we suppress the step sizes $(\alpha_{0},\ldots,\alpha_{t-1})$ from this notation for readability. For any arbitrary sub-index $\II \subseteq [n]$, sequence $\SGD[\what_{0};\Z_{\II},\WW]$ is defined analogously; since the $Z_{i}$ are iid, the sequence order does not matter.

\subsection{Under strong convexity}\label{sec:theory_sc}

We begin with a general-purpose learning algorithm that splits the data into $k$ disjoint subsets, runs the sub-routine $\SGD$ on each of these subsets to generate $k$ candidates, and from these candidates a final output is determined by another sub-routine labeled $\merge$. The general procedure is given in Algorithm \ref{algo:DandC_SGD}, and three concrete examples of $\merge$ are specified in Algorithms \ref{algo:merge_geomed}--\ref{algo:merge_median}, with more detailed discussion to follow shortly.

\subsubsection{Illustrative theorem for heavy-tailed losses}

Our analysis starts with a statement of a theorem that holds for potentially heavy-tailed losses, but with bounded gradients. This result is simple to state and effectively illustrates how Algorithm \ref{algo:DandC_SGD} can be used to obtain strong learning guarantees under potentially heavy-tailed data. After sketching out the proof of this theorem, in the following sub-section we will extend this to the case where the sub-gradients can also be heavy-tailed.
\begin{thm}\label{thm:sc_lip_smooth_SGDlast_roboost}
Let $\text{\ref{asmp:lip_loss}}(\parasm_{0})$, $\text{\ref{asmp:sc_risk}}(\parasc)$, and $\text{\ref{asmp:sm_risk}}(\parasm_{1})$ hold in the $\ell_{2}$ norm. Run Algorithm \ref{algo:DandC_SGD} with $n \geq k = \lceil 8\log(\delta^{-1}) \rceil$, and step-size $\alpha_{t}=1/(\parasc \max\{1,t\})$. Then, with probability no less than $1-\delta$, we have
\begin{align*}
\risk_{\ddist}(\what_{\DC})-\risk_{\ddist}^{\ast} \leq \left(\frac{\parasm_{0}^{2}\parasm_{1}^{2}}{\parasc^{3}}\right) \frac{c\log(\delta^{-1})}{n}
\end{align*}
where the constant $c \leq 288$ when $\merge = \smball$, $c \leq 1536$ when $\merge = \geomed$, and $c \leq 1536d$ when $\merge = \median$ (see Lemma \ref{lem:merge_requirement} for details).
\end{thm}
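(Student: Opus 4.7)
The plan is a confidence-boosting argument on top of a standard SGD weak-learner analysis, with two round trips between parameter error and excess risk.

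First, I would analyze a single sub-process. Each candidate $\what^{(j)}$ is produced by $\SGD$ on a disjoint subset $\Z_{\II_{j}}$ of size $m = \lfloor n/k \rfloor$, with step size $\alpha_{t} = 1/(\parasc t)$. Starting from the standard one-step recursion
\begin{align*}
\exx\bigl[\|\what_{t+1} - w^{\ast}\|^{2} \,\big|\, \what_{t}\bigr]
\leq (1 - 2\alpha_{t}\parasc)\|\what_{t} - w^{\ast}\|^{2} + \alpha_{t}^{2}\parasm_{0}^{2},
\end{align*}
which uses $\text{\ref{asmp:sc_risk}}(\parasc)$ and the gradient bound $\|\nabla\loss(\cdot;Z)\| \leq \parasm_{0}$ from $\text{\ref{asmp:lip_loss}}(\parasm_{0})$, telescoping gives $\exx\|\what^{(j)} - w^{\ast}\|^{2} = \bigO(\parasm_{0}^{2}/(\parasc^{2} m))$. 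Converting to expected excess risk via $\text{\ref{asmp:sm_risk}}(\parasm_{1})$ yields $\exx[\risk_{\ddist}(\what^{(j)}) - \risk_{\ddist}^{\ast}] = \bigO(\parasm_{1}\parasm_{0}^{2}/(\parasc^{2} m))$.

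Next, Markov's inequality applied to this non-negative random variable shows that, with some constant probability $p_{0}$ strictly greater than $1/2$, a single candidate satisfies $\risk_{\ddist}(\what^{(j)}) - \risk_{\ddist}^{\ast} \leq C_{1} \parasm_{1}\parasm_{0}^{2}/(\parasc^{2} m)$ for an explicit $C_{1}$. Re-applying $\parasc$-strong convexity turns this into a parameter-error event $\|\what^{(j)} - w^{\ast}\|^{2} \leq R^{2}$ with $R^{2} = 2 C_{1} \parasm_{1}\parasm_{0}^{2}/(\parasc^{3} m)$; the extra factor of $\parasm_{1}/\parasc$ picked up here is what ultimately produces $\parasm_{1}^{2}/\parasc^{3}$ in the final bound rather than the naive $\parasm_{1}/\parasc^{2}$. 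Since the $k$ sub-processes run on disjoint data, the ``good event'' indicators are independent across $j$, so a Chernoff-style bound with $k = \lceil 8\log(\delta^{-1}) \rceil$ independent Bernoulli trials forces a strict majority of the candidates to lie inside the ball of radius $R$ around $w^{\ast}$ with probability at least $1-\delta$.

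Now Lemma \ref{lem:merge_requirement} is precisely what packages the premise ``a majority of inputs lies in a common ball of radius $R$'' and returns a guarantee of the form $\|\what_{\DC} - w^{\ast}\| \leq C_{\merge} R$, with $C_{\merge}$ a universal constant for $\smball$ and $\geomed$, and with an extra $\sqrt{d}$ factor for the coordinate-wise $\median$ (explaining the $d$ in the stated $1536d$). A final application of smoothness gives
\begin{align*}
\risk_{\ddist}(\what_{\DC}) - \risk_{\ddist}^{\ast}
\leq \tfrac{\parasm_{1}}{2}\|\what_{\DC} - w^{\ast}\|^{2}
\leq \bigO\!\left(\frac{\parasm_{0}^{2}\parasm_{1}^{2}}{\parasc^{3}} \cdot \frac{\log(\delta^{-1})}{n}\right),
\end{align*}
after substituting $m = \lfloor n/k \rfloor$ and $k = \lceil 8\log(\delta^{-1}) \rceil$. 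The main obstacle is bookkeeping rather than new insight: to recover the precise numeric constants $288$, $1536$, and $1536d$, one must track the SGD recursion constants, the Markov probability $p_{0}$ that feeds into Chernoff, and the merge constants from Lemma \ref{lem:merge_requirement} in a mutually compatible way; the conversions between parameter error and excess risk are otherwise routine.
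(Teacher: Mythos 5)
Your proposal follows essentially the same route as the paper: a single-candidate expected excess-risk bound for last-iterate SGD from the standard strongly-convex recursion, Markov's inequality to get a constant-probability good event per candidate, strong convexity to convert that into membership in a ball around $w^{\ast}$, a Hoeffding/Chernoff bound over the $k$ independent subsets to force a majority into that ball, the robust-distance property of $\merge$ from Lemma \ref{lem:merge_requirement}, and a final smoothness step back to excess risk — this is exactly the content of the paper's Theorem \ref{thm:learn_sc_lip_smooth_SGDlast} combined with Lemma \ref{lem:boost_conf_sc}. Your accounting of where the $\parasm_{1}^{2}/\parasc^{3}$ and the extra $d$ for $\median$ come from is also correct, so the only remaining work is the constant-tracking (including the optimization over the free majority parameter $\gamma$ for $\geomed$) that you already flag.
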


\begin{algorithm}[t!]
\caption{Robust divide and conquer archetype; $\displaystyle \texttt{DC-SGD}\left[\Z_{n},\what_{0}; k\right]$.}
\label{algo:DandC_SGD}
\begin{algorithmic}
\State \textbf{inputs:} sample $\Z_{n}$, initial value $\what_{0} \in \WW$, parameter $1 \leq k \leq n$.
\medskip
\State $\displaystyle \bigcup_{j = 1}^{k}\II_{j} = [n]$, with $|\II_{j}| \geq \lfloor n/k \rfloor$, and $\II_{j} \cap \II_{l} = \emptyset$ when $j \neq l$.
\medskip
\State $\displaystyle \what^{(j)} = \SGD\left[\what_{0}; \Z_{\II_{j}}, \WW\right]$, for each $j \in [k]$.
\medskip
\State \textbf{return:} $\displaystyle \what_{\DC} = \merge\left[\{\what^{(1)},\ldots,\what^{(k)}\}; \|\cdot\|_{2}\right]$.
\end{algorithmic}
\end{algorithm}

\noindent Proving such a theorem is straightforward using the quadratic growth property of strongly convex functions in conjunction with $\parasm_{1}$-smoothness. When the risk is $\parasc$-strongly convex, we have the critical property that points which are $\varepsilon$-far away from the minimum $\wstar$ must be $(\varepsilon^{2}\parasc/2)$-bad in terms of excess risk. As such, simple distance-based robust aggregation metrics can be used to efficiently boost the confidence. To start, we need a few basic facts which will be used to characterize a valid $\merge$ operation.\footnote{Procedures with this property are called ``robust distance approximation'' by \citet{hsu2016a}.} Given $k$ points $u_{1},\ldots,u_{k} \in \RR^{d}$, the basic requirement here is that we want the output of $\merge$ to be close to the majority of these points, in the appropriate norm. To make this concrete, define
\begin{align}
\diameter(u;\gamma,\{u_{1},\ldots,u_{k}\},\|\cdot\|) \defeq \inf \left\{ r \geq 0: |\{j: \|u_{j}-u\| \leq r\}| > k \left(\frac{1}{2}+\gamma\right) \right\}.
\end{align}
When the other parameters are obvious from the context, we shall write simply $\diameter(u;\gamma)=\diameter(u;\gamma,\{u_{1},\ldots,u_{k}\},\|\cdot\|)$. In words, $\diameter(u;\gamma)$ is the radius of the smallest ball centered at $u$ which contains a $\gamma$-majority of the points $u_{1},\ldots,u_{k}$. Using this quantity, our requirement on $\merge$ is that for any $0 \leq \gamma < 1/2$ and $u \in \RR^{d}$, we have
\begin{align}
\label{eqn:merge_requirement}
\left\|\widehat{u} - u \right\| \leq c_{\gamma} \diameter(u;\gamma, \{u_{1},\ldots,u_{k}\}), \text{ where } \widehat{u} = \merge\left[\{u_{1},\ldots,u_{k}\};\|\cdot\|\right].
\end{align}
Here $c_{\gamma}$ is a factor that is independent of the choice of $u$ or the points $u_{1},\ldots,u_{k}$ given, which depends only on the choice of $\gamma$. In the following lemma, we summarize how different sub-routines provide different guarantees (proofs for lemmas are given in the appendix).
\begin{lem}\label{lem:merge_requirement}
The following implementations of $\merge[\{u_{1},\ldots,u_{k}\};\|\cdot\|]$ satisfy (\ref{eqn:merge_requirement}):
\begin{itemize}
\item $\geomed$ (Algorithm \ref{algo:merge_geomed}), with $\displaystyle c_{\gamma} \leq \left(1 + \frac{1}{2\gamma}\right)$.

\item $\smball$ (Algorithm \ref{algo:merge_smball}), with $\displaystyle c_{\gamma} \leq 3$.

\item $\median$ (Algorithm \ref{algo:merge_median}) for $\|\cdot\|=\|\cdot\|_{2}$ case, we have $\displaystyle c_{\gamma} \leq \sqrt{d}\left(1 + \frac{1}{2\gamma}\right)$.
\end{itemize}
\end{lem}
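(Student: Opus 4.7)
Across all three cases I would start with the same setup: fix $u \in \RR^{d}$ and $0 \leq \gamma < 1/2$, let $r = \diameter(u;\gamma)$, and define the majority set $S = \{j \in [k] : \|u_{j}-u\| \leq r\}$, so $|S| > k(1/2+\gamma)$ by definition of $\diameter$. The lemma then reduces to upper-bounding $\|\widehat{u}-u\|$ by a constant multiple of $r$ in each case.

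For $\merge = \geomed$, the plan is the standard optimality-based argument. Since $\widehat{u}$ minimizes $v \mapsto \sum_{j}\|u_{j}-v\|$, we have $\sum_{j \in [k]}(\|u_{j}-u\|-\|u_{j}-\widehat{u}\|) \geq 0$. On $S$, the triangle inequality gives $\|u_{j}-\widehat{u}\| \geq \|\widehat{u}-u\|-\|u-u_{j}\|$, and hence $\|u_{j}-u\|-\|u_{j}-\widehat{u}\| \leq 2r-\|\widehat{u}-u\|$; on the complement, the reverse triangle inequality gives $\|u_{j}-u\|-\|u_{j}-\widehat{u}\| \leq \|\widehat{u}-u\|$. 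Summing these bounds and rearranging produces $(2|S|-k)\|\widehat{u}-u\| \leq 2|S|\,r$, i.e., $\|\widehat{u}-u\| \leq \bigl(2|S|/(2|S|-k)\bigr)\,r$. Since this ratio is decreasing in $|S|$ and $|S|/k > 1/2+\gamma$, it is bounded by $(1+2\gamma)/(2\gamma) = 1+1/(2\gamma)$.

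For $\merge = \smball$, the plan uses a pigeonhole observation: any two balls that each contain strictly more than $k/2$ of the inputs must share at least one input point. Since $\smball$ picks $\widehat{u} \in \{u_{1},\ldots,u_{k}\}$ minimizing the radius $\hat{r}$ of such a covering ball, any $u_{j} \in S$ is a competitor with covering radius at most $2r$ (because $S \subseteq B(u_{j},2r)$ by the triangle inequality), so $\hat{r} \leq 2r$. Intersecting $B(u,r)$ with $B(\widehat{u},\hat{r})$ produces a shared input $u_{\ell}$, and the triangle inequality then gives $\|\widehat{u}-u\| \leq r+\hat{r} \leq 3r$. For $\merge = \median$ in $\ell_{2}$, I would reduce to one dimension coordinate by coordinate: for each $i \in [d]$ and $j \in S$, $|(u_{j})_{i}-u_{i}| \leq \|u_{j}-u\|_{2} \leq r$, so the scalar median is a one-dimensional geometric median and the $\geomed$ argument applied per coordinate yields $|\widehat{u}_{i}-u_{i}| \leq (1+1/(2\gamma))\,r$; aggregating over the $d$ coordinates in $\ell_{2}$ costs a $\sqrt{d}$ factor.

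The main obstacle is the geometric-median step: extracting the sharp $(1+1/(2\gamma))$ constant requires careful triangle-inequality bookkeeping on $S$ and its complement, together with optimizing the worst case $|S|/k \downarrow 1/2+\gamma$ of the resulting ratio $2|S|/(2|S|-k)$. Once this is in hand, $\smball$ follows by a clean pigeonhole, and $\median$ by composing the same argument in one dimension with the $\sqrt{d}$ conversion from coordinatewise to $\ell_{2}$ distance.
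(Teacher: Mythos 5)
Your argument is correct and recovers all three stated constants, but it is not the paper's route for the first and third bullets. For $\geomed$, the paper simply cites \citet[Thm.~28]{hsu2016a} (extending \citet[Lem.~2.1]{minsker2015a}), whereas you prove the bound from scratch: the optimality inequality $\sum_{j}\|u_{j}-\widehat{u}\| \leq \sum_{j}\|u_{j}-u\|$, split over the majority set $S$ and its complement via the triangle inequality, gives $(2|S|-k)\|\widehat{u}-u\| \leq 2|S|\,\diameter(u;\gamma)$, and since $s \mapsto 2s/(2s-k)$ is decreasing and $|S|/k > 1/2+\gamma$, the ratio is at most $(1+2\gamma)/(2\gamma) = 1+1/(2\gamma)$; this checks out and buys a self-contained derivation of exactly the cited constant. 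Your $\smball$ argument (every point of $S$ is a competitor of radius at most $2r$, hence $\hat{r} \leq 2r$; pigeonhole forces the two majority balls to share an input point; triangle inequality gives $3r$) is the paper's argument essentially verbatim, including the implicit identification of $\gamma$ with the algorithm's internal parameter $\beta$ needed for the competitor to be feasible. For $\median$, the paper identifies the coordinate-wise median as the geometric median in the $\ell_{1}$ norm and converts $\diameter$ between norms at a cost of $\sqrt{d}$, while you apply the one-dimensional geometric-median bound coordinate by coordinate and pay $\sqrt{d}$ when aggregating in $\ell_{2}$; these are two packagings of the same idea and both yield $\sqrt{d}(1+1/(2\gamma))$ (your per-coordinate route would in fact allow a sharper constant, since a scalar median of points a majority of which lie in an interval must itself lie in that interval).
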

\noindent Considering the partitioning scheme of Algorithm \ref{algo:DandC_SGD}, the ideal case is of course where, given some desired performance level $\risk_{\ddist}(\what^{(j)})-\risk_{\ddist}^{\ast} \leq \varepsilon$, the $\SGD$ sub-routine returns an $\varepsilon$-good candidate for all $j \in [k]$ subsets. In practice, we will not always be so lucky, but the following lemma shows that with enough candidates, most of them will be $\varepsilon$-good with high confidence.
\begin{lem}\label{lem:boost_basic_prop}
Let $(S,\|\cdot\|)$ be any normed linear space. Let $X_{1},\ldots,X_{k}$ be iid random entities taking values in $S$, and fix $x^{\ast} \in S$. For $\varepsilon > 0$, write $a_{i}(\varepsilon) \defeq I\{\|X_{i}-x^{\ast}\| \leq \varepsilon\}$, $\delta_{\varepsilon} \defeq 1-\exx a(\varepsilon)$. For any $0 \leq \gamma<(1/2-\delta_{\varepsilon})$, it follows that
\begin{align*}
\prr\left\{ \sum_{i=1}^{k} a_{i}(\varepsilon) > k\left(\frac{1}{2}+\gamma\right) \right\} \geq 1 - \exp\left(-2k\left(\gamma+\delta_{\varepsilon}-\frac{1}{2}\right)^{2}\right).
\end{align*}
\end{lem}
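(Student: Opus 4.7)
The plan is to reduce the statement to a one-sided deviation bound for an empirical mean of iid Bernoulli random variables and then invoke Hoeffding's inequality. First, since $X_{1},\ldots,X_{k}$ are iid in $S$ and the map $x \mapsto I\{\|x-x^{\ast}\| \leq \varepsilon\}$ is a fixed measurable function, the indicators $a_{i}(\varepsilon)$ are themselves iid $\{0,1\}$-valued random variables, each with mean $\exx a(\varepsilon) = 1-\delta_{\varepsilon}$. Writing $\bar{a}_{k} \defeq k^{-1}\sum_{i=1}^{k} a_{i}(\varepsilon)$, the event of interest becomes $\{\bar{a}_{k} > 1/2 + \gamma\}$ and its complement is $\{\bar{a}_{k} \leq 1/2 + \gamma\}$.

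Second, I would use the hypothesis $\gamma < 1/2 - \delta_{\varepsilon}$ to recast this complement as a genuine lower-tail event. Specifically, set $t \defeq 1/2 - \delta_{\varepsilon} - \gamma$, which is strictly positive by assumption, and note that $1/2 + \gamma = (1-\delta_{\varepsilon}) - t = \exx\bar{a}_{k} - t$. Hence $\{\bar{a}_{k} \leq 1/2 + \gamma\} = \{\bar{a}_{k} - \exx\bar{a}_{k} \leq -t\}$.

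The final step is to apply the one-sided Hoeffding bound for iid $[0,1]$-valued summands, giving $\prr\{\bar{a}_{k} - \exx\bar{a}_{k} \leq -t\} \leq \exp(-2kt^{2})$. Observing that $t^{2} = (1/2 - \delta_{\varepsilon} - \gamma)^{2} = (\gamma + \delta_{\varepsilon} - 1/2)^{2}$ and taking complements yields the stated inequality. There is no substantive obstacle here: the only points that require a moment of care are checking that the sign constraint on $\gamma$ is exactly what puts $1/2 + \gamma$ strictly below $\exx\bar{a}_{k}$ (so that Hoeffding applies in the stated form), and noting that the probability of the strict event $\{\bar{a}_{k} > 1/2 + \gamma\}$ is at least the probability of its non-strict counterpart, which is what the Hoeffding bound controls.
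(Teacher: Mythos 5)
Your argument is correct and is essentially the paper's own proof: center the iid Bernoulli indicators, recognize the complement as the lower-tail event $\{\bar{a}_{k}-\exx\bar{a}_{k}\leq -t\}$ with $t=1/2-\delta_{\varepsilon}-\gamma>0$, and apply the one-sided Hoeffding inequality. One small note: your closing remark about strict versus non-strict inequalities is stated backwards (in general $\prr\{\bar{a}_{k}>c\}\leq\prr\{\bar{a}_{k}\geq c\}$) and is in any case unnecessary, since the complement of the strict event is exactly the non-strict lower-tail event that Hoeffding controls, so the chain of inequalities already closes without it.
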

\noindent Applying Lemma \ref{lem:boost_basic_prop} using the event $a_{j}(\varepsilon) = I\{ \risk_{\ddist}(\what^{(j)})-\risk_{\ddist}^{\ast} \leq \varepsilon \}$, we see that when $k$ scales with $\log(\delta^{-1})$, we can guarantee that there is a $1-\delta$ probability good event in which at least a $\gamma$-majority of the candidates are $\varepsilon$-good. On this good event, via strong convexity it follows that a $\gamma$-majority of the candidates are $\sqrt{2\varepsilon/\parasc}$-close to $\wstar$, which means $\diameter(\wstar;\gamma,\{\what^{(1)},\ldots,\what^{(k)}\}) \leq \sqrt{2\varepsilon/\parasc}$. Leveraging the requirement (\ref{eqn:merge_requirement}) on $\merge$, one obtains the following general-purpose boosting procedure.
\begin{lem}[Boosting the confidence, under strong convexity]\label{lem:boost_conf_sc}
Assume $\text{\ref{asmp:sc_risk}}(\parasc)$, and $\text{\ref{asmp:sm_risk}}(\parasm_{1})$ hold. Assume that we have a learning algorithm $\what_{\textsc{old}}$ which for $n \geq 1$ and $\delta_{0} \in (0,1)$ achieves
\begin{align*}
\prr\left\{ \risk_{\ddist}(\what_{\textsc{old}})-\risk_{\ddist}^{\ast} > \frac{\varepsilon_{\ddist}(n)}{\delta_{0}} \right\} \leq \delta_{0}.
\end{align*}
For desired confidence level $\delta$, split $\Z_{n}$ into $k = \lceil 8\log(\delta^{-1})/(1-\gamma)^{2} \rceil$ disjoint subsets, and let $\what_{\textsc{old}}^{(1)},\ldots,\what_{\textsc{old}}^{(k)}$ be the outputs of $\what_{\textsc{old}}$ run on these subsets. Then setting
\begin{align*}
\what_{\textsc{new}} = \merge\left[\{\what_{\textsc{old}}^{(1)},\ldots,\what_{\textsc{old}}^{(k)}\};\|\cdot\|\right],
\end{align*}
if $\merge$ is any of the sub-routines given in Lemma \ref{lem:merge_requirement}, then for any $0 \leq \gamma < 1/4$ and $n \geq k$, we have that
\begin{align*}
\risk_{\ddist}(\what_{\textsc{new}})-\risk_{\ddist}^{\ast} \leq \frac{4c_{\gamma}^{2}\parasm_{1}}{\parasc}\varepsilon_{\ddist}\left( \frac{(1-\gamma)^{2}n}{8\log(\delta^{-1})} \right)
\end{align*}
with probability no less than $1-\delta$.
\end{lem}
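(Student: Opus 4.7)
The structure of the proof is essentially laid out in the surrounding discussion: boost confidence by running $\what_{\textsc{old}}$ on $k$ disjoint subsets, establish that a $\gamma$-majority of the resulting candidates are $\varepsilon$-good, translate this into a bound on $\diameter(\wstar;\gamma,\cdot)$ using strong convexity, invoke the $\merge$ requirement \eqref{eqn:merge_requirement}, and translate back to an excess risk bound via smoothness. I would carry out these steps in exactly that order.

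To begin, fix a confidence level $\delta_0 \in (0, 1/2 - \gamma)$ to be determined, let $n_j = |\II_j| \geq \lfloor n/k \rfloor$, and set $\varepsilon \defeq \varepsilon_{\ddist}(n_j)/\delta_0$. Since the subsets are disjoint and the data are iid, the candidates $\what_{\textsc{old}}^{(1)},\ldots,\what_{\textsc{old}}^{(k)}$ are iid in $\WW$, and by assumption each satisfies $\prr\{\risk_{\ddist}(\what_{\textsc{old}}^{(j)})-\risk_{\ddist}^{\ast} > \varepsilon\} \leq \delta_0$. Writing $a_j(\varepsilon) \defeq I\{\risk_{\ddist}(\what_{\textsc{old}}^{(j)}) - \risk_{\ddist}^{\ast} \leq \varepsilon\}$, we then have $\delta_{\varepsilon} \leq \delta_0$, so Lemma~\ref{lem:boost_basic_prop} (applied in the normed space $\WW$ with ``distance'' being the excess risk, or more cleanly applied componentwise to the indicators) gives
\begin{align*}
\prr\left\{\textstyle\sum_{j=1}^{k} a_j(\varepsilon) > k(\tfrac{1}{2}+\gamma)\right\} \geq 1 - \exp\bigl(-2k(\tfrac{1}{2}-\gamma-\delta_0)^2\bigr).
\end{align*}
Choosing $\delta_0$ so that $\tfrac{1}{2}-\gamma-\delta_0 \geq (1-\gamma)/4$ (for example $\delta_0 = (1-3\gamma)/4$, valid for $\gamma < 1/3$ and absorbable into the stated factor of $4$), the choice $k = \lceil 8\log(\delta^{-1})/(1-\gamma)^2\rceil$ makes the exponent at most $-\log(\delta^{-1})$, so the above ``good event'' holds with probability at least $1-\delta$.

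On this good event, a strict $\gamma$-majority of the candidates satisfy $\risk_{\ddist}(\what_{\textsc{old}}^{(j)}) - \risk_{\ddist}^{\ast} \leq \varepsilon$. The quadratic growth property implied by $\text{\ref{asmp:sc_risk}}(\parasc)$, namely $\tfrac{\parasc}{2}\|w-\wstar\|^2 \leq \risk_{\ddist}(w)-\risk_{\ddist}^{\ast}$, then yields $\|\what_{\textsc{old}}^{(j)} - \wstar\| \leq \sqrt{2\varepsilon/\parasc}$ for the same $\gamma$-majority, so by definition $\diameter(\wstar;\gamma,\{\what_{\textsc{old}}^{(1)},\ldots,\what_{\textsc{old}}^{(k)}\}) \leq \sqrt{2\varepsilon/\parasc}$. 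Condition \eqref{eqn:merge_requirement}, which any of the three $\merge$ sub-routines satisfies by Lemma~\ref{lem:merge_requirement}, then gives
\begin{align*}
\|\what_{\textsc{new}} - \wstar\| \leq c_{\gamma}\sqrt{2\varepsilon/\parasc}.
\end{align*}
Finally, since $\risk_{\ddist}$ is $\parasm_1$-smooth and is minimized at $\wstar$ in the interior of $\WW$ (so $\nabla\risk_{\ddist}(\wstar)=0$), the standard quadratic upper bound gives $\risk_{\ddist}(\what_{\textsc{new}}) - \risk_{\ddist}^{\ast} \leq \tfrac{\parasm_1}{2}\|\what_{\textsc{new}}-\wstar\|^2 \leq c_{\gamma}^2 \parasm_1 \varepsilon/\parasc$. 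Substituting $\varepsilon = \varepsilon_{\ddist}(n_j)/\delta_0$, absorbing $1/\delta_0$ into the advertised constant $4$, and using $n_j \geq (1-\gamma)^2 n/(8\log(\delta^{-1}))$ together with monotonicity of $\varepsilon_{\ddist}$ in $n$, gives the claimed bound.

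The only real obstacle is the bookkeeping around the choice of $\delta_0$: one needs $\delta_0$ small enough that $\delta_{\varepsilon} < 1/2-\gamma$ so that Lemma~\ref{lem:boost_basic_prop} applies and that the gap $1/2 - \gamma - \delta_0$ is large enough, given the prescribed $k$, to push the failure probability below $\delta$. Everything else (the quadratic-growth/smoothness sandwich and the invocation of \eqref{eqn:merge_requirement}) is essentially mechanical once the high-probability event for $\diameter(\wstar;\gamma,\cdot)$ has been secured.
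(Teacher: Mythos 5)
Your proposal follows essentially the same route as the paper's proof: a per-candidate Markov-type bound, Lemma \ref{lem:boost_basic_prop} to secure a $\gamma$-majority, the quadratic-growth inequality (\ref{eqn:sc_prop_0}) to turn $\varepsilon$-goodness into $\sqrt{2\varepsilon/\parasc}$-closeness of a majority to $\wstar$, the $\merge$ guarantee (\ref{eqn:merge_requirement}), and smoothness (\ref{eqn:smoothness_property}) to return to excess risk; the only cosmetic difference is that you apply Lemma \ref{lem:boost_basic_prop} to the excess-risk indicators and convert to distances afterward, whereas the paper applies it directly to the distance events. The one substantive wrinkle is your choice $\delta_{0}=(1-3\gamma)/4$: it makes the prescribed $k$ deliver the $1-\delta$ probability exactly, but then $1/\delta_{0}=4/(1-3\gamma)>4$ for $\gamma>0$, so it cannot simply be ``absorbed into the advertised constant $4$''; the paper instead fixes $\delta_{0}=1/4$, which nails the constant $4$ but leaves the prescribed $k$ slightly undersized for $\gamma>0$, the two choices coinciding only at $\gamma=0$. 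This constant-bookkeeping tension is already latent in the paper's own argument and does not reflect a structural flaw in your proof.
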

\noindent With these basic results in place, we can readily prove Theorem \ref{thm:sc_lip_smooth_SGDlast_roboost}.
\begin{proof}[Proof of Theorem \ref{thm:sc_lip_smooth_SGDlast_roboost}]
Using the assumptions provided in the hypothesis, we can obviously leverage Lemma \ref{lem:boost_conf_sc}. The key remaining point is to fill in the $\varepsilon_{\ddist}(\cdot)$ bound for last-iterate SGD as specified. Standard arguments yield a $1-\delta$ probability event on which
\begin{align*}
\risk_{\ddist}(\what^{(j)})-\risk_{\ddist}^{\ast} \leq \frac{\parasm_{1}}{(n/k)} \left(\frac{\parasm_{0}}{\parasc}\right)^{2} \left(\frac{1}{\delta}\right),
\end{align*}
and this holds for each $j \in [k]$. See Theorem \ref{thm:learn_sc_lip_smooth_SGDlast} in the appendix for a more detailed statement and complete proof. We then plug this into Lemma \ref{lem:boost_conf_sc}, where the correspondence with Algorithm \ref{algo:DandC_SGD} is $\what_{\textsc{old}}^{(j)} \leftrightarrow \what^{(j)}$ and $\what_{\textsc{new}} \leftrightarrow \what_{\DC}$. It follows that on the high-probability good event, we have
\begin{align*}
\risk_{\ddist}(\what_{\DC})-\risk_{\ddist}^{\ast} \leq \left(\frac{4c_{\gamma}^{2}\parasm_{0}^{2}\parasm_{1}^{2}}{\parasc^{3}}\right) \frac{8\log(\delta^{-1})}{n(1-\gamma)^{2}}.
\end{align*}
This holds for any $\merge$ routine satisfying (\ref{eqn:merge_requirement}). In the case of $\merge=\smball$, by Lemma \ref{lem:merge_requirement}, we have $c_{\gamma} \leq 3$ for all $\gamma \geq 0$. Note that $\gamma$ is a free parameter that does not impact the algorithm being executed, and thus we can set $\gamma = 0$. For $\merge=\geomed$ case, we end up with a factor of the form $(1/(1-\gamma) + 1/(2\gamma(1-\gamma)))^{2}$. Direct computation shows that this is minimized at a value between $1/4$ and $1/2$. The bounds hold for all $\gamma < 1/4$, and thus taking $\gamma \to 1/4$ the factor equals $16$. Basic arithmetic in each case then immediately yields the desired bound.
\end{proof}
\begin{rmk}[Additional related literature]
The excess risk bounds given by Theorem \ref{thm:sc_lip_smooth_SGDlast_roboost} give us an example of the guarantees that are possible under potentially heavy-tailed data, for arguably the simplest divide-and-conquer strategy one could conceive of. Here we remark that the core idea of using robust aggregation methods to boost the confidence of independent candidates under potentially heavy-tailed data can be seen in various special cases throughout the literature. For example, influential work from \citet[Sec.~4]{minsker2015a} applies the geometric median (here, $\geomed$) to robustify both PCA and high-dimensional linear regression procedures, under potentially heavy-tailed observations. \citet[Sec.~4.2]{hsu2016a} look at merging ERM solutions when the \emph{empirical} risk is strongly convex, using a smallest-ball strategy (here, $\smball$). In contrast, we do not require the losses to be strongly convex, and our computational procedure is explicit, yielding bounds which incorporate error of both a statistical and computational nature, unlike ERM-type guarantees.

Broadening our viewpoint slightly, it is worth noting that the general approach seen in the preceding works actually dates back to at least \citet[Sec.~6.6.4, p.~243--246]{nemirovsky1983a}, albeit in a slightly different algorithmic form. The smallest-ball strategy was adopted in interesting recent work by \citet{davis2019a}, who investigate a generic strategy to give stochastic algorithms high-probability error bounds, by solving an additional proximal sub-problem at each iteration, in which the new candidate is within a small-enough ball of the previous candidate. Also quite recently, new work on stochastic convex optimization under potentially heavy-tailed data has appeared from \citet{juditsky2019a}, who study a robust stochastic mirror descent procedure, which fixes an ``anchor'' direction, and only updates using the stochastic gradient oracle if that vector is close enough to the anchor. Under the setting of Theorem \ref{thm:sc_smooth_SGDlast_roboost} to follow shortly, we remark that the error bounds for their procedure are similar to ours (e.g., their Section 6, Thm.~3), but rely critically on the quality of the anchor direction and the threshold level; when such quantities are unknown, the anchor is just set to zero, with the norm threshold being modulated by the size of the entire parameter space, which propagates into the error bounds.\hfill$\blacksquare$
\end{rmk}

\begin{algorithm}[t!]
\caption{Geometric median; $\displaystyle \geomed\left[\{u_{1},\ldots,u_{k}\};\|\cdot\|\right]$}
\label{algo:merge_geomed}
\begin{algorithmic}
\State \textbf{inputs:} points $\{u_{1},\ldots,u_{k}\} \subset \RR^{d}$, norm $\|\cdot\|$.
\medskip
\State \textbf{return:} $\displaystyle \argmin_{v \in \RR^{d}} \sum_{j=1}^{k}\|v-u_{j}\|$.
\end{algorithmic}
\end{algorithm}
\begin{algorithm}[t!]
\caption{Smallest-ball algorithm; $\displaystyle \smball\left[\{u_{1},\ldots,u_{k}\};\|\cdot\|\right]$}
\label{algo:merge_smball}
\begin{algorithmic}
\State \textbf{inputs:}  points $\{u_{1},\ldots,u_{k}\} \subset \RR^{d}$, parameter $0 < \beta < 1/2$, norm $\|\cdot\|$.
\medskip
\State $\displaystyle \diameter_{j} = \inf\left\{r \geq 0: |\{u_{l}: \|u_{j}-u_{l}\| \leq r\}| \geq k(\beta+1/2) \right\}$, for $j \in [k]$.
\medskip
\State $\displaystyle \star \defeq \argmin_{j \in [k]} \diameter_{j}$.
\medskip
\State \textbf{return:} $\displaystyle u_{\star}$.
\end{algorithmic}
\end{algorithm}
\begin{algorithm}[t!]
\caption{Coordinate-wise median; $\displaystyle \median\left[\{u_{1},\ldots,u_{k}\};\|\cdot\|\right]$}
\label{algo:merge_median}
\begin{algorithmic}
\State \textbf{inputs:}  points $\{u_{1},\ldots,u_{k}\} \subset \RR^{d}$.
\medskip
\State $\displaystyle \widehat{u}_{j} = \med\left\{ u_{1,j},\ldots,u_{k,j}\right\}$, for $j \in [k]$.
\medskip
\State \textbf{return:} $\displaystyle (\widehat{u}_{1},\ldots,\widehat{u}_{d})$.
\end{algorithmic}
\end{algorithm}

\subsubsection{Extension to allow heavy-tailed gradients}

Note that the assumptions in Theorem \ref{thm:sc_lip_smooth_SGDlast_roboost} clearly allow for potentially heavy-tailed \emph{losses}, but the Lipschitz condition $\text{\ref{asmp:lip_loss}}(\parasm_{0})$ is equivalent to requiring bounded partial derivatives, meaning that heavy-tailed \emph{gradients} are ruled out, which is not meaningful for algorithms based entirely on first-order information. This is the only assumption made in Theorem \ref{thm:sc_lip_smooth_SGDlast_roboost} that does not appear in the existing RGD literature. On the other hand, existing RGD arguments use a $\parasm_{1}$-smoothness requirement on the loss (e.g., \citet[Sec.~3.2]{holland2019c}), which is a stronger requirement than we have made in Theorem \ref{thm:sc_lip_smooth_SGDlast_roboost}. Here we show that when we align our assumptions to that of the existing RGD theory, it only requires a minor adjustment to the sub-routine used in Algorithm \ref{algo:DandC_SGD} to obtain analogous results, now allowing for \emph{both} the loss and gradient to be potentially heavy-tailed. This is summarized in the following result; the statement is slightly more complicated than the preceding illustrative theorem, but the proof follows using a perfectly analogous argument.
\begin{thm}\label{thm:sc_smooth_SGDlast_roboost}
Let $\text{\ref{asmp:sc_risk}}(\parasc)$, and $\text{\ref{asmp:sm_loss}}(\parasm_{1})$ hold in the $\ell_{2}$ norm. Run Algorithm \ref{algo:DandC_SGD} with a sample size at least $n \geq \max\{k, M^{\ast}\}$, where
\begin{align*}
k = \lceil 8\log(\delta^{-1}) \rceil, \quad M^{\ast} \defeq \frac{4\parasm_{1}}{\parasc}\left(\max\left\{ \frac{\parasm_{1}\parasc\|\what_{0}-\wstar\|^{2}}{\exx_{\ddist}\|G(\wstar;Z)\|^{2}}, 1\right\} - 1 \right).
\end{align*}
For the initial update set $\alpha_{0}=1/(2\parasm_{1})$, and subsequent step sizes $\alpha_{t} = a/(\parasc n + b)$ for $t > 0$, with $b = 2a\parasm_{1}$, and $a>0$ set such that $\alpha_{t} \leq \alpha_{0}$ for all $t$. Then, with probability no less than $1-\delta$, we have
\begin{align*}
\risk_{\ddist}(\what_{\DC})-\risk_{\ddist}^{\ast} \leq \exx_{\ddist}\|G(\wstar;Z)\|^{2}\left(\frac{a\parasm_{1}}{\parasc}\right)^{2} \frac{2c\log(\delta^{-1})}{n-M_{\delta}}
\end{align*}
where $M_{\delta} \leq 16\log(\delta^{-1})(M^{\ast}-b)$, and $c$ is exactly as in Theorem \ref{thm:sc_lip_smooth_SGDlast_roboost}.
\end{thm}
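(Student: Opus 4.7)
The plan is to mirror the proof of Theorem \ref{thm:sc_lip_smooth_SGDlast_roboost} essentially line by line, the only substantive modification being the ingredient that characterizes the per-subset $\SGD$ performance. Once again the structure is: (i) establish a single-run high-probability last-iterate guarantee for $\SGD$ on the reduced sample of size $\lfloor n/k \rfloor$, (ii) translate this into the $\varepsilon_{\ddist}(\cdot)/\delta_0$ form required by the hypothesis of Lemma \ref{lem:boost_conf_sc}, and (iii) invoke that lemma to get the final $1-\delta$ excess-risk bound, with the same per-$\merge$ constants $c$ appearing in Theorem \ref{thm:sc_lip_smooth_SGDlast_roboost}.

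The only real work is step (i), where the Lipschitz assumption \ref{asmp:lip_loss} is no longer available. Under $\text{\ref{asmp:sm_loss}}(\parasm_{1})$ together with $\text{\ref{asmp:sc_risk}}(\parasc)$, one has the classical ``smooth strongly convex'' $\SGD$ bound driven by the variance proxy $\sigma_{\ast}^{2} := \exx_{\ddist}\|G(\wstar;Z)\|^{2}$ rather than by $\parasm_{0}^{2}$. Following a Moulines--Bach style recursion on $\exx \|\what_{t}-\wstar\|^{2}$, the initial update with $\alpha_{0}=1/(2\parasm_{1})$ contracts the initial bias, and subsequent steps $\alpha_{t}=a/(\parasc n+b)$ with $b=2a\parasm_{1}$ yield, once $n \geq M^{\ast}$ (which is precisely the threshold at which the initial bias term is dominated by the stochastic noise term), an expectation bound of the form
\begin{align*}
\exx_{\ddist}\|\what_{n}-\wstar\|^{2} \leq \sigma_{\ast}^{2}\left(\frac{a}{\parasc}\right)^{2}\frac{1}{n-M^{\ast}+b/\parasc}.
\end{align*}
Invoking $\text{\ref{asmp:sm_risk}}(\parasm_{1})$ (which is implied by $\text{\ref{asmp:sm_loss}}(\parasm_{1})$) to pass from squared distance to excess risk via $\risk_{\ddist}(w)-\risk_{\ddist}^{\ast}\leq (\parasm_{1}/2)\|w-\wstar\|^{2}$, and then Markov's inequality, yields a bound of the form $\prr\{\risk_{\ddist}(\what_{n})-\risk_{\ddist}^{\ast} > \varepsilon_{\ddist}(n)/\delta_{0}\}\leq\delta_{0}$ with $\varepsilon_{\ddist}(n) \propto \sigma_{\ast}^{2}(\parasm_{1}a^{2}/\parasc^{2})/(n-M^{\ast})$, which is exactly the input format required by Lemma \ref{lem:boost_conf_sc}. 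I expect this SGD analysis to be the main obstacle, since carefully tracking the constants through the recursion (including the precise role of $M^{\ast}$ and the initial step) is where the threshold $M_{\delta}$ originates.

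With this $\varepsilon_{\ddist}(\cdot)$ in hand, the rest is identical to the previous proof. Plugging into Lemma \ref{lem:boost_conf_sc} with $n$ replaced by the per-subset size $\lfloor n/k \rfloor$ and $k=\lceil 8\log(\delta^{-1})\rceil$, the argument of $\varepsilon_{\ddist}(\cdot)$ becomes roughly $n/(8\log(\delta^{-1}))-M^{\ast}$; scaling out the factor $8\log(\delta^{-1})$ to recover a $\log(\delta^{-1})/n$-style rate produces exactly a denominator of the form $n-M_{\delta}$ with $M_{\delta}\leq 16\log(\delta^{-1})(M^{\ast}-b)$, absorbing the $b/\parasc$ shift from the SGD bound into the constant. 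The $\merge$-dependent constant $c$ is exactly as before: with $\gamma=0$ for $\smball$ and $\gamma\to 1/4$ for $\geomed$ and $\median$, the factor $4c_{\gamma}^{2}\cdot 8/(1-\gamma)^{2}$ gives $288$, $1536$, and $1536d$ respectively.

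Finally, combining the $4c_{\gamma}^{2}\parasm_{1}/\parasc$ amplification from Lemma \ref{lem:boost_conf_sc} with the $\parasm_{1}a^{2}/\parasc^{2}$ factor inside $\varepsilon_{\ddist}(\cdot)$ produces the $(a\parasm_{1}/\parasc)^{2}$ grouping displayed in the theorem, and gathering numerical constants into $c$ yields the stated bound. The only non-cosmetic deviation from the Theorem \ref{thm:sc_lip_smooth_SGDlast_roboost} template is therefore the replacement of the Lipschitz-based last-iterate bound with the smooth-case Moulines--Bach bound, plus the bookkeeping that converts the per-subset warm-up threshold $M^{\ast}$ into the global threshold $M_{\delta}$.
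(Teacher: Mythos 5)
Your proposal follows essentially the same route as the paper: the paper likewise replaces the Lipschitz-based last-iterate bound (Theorem \ref{thm:learn_sc_lip_smooth_SGDlast}) with the smooth-case last-iterate bound of Theorem \ref{thm:learn_sc_smooth_SGDlast} (imported from \citet{nguyen2018a}, building on \citet{bottou2016a}, rather than re-derived from scratch), then feeds the resulting $\varepsilon_{\ddist}(\cdot)$ through Lemma \ref{lem:boost_conf_sc} exactly as in the first theorem, absorbing $8\log(\delta^{-1})(M^{\ast}-b)/(1-\gamma)^{2}\leq 16\log(\delta^{-1})(M^{\ast}-b)$ into $M_{\delta}$. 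The only slip is in your constant bookkeeping: the paper's per-subset bound carries a factor $2a^{2}\parasm_{1}/\parasc$ rather than your $\parasm_{1}a^{2}/\parasc^{2}$, which is what makes its product with the $4c_{\gamma}^{2}\parasm_{1}/\parasc$ amplification come out to the stated $(a\parasm_{1}/\parasc)^{2}$ instead of an extra power of $1/\parasc$.
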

\begin{proof}[Proof of Theorem \ref{thm:sc_smooth_SGDlast_roboost}]
As with the preceding illustrative proof, the key is to fill in $\varepsilon_{\ddist}(\cdot)$ for the final iterate of standard SGD, using the prescribed step sizes. It is well-known that for \emph{averaged} SGD, one does not need to require that the losses be Lipschitz. On the other hand, for last-iterate SGD, it was only quite recently that \citet{nguyen2018a}, in a nice argument building upon \citet{bottou2016a}, showed that the Lipschitz condition is not required if we have $\parasm_{1}$-smooth losses. For our purposes, this implies that for each of the $\what^{(j)}$ candidates in Algorithm \ref{algo:DandC_SGD}, we get 
\begin{align*}
\risk_{\ddist}(\what^{(j)})-\risk_{\ddist}^{\ast} \leq \frac{\exx_{\ddist}\|G(\wstar;Z)\|^{2}}{(n/k)-M^{\ast}+b} \left(\frac{1}{\delta}\right) \left(\frac{2a^{2}\parasm_{1}}{\parasc}\right)
\end{align*}
with probability no less than $1-\delta$. A detailed statement of this property is given in Theorem \ref{thm:learn_sc_smooth_SGDlast} in the appendix. The rest of the argument goes through exactly as in the proof of Theorem \ref{thm:sc_lip_smooth_SGDlast_roboost}, noting that we shall end up with $n-8\log(\delta^{-1})(M^{\ast}-b)/(1-\gamma)^{2}$ in the denominator, for arbitrary choice of $0 \leq \gamma < 1/4$. To cover all choices of $\merge$ and thus $\gamma$, we simply use the rough upper bound $8\log(\delta^{-1})(M^{\ast}-b)/(1-\gamma)^{2} \leq 16\log(\delta^{-1})(M^{\ast}-b)$ in the stated result.
\end{proof}

\subsubsection{Comparison with RGD}\label{sec:comparison_sc}

\begin{table}[t!]
\begin{center}
\resizebox{\textwidth}{!}{ 
\begin{tabular}{|l|c|c|}
\hline
Method & Error & Cost \\
\hline\hline
\texttt{DC-SGD} (Algorithm \ref{algo:DandC_SGD}) & $\displaystyle \bigO\left(\frac{\log(\delta^{-1})}{n}\right)$ & $\displaystyle \bigO\left(\frac{dn}{\log(\delta^{-1})}\right) + \cost\left(\merge\right)$ \\
\hline
RGD-by-MoM \citep{chen2017a,prasad2018a} & $\displaystyle \bigO\left((1-c)^{2T}\right) + \bigO\left(\frac{k(d+\log(\delta^{-1}))}{n}\right)$ & $\displaystyle \bigO\left( \frac{Tdn}{k} \right) + T \cost(\geomed)$ \\
\hline
RGD-M \citep{holland2019a} & $\displaystyle \bigO\left((1-c)^{2T}\right) + \bigO\left( \frac{d(\log(d\delta^{-1})+\log(n))}{n} \right)$ & $\displaystyle \bigO\left( Tdn \right)$ \\
\hline\hline
MoM-by-GD \citep{lecue2018a} & $\displaystyle \bigO\left(\|\what_{T}-\what_{\star}\|\right) + \bigO\left( \sqrt{\frac{\max\{d,\log(\delta^{-1})\}}{n}}\right)$ & $\displaystyle \bigO\left( \frac{Tdn}{k} \right) + \bigO\left( Tk\log(k) \right)$ \\
\hline
\end{tabular}
}
\end{center}
\vspace{-0.5cm}
\caption{Here we compare performance guarantees for different learning algorithms. Error refers to $1-\delta$ confidence intervals for $\risk_{\ddist}(\cdot)-\risk_{\ddist}^{\ast}$, evaluated at the output of each algorithm after $T$ iterations (with \texttt{DC-SGD} using $T=n$ by definition). The first three rows are all under the assumptions of Theorem \ref{thm:sc_smooth_SGDlast_roboost}. The final row is just for reference, specialized to the binary classification problem. Cost estimates assume the availability of $k$ cores for parallel computations.}
\label{table:sc_compare}
\end{table}

Considering the three points of interest highlighted in section \ref{sec:intro} (transparency, strength, and scalability), let us compare Algorithm \ref{algo:DandC_SGD} with the existing RGD algorithms introduced in section \ref{sec:context_contribs}. In Table \ref{table:sc_compare}, we summarize some concrete metrics on the statistical and computational side. Let us unpack and discuss this here. First, the technical assumptions being made here are precisely that of our Theorem \ref{thm:sc_smooth_SGDlast_roboost} for the first three rows of the table. The guarantees follow from \citet[Thm.~4]{chen2017arxiv} for RGD-by-MoM, \citet[Thm.~5]{holland2019c} for RGD-M, and Theorem \ref{thm:sc_smooth_SGDlast_roboost} for \texttt{DC-SGD}. While the factors concealed by the $\bigO(\cdot)$ notation (chiefly $\parasm_{1}$ and $\parasc$) certainly cannot be ignored, in terms of explicit dependence on $n$, $d$, and $\delta$, we see that the dependence is as good or better in all respects, in particular the direct dependence on $d$ is removed. As for MoM-by-GD, the result holds for binary classification using a Lipschitz convex surrogate of the 0-1 loss, from \citet[Thm.~2]{lecue2018a}. Here $\what_{\star}$ denotes the true minimizer of $\widehat{\loss}_{\star}$ given in (\ref{eqn:rgd_lecue}), and $\what_{T}$ the output of MoM-by-GD after $T$ steps. Since convergence rates for $\|\what_{T}-\what_{\star}\|$ are not available, the overall guarantees are weaker than the above-cited RGD procedures.

Regarding computational costs, let us consider basic estimates for the temporal cost in terms of arithmetic operations required. Starting with Algorithm \ref{algo:DandC_SGD}, for each subset $\II_{j}$, we need $\bigO(dn/k)$ operations to obtain candidate $\what^{(j)}$, and these computations can be done independently on processors running in parallel over the entire learning process, until the final merge. With this in mind, the time cost to obtain all $k$ candidates will be $\bigO(dn/k)$, and then all that remains is \emph{one} call to $\merge$, yielding a total cost of $\bigO(dn/k) + \cost(\merge)$. The table above reflects a setting of $k \propto \log(\delta^{-1})$ to match Theorem \ref{thm:sc_smooth_SGDlast_roboost}. For comparison, RGD-by-MoM (\ref{eqn:rgd_mom}) requires $\bigO(dn/k)$ operations to compute one subset mean, and again assuming the computations for each $\II_{j}$, $j \in [k]$, are done across $k$ cores in parallel, then if $T$ iterations are done, the total cost is $\bigO(Tdn/k) + T \cost(\geomed)$, since the $\geomed$-based merging must be done $T$ times. Regarding $\cost(\geomed)$, the geometric median is a convex program, and can efficiently be solved to arbitrary accuracy; \citet{cohen2016a} give an implementation such that the $\geomed$ objective is $(1+\varepsilon)$-good (relative value), with time complexity of $\cost(\geomed) = \bigO(dk\log^{3}(\varepsilon^{-1}))$ for $k$ points. This cost is incurred at each step, in contrast with $\DC$, which in the case of $\merge=\geomed$, only incurs such a cost once. For RGD-M (\ref{eqn:rgd_cat}), note that solving for $\widehat{\theta}_{j}(w)$ can be done readily using a fixed-point update, and in practice the number of iterations is $\bigO(1)$, fixed independently of $n$ and $d$, which means $\bigO(dn)$ operations will be required for each of the $T$ steps. Assuming a standard empirical estimate of the per-coordinate variance is plugged in, this will require an additional $\bigO(dn)$ arithmetic operations. Finally for MoM-by-GD, sorting can be done in $\bigO(k\log(k))$ steps, update directions require just $\bigO(dn/k)$ operations, and these costs are incurred at all $T$ steps.

All else equal, under potentially heavy-tailed losses/gradients, there appears to be fairly strong formal evidence that better statistical guarantees may be possible at substantially improved computational cost, by choosing \texttt{DC-SGD} over the existing RGD procedures in the literature. Since \texttt{DC-SGD} only requires $T=n$ iterations in total, we see the obvious potential for costs to be improved by an order of magnitude, e.g., when $T=\Omega(n)$ for other routines. That said, there are other factors that remain to be considered, such as the variance over time and across independent samples, and the impact in performance for risk functions with different $\parasm_{1}/\parasc$ ratios, in both low- and high-dimensional problem settings. To elucidate how the formal guarantees derived above play out in practice, we conduct a detailed empirical analysis in section \ref{sec:empirical_sc}.

\subsection{Without strong convexity}\label{sec:theory_nonsc}

\subsubsection{Challenges under weak convexity}

When one is lucky enough to have a $\parasc$-strongly convex risk $\risk_{\ddist}$, as illustrated throughout the previous section \ref{sec:theory_sc}, using a very simple basic idea, a wide range of \emph{distance}-based algorithmic strategies are available. Say we have $k$ candidates $\what^{(1)},\ldots,\what^{(k)}$, and we know that with high probability, a majority of the candidates are $\varepsilon$-good in terms of the risk $\risk_{\ddist}$. Since $\risk_{\ddist}$ is unknown, we can never know which candidates are the $\varepsilon$-good ones. However, this barrier can be circumvented by utilizing the fact that $\parasc$-strong convexity of $\risk_{\ddist}$ implies that any $\varepsilon$-good candidate must be at least $\sqrt{2\varepsilon/\parasc}$-close to $\wstar$, the minimizer of $\risk_{\ddist}$ on $\WW$. It follows that on the ``good event'' in which the majority of candidates are $\varepsilon$-good, it is sufficient to simply ``follow the majority.'' This can be done in various ways, but in the end all such procedures comes down to computing and comparing distances $\|w-\what^{(j)}\|$ for all $j \in [k]$. This can be done without knowing which of the $\what^{(j)}$ are $\varepsilon$-good, which made the problem tractable in the previous section.

Unfortunately, as discussed in the introduction, $\parasc$-strong convexity is a luxury that is often unavailable. In particular for high-dimensional settings, it is common for the strong convexity parameter $\parasc$ to shrink rapidly as $d$ grows, making $1/\parasc$-dependent error bounds vacuous \citep{bach2014a}. Algorithmically, if strong convexity cannot be guaranteed, then the distance-based strategy just described will fail, since for any particular minimizer $\wstar$, it is perfectly plausible to have a $\varepsilon$-good candidate which is arbitrarily far from $\wstar$ (see Figure \ref{fig:convex_hull_tricky}). Even when we assume $\parasm_{1}$-smoothness of the risk, all we can say is that $\varepsilon$-badness implies $\sqrt{2\varepsilon/\parasm_{1}}$-farness from all minimizers; the converse need not hold. The traditional approach to this problem is to set aside some additional data, and simply choose the empirical risk minimizer on this new data. More concretely, assume that from the first sample $\Z_{n}$ we obtain independent candidates $\what^{(1)},\ldots,\what^{(k)}$, and that we have a second sample $\Z_{n}^{\prime}=(Z_{1}^{\prime},\ldots,Z_{n}^{\prime})$ available for ``validation,'' as it were. With this second sample, the traditional approach has the learner return
\begin{align}\label{eqn:conf_boosting_classical}
\what = \argmin \left\{ \frac{1}{n}\sum_{i=1}^{n}\loss(w;Z_{i}^{\prime}): w \in \{\what^{(1)},\ldots,\what^{(k)}\} \right\}.
\end{align}
This technique of confidence boosting for bounded losses is well-known; see \citet[Ch.~4.2]{kearns1994CLTIntro} for a textbook introduction, and a more modern statement due to \citet[Thm.~26]{shalev2010a}. Under exp-concave distributions, \citet{mehta2016a} also recently made use of this technique. Problems arise, however, when the losses can be potentially heavy-tailed. The quality of the validated final candidate is only as good as the precision of the risk estimate, and the empirical risk is well-known to be sub-optimal under potentially heavy-tailed data \citep{devroye2016a}.

\begin{figure}[t]
\centering
\includegraphics[width=0.5\textwidth]{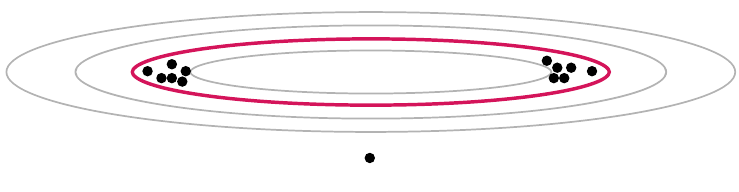}
\caption{An illustration of the difficulties of distance-based methods without strong convexity. The red oval represents the $\varepsilon$-level contour line of convex risk $\risk_{\ddist}$.}
\label{fig:convex_hull_tricky}
\end{figure}

\subsubsection{Error bounds when both losses and gradients can be heavy-tailed}

\begin{algorithm}[t!]
\caption{Divide-and-conquer with robust validation; $\displaystyle \texttt{RV-SGDAve}\left[\Z_{n},\Z_{n}^{\prime},\what_{0};k\right]$.}
\label{algo:DandC_valid}
\begin{algorithmic}
\State \textbf{inputs:} samples $\Z_{n}$ and $\Z_{n}^{\prime}$, initial value $\what_{0} \in \WW$, parameter $1 \leq k \leq n$.

\medskip

\State Split $\displaystyle \bigcup_{j = 1}^{k}\II_{j} = [n]$, with $|\II_{j}| \geq \lfloor n/k \rfloor$, and $\II_{j} \cap \II_{l} = \emptyset$ when $j \neq l$.%

\medskip

\State For each $j \in [k]$, set $\displaystyle \overbar{w}^{(j)}$ to the mean of the sequence $\displaystyle \SGD[\what_{0};\Z_{\II_{j}},\WW]$.

\medskip

\State Compute $\displaystyle \star = \argmin_{j \in [k]} \, \valid\left[\overbar{w}^{(j)};\Z_{n}^{\prime}\right]$.%

\medskip

\State \textbf{return:} $\displaystyle \what_{\RV} = \overbar{w}^{(\star)}$.
\end{algorithmic}
\end{algorithm}

Considering the challenges just described, we look at a straightforward robustification of the classical validation-based approach using robust mean estimators. The full procedure is summarized in Algorithm \ref{algo:DandC_valid}. Viewed at a high level, Algorithm \ref{algo:DandC_valid} is comprised of three extremely simple steps: partition, train, and validate. For our purposes, the key to improving on traditional ERM-style boosting techniques is to ensure the validation step is done with sufficient precision, even when the losses can be heavy-tailed. To achieve this, we shall require that there exist a constant $c > 0$ which does not depend on the distribution $\ddist$, such that for any choice of confidence level $\delta \in (0,1)$ and large enough $n$, the sub-routine $\valid$ satisfies
\begin{align}
\label{eqn:valid_requirement}
\prr\left\{ |\valid\left[w;\Z_{n}^{\prime}\right]-\risk_{\ddist}(w)| > c \sqrt{\frac{(1+\log(\delta^{-1}))\sigma_{\ddist}^{2}(w)}{n}} \right\} \leq \delta.
\end{align}
Recall that we are denoting $\sigma_{\ddist}^{2}(w) \defeq \vaa_{\ddist} \loss(w;Z)$, thus the only requirement on the class of data distributions is finite variance, readily allowing for both heavy-tailed losses and gradients. Three concrete implementations of $\valid$ which satisfy (\ref{eqn:valid_requirement}) are given in Algorithms \ref{algo:valid_mom}--\ref{algo:valid_LM} (see Lemma \ref{lem:valid_basic_prop} shortly), with upper bounds on the constant $c$. The training step can be done in any number of ways; for concreteness and clarity of the results, once again we elect to use the simple stochastic gradient descent sub-process $\SGD$, given earlier in (\ref{eqn:sgd_defn}). Under weak assumptions on the underlying loss distribution, the output $\what_{\RV}$ of this algorithm enjoys strong excess risk bounds, as the following theorem shows.
\begin{thm}\label{thm:smooth_SGDave_roboost}
Let $\risk_{\ddist}$ be $\parasm_{1}$-smooth in the $\ell_{2}$ norm,  $\exx_{\ddist}\|G(w;Z)-\nabla\risk_{\ddist}(w)\|^{2}_{2} \leq \sigma_{G,\ddist}^{2} < \infty$, and $\sigma_{\ddist}^{2}(w) \leq \sigma_{\loss,\ddist}^{2} < \infty$ for all $w \in \WW$. Run Algorithm \ref{algo:DandC_valid} with sub-routine $\valid$ satisfying (\ref{eqn:valid_requirement}), given a total sample size $n \geq 2k$ split into $\Z_{n/2}$ and $\Z_{n/2}^{\prime}$, and $\SGD$ sub-processes using step sizes $\alpha_{t} = 1/(\parasm_{1}+(1/a))$, where $a = \diameter/\sqrt{n\sigma_{G,\ddist}^{2}/2k}$. If we set $k = \lceil \log(2\lceil\log(\delta^{-1})\rceil\delta^{-1}) \rceil$, then for any confidence parameter $0 < \delta \leq 1/3$, we have
\begin{align*}
\risk_{\ddist}(\what_{\RV})-\risk_{\ddist}^{\ast} \leq 2c\sqrt{\frac{2(1+\log(2\lceil\log(\delta^{-1})\rceil\delta^{-1}))\sigma_{\loss,\ddist}^{2}}{n}} + 3\left( \frac{k\diameter^{2}\parasm_{1}}{n} + \sqrt{\frac{2k\diameter^{2}\sigma_{G,\ddist}^{2}}{n}} \right)
\end{align*}
with probability no less than $1-3\delta$.
\end{thm}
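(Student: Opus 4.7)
The plan is to exploit the explicit separation in Algorithm \ref{algo:DandC_valid} between the training sample $\Z_{n/2}$ and the validation sample $\Z_{n/2}^{\prime}$, controlling two independent ``good'' events and then combining them via a standard validation-style comparison. Throughout, $\overbar{w}^{(j^{\star})}$ denotes a reference candidate known to have small excess risk from the training side, and the output $\what_{\RV}=\overbar{w}^{(\star)}$ is shown to be almost as good via the robust validator.

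The training analysis begins from a standard in-expectation bound for averaged SGD on a $\parasm_{1}$-smooth convex objective with gradient-noise variance $\sigma_{G,\ddist}^{2}$: with $m \geq \lfloor n/(2k)\rfloor$ iterations and the prescribed step size $\alpha_{t}=1/(\parasm_{1}+1/a)$, the choice $a=\diameter/\sqrt{n\sigma_{G,\ddist}^{2}/(2k)}$ balances the drift and variance contributions, yielding
\begin{align*}
\exx_{\ddist}\bigl[\risk_{\ddist}(\overbar{w}^{(j)})-\risk_{\ddist}^{\ast}\bigr] \;\leq\; \frac{k\diameter^{2}\parasm_{1}}{n}+\sqrt{\frac{2k\diameter^{2}\sigma_{G,\ddist}^{2}}{n}} \;=:\; \bar{\varepsilon}_{m}.
\end{align*}
Markov's inequality at level $1/3$ then gives $\prr\{\risk_{\ddist}(\overbar{w}^{(j)})-\risk_{\ddist}^{\ast} > 3\bar{\varepsilon}_{m}\} \leq 1/3$ for each individual $j$, and because the $k$ candidates are built from disjoint subsets and are therefore mutually independent, the probability that every one of them is ``bad'' is at most $(1/3)^{k}$, which is at most $\delta$ for the stated choice of $k$. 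Let $j^{\star}\in[k]$ denote any index with $\risk_{\ddist}(\overbar{w}^{(j^{\star})})-\risk_{\ddist}^{\ast}\leq 3\bar{\varepsilon}_{m}$ on this good event.

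For the validation analysis, apply the defining property (\ref{eqn:valid_requirement}) of $\valid$ separately to each of the $k$ candidates $\overbar{w}^{(j)}$ using the independent sample $\Z_{n/2}^{\prime}$ of size $n/2$, union-bounding at per-candidate confidence $\delta/(2\lceil\log(\delta^{-1})\rceil)$. This yields a uniform validation error bound
\begin{align*}
\bigl|\valid[\overbar{w}^{(j)};\Z_{n/2}^{\prime}]-\risk_{\ddist}(\overbar{w}^{(j)})\bigr| \;\leq\; c\sqrt{\frac{2(1+\log(2\lceil\log(\delta^{-1})\rceil\delta^{-1}))\sigma_{\loss,\ddist}^{2}}{n}}
\end{align*}
simultaneously for all $j\in[k]$, with probability at least $1-\delta$, where the definition of $k$ is chosen precisely so that $k \leq 2\lceil\log(\delta^{-1})\rceil$ and the union bound closes with the logarithmic factor in the form appearing in the theorem. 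On the intersection of the training and validation good events, the optimality of $\star$ for $\valid$ combined with two applications of the uniform validation bound (once to $\overbar{w}^{(\star)}$ and once to $\overbar{w}^{(j^{\star})}$) yields
\begin{align*}
\risk_{\ddist}(\what_{\RV})-\risk_{\ddist}^{\ast} \;\leq\; 3\bar{\varepsilon}_{m} + 2c\sqrt{\frac{2(1+\log(2\lceil\log(\delta^{-1})\rceil\delta^{-1}))\sigma_{\loss,\ddist}^{2}}{n}},
\end{align*}
which is the claimed inequality; consolidating the failure probabilities of the two independent good events, plus the constraint $\delta\leq 1/3$, leaves a total failure budget of $3\delta$.

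The main obstacle is the tight in-expectation SGD bound with the precise step-size schedule from the hypothesis: the choice $\alpha_{t}=1/(\parasm_{1}+1/a)$ with $a=\diameter/\sqrt{n\sigma_{G,\ddist}^{2}/(2k)}$ is exactly the one that balances the bias and variance contributions so as to produce the two-term rate above with the particular constants appearing in the theorem (rather than a strictly worse rate). Verifying this requires careful Lyapunov-style book-keeping on $\tfrac{1}{2}\|\what_{t}-\wstar\|^{2}$, combined with smoothness, the bounded-variance stochastic oracle, and Jensen's inequality for the averaged iterate, but is essentially classical. Everything else reduces to a Markov bound plus a single union bound over $k$ candidates, which is routine.
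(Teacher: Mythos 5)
Your proposal is correct and follows essentially the same route as the paper: an in-expectation bound for averaged SGD on each disjoint training subset, Markov plus independence to guarantee at least one good candidate with probability $1-\delta$, a union bound at per-candidate level $\delta/(2\lceil\log(\delta^{-1})\rceil)$ to make the robust validator uniformly accurate over the $k$ candidates, and the standard three-term comparison between $\overbar{w}^{(\star)}$ and the lucky candidate. The only (immaterial) differences are that you apply Markov at level $1/3$ where the paper uses level $1/\ct{e}$ and then bounds $\ct{e}\leq 3$, and your accounting of the failure probability is in fact slightly tighter ($2\delta$) than the stated $3\delta$.
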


\paragraph{Proof sketch}
Here we give an overview of the proof of Theorem \ref{thm:smooth_SGDave_roboost}. We have data sequences $\Z_{n}$ and $\Z_{n}^{\prime}$. The former is used to obtain independent candidates $\overbar{w}^{(1)},\ldots,\overbar{w}^{(k)}$, and the latter is used to select among these candidates. As mentioned earlier, distance-based strategies require that the \emph{majority} of these candidates are $\varepsilon$-good, in order to ensure that points near the majority coincide with $\varepsilon$-good points. In our present setup, where strong convexity is not available, we are taking a very different approach. Now we only require that \emph{at least one} of the candidates is $\varepsilon$-good. Making this explicit,
\begin{align}
\EE_{1}(\varepsilon;k) \defeq \bigcup_{j=1}^{k} \left\{ \risk_{\ddist}(\overbar{w}^{(j)})-\risk_{\ddist}^{\ast} \leq \varepsilon \right\}
\end{align}
is our first event of interest. Note that \emph{if} for each $j \in [k]$ we have an upper bound $\varepsilon_{\ddist}(\cdot)$ depending on the sample size for the sub-process outputs $\overbar{w}^{(j)}$ such that
\begin{align*}
\exx\left[ \risk_{\ddist}(\overbar{w}^{(j)}) - \risk_{\ddist}^{\ast} \right] \leq \varepsilon_{\ddist}(\lfloor n/k \rfloor),
\end{align*}
where expectation is taken over the subset indexed by $\II_{j}$, then using Markov's inequality and taking a union bound, it follows that setting $\varepsilon = \ct{e}\,\varepsilon_{\ddist}$, we have $\prr\EE_{1}(\ct{e}\,\varepsilon_{\ddist};k) \geq 1-\ct{e}^{-k}$. Asking for one $\varepsilon$-good candidate is a much weaker requirement than asking for the majority to be $\varepsilon$-good, but we must pay the price in a different form, as we require that $\valid$ provide a good estimate of the true risk for \emph{all} of the $k$ candidates. In particular, writing $b_{\ddist}(n,\delta)$ for a confidence interval to be specified shortly, this is the following event:
\begin{align}
\EE_{2}(\delta;k) \defeq \bigcap_{j=1}^{k} \left\{ \left|\valid\left[\overbar{w}^{(j)};\Z_{n}^{\prime}\right]-\risk_{\ddist}(\overbar{w}^{(j)})\right| \leq b_{\ddist}(n,\delta) \right\}.
\end{align}
Intuitively, while we only require that at least one of the $k$ candidates be good, we must reliably know which is \emph{best} at the available precision, which requires paying the price of the intersection defining $\EE_{2}(\delta;k)$. Recalling the requirement (\ref{eqn:valid_requirement}), if we condition on $\Z_{n}$, the candidates $\overbar{w}^{(1)},\ldots,\overbar{w}^{(k)}$ become non-random elements of $\WW$, which means that setting $b_{\ddist}(n,\delta) = c\sqrt{(1+\log(\delta^{-1}))\sigma_{\loss,\ddist}^{2}/n}$, a union bound gives us $\prr\left(\EE_{2}(\delta;k);\Z_{n}\right) \geq 1 - k\delta$. This inequality holds as-is for any realization of $\Z_{n}$, so we can thus integrate to obtain
\begin{align*}
\prr\EE_{2}(\delta;k) = \int \prr\left(\EE_{2}(\delta;k);\Z_{n}\right) \, \ddist(\dif\Z_{n}) \geq 1-k\delta.
\end{align*}
The good event of interest then has probability
\begin{align*}
\prr\left[\EE_{1}\left(\ct{e}\,\varepsilon_{\ddist}\left(\left\lfloor \frac{n}{k} \right\rfloor\right);k\right) \cap \EE_{2}(\delta;k)\right] \geq 1 - \ct{e}^{-k} - k\delta.
\end{align*}
On this good event, we know that there does exist an $\varepsilon$-good candidate, even though we can never know which it is; call it $\overbar{w}_{\textsc{luck}} \in \{\overbar{w}^{(1)},\ldots,\overbar{w}^{(k)}\}$. Furthermore, even though this candidate is unknown, since we have $b_{\ddist}(n,\delta)$-good risk estimates for all $k$ candidates, the choice of $\overbar{w}^{(\star)}$, with $\star = \argmin_{j \in [k]} \valid[\overbar{w}^{(j)};\Z_{n}^{\prime}]$, cannot be much worse. More precisely, we have
\begin{align*}
\risk_{\ddist}(\overbar{w}^{(\star)}) - \risk_{\ddist}^{\ast} & = \risk_{\ddist}(\overbar{w}^{(\star)}) - \valid[\overbar{w}^{(\star)}] + \valid[\overbar{w}^{(\star)}] - \risk_{\ddist}^{\ast}\\
& \leq \risk_{\ddist}(\overbar{w}^{(\star)}) - \valid[\overbar{w}^{(\star)}] + \valid[\overbar{w}_{\textsc{luck}}] - \risk_{\ddist}^{\ast}\\
& = \left[\risk_{\ddist}(\overbar{w}^{(\star)}) - \valid[\overbar{w}^{(\star)}]\right] + \left[\valid[\overbar{w}_{\textsc{luck}}] - \risk_{\ddist}(\overbar{w}_{\textsc{luck}})\right] + \left[\risk_{\ddist}(\overbar{w}_{\textsc{luck}}) - \risk_{\ddist}^{\ast}\right]\\
& \leq 2b_{\ddist}(n,\delta) + \ct{e}\,\varepsilon_{\ddist}\left(\lfloor n/k \rfloor\right).
\end{align*}
We have effectively proved the following lemma.
\begin{lem}[Boosting the confidence under potentially heavy tails]\label{lem:boost_conf_unbounded}
Assume we have a learning algorithm $\learn$ such that for $n \geq 1$ and $\delta \in (0,1)$, we have
\begin{align*}
\prr\left\{ \risk_{\ddist}(\learn[\Z_{n}])-\risk_{\ddist}^{\ast} > \frac{\varepsilon_{\ddist}(n)}{\delta} \right\} \leq \delta.
\end{align*}
Splitting the data $\Z_{n}$ using sub-indices $\II_{1},\ldots,\II_{k}$, if we set
\begin{align*}
\star = \argmin_{j \in [k]} \valid\left[ \learn[\Z_{\II_{j}}]; \Z_{n}^{\prime} \right],
\end{align*}
then when $\valid$ satisfies (\ref{eqn:valid_requirement}), it follows that for any $\delta \in (0,1)$, we have
\begin{align*}
\risk_{\ddist}\left(\learn[\Z_{\II_{\star}}]\right)-\risk_{\ddist}^{\ast} & \leq \sup_{w \in \WW} 2c\sqrt{\frac{(1+\log(\delta^{-1}))\sigma_{\ddist}^{2}(w)}{n}} + \ct{e} \, \varepsilon_{\ddist}\left(\left\lfloor\frac{n}{k}\right\rfloor\right)
\end{align*}
with probability no less than $1-k\delta-\ct{e}^{-k}$.
\end{lem}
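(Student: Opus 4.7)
The plan is to decompose the high-probability guarantee into two good events: one ensuring that at least one of the $k$ candidates produced by $\learn$ on the disjoint subsets is risk-good, and one ensuring $\valid$ estimates the risk accurately for \emph{all} $k$ candidates. On the intersection, the argmin of $\valid$ cannot be much worse than the (unknown) lucky candidate.

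For the first event $\EE_1$, I would apply the hypothesis on $\learn$ with $\delta_0 = 1/\ct{e}$ to each subset, obtaining $\prr\{\risk_\ddist(\learn[\Z_{\II_j}]) - \risk_\ddist^\ast > \ct{e}\,\varepsilon_\ddist(\lfloor n/k \rfloor)\} \leq 1/\ct{e}$ for each $j \in [k]$. Since the $\II_j$ are disjoint and the $Z_i$ are iid, the candidates $\learn[\Z_{\II_1}],\ldots,\learn[\Z_{\II_k}]$ are mutually independent, so the probability that \emph{none} is $\ct{e}\,\varepsilon_\ddist(\lfloor n/k \rfloor)$-good is at most $\ct{e}^{-k}$; thus $\prr \EE_1 \geq 1 - \ct{e}^{-k}$.

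For the second event $\EE_2$, the key subtlety is that (\ref{eqn:valid_requirement}) requires the target point to be non-random relative to the validation sample. Conditioning on $\Z_n$ freezes the candidates as deterministic elements of $\WW$, and since $\Z_n'$ is independent of $\Z_n$, applying (\ref{eqn:valid_requirement}) to each candidate and union-bounding over $j \in [k]$ yields $\prr(\EE_2 \mid \Z_n) \geq 1 - k\delta$ pointwise in $\Z_n$, which integrates to $\prr \EE_2 \geq 1 - k\delta$.

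On $\EE_1 \cap \EE_2$, which holds with probability at least $1 - \ct{e}^{-k} - k\delta$ by a further union bound, the conclusion follows from a short telescoping argument. Letting $\overbar{w}_{\textsc{luck}}$ denote any candidate certified by $\EE_1$, and writing $b_j \defeq |\valid[\learn[\Z_{\II_j}];\Z_n'] - \risk_\ddist(\learn[\Z_{\II_j}])|$, I would expand $\risk_\ddist(\learn[\Z_{\II_\star}]) - \risk_\ddist^\ast$ by adding and subtracting $\valid[\learn[\Z_{\II_\star}];\Z_n']$, use the definition of $\star$ to replace it by $\valid[\overbar{w}_{\textsc{luck}};\Z_n']$, and then add and subtract $\risk_\ddist(\overbar{w}_{\textsc{luck}})$ to obtain the bound $b_\star + b_{\textsc{luck}} + \ct{e}\,\varepsilon_\ddist(\lfloor n/k \rfloor)$. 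Bounding both validation errors uniformly by taking the supremum over $w \in \WW$ inside (\ref{eqn:valid_requirement}) yields the stated inequality. I don't anticipate any serious obstacle—this is essentially the classical validation-based confidence-boosting argument, with the single new ingredient being the use of a robust $\valid$ in place of the empirical mean so that the per-candidate validation error shrinks at the sub-Gaussian rate even under potentially heavy-tailed losses.
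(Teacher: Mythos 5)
Your proposal is correct and follows essentially the same route as the paper: the two events $\EE_1$ (at least one good candidate, via Markov at level $1/\ct{e}$ plus independence of the disjoint subsets) and $\EE_2$ (uniform validation accuracy, via conditioning on $\Z_n$ and a union bound over the $k$ candidates), combined with the same telescoping comparison between $\overbar{w}^{(\star)}$ and the lucky candidate. The only cosmetic difference is that you invoke the tail hypothesis directly with $\delta_0 = 1/\ct{e}$ where the paper passes through an expectation bound and Markov's inequality, and you correctly make explicit that the $\ct{e}^{-k}$ factor comes from independence of the candidates rather than a union bound.
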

\noindent Note that Lemma \ref{lem:boost_conf_unbounded} here makes no direct requirements on the underlying loss or risk, beyond the need for a variance bound, which appears as $\sigma_{\ddist}^{2}(w) \leq \sigma_{\loss,\ddist}^{2} < \infty$ in the statement of Theorem \ref{thm:smooth_SGDave_roboost}. Indeed, convexity does not even make an appearance. This is in stark contrast with the distance-based confidence boosting methods used in section \ref{sec:theory_sc}, and elsewhere in the literature \citep{minsker2015a,hsu2016a}. As such, so long as we can validate in the sense of (\ref{eqn:valid_requirement}), then Lemma \ref{lem:boost_conf_unbounded} gives us a general-purpose tool from which we can construct algorithms with competitive risk bounds under potentially heavy-tailed data. The following lemma shows that validation can indeed be done in the desired way, using straightforward computational procedures.
\begin{lem}\label{lem:valid_basic_prop}
The following implementations of $\valid[w;\cdot]$ satisfy (\ref{eqn:valid_requirement}) with sample size $n$ and confidence level $0 < \delta < 1$, when passed sample $\{\loss(w;Z_{i}^{\prime}): i \in [n]\}$.
\begin{itemize}
\item $\mom[\cdot;k^{\prime}]$ (Algorithm \ref{algo:valid_mom}), with $c \leq 2\sqrt{2}\ct{e}$, when $k^{\prime} = \lceil \log(\delta^{-1})\rceil$ and $n \geq 2(1+\log(\delta^{-1}))$.

\item $\cat[\cdot;\sigma_{\ddist}^{2}(w),\delta]$ (Algorithm \ref{algo:valid_cat}), with $c \leq 2$, when $n \geq 4\log(\delta^{-1})$.

\item $\LM[\cdot;\delta]$ (Algorithm \ref{algo:valid_LM}), with $c \leq 9\sqrt{2}$, when $n \geq (16/3)\log(8\delta^{-1})$.
\end{itemize}
\end{lem}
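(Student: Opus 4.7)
The plan is to treat each of the three items independently, since each invokes a separate concentration inequality for a particular robust mean estimator. Fix $w \in \WW$ and abbreviate $\mu = \risk_{\ddist}(w)$ and $\sigma^{2} = \sigma_{\ddist}^{2}(w)$. Since $w$ is fixed, the requirement (\ref{eqn:valid_requirement}) reduces to a sub-Gaussian-type tail bound for an estimator of $\mu$ built from the iid scalar sample $\{\loss(w;Z_{i}^{\prime})\}_{i\in[n]}$ of variance at most $\sigma^{2}$; no uniformity over $\WW$ is needed.

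For $\mom[\cdot;k^{\prime}]$ I would run the classical median-of-means argument. Partition the sample into $k^{\prime} = \lceil \log(\delta^{-1})\rceil$ blocks of size $m = \lfloor n/k^{\prime}\rfloor \geq 2$ (guaranteed by $n \geq 2(1+\log(\delta^{-1}))$). By Cantelli's inequality, each block mean $\bar{X}_{j}$ satisfies $\prr\{|\bar{X}_{j} - \mu| > c_{0}\sigma/\sqrt{m}\} \leq 1/(1+c_{0}^{2})$, so I would then apply Lemma \ref{lem:boost_basic_prop} (or equivalently a Chernoff/KL bound on the binomial tail) to the indicators of the complementary event. Tuning $c_{0}$ so that the resulting failure probability is at most $\delta$ for this $k^{\prime}$ forces $c_{0}$ of order $2\sqrt{2}\ct{e}$. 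Since the two-sided deviation bound on the median is $c_{0}\sigma\sqrt{k^{\prime}/n}$ and $k^{\prime} \leq 1+\log(\delta^{-1})$, the constant $c \leq 2\sqrt{2}\ct{e}$ follows.

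For $\cat[\cdot;\sigma^{2}(w),\delta]$ I would simply invoke Catoni's standard concentration bound for his M-estimator (Catoni, 2012, Prop.~2.4). With the scale parameter $s$ set as a prescribed function of $\sigma^{2}(w)$, $n$, and $\delta$ as in Algorithm \ref{algo:valid_cat}, and under the sample-size condition $n \geq 4\log(\delta^{-1})$, that result directly yields $|\cat - \mu| \leq 2\sigma\sqrt{(1+\log(\delta^{-1}))/n}$ with probability at least $1-\delta$, giving $c \leq 2$. For $\LM[\cdot;\delta]$ I would appeal to the sub-Gaussian deviation inequality of Lugosi and Mendelson for their trimmed/truncated mean construction: under $n \geq (16/3)\log(8\delta^{-1})$, one has $|\LM - \mu| \leq c_{2}\sigma\sqrt{\log(\delta^{-1})/n}$ with probability $\geq 1-\delta$, and instantiating their proof with the parameters prescribed in Algorithm \ref{algo:valid_LM} yields $c_{2} \leq 9\sqrt{2}$.

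The main obstacle is a careful audit of the multiplicative constants. The $\mom$ piece is the most delicate: with only $k^{\prime} = \lceil\log(\delta^{-1})\rceil$ blocks available, the Chernoff exponent on the block majority is barely enough, forcing $c_{0}$ (and hence $c$) to be chosen so that the Cantelli failure rate $1/(1+c_{0}^{2})$ leaves room for $k^{\prime} D(1/2 \,\|\, 1/(1+c_{0}^{2}))$ to dominate $\log(\delta^{-1})$; the value $c_{0} = 2\sqrt{2}\ct{e}$ is essentially this balancing. The $\cat$ and $\LM$ items then reduce to careful instantiations of well-known bounds with the paper's parameter choices, with only the final numerical constants requiring verification.
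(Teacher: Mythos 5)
Your proposal is correct and takes essentially the same route as the paper, whose proof of this lemma consists entirely of citing the three standard concentration results (Devroye et al.\ Sec.~4.1 / Hsu--Sabato for $\mom$, Catoni Prop.~2.4 for $\cat$, Lugosi--Mendelson Thm.~6 for $\LM$); your median-of-means paragraph simply unpacks the first citation. One small bookkeeping remark: the constant $2\sqrt{2}\ct{e}$ is most cleanly obtained as $c_{0}=2\ct{e}$ from balancing the Chebyshev/Cantelli per-block failure rate against the binomial tail $(4p)^{k^{\prime}/2}\leq\delta$, times an extra $\sqrt{2}$ from the block-size bound $\lfloor n/k^{\prime}\rfloor\geq n/(2k^{\prime})$ under the stated sample-size condition, rather than attributing the entire factor to $c_{0}$ while writing the deviation as $c_{0}\sigma\sqrt{k^{\prime}/n}$.
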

\begin{algorithm}[t!]
\caption{Median of means estimate; $\displaystyle \mom[\{u_{1},\ldots,u_{n}\};k]$.}
\label{algo:valid_mom}
\begin{algorithmic}
\State \textbf{inputs:} sample $\{u_{1},\ldots,u_{n}\}$, parameter $1 \leq k \leq n$.

\medskip

\State $\displaystyle \bigcup_{j = 1}^{k}\II_{j} = [n]$, with $|\II_{j}| \geq \lfloor n/k \rfloor$, and $\II_{j} \cap \II_{l} = \emptyset$ when $j \neq l$.%

\medskip

\State $\displaystyle \widehat{u}_{j} = \frac{1}{|\II_{j}|} \sum_{i \in \II_{j}} u_{i}$, for each $j \in [k]$.%

\medskip

\State \textbf{return:} $\displaystyle \med\{\widehat{u}_{1},\ldots,\widehat{u}_{k}\}$.
\end{algorithmic}
\end{algorithm}

\begin{algorithm}[t!]
\caption{Catoni-type M-estimate; $\displaystyle \cat[\{u_{1},\ldots,u_{n}\};\sigma,\delta]$.}
\label{algo:valid_cat}
\begin{algorithmic}
\State \textbf{inputs:} sample $\{u_{1},\ldots,u_{n}\}$, parameters $\sigma > 0$ and $0 < \delta < 1$.

\medskip

\State Set $\displaystyle q^{2} = \frac{2\sigma^{2}\log(2\delta^{-1})}{n-2\log(2\delta^{-1})}$ and $\displaystyle s^{2} = \frac{n(\sigma^{2}+q^{2})}{2\log(2\delta^{-1})}$. %

\medskip

\State \textbf{return:} $\displaystyle \argmin_{\theta \in \RR} \sum_{i=1}^{n} \rho\left(\frac{u_{i}-\theta}{s}\right)$.
\end{algorithmic}
\end{algorithm}

\begin{algorithm}[t!]
\caption{Truncated mean estimate; $\displaystyle \LM[\{u_{1},\ldots,u_{n}\};\delta]$.}
\label{algo:valid_LM}
\begin{algorithmic}
\State \textbf{inputs:} sample $\displaystyle \{u_{1},\ldots,u_{n}\}$, parameter $\displaystyle 0 < \delta < 1$.

\medskip

\State Split the index $\displaystyle [n] = \II_{1} \cup \II_{2}$, with $\displaystyle \II_{1} \cap \II_{2} = \emptyset$ and $\displaystyle |\II_{1}| \geq |\II_{2}| \geq \lfloor n/2 \rfloor$.

\medskip

\State Set $\beta = 32\log(8\delta^{-1})/(3n)$.

\medskip

\State Set $a$ and $b$ to the $\beta$- and $(1-\beta)$-level quantiles of $\{u_{i}: i \in \II_{2}\}$.

\medskip

\State \textbf{return:} $\displaystyle \frac{1}{|\II_{1}|} \sum_{i \in \II_{1}} u_{i} I_{\{a \leq u_{i} \leq b\}}$.
\end{algorithmic}
\end{algorithm}
\noindent With these key facts in place, it is straightforward to prove Theorem \ref{thm:smooth_SGDave_roboost}.
\begin{proof}[Proof of Theorem \ref{thm:smooth_SGDave_roboost}]
Since most key facts have already been laid out, we just need to fill in a few blanks and connect these facts. To begin, consider the $\varepsilon_{\ddist}(\cdot)$-bound on $\learn$ in Lemma \ref{lem:boost_conf_unbounded}. The special case of Algorithm \ref{algo:DandC_valid} is just $\learn[\Z_{n}] = \texttt{Average}\left[\SGD[\what_{0};\Z_{n},\WW]\right]$, namely the simplest form of averaged stochastic gradient descent. Given the assumptions, we are doing averaged SGD under a $\parasm_{1}$-smooth risk, without assuming strong convexity or a Lipschitz loss, and using the step-sizes specified in the hypothesis, a standard argument gives us
\begin{align}\label{eqn:smooth_SGDave_roboost_1}
\varepsilon_{\ddist}(n) \leq \left( \frac{\diameter^{2}\parasm_{1}}{2n} + \sqrt{\frac{\diameter^{2}\sigma_{G,\ddist}^{2}}{n}} \right),
\end{align}
where $\sigma_{G,\ddist}^{2}$ is as given in the theorem statement. See Theorem \ref{thm:learn_conv_smooth_SGDave} in the appendix for a proof of the more general result that implies (\ref{eqn:smooth_SGDave_roboost_1}). This can be applied to each sub-process via the correspondence $\overbar{w}^{(j)} \leftrightarrow \learn[\Z_{\II_{j}}]$. Note that the output of Algorithm \ref{algo:DandC_valid} corresponds to $\what_{\RV} \leftrightarrow \learn[\Z_{\II_{\star}}]$. Thus leveraging Lemma \ref{lem:boost_conf_unbounded} and (\ref{eqn:smooth_SGDave_roboost_1}), and bounding $\sigma_{\ddist}^{2}(w) \leq \sigma_{\loss,\ddist}^{2}$, we have for any choice of $\delta_{0} \in (0,1)$ that
\begin{align}\label{eqn:smooth_SGDave_roboost_2}
\risk_{\ddist}(\what_{\RV})-\risk_{\ddist}^{\ast} & \leq 2c\sqrt{\frac{(1+\log(\delta_{0}^{-1}))\sigma_{\loss,\ddist}^{2}}{n}} + \ct{e} \, \left( \frac{k\diameter^{2}\parasm_{1}}{2n} + \sqrt{\frac{k\diameter^{2}\sigma_{G,\ddist}^{2}}{n}} \right)
\end{align}
with probability no less than $1-\ct{e}^{-k}-k\delta_{0}$. Note that we are assuming $k$ divides $n$ for simplicity, and using the notation $\delta_{0}$ to distinguish from $\delta$ in the theorem statement. It just remains to clean up this probability and specify $k$. To do this, given $\delta$ in the theorem statement, first set $\delta_{0} = \delta/(2\lceil\log(\delta^{-1})\rceil) < \delta$. Next, set the number of subsets to be
\begin{align*}
k = \lceil \log(1/\delta_{0}) \rceil = \lceil \log(2\lceil\log(\delta^{-1})\rceil\delta^{-1}) \rceil,
\end{align*}
and note that with this setting of $k$ and $\delta_{0}$, we have that
\begin{align*}
1-k\delta_{0} & = 1 - \lceil \log(2\lceil\log(\delta^{-1})\rceil\delta^{-1}) \rceil\left(\frac{\delta}{2\lceil\log(\delta^{-1})\rceil}\right)\\
& \geq 1 - \left( \frac{\lceil\log(2)\rceil}{\lceil\log(\delta^{-1})\rceil} + \frac{\lceil\log(\log(\delta^{-1}))\rceil}{\lceil\log(\delta^{-1})\rceil} + 1 \right)\frac{\delta}{2}\\
& \geq 1 - \left(\frac{3}{2}\right)\delta\\
& \geq 1 - 2\delta.
\end{align*}
The inequalities follow readily via the fact that for arbitrary $c_{1},c_{2} \geq 0$ we have $\lceil c_{1} + c_{2} \rceil \leq \lceil c_{1} \rceil + \lceil c_{2} \rceil$, and that $\lceil\log(2)\rceil / \lceil\log(\delta^{-1})\rceil \leq 1$ for all $\delta \leq 1/2$. As for the exponential term, note that
\begin{align*}
\ct{e}^{-k} = \exp\left( -\lceil \log(\delta_{0}^{-1}) \rceil \right) \leq \exp\left( -\log(\delta_{0}^{-1}) \right) = \delta_{0} < \delta.
\end{align*}
It thus immediately follows that the good event of (\ref{eqn:smooth_SGDave_roboost_2}) holds probability no less than
\begin{align*}
1-\ct{e}^{-k}-k\delta_{0} \geq 1-\delta-2\delta = 1-3\delta.
\end{align*}
To conclude, since we have $n$ observations split in half, we must replace $n$ with $n/2$ in (\ref{eqn:smooth_SGDave_roboost_2}). Bounding the coefficient $\ct{e} \leq 3$ for simplicity yields the desired result.
\end{proof}

\subsubsection{Comparison of error bounds}\label{sec:comparison_nonsc}

Recall that in the introduction, we highlighted properties of transparency, strength, and stability as being important to close the gap between formal guarantees and the performance achieved by the methods we actually are coding. As mentioned in the literature review in section \ref{sec:context_contribs}, the robust gradient descent algorithms cited are noteworthy in that the procedures have strong (albeit slightly sub-optimal) guarantees for a wide class of distributions, for procedures which can be implemented essentially as-stated in the cited papers, making the guarantees very transparent. Unfortunately, the best results are essentially limited to problems in which the risk $\risk_{\ddist}$ is $\parasc$-strongly convex; all of the cited papers make extensive use of this property in their analysis \citep{chen2017a,holland2017arxiv,prasad2018a}. If one is lucky enough to have $\parasc$-strong convexity (and $\parasm_{1}$-smoothness), then for any step $t$, one has
\begin{align*}
\|\what_{t} - \alpha_{t} \, \nabla\risk_{\ddist}(\what_{t}) - \wstar\|^{2} \leq \left(1 - \frac{2\alpha_{t}\parasc\parasm_{1}}{\parasc+\parasm_{1}}\right)\|\what_{t} - \wstar\|^{2}.
\end{align*}
The only difference between the left-hand side of this inequality and the general-purpose robust GD update studied in the literature is that the true risk gradient is replaced with some estimator $\widehat{G}_{n} \approx \nabla \risk_{\ddist}$. As such, one can easily control $\|\what_{t+1} - \wstar\|$ using an upper bound that depends on the right-hand side of the above inequality and the statistical estimation error $\|\widehat{G}_{n}(\what_{t}) - \nabla \risk_{\ddist}(\what_{t})\|$. After say $T$ iterations, one can then readily unfold the recursion and obtain final error bounds that can be given as a sum of an optimization error term depending on the number of iterations $T$, and a statistical error term depending on the sample size $n$ (e.g., \citet[Thm.~5]{chen2017arxiv}, \citet[Sec.~7]{prasad2018a}, \citet[Thm.~5]{holland2019c}).

\paragraph{Error bounds without strong convexity}
On the other hand, when one does not have strong convexity, such a technique fails, and one is left having to compare the difference between two sequences, the actual robust GD iterates $(\what_{t})$, and the ideal sequence $(\wstar_{t})$ of gradient descent using the true risk gradient, assuming both sequences are initialized at the same point $\what_{0}=\wstar_{0}$. This point is discussed with analysis by \citet[Sec.~A.3]{holland2019d}.\footnote{Their original bounds involve a factor $dV$, where $V$ is an upper bound on the variance of the partial derivatives of the loss taken over all coordinates. One can easily strengthen their bounds by replacing bounds stated using $dV$ with bounds stated using $\sigma_{G,\ddist}^{2}$. Analogous analysis can be done to extend the results of \citet{chen2017a} to the weak convexity case as well.} One can still unfold the recursion without much difficulty, but the propagation of the statistical error becomes much more severe. In the simple case using a fixed step-size of $\alpha > 0$, ignoring non-dominant terms, under the same technical assumptions used in our theoretical analysis, after $T$ steps, the robust RGD procedures can only obtain $(1-\delta)$-high probability bounds of the form
\begin{align*}
\risk_{\ddist}(\what_{T})-\risk_{\ddist}^{\ast} \lesssim \bigO\left((1+\parasm_{1}\alpha)^{T}\sqrt{\frac{d(\sigma_{G,\ddist}^{2}\log(d\delta^{-1})+\log(n))}{n\,T}}\right),
\end{align*}
Note that the exponential dependence on $T$ makes the maximum number of iterations one can guarantee extremely sensitive to the values of $\parasm_{1}$ and $\alpha$.

In contrast, under the same assumptions, Theorem \ref{thm:smooth_SGDave_roboost} for our Algorithm \ref{algo:DandC_valid} has no such sensitivity; it achieves the same dependence on $n$ and $1/\delta$ with just one pass over the data. Furthermore, there is no explicit dependence on the number of parameters $d$, the $\log(n)$ factor is removed, and the dependence on $\parasm_{1}$ is improved exponentially. Since typical RGD procedures do not ever use loss values, the only moment bound requirement they make is via $\sigma_{G,\ddist}^{2}$, whereas our procedure has both $\sigma_{G,\ddist}^{2}$ and $\sigma_{\loss,\ddist}^{2}$. Other minor tradeoffs exist in the form of an extra $\log(\log(\delta^{-1}))$ factor, and dependence on the diameter $\diameter$ in our bounds is linear, whereas the works of \citet{chen2017arxiv} and \citet{holland2017arxiv} have logarithmic dependence. Arguably, this is a small price to pay for the improvements that are afforded. These results are shown in the second column of Table \ref{table:nonsc_compare}.

\begin{table}[t!]
\begin{center}
\begin{tabular}{|l|c|c|}
\hline
Method & Error & Cost \\
\hline\hline
\texttt{RV-SGDAve} & $\displaystyle \bigO\left(\sqrt{\frac{\log(\delta^{-1})}{n}}\left(\sigma_{\loss,\ddist}+\sigma_{G,\ddist}\right)\right) + \bigO\left(\frac{\parasm_{1}\log(\delta^{-1})}{n}\right)$ & $\displaystyle \bigO(dn\log(\delta^{-1}))$ \\
\hline
RGD & $\displaystyle \bigO\left((1+\parasm_{1}\alpha)^{T}\sqrt{\frac{d(\sigma_{G,\ddist}^{2}\log(d\delta^{-1})+\log(n))}{n\,T}}\right)$ & $\displaystyle \bigO\left( Tdn\log(\delta^{-1}) \right)$ \\
\hline
\end{tabular}
\end{center}
\vspace{-0.5cm}
\caption{High-probability error bounds and computational cost estimates for \texttt{RV-SGDAve} (Algorithm \ref{algo:DandC_valid}), compared with modern RGD methods, \emph{without} assuming strong convexity. Error denotes confidence intervals for $\risk_{\ddist}(\what_{n})-\risk_{\ddist}^{\ast}$ with $\what$ being the output of each procedure after $T$ steps (noting \texttt{RV-SGDAve} has $T=n$ by definition).}
\label{table:nonsc_compare}
\end{table}

\paragraph{Computational cost}
Due to the ease of distributed computation and simplicity of the underlying sub-routines, Algorithm \ref{algo:DandC_valid} has significant potential to improve upon existing robust GD methods in terms of computational scalability. Using arithmetic operations as a rough estimate of time complexity, first for Algorithm \ref{algo:DandC_valid} note that for each subset $\II_{j}$, we have a fixed number of arithmetic operations that must be done for $d$ coordinates and $|\II_{j}| \geq \lfloor n/k \rfloor$ iterations. Thus one can obtain each candidate $\what^{(j)}$ with $\bigO(dn/k)$ operations, and this is done for each $j \in [k]$. These computations can trivially be done on independent cores running in parallel. It then just remains to make a single call to $\valid$ to conclude the procedure. In this final call, one evaluates $k$ candidates at $\bigO(n)$ data points; this will typically require $\bigO(dkn)$ operations, plus the cost of the final robust estimate, which will be respectively $\cost(\mom)=\bigO(k\log(k))$ and $\cost(\cat)=\bigO(n)$ for the cases described in Lemma \ref{lem:valid_basic_prop}. Adding these costs up, ignoring $\log(k)$ factors, and setting $k \propto \log(\delta^{-1})$ for simplicity yields the cost shown in the third column of Table \ref{table:nonsc_compare}. Costs for RGD with this $k$ setting follows from our discussion in section \ref{sec:comparison_sc}.

\section{Empirical analysis}\label{sec:empirical}

In this section, we carry out detailed empirical analysis of the proposed learning algorithms using both controlled simulations and performance tests on real-world benchmark datasets. Simulation-based results are given in sections \ref{sec:empirical_sc}--\ref{sec:empirical_nonsc}, while applications to real-world data are given in section \ref{sec:empirical_realdata}.

\paragraph{Online software repository}
In order to ensure the experiments to follow are easily reproducible, we provide all the necessary code at the following online repository:\\\url{https://github.com/feedbackward/sgd-roboost}.

\subsection{Controlled simulations, under strong convexity}\label{sec:empirical_sc}

In this section, we use controlled simulations to investigate how the differences in formal performance guarantees discussed in the previous section work out in practice.

\paragraph{Experimental setup}

We essentially follow the ``noisy convex minimization'' tests done by \citet{holland2019c} to compare the performance of robust gradient descent procedures with traditional ERM minimizers. For simplicity, we start with a risk function that takes a quadratic form $\risk_{\ddist}(w) = \langle \Sigma w, w \rangle + \langle w, u \rangle + a$, where $\Sigma \in \RR^{d \times d}$, $u \in \RR^{d}$, and $a \in \RR$ are constants that depend on the experimental conditions. Now in order to line the experimental setting up with the theory of section \ref{sec:theory_sc}, the idea is to construct an easily manipulated loss distribution such that the expectation aligns precisely with the quadratic $\risk_{\ddist}$ just given. To achieve this, one can naturally compute losses of the form $\loss(w;Z) = (\langle w-\wstar, X \rangle + E)^{2}/2$, where $\wstar \in \RR^{d}$ is a pre-defined vector unknown to the learner, $X$ is a $d$-dimensional random vector, $E$ is zero-mean random noise, and $X$ and $E$ are independent of each other. Note that $\ddist$ in this case corresponds to the joint distribution of $X$ and $E$, although all the learner sees is the loss value. It is readily confirmed under such a setting we have $\exx_{\ddist}\loss(w;Z) = \risk_{\ddist}(w)$ in the quadratic form given above with $\Sigma = \exx_{\ddist}XX^{\trans}/2$, $u = -2\Sigma\wstar$, and $a = \langle \Sigma\wstar,\wstar \rangle + \exx_{\ddist}E^{2}/2$ (see appendix). For any non-trivial choice of the distribution of $X$, the resulting matrix $\Sigma$ will be positive definite, implying that $\risk_{\ddist}$ is strongly convex. Furthermore, since the gradients $\nabla \loss(w;Z) = -(\langle \wstar-w, X \rangle + E)X$ are clearly Lipschitz continuous whenever $X$ has a bounded support, the two key assumptions of Theorem \ref{thm:sc_smooth_SGDlast_roboost} are satisfied.

Regarding the methods being compared, as classical baselines, empirical risk minimization using batch GD (denoted \texttt{ERM-GD}) and stochastic GD (denoted \texttt{SGD}) are used. We also implement the robust GD methods described in section \ref{sec:context_contribs}. In particular, RGD-by-MoM is denoted here as \texttt{RGD-MoM}, MoM-by-GD is denoted as \texttt{RGD-Lec}, and RGD-M is denoted as \texttt{RGD-M}. Finally, Algorithm \ref{algo:DandC_SGD} (with $\merge = \geomed$) is denoted by \texttt{DC-SGD}. Everything is implemented in Python (ver.~3.8), chiefly relying upon the \texttt{numpy} library (ver.~1.18).\footnote{Documentation for \texttt{numpy}: \url{https://numpy.org/doc/1.18/index.html}.} The basic idea of these tests is to calibrate and fix the methods to the case of ``nice'' data characterized by additive Gaussian noise, and then to see how the performance of each method changes as different experimental parameters are modified. For all algorithms that use a $k$-partition of the data, we have fixed $k=10$ throughout all tests. Partitioning is done using the \texttt{split\_array} function in \texttt{numpy}, which means each subset gets at least $\lfloor n/k \rfloor$ points. Details regarding step-size settings will be described shortly. Finally, the initial value $\what_{0}$ for all methods is determined randomly, using $\what_{0,j} = \wstar_{j}+\text{Uniform}[-c,+c]$ for each coordinate, with $c = 5.0$ unless otherwise specified.

The key performance metric that we look at in the figures to follow is ``excess risk,'' computed as $\risk_{\ddist}(\what)-\risk_{\ddist}(\wstar)$, where $\what$ is the output of any learning algorithm being studied, and $\wstar$ is the pre-fixed minimum described in the previous paragraph. Each experimental setting is characterized by the triplet $(\ddist,n,d)$, which we modify in many different ways to investigate different phenomena. For each setting, we run multiple independent trials, and compute performance statistics based on these trials. For example, when we give the average (denoted \texttt{ave}) and standard deviation (denoted \texttt{sd}) of excess risk, these statistics are computed over all trials. All box-plots are also computed based on multiple independent trials; the actual number of trials will be described in the subsequent exposition. The main points of empirical inquiry addressed in this sub-section are as follows:
\begin{itemize}
\item[\textbf{(E1)}] Error trajectories in low dimensions (fixed $n$ and $d$, many iterations).
\item[\textbf{(E2)}] Statistical error in high dimensions ($d$ grows, $n$ fixed).
\item[\textbf{(E3)}] Actual computation times ($d$ grows, $n$ fixed/grows).
\item[\textbf{(E4)}] Impact of initialization on error trajectories ($\|\what_{0}-\wstar\|$ grows).
\item[\textbf{(E5)}] Impact of noise level on error trajectories (signal/noise ratio gets worse).
\end{itemize}
We proceed to describe these experimental settings and related results one by one.

\paragraph{(E1) Error trajectories in low dimensions}

We start with a simple setting, fixing $d=2$ and $n=500$, and comparing how pre-fixed algorithms perform depending on whether $E$ is Normal or log-Normal. To be more precise, let $Y \sim \text{Normal}(0,b^{2})$. We consider the cases of $E = Y - \exx Y$ and $E = \ct{e}^{Y} - \exx \ct{e}^{Y}$, respectively with $b=2.2$ and $b=1.75$. These distinct settings of $b$ are set to keep the width of the inter-quartile range of the noise in both cases approximately equal. The case of Normal noise is used as a baseline to calibrate fixed step sizes for all methods. The four batch methods all have $\alpha_{t} = 0.1/\sqrt{d}$, the two sequential methods have $\alpha_{t} = 0.01/\sqrt{d}$, and these settings are fixed throughout the remaining experiments to evaluate the impact of different conditions on pre-fixed algorithms.\footnote{The reasoning for having the step size shrink with $d$ is because the smoothness parameter $\parasm_{1}$ scales with $\|X\|$ and thus $\sqrt{d}$, all else equal. See constant $b$ in Theorem \ref{thm:sc_smooth_SGDlast_roboost} for the motivation behind having $\alpha_{t}$ shrink with $d$ in Algorithm \ref{algo:DandC_SGD}. The same requirement is made for RGD methods \citep[Sec.~3.2]{holland2019c}.} All learning algorithms are given access to exactly $n$ data points $Z_{1},\ldots,Z_{n}$, based on which they can evaluate $\loss(w;Z_{i})$ and $\nabla \loss(w;Z_{i})$ at any $w \in \WW$ and $i \in [n]$. We examine how all algorithms behave under the settings just described, given a large time budget. More specifically, \texttt{cost} is measured in terms of gradient evaluations, so that every time an algorithm computes $\nabla \loss(w;Z_{i})$ for some $w$ and $i$, we increment $\texttt{cost}\gets\texttt{cost+1}$, until $\texttt{cost}\geq\texttt{budget}$. Here we set $\texttt{budget}=40n\sqrt{d}$. Note that this means \texttt{SGD} and \texttt{DC-SGD} will make many passes over the data; we shall look at the case of few passes over the data in short order. The number of independent trials here is $100$. Results for this experimental setting are given in Figure \ref{fig:POC_normal_sc} (Normal case) and Figure \ref{fig:POC_lognormal_sc} (log-Normal case).

\begin{figure}[t]
\centering
\includegraphics[width=0.4\textwidth]{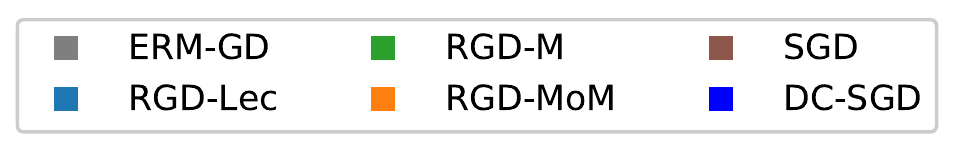}\\
\includegraphics[width=0.49\textwidth]{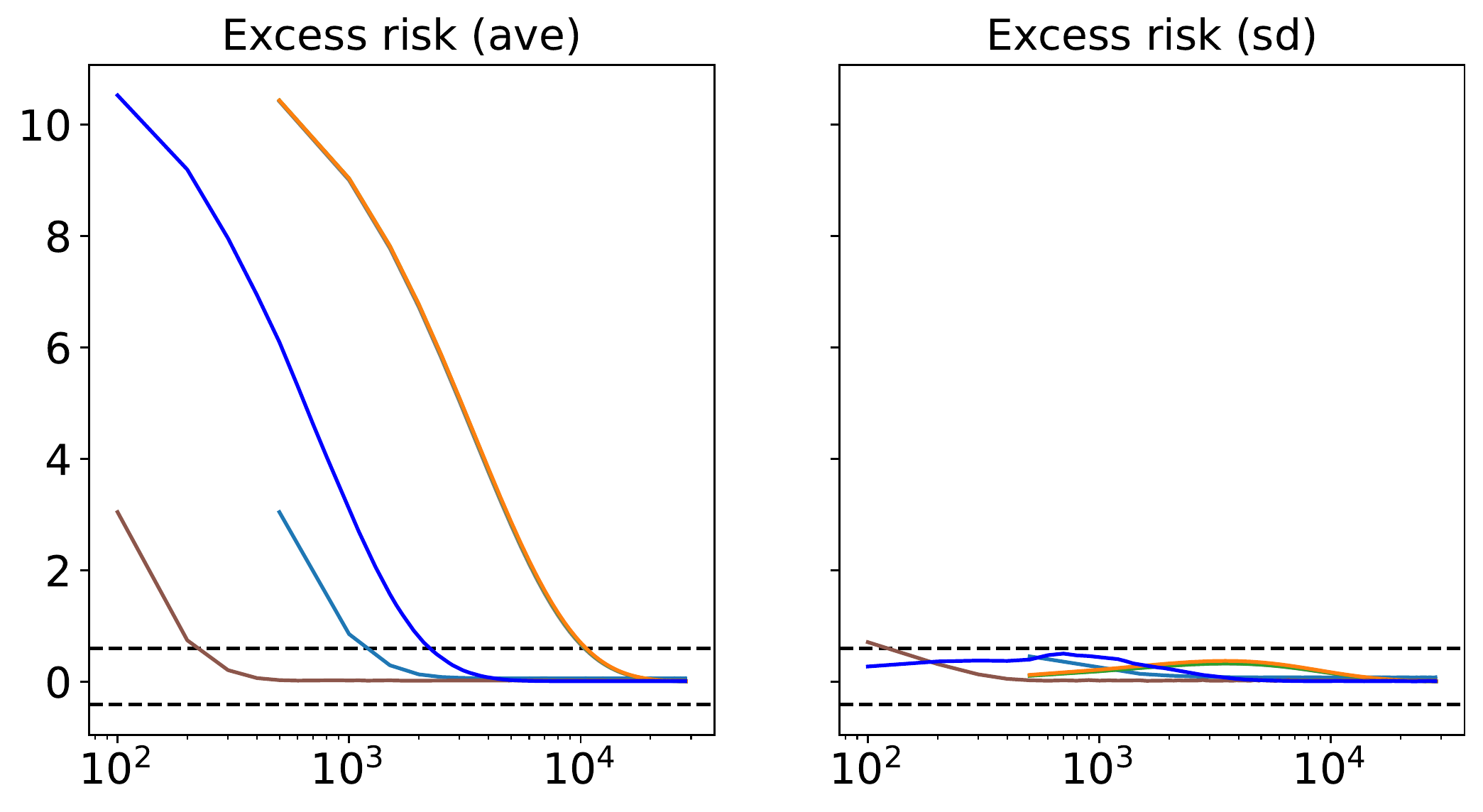}\,\includegraphics[width=0.51\textwidth]{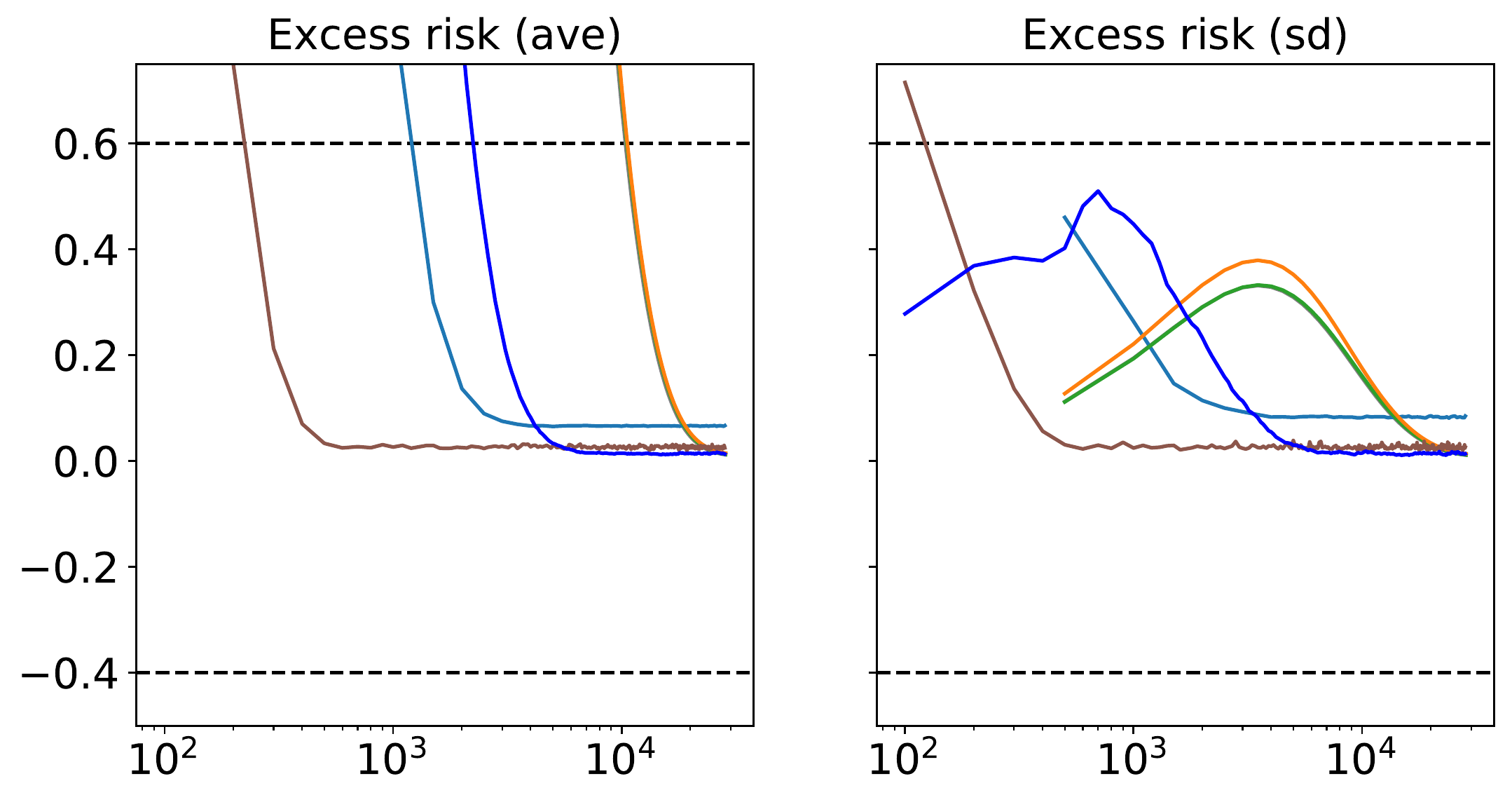}
\caption{Excess risk statistics as a function of cost in gradients (log scale, base $10$). The two right-most plots zoom in on the region between the dashed lines in the two left-most plots.}
\label{fig:POC_normal_sc}
\vspace{0.25cm}
\includegraphics[width=0.5\textwidth]{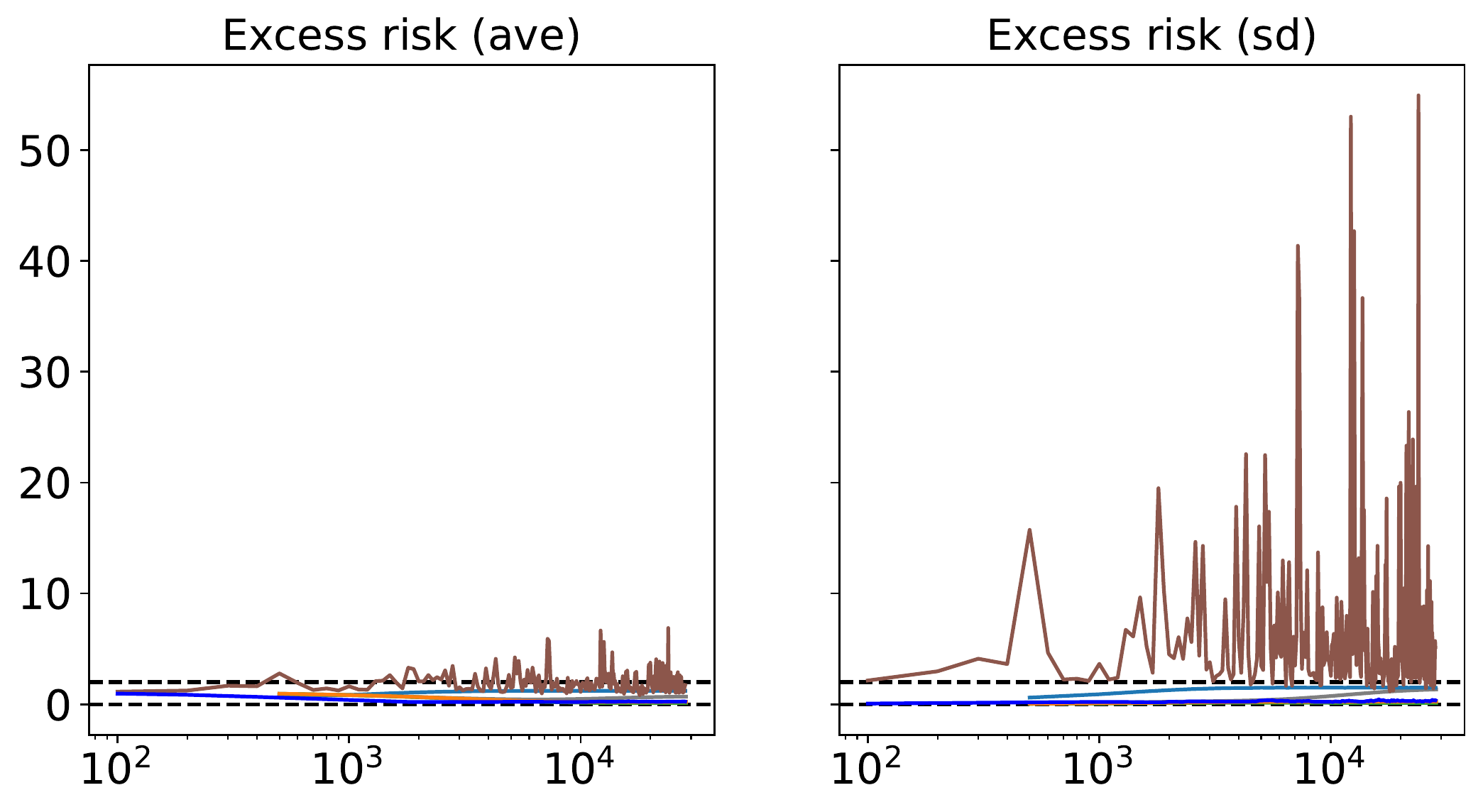}\,\includegraphics[width=0.5\textwidth]{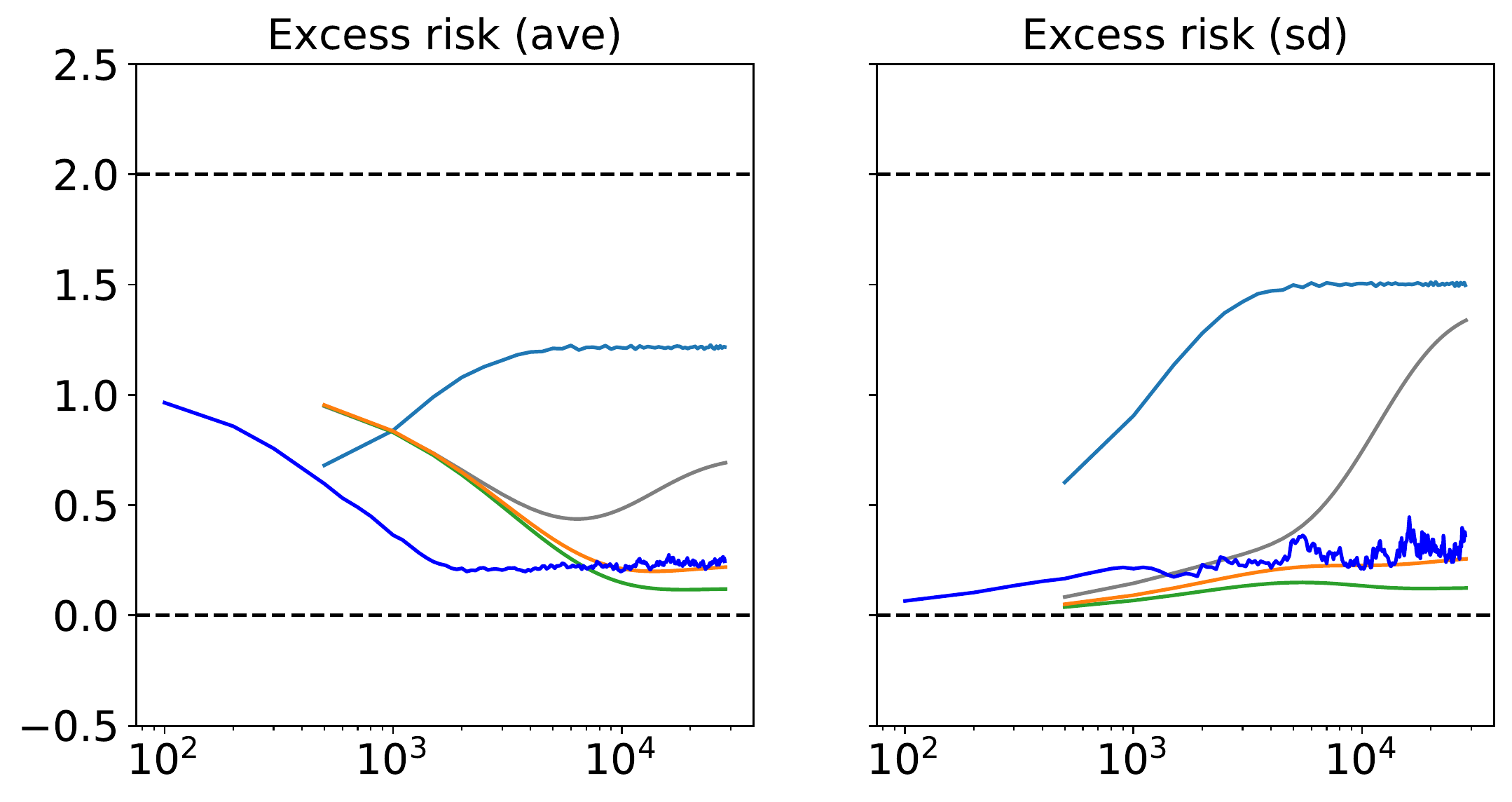}
\caption{Analogous results to Figure \ref{fig:POC_normal_sc}, for the case of log-Normal noise. Note for the zoomed-in plots on the right, we have removed the volatile SGD trajectory for visibility.}
\label{fig:POC_lognormal_sc}
\end{figure}

\paragraph{(E2) Statistical error in high dimensions}

Next, we consider how a larger number of parameters $d$ impacts the \emph{statistical} error of the methods being compared, in contrast with the \emph{computational} error (e.g., errors in Table \ref{table:sc_compare} free of $n$, shrinking as $T$ grows). To do so, we consider a range of $2 \leq d \leq 1024$ with fixed $n=2500$, and we control the initialization error for all methods such that $\|\what_{0}-\wstar\| \leq C$ for a constant $C$ that does not depend on $d$, by initializing as $\what_{0,j}=\wstar_{j}+\text{Uniform}[-c,+c]/\sqrt{d}$ for each $j \in [d]$. Furthermore, we let the batch methods run for many iterations with a gradient budget of $100n$, whereas we now restrict \texttt{DC-SGD} (Algorithm \ref{algo:DandC_SGD}) to just $2n$. This is done to ensure that the computational error terms of batch methods are sufficiently small. The number of independent trials here is $250$, and the noise distribution settings are as discussed previously. In Figure \ref{fig:ERROR_D_sc}, we give box-plots of the final excess risk achieved by each method for different $d$ sizes. As a complementary result, we can also consider fixing $d$ and taking $n$ very large to evaluate the impact of non-$n$ factors in the statistical error bounds. Such a result is given in Figure \ref{fig:ERROR_N_sc}, noting that we are still considering just single pass \texttt{DC-SGD} against the many-pass batch methods.

\begin{figure}[t]
\centering
\includegraphics[width=0.5\textwidth]{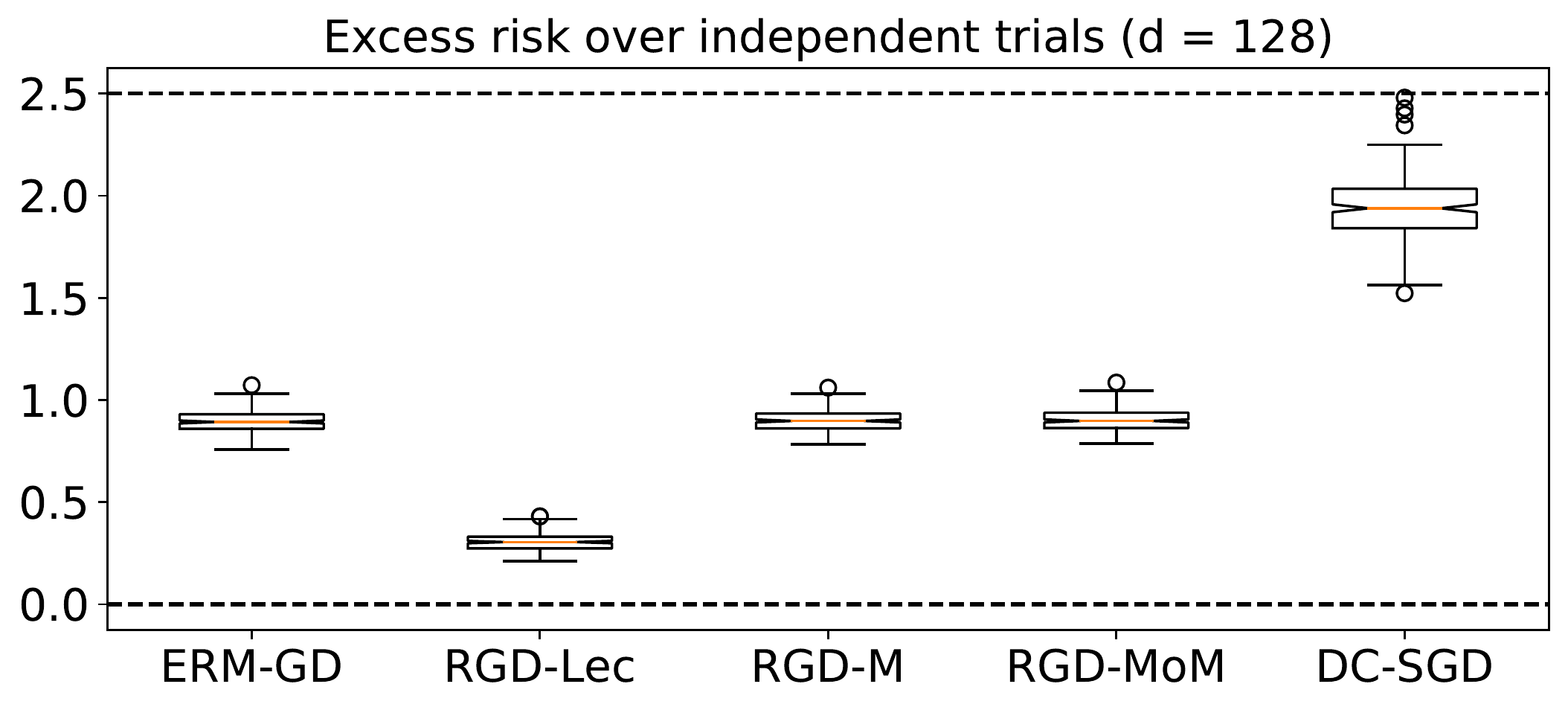}\,\includegraphics[width=0.5\textwidth]{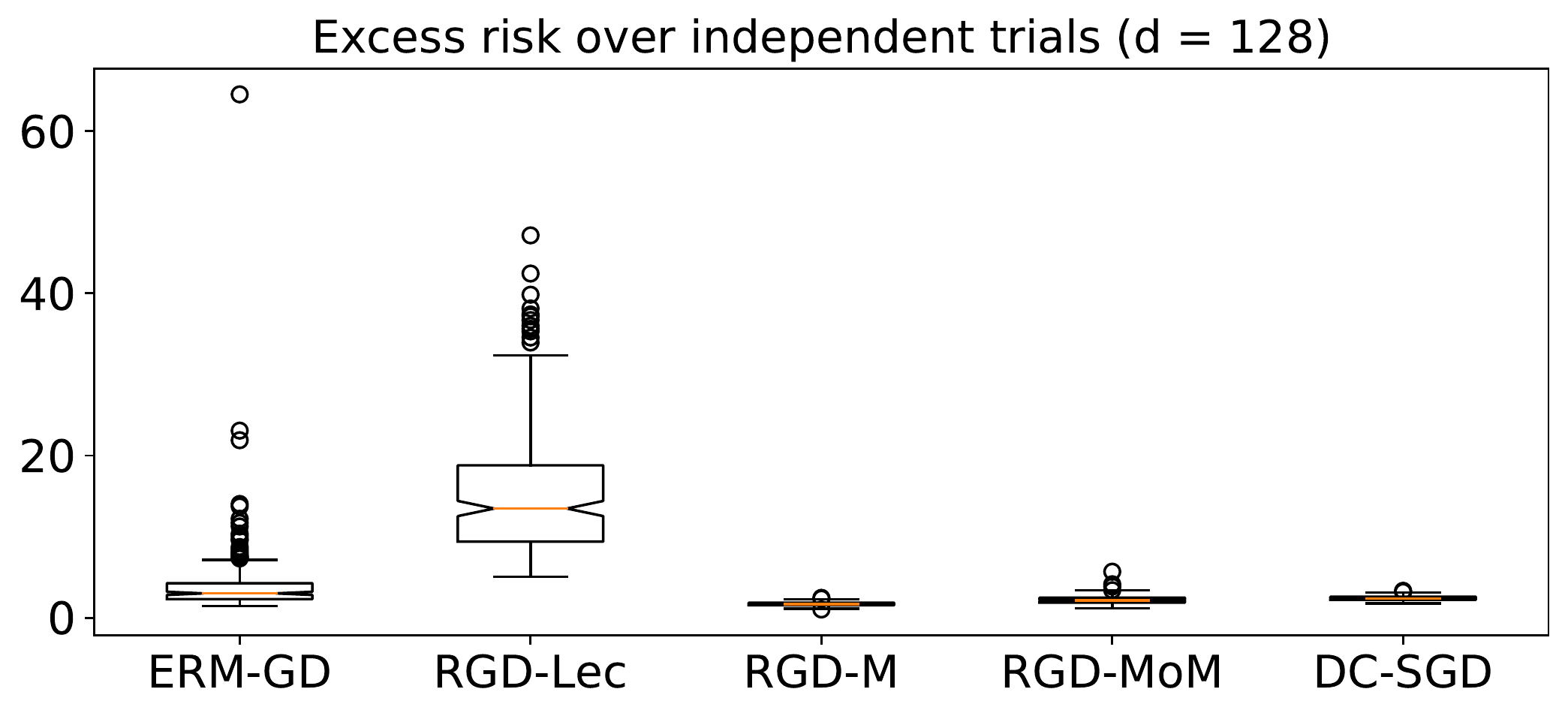}\\
\includegraphics[width=0.5\textwidth]{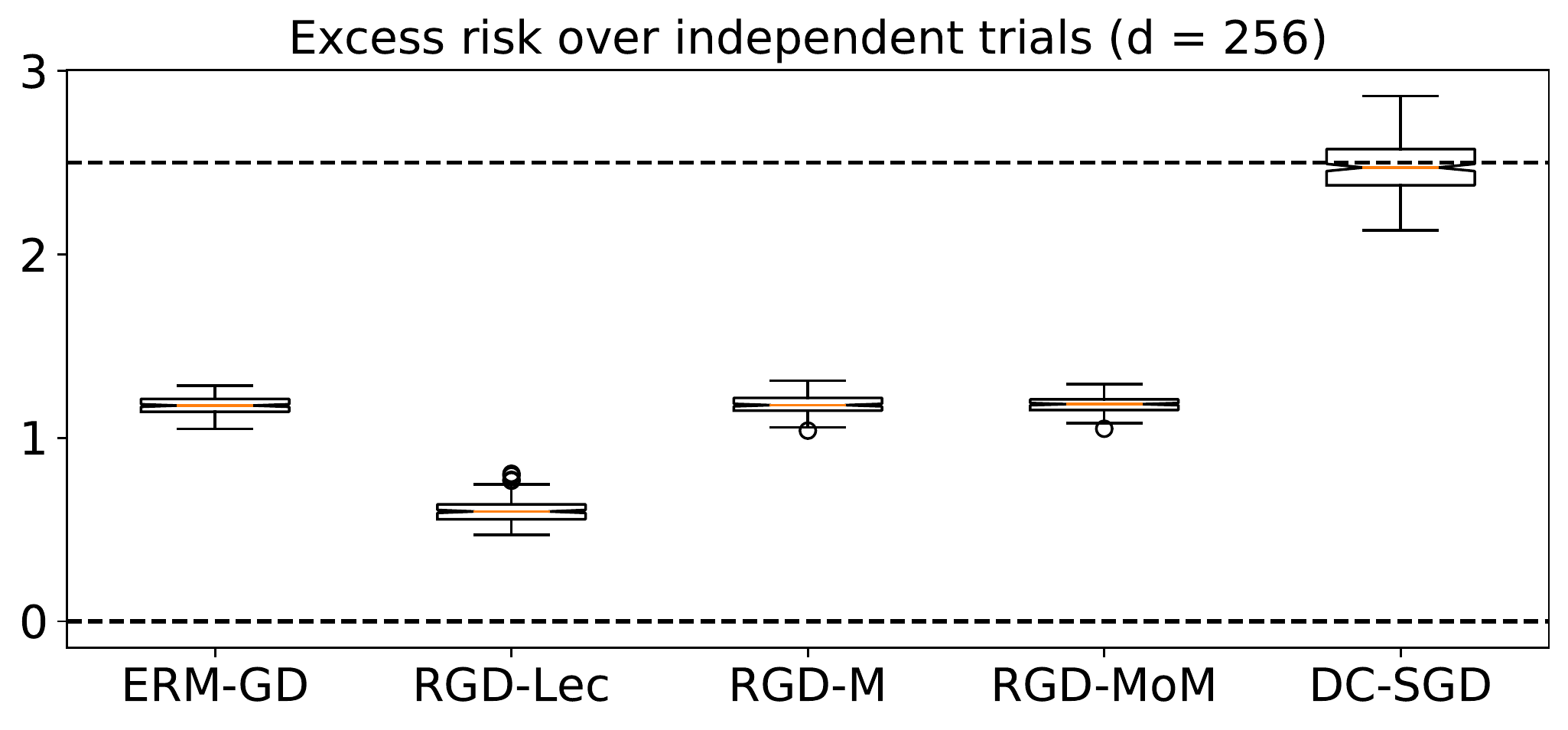}\,\includegraphics[width=0.5\textwidth]{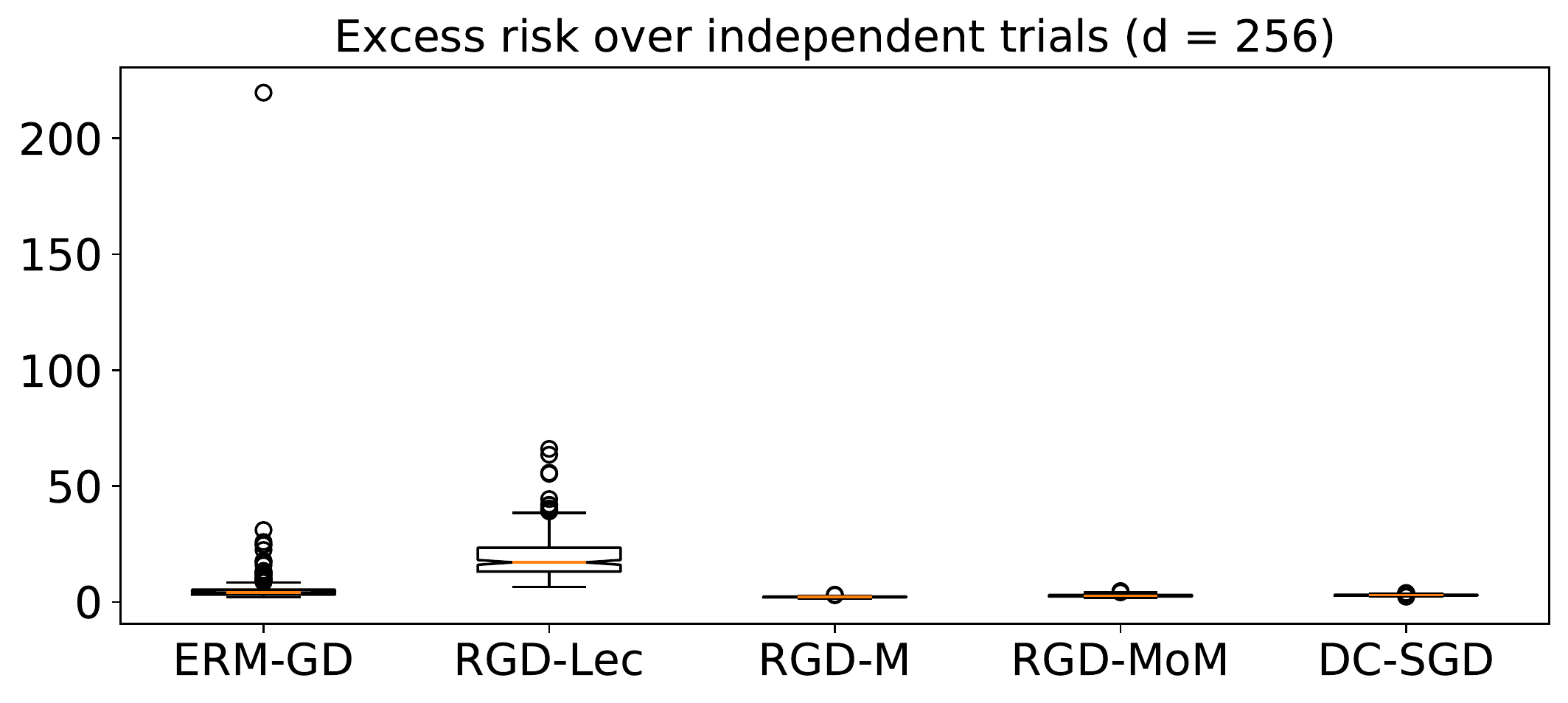}\\
\includegraphics[width=0.5\textwidth]{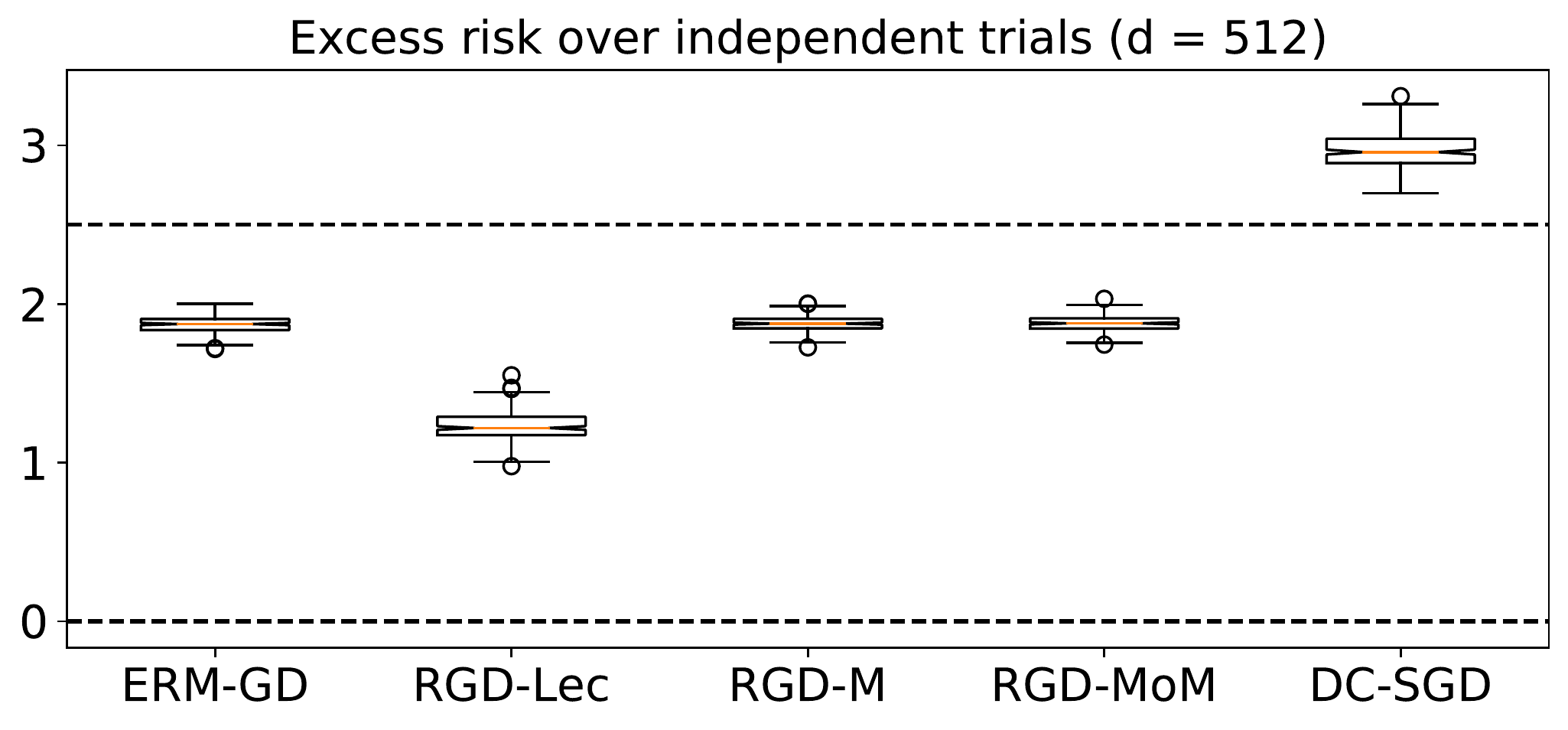}\,\includegraphics[width=0.5\textwidth]{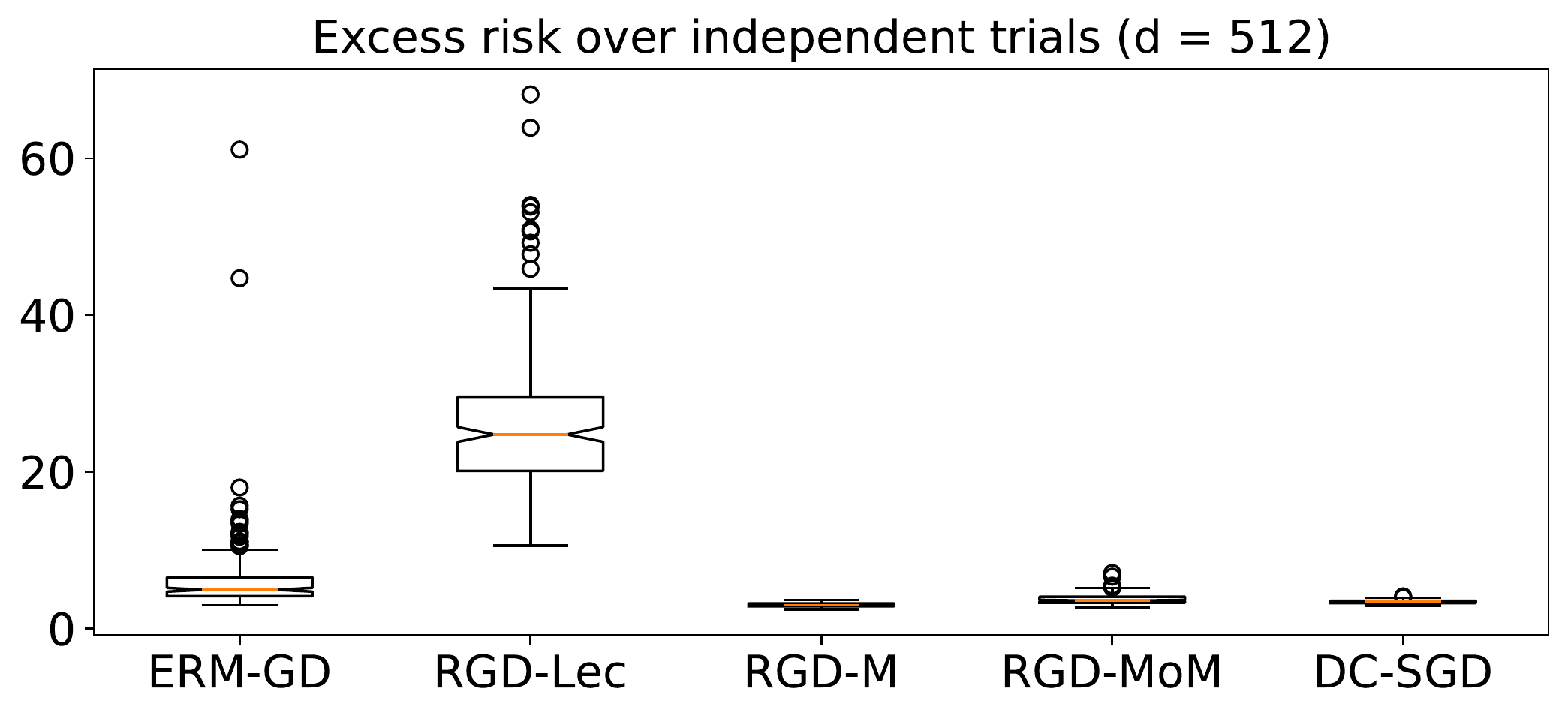}\\
\includegraphics[width=0.5\textwidth]{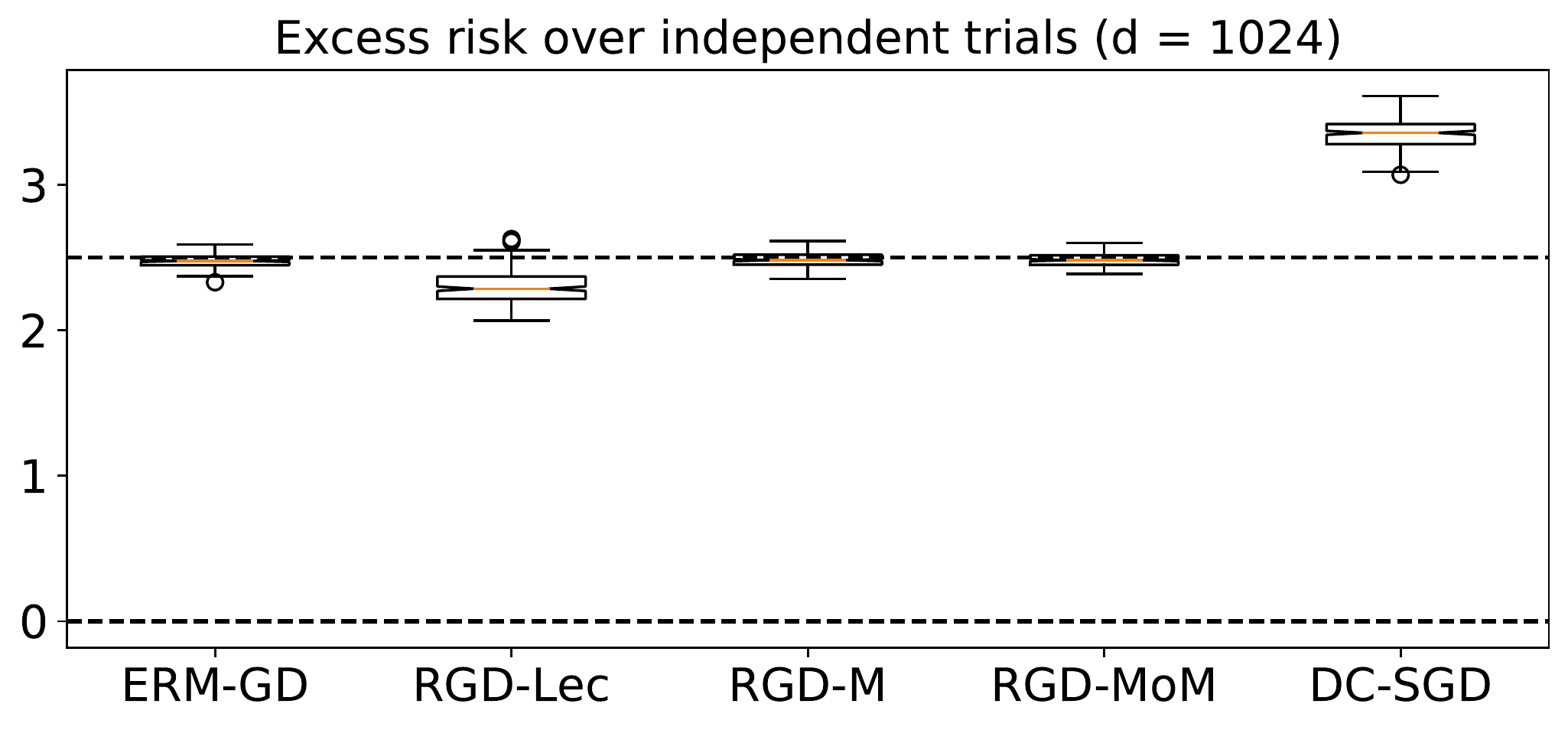}\,\includegraphics[width=0.5\textwidth]{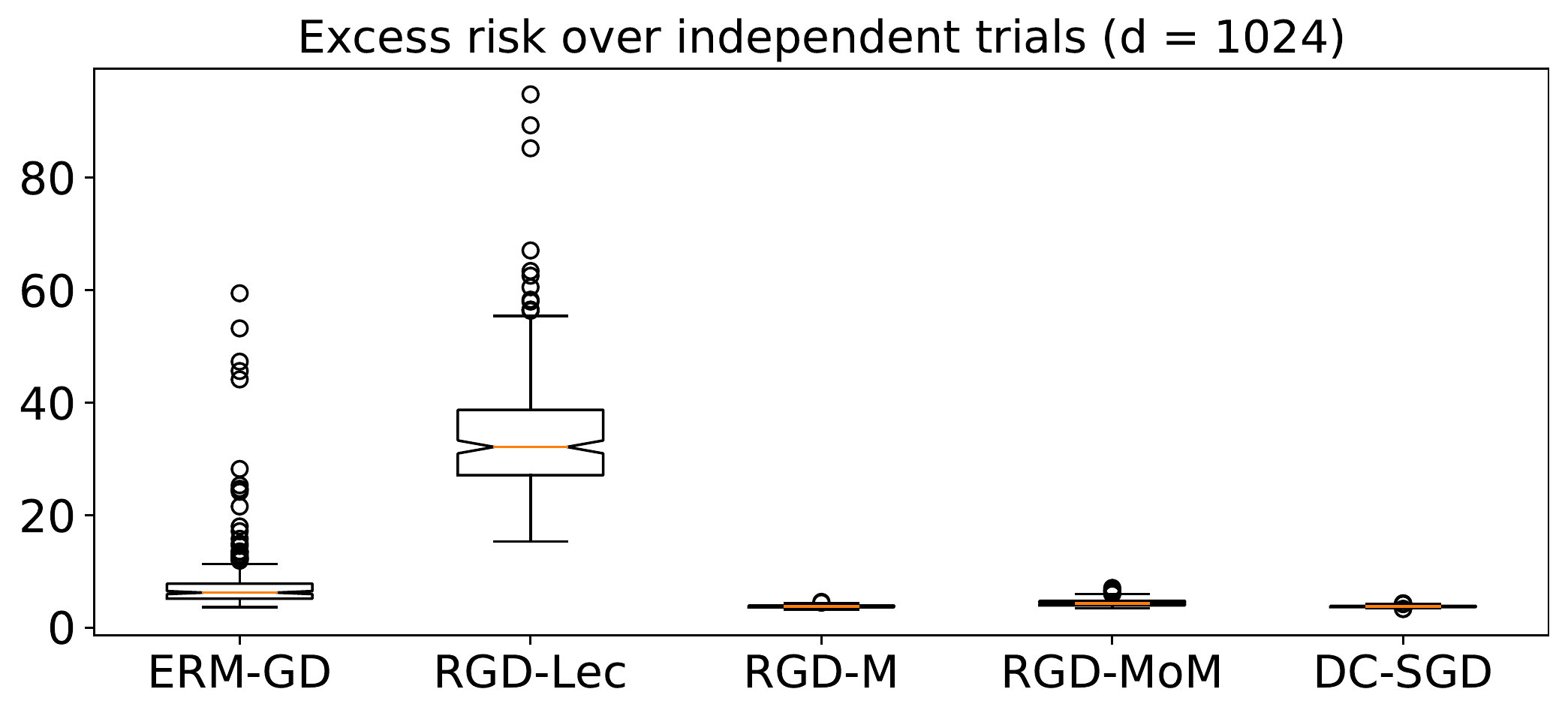}
\caption{Excess risk for many-pass batch methods and two-pass \texttt{DC-SGD}. Dimension settings shown are $d \in \{128, 256, 512, 1024\}$. Left column: Normal noise (with dashed horizontal rule fixed to highlight small changes). Right column: log-Normal noise.}
\label{fig:ERROR_D_sc}
\end{figure}

\clearpage

\paragraph{(E3) Actual computation times}

It is natural to consider how much \emph{time} is actually required to achieve the results given above, for both the many-pass batch methods and the single-pass \texttt{DC-SGD} (Algorithm \ref{algo:DandC_SGD}), in particular when the $k$ sub-processes used to compute the $\what^{(j)}$ in Algorithm \ref{algo:DandC_SGD} are run in parallel. Once again for all $k$-dependent methods we have fixed $k$ just as in previous experiments, and consider two types of tests. First, $n$ and $d$ move together, with $n=4000d$ and $2 \leq d \leq 64$. Second, $n=2500$ is fixed, and dimension ranges over $2 \leq d \leq 1024$ as before. Stopping conditions based on budget constraints are precisely as in the experiments described in the previous paragraph. We measure computation time of each experiment using the Python module \texttt{time} as follows.\footnote{Documentation: \url{https://docs.python.org/3/library/time.html}.} For each method, we record the time $\tau_{0}$ immediately after $\what_{0}$ is generated and passed to the learning algorithm, and the time $\tau_{1}$ immediately after the stopping condition \texttt{cost}$\geq$\texttt{budget} is achieved. Computation time is then defined as simply $\tau_{1}-\tau_{0}$. We run $250$ independent trials, and compute the median times for each method. For the parallel implementation of \texttt{DC-SGD} (Algorithm \ref{algo:DandC_SGD}), we use the Python module \texttt{multiprocessing} to allocate each SGD sub-process to independent worker processes that can be run in parallel.\footnote{Documentation: \url{https://docs.python.org/3.8/library/multiprocessing.html}. Specifically, we generate an instance of the \texttt{Pool} class, iterating over $k$ worker functions that return $\{\what^{(1)},\ldots,\what^{(k)}\}$.} We note that computation of ending time $\tau_{1}$ for \texttt{DC-SGD} comes after the $\merge$ step, and thus all the overhead due to \texttt{multiprocessing} is included in the computation times recorded. From the perspective of making a fair comparison, we have made every effort to ensure all algorithms are implemented as efficiently as possible. The median times for each method in both experimental settings are shown in Figure \ref{fig:lognormal_times_sc}.

\begin{figure}[h]
\centering
\includegraphics[width=0.5\textwidth]{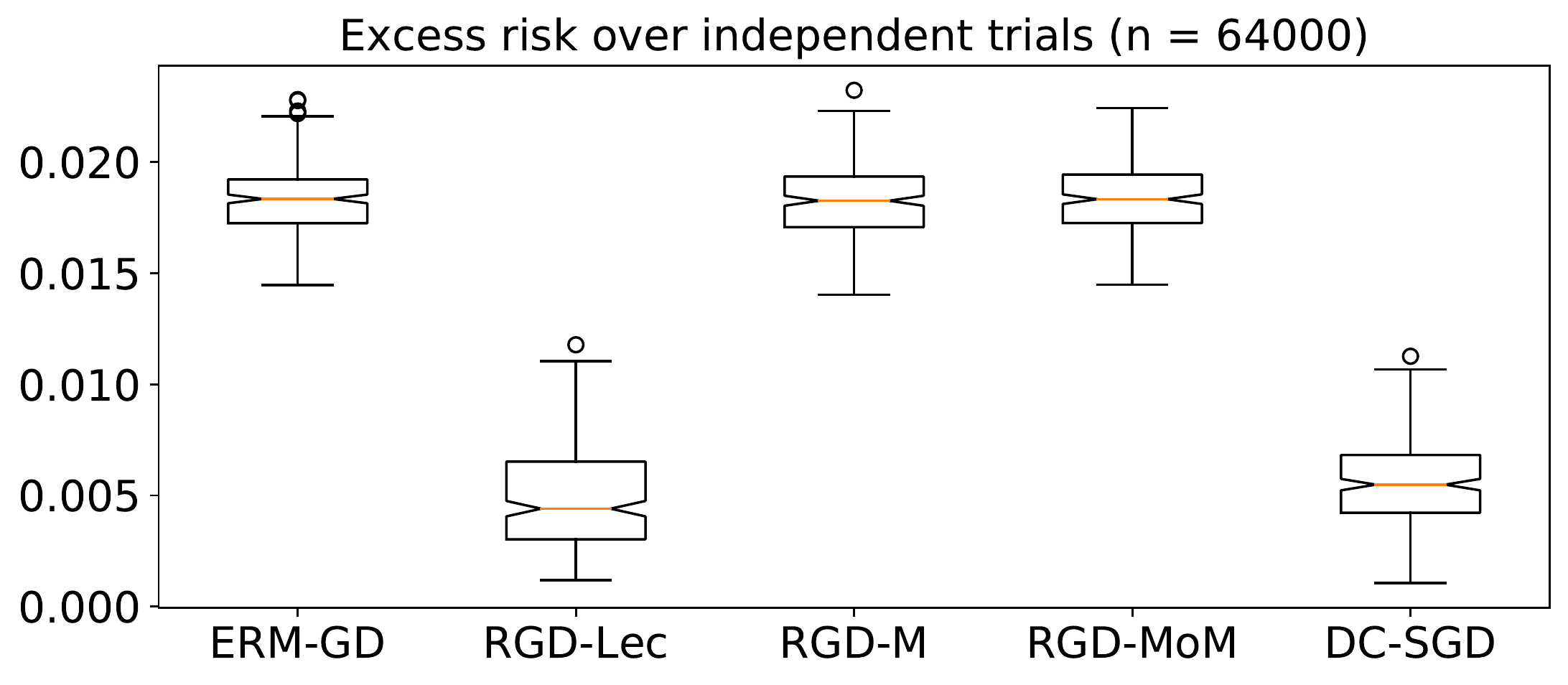}\,\includegraphics[width=0.5\textwidth]{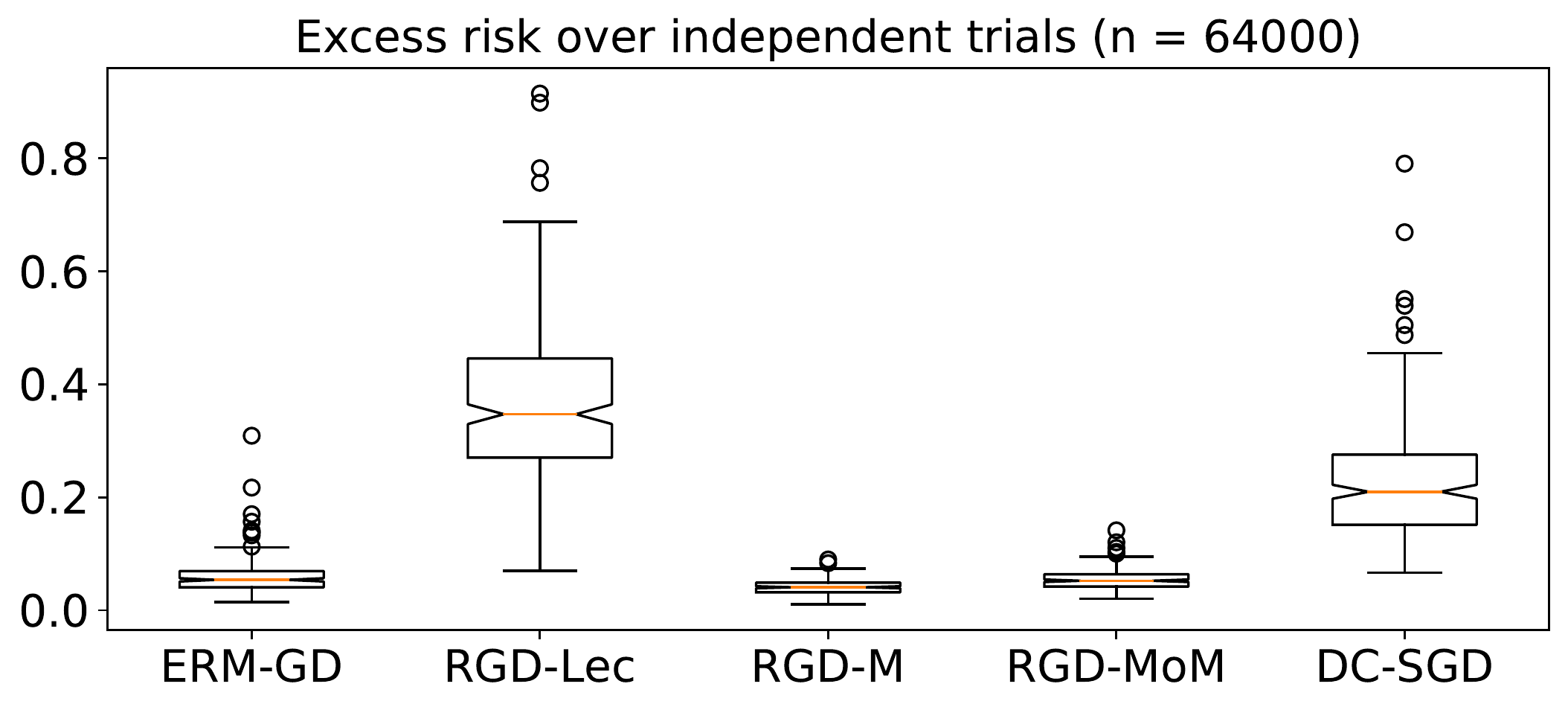}
\caption{Excess risk for many-pass batch methods and two-pass \texttt{DC-SGD} when $n \gg d$. In particular, $n=64000$ and $d=16$. Left column: Normal noise. Right column: log-Normal noise.}
\label{fig:ERROR_N_sc}
\vspace{0.25cm}
\includegraphics[width=0.7\textwidth]{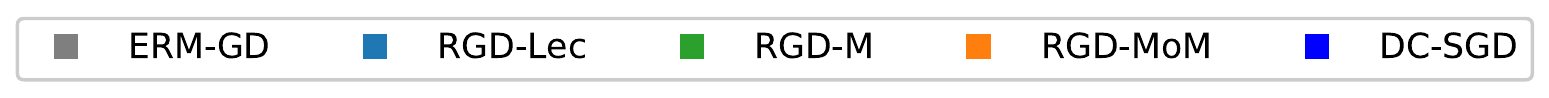}\\
\includegraphics[width=0.5\textwidth]{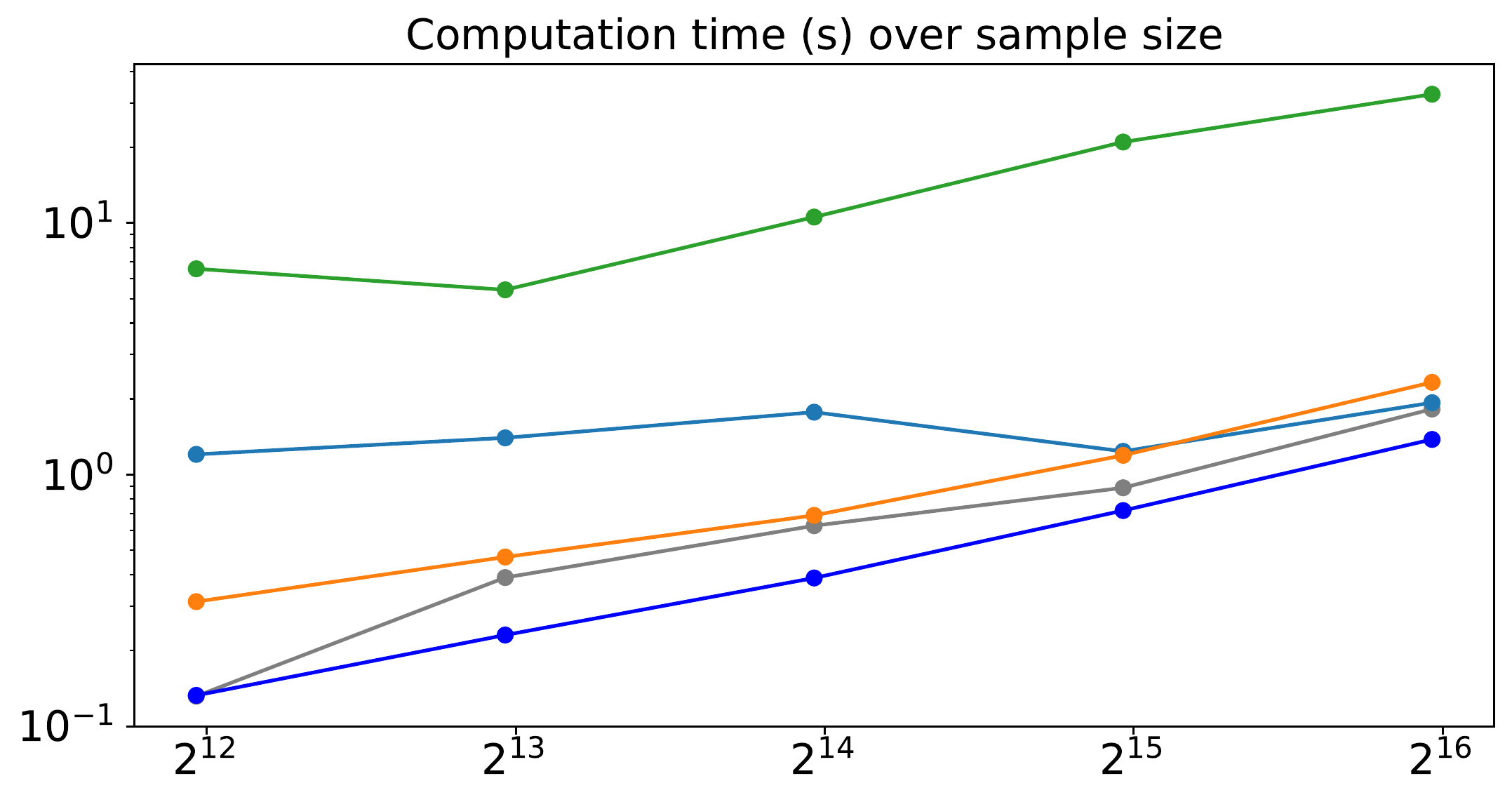}\,\includegraphics[width=0.5\textwidth]{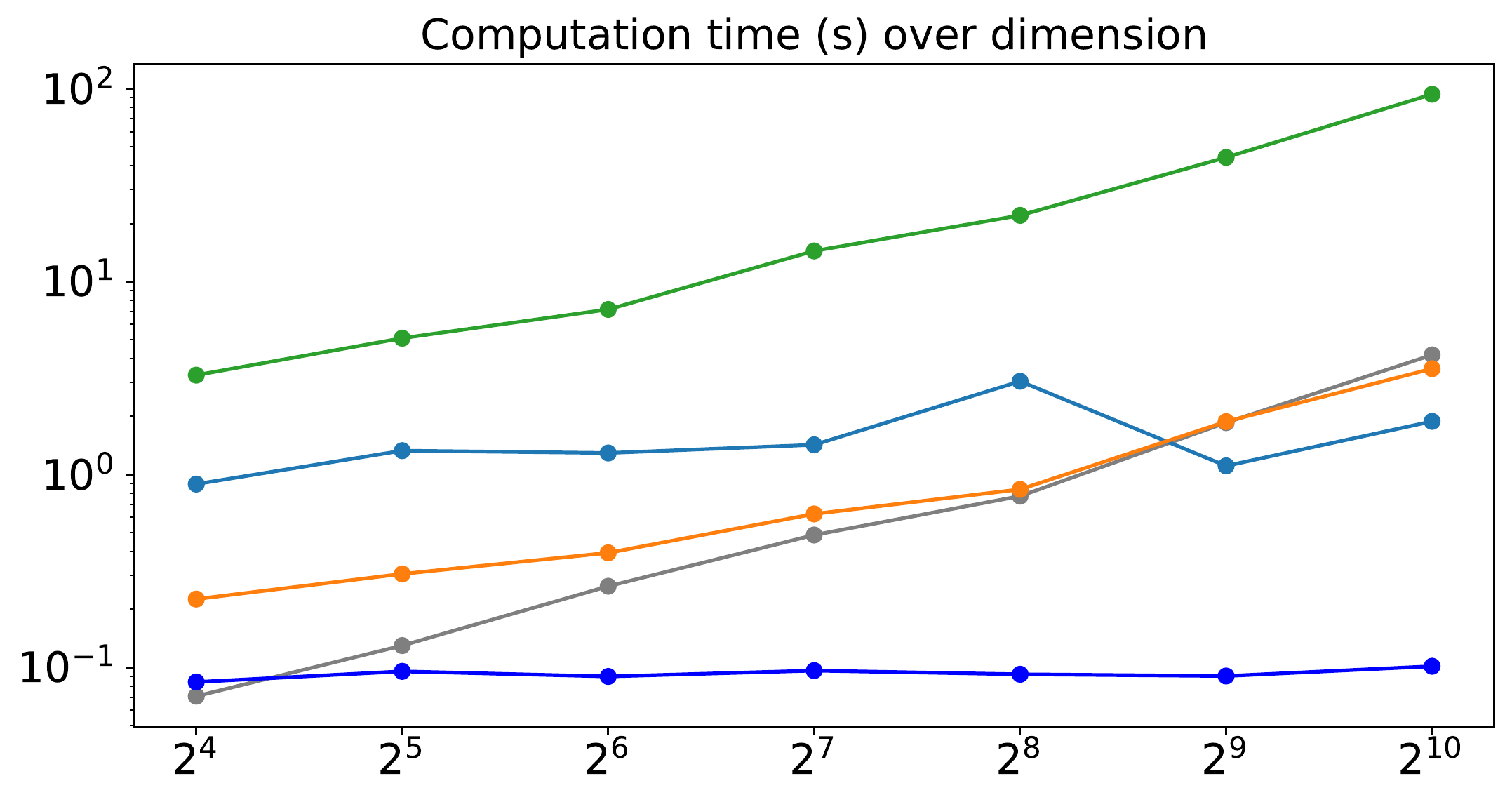}
\caption{Median computation times (log scale, base $10$) for the log-Normal noise setting, as a function of $n$  and $d$ (right; log scale, base $2$). Left: time as a function of $n$ (log scale, base $2$), with $n$ and $d$ growing together. Right: time as a function of $d$ (log scale, base $2$), with $n$ fixed.}
\label{fig:lognormal_times_sc}
\end{figure}

\clearpage

\paragraph{(E4) Impact of initialization on error trajectories}

We now return to the setting of (E1), and investigate the impact that a larger initialization error has on the resulting trajectory of each method. Recall that our baseline setup has us initializing in a coordinate-wise fashion, namely $\what_{0,j} = \wstar_{j} + \text{Uniform}[-c,+c]$. The default setting was $c=5.0$, but here we consider $c \in \{2.5, 5.0, 10.0\}$, for both Normal and log-Normal noise cases. As with (E1), the excess risk values are averaged over $100$ independent trials. To ensure the plots are legible, we choose four representative methods to highlight the key trends. Results are shown in Figure \ref{fig:INIT_sc} (we denote $c$ by \texttt{sup} in the legend).

\begin{figure}[t]
\centering
\includegraphics[width=0.5\textwidth]{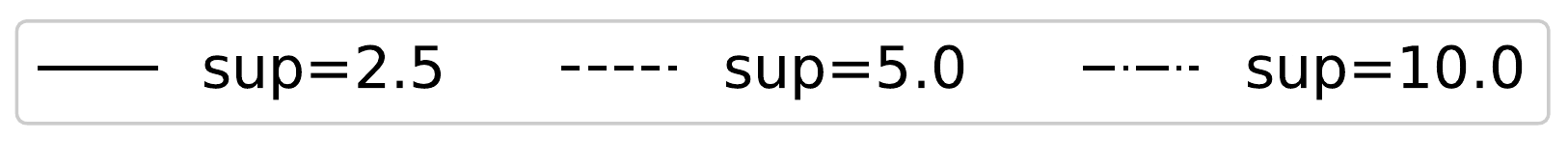}\\
\includegraphics[width=1.0\textwidth]{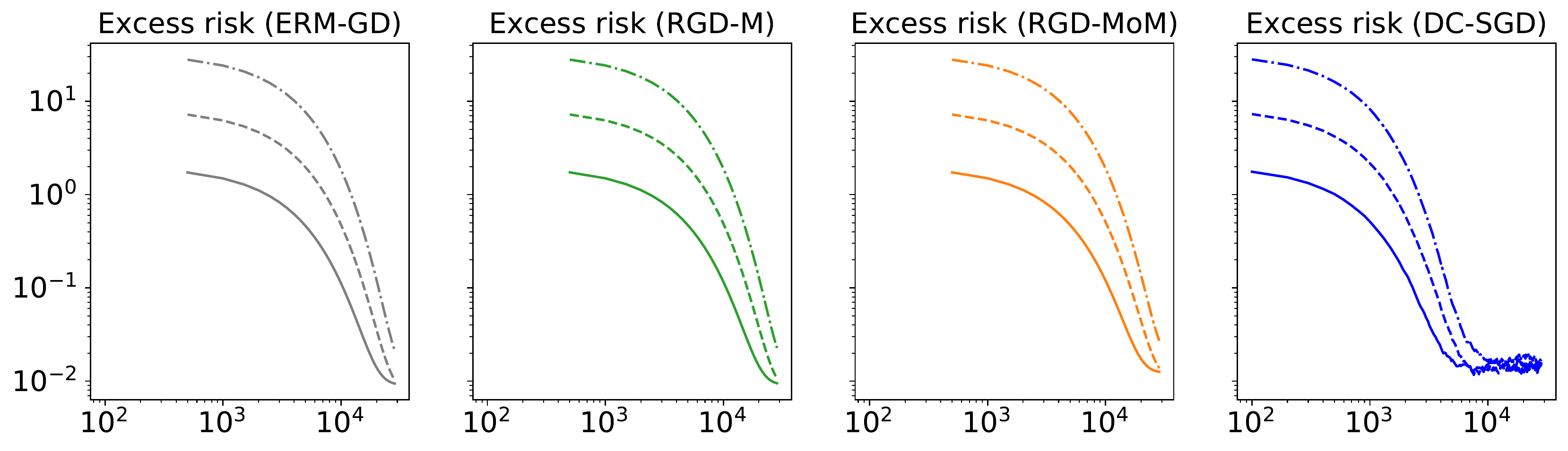}\\
\includegraphics[width=1.0\textwidth]{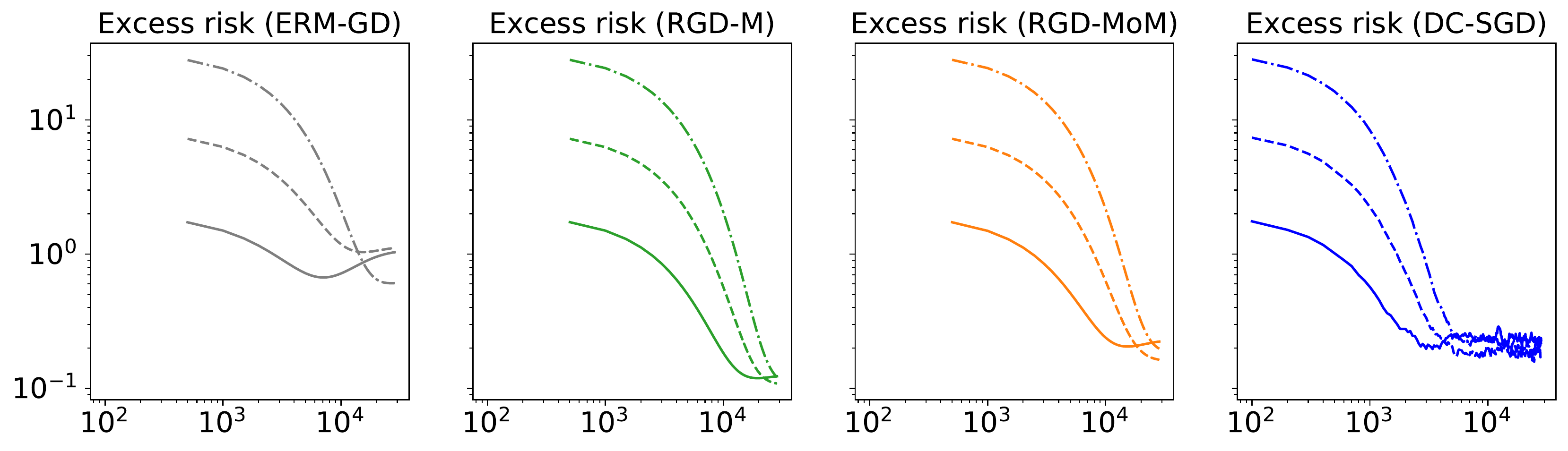}\\
\caption{Excess risk trajectories (averaged over trials) with different initialization error ranges. Top row: Normal noise. Bottom row: log-Normal noise.}
\label{fig:INIT_sc}
\end{figure}

\paragraph{(E5) Impact of noise level on error trajectories}

Continuing with the same basic setup as (E4), we keep the default initialization error range, and instead here modify the signal to noise ratio. The nature of $X$ and $\wstar$ is kept constant, so the strength of the ``signal'' $\langle \wstar,X \rangle$ does not change. Recall that setting $Y \sim \text{Normal}(0,b^{2})$, we consider two cases of additive noise, namely where $E = Y - \exx Y$ (Normal case) and $E = \ct{e}^{T} - \exx \ct{e}^{Y}$ (log-Normal case). In the Normal case, we take $b \in \{1.5, 2.2, 2.4\}$. In the log-Normal case, we take $b \in \{1.25, 1.75, 1.90\}$. Starting from small to large, we denote these levels as $\{\texttt{low},\texttt{med},\texttt{high}\}$. As with our earlier experimental settings, we have selected the parameters for these three ``noise levels'' such that at each level, the inter-quartile range of $E$ is approximately equal for both the Normal and log-Normal cases. As before, we select four representative methods and show the impact of noise level on excess risk, averaged over $100$ independent trials. Results are given in Figure \ref{fig:DIST_sc}.

\begin{figure}[t]
\centering
\includegraphics[width=0.4\textwidth]{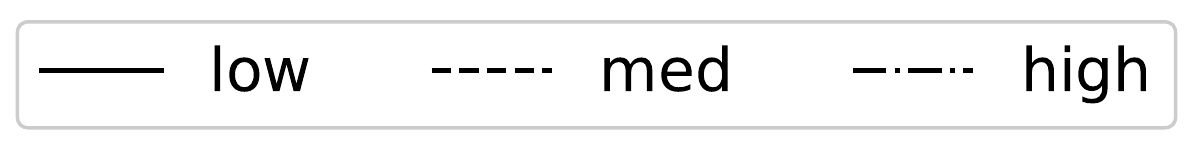}\\
\includegraphics[width=1.0\textwidth]{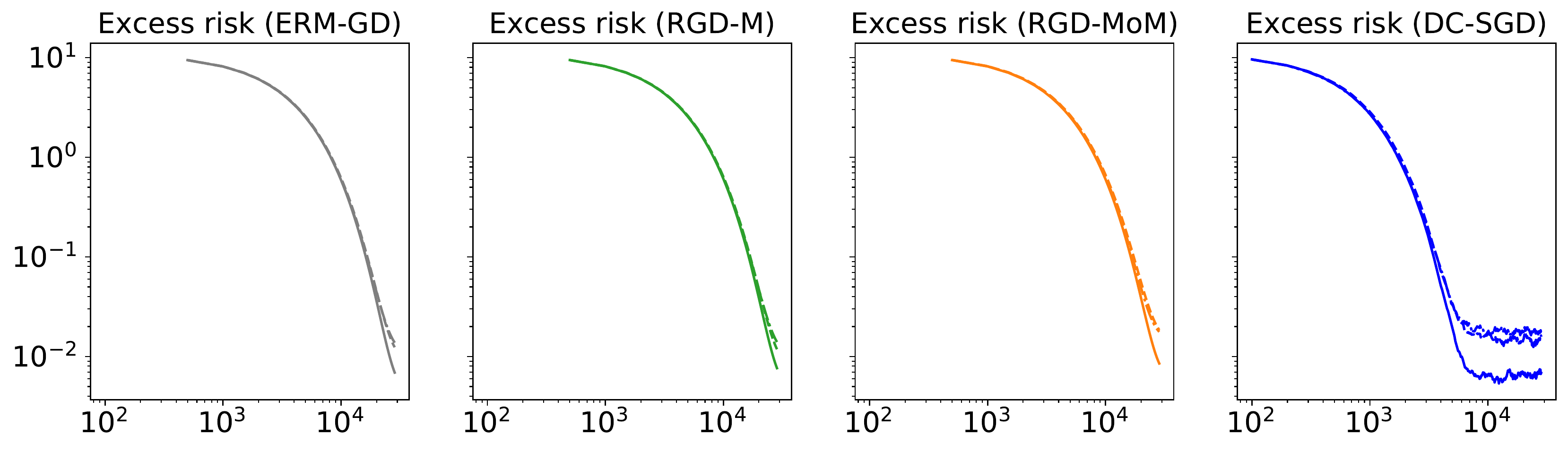}\\
\includegraphics[width=1.0\textwidth]{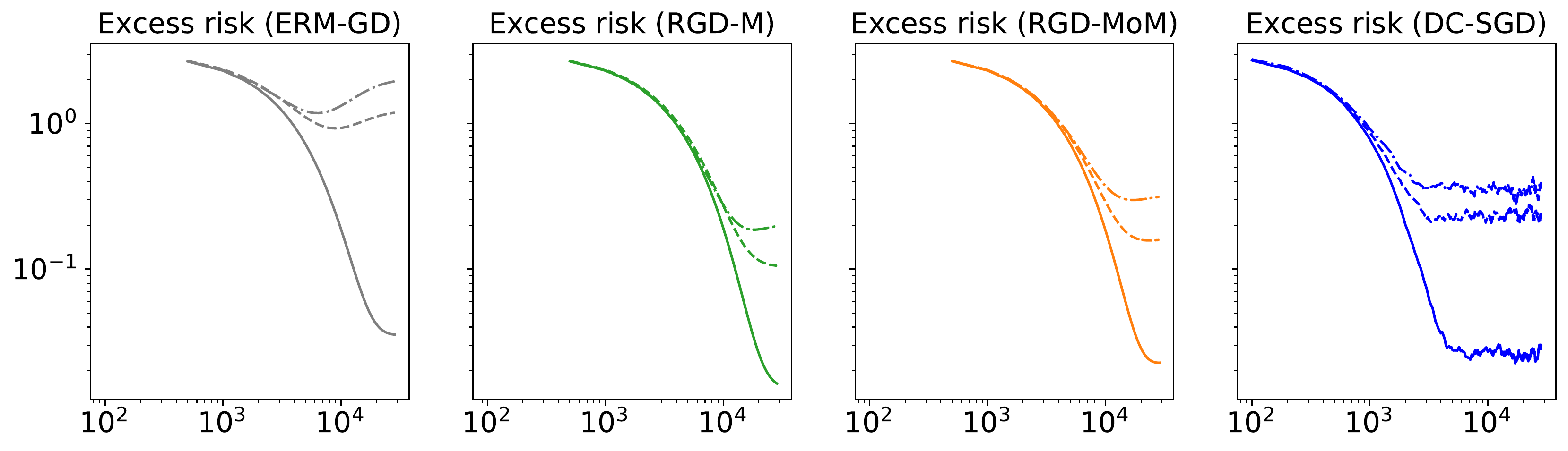}\\
\caption{Noise levels and excess risk trajectories (averaged over trials). Top row: Normal noise. Bottom row: log-Normal noise.}
\label{fig:DIST_sc}
\end{figure}

\paragraph{Discussion of results}

As an overall take-away from the preceding empirical test results, it is clear that even with no fine-tuning of algorithm parameters, it is possible for Algorithm \ref{algo:DandC_SGD} to achieve performance comparable to robust gradient descent methods using far less computational resources. Clearly, even when the underlying sub-processes used by Algorithm \ref{algo:DandC_SGD} are very noisy, only a few passes over the data are necessary to match the best-performing RGD methods, both on average and in terms of between-trial variance (Figures \ref{fig:POC_normal_sc}--\ref{fig:POC_lognormal_sc}). Furthermore, without any algorithm adjustments, this robustness holds over changes to initialization error and the signal/noise ratio (Figures \ref{fig:INIT_sc}--\ref{fig:DIST_sc}). It is clear that due to sample splitting, our procedure can take a small hit in terms of statistical error as the sample size grows very large (Figure \ref{fig:ERROR_N_sc}), but makes up for this in scalability. As the dimensionality of the task grows, under heavy-tailed data the proposed procedure establishes an even more stark advantage over competing methods (Figure \ref{fig:ERROR_D_sc}), at only a small fraction of the computational cost (Figure \ref{fig:lognormal_times_sc}). These initial results are encouraging, and additional tests looking at basic principles and data-driven strategies for optimizing the number of partitions $k$ is a natural point of interest.

\subsection{Controlled simulations, without strong convexity}\label{sec:empirical_nonsc}

To study how the theoretical insights obtained in the previous section play out in practice, we carried out a series of tightly controlled numerical tests. The basic experimental design strategy that we employ is to calibrate all the methods (learning algorithms) of interest to achieve good performance under a particular learning setup, and then we systematically modify characteristics of the learning tasks, leaving the methods fixed, to observe how performance changes in both an absolute and relative sense. Viewed from a high level, the main points we address can be categorized as follows:
\begin{itemize}
\item[\textbf{(E1)}] How do error trajectories of baseline methods change via robust validation?

\item[\textbf{(E2)}] How does relative performance change in high dimensions without strong convexity?

\item[\textbf{(E3)}] How do actual computation times compare as $n$ and/or $d$ grow?

\item[\textbf{(E4)}] Can robust validation be replaced by cross-validation?
\end{itemize}
We proceed by giving additional details on our experimental setting, before taking up the key experiments just listed one at a time.

\paragraph{Experimental setup}

Our basic setup is just as in the previous sub-section (noisy convex minimization), but with new modifications made here to control the degree of strong convexity, among other experimental parameters. With this design it is easy to allow both the losses and partial derivatives to be heavy-tailed, while still satisfying the key technical assumptions of Theorem \ref{thm:smooth_SGDave_roboost}, namely $\parasm_{1}$-smooth $\risk_{\ddist}$ and gradients with $\sigma_{G,\ddist}$-bounded variance. Furthermore, since we are interested in the case where strong convexity may not hold, this experimental design means that the strong convexity parameter $\parasc$ of $\risk_{\ddist}$ is at our control, allowing us to construct many flat directions, and observe algorithm performance as $\parasc \downarrow 0$. All tests and methods are implemented using Python (ver.~3.8), chiefly relying upon the \texttt{numpy} library (ver.~1.18).

For clarity of results, we limit our comparisons to two main families of distributions for $\ddist$, namely those in which the loss $\loss(w;Z)$ contains a Normal noise term, and those in which it contains a log-Normal noise term. In all cases, this noise is centered, and controlled to have nearly equal signal/noise ratios, where the noise level is measured by the width of the interquartile range of the additive noise term, just as in the previous sub-section. The procedure to be evaluated is Algorithm \ref{algo:DandC_valid}, denoted \texttt{RV-SGDAve}, which has been implemented with $\valid$ set to be the Catoni-type M-estimator \citep{catoni2012a} (Algorithm \ref{algo:valid_cat}). Benchmark methods against which we compare are implemented exactly as in the previous sub-section.

\paragraph{(E1) How do error trajectories of baseline methods change via robust validation?}

Before we look at the impact of $\risk_{\ddist}$ having weak convexity, we begin with a nascent investigation of the basic workings of the robust validation procedure of interest. We run $100$ independent trials for both the Normal and log-Normal settings described previously, with $d=2$, $n=500$, and $1$-strongly convex $\risk_{\ddist}$. Here we let all methods run with a fixed ``budget'' of $40n\sqrt{d}$, where the ``cost'' is measured by gradient computations, i.e., cost is incremented by one when $\nabla \loss(w;Z_{i})$ is computed at any $w$ for any $i \in [n]$. Naturally, this means Algorithm \ref{algo:DandC_valid} will be run for multiple passes over the data, meaning that the behavior after the first pass takes us, strictly speaking, beyond the scope of Theorem \ref{thm:smooth_SGDave_roboost}, a natural point of empirical interest. In Figures \ref{fig:baseline_check_normal_nonsc}--\ref{fig:baseline_check_lognormal_nonsc}, we show how the baseline stochastic methods change when being passed through a robust validation procedure such as is used in our Algorithm \ref{algo:DandC_valid}. Here \texttt{RV-SGDAve} is precisely Algorithm \ref{algo:DandC_valid}, where \texttt{RV-SGD} denotes the same procedure \emph{without} averaging the SGD sub-processes. It is natural to choose \texttt{RV-SGDAve} as a representative, and in Figure \ref{fig:proof_of_concept_nonsc}, we compare just \texttt{RV-SGDAve} against the modern RGD methods.

\begin{figure}[th!]
\centering
\includegraphics[width=0.4\textwidth]{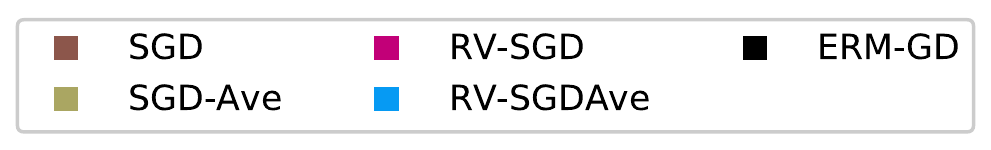}\\
\includegraphics[width=0.49\textwidth]{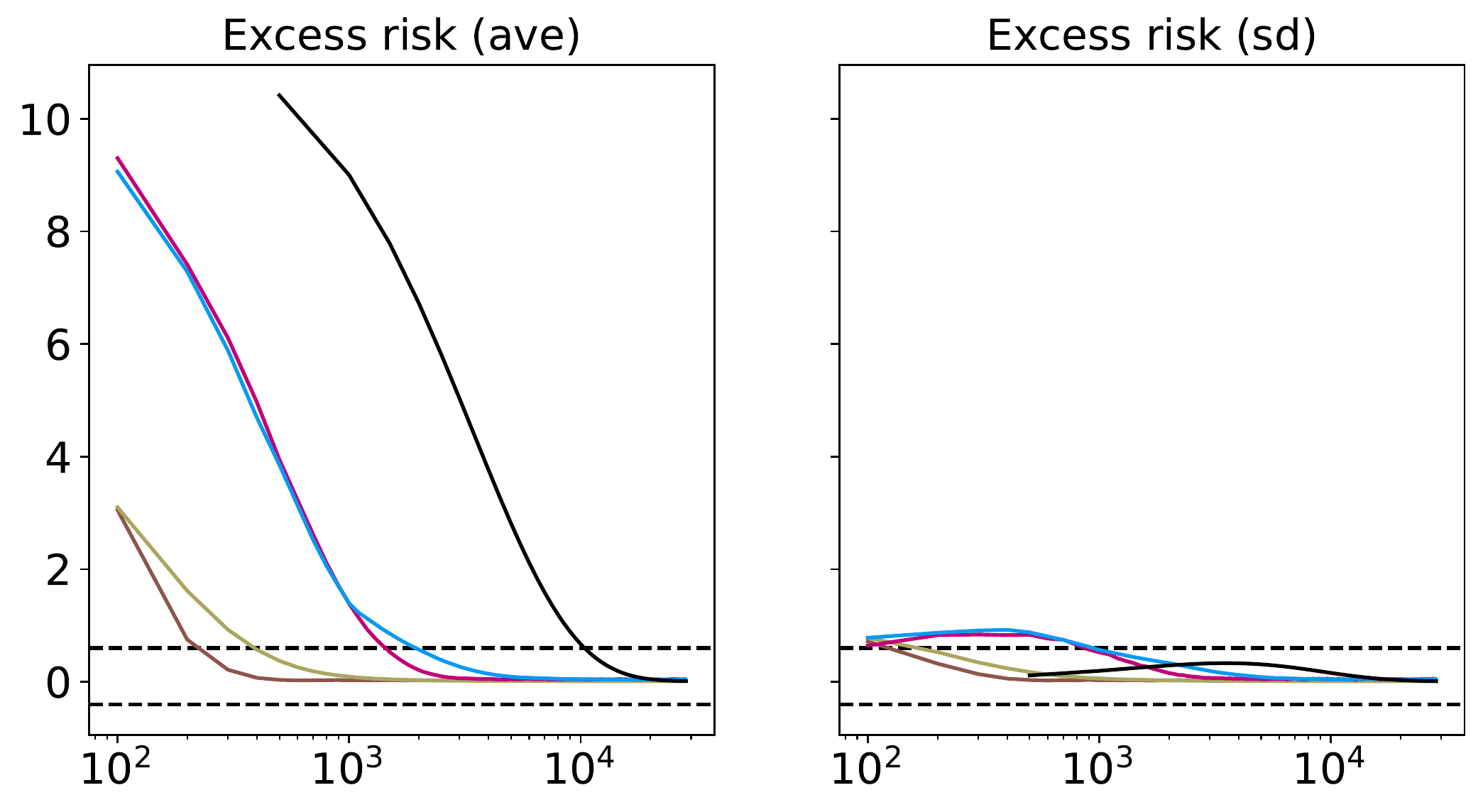}\,\includegraphics[width=0.51\textwidth]{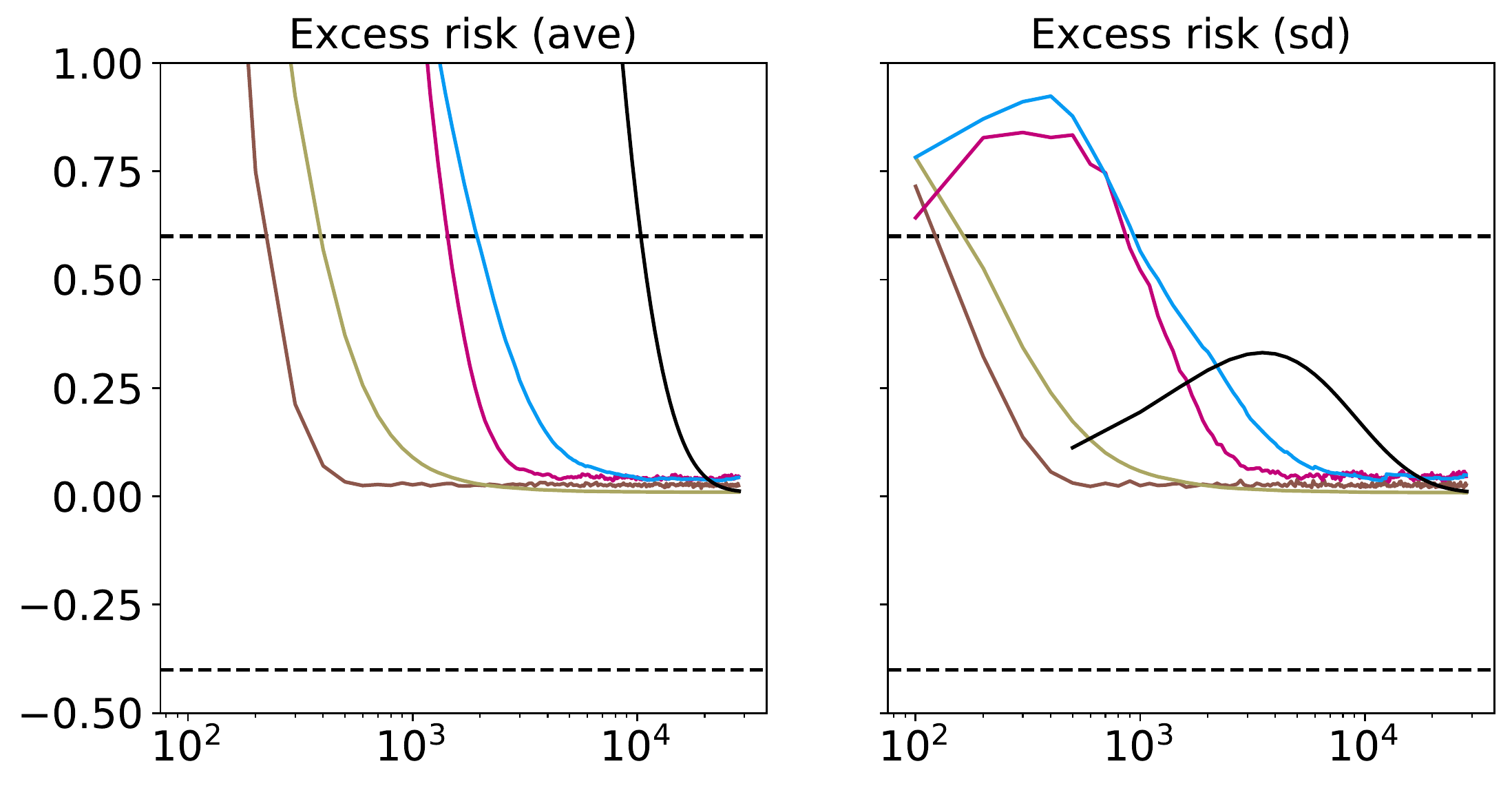}
\caption{Excess risk statistics as a function of cost in gradients (log scale, base $10$). The two right-most plots zoom in on the region between the dashed lines in the two left-most plots.}
\label{fig:baseline_check_normal_nonsc}
\vspace{0.25cm}
\includegraphics[width=0.5\textwidth]{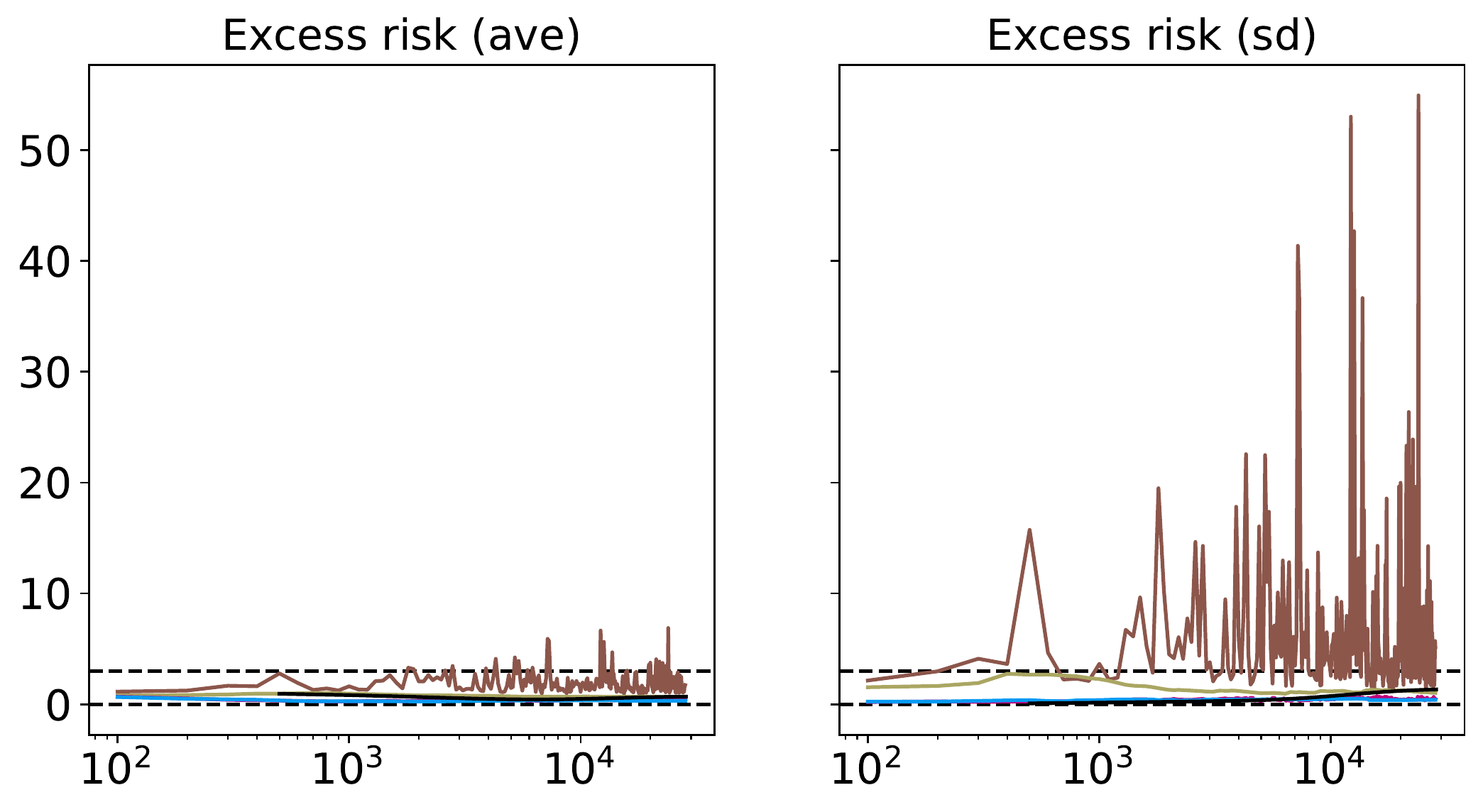}\,\includegraphics[width=0.5\textwidth]{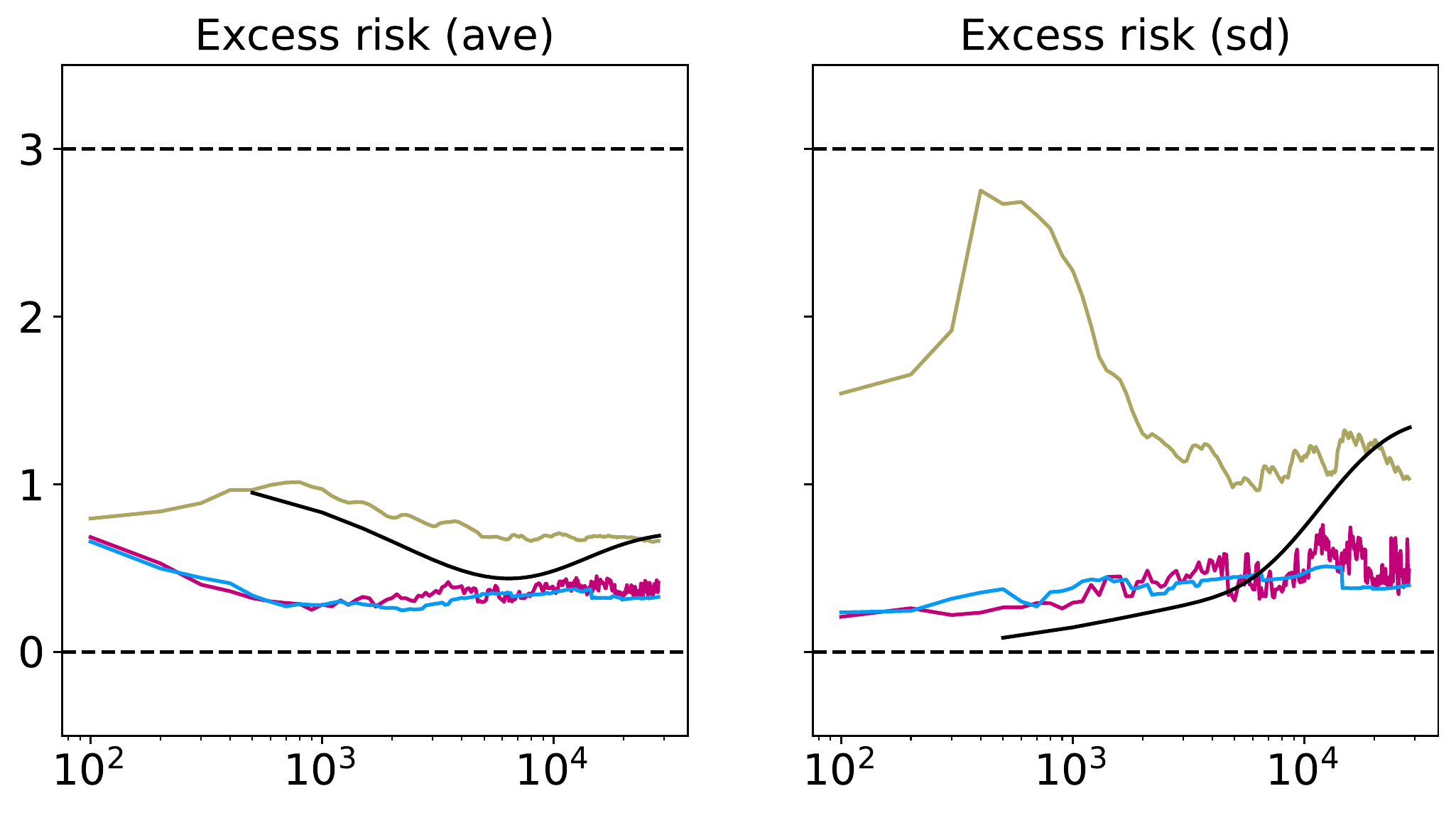}
\caption{Analogous results to Figure \ref{fig:baseline_check_normal_nonsc}, for the case of log-Normal noise.}
\label{fig:baseline_check_lognormal_nonsc}
\vspace{0.25cm}
\includegraphics[width=0.4\textwidth]{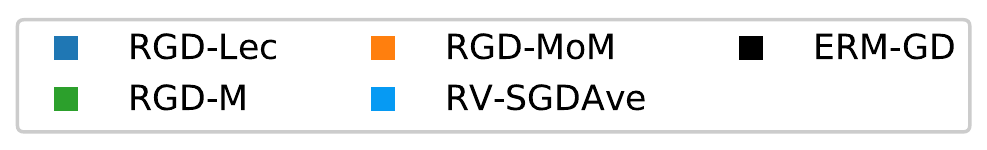}\\
\includegraphics[width=0.5\textwidth]{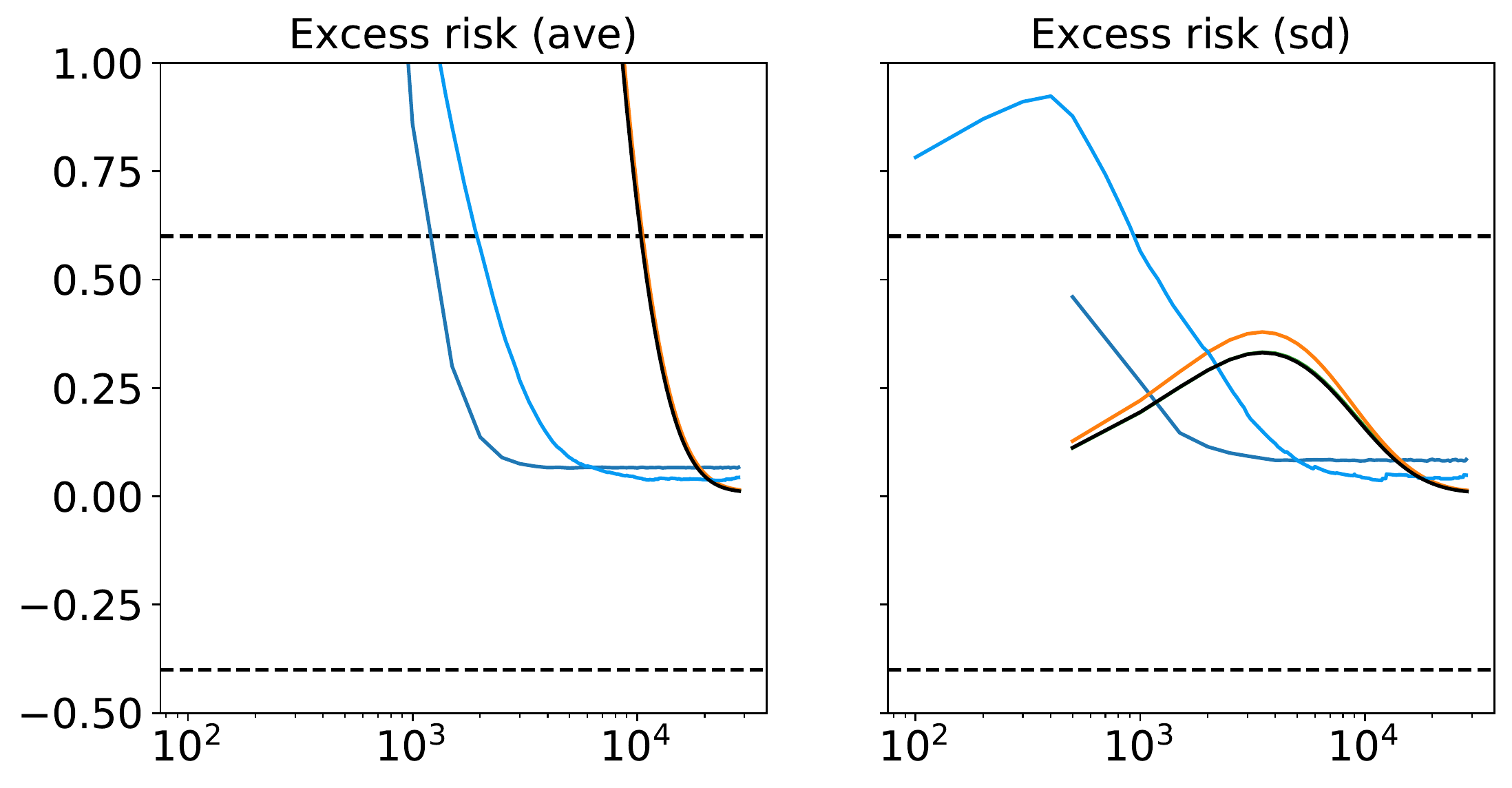}\,\includegraphics[width=0.5\textwidth]{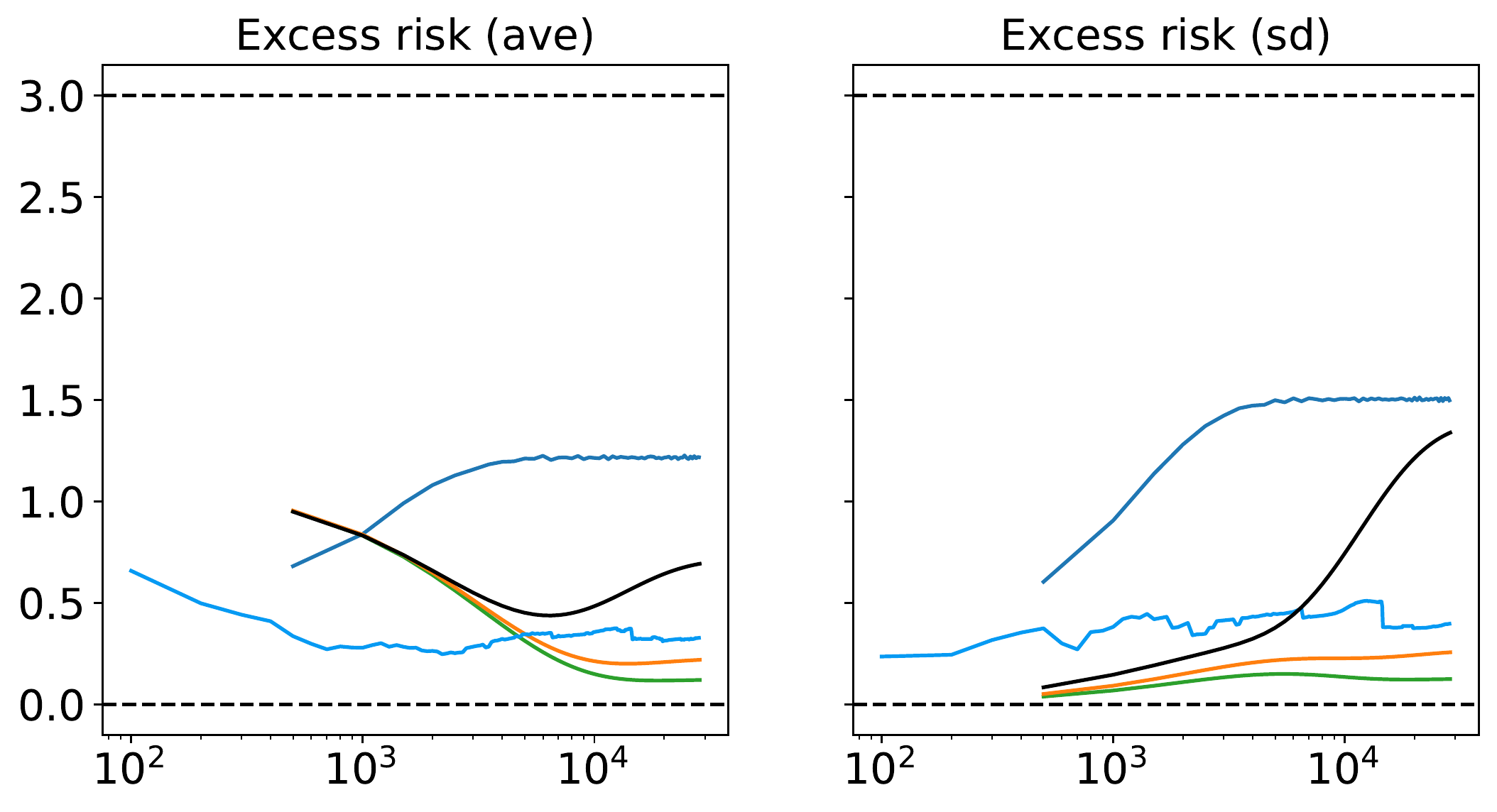}
\caption{Comparison with robust GD methods. Left: Normal case. Right: log-Normal case.}
\label{fig:proof_of_concept_nonsc}
\end{figure}

\paragraph{(E2) How does relative performance change in high dimensions without strong convexity?}

Next we look at how the competing learning algorithms perform as the number of parameters to determine increases, with $\risk_{\ddist}$ having very weak convexity in many directions. More precisely, the matrix $\Sigma$ is diagonal, and half the diagonal elements are no greater than $10^{-4}$, implying a tiny upper bound on the strong convexity parameter of $\risk_{\ddist}$. Under this setting, we look at how increasing $d$ over the range $2 \leq d \leq 1024$, with fixed sample size $n=2500$ impacts algorithm performance. We run $250$ independent trials, and for each trial record performance achieved by each method once it has spent its budget, again measured in gradient computations. Batch methods are given a large budget of $100n$. In contrast, with the previous experiments, here we only let \texttt{RV-SGDAve} (Algorithm \ref{algo:DandC_valid}) take one pass over the data for initialization, and one pass for learning, so a budget of just $2n$. This aligns more precisely with the setting of Theorem \ref{thm:smooth_SGDave_roboost}. Noise distribution settings are as previously introduced. In Figure \ref{fig:ERROR_DMU_nonsc}, we give box-plots of the final excess risk achieved by each method for different $d$ sizes.

\begin{figure}[t]
\centering
\includegraphics[width=0.5\textwidth]{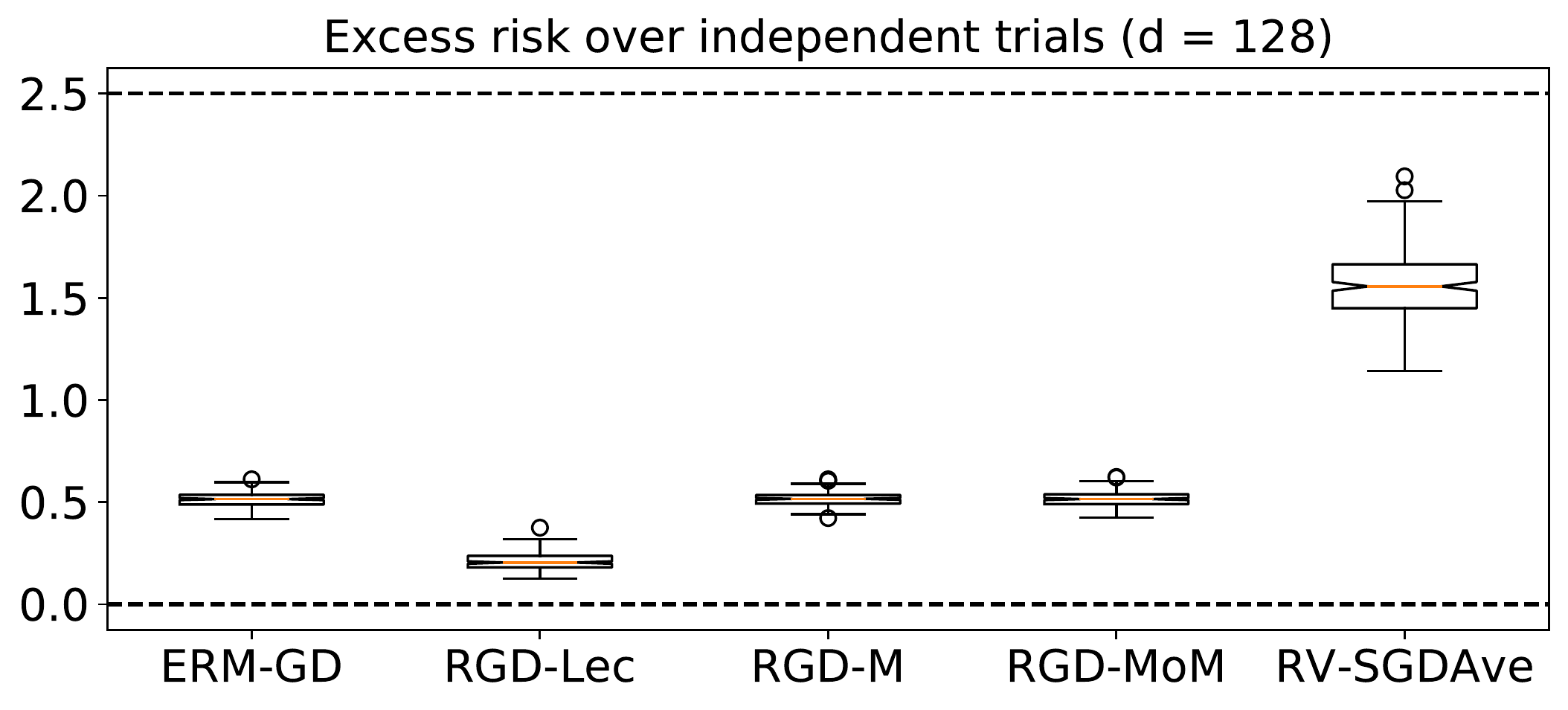}\,\includegraphics[width=0.5\textwidth]{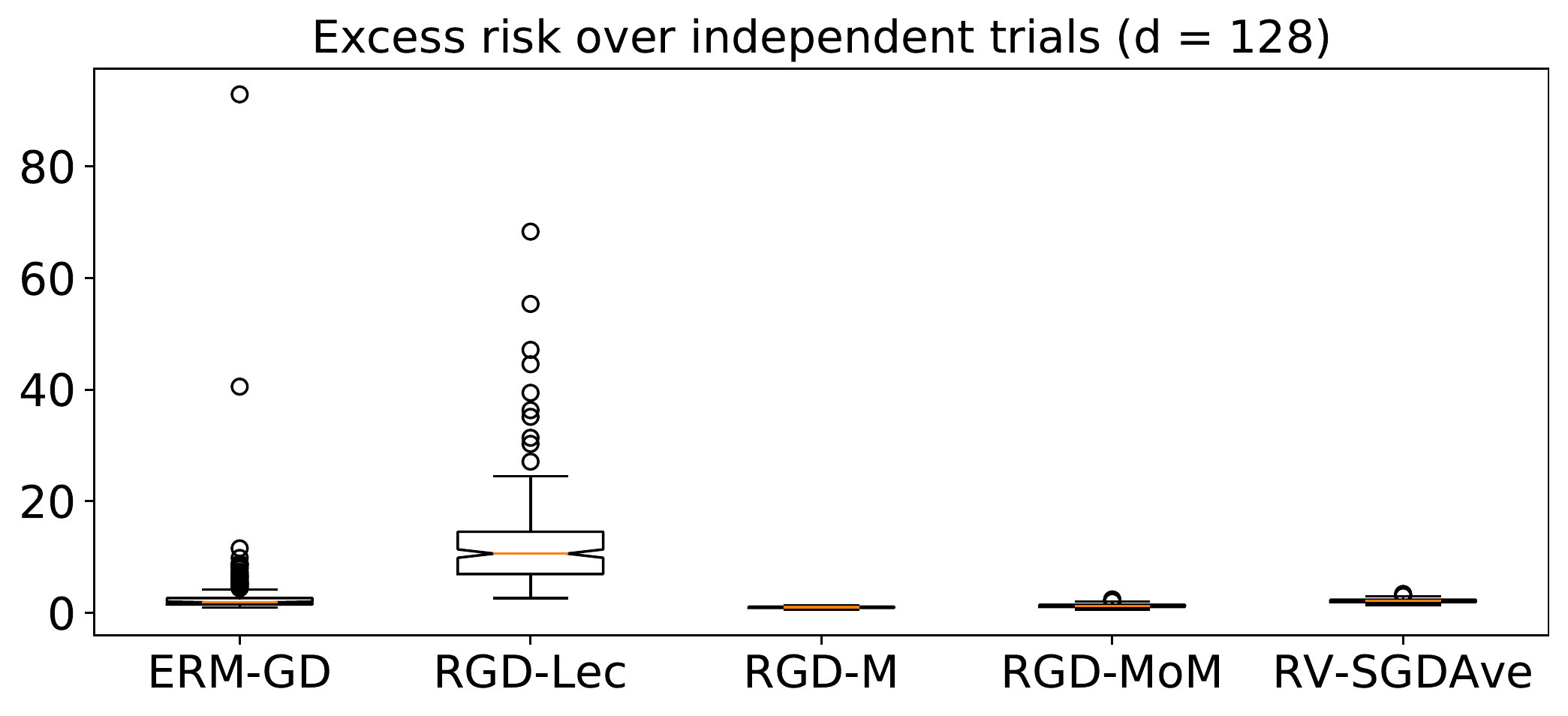}\\
\includegraphics[width=0.5\textwidth]{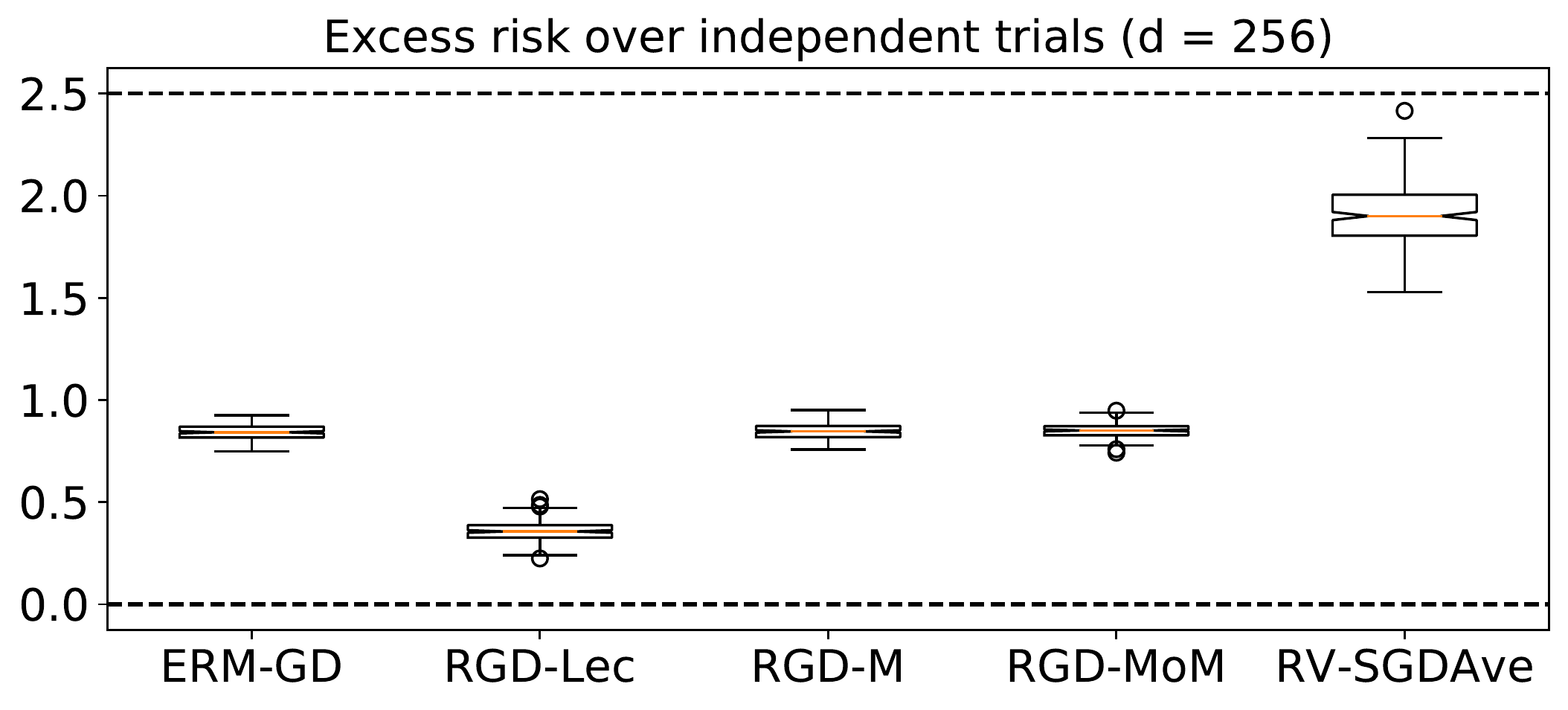}\,\includegraphics[width=0.5\textwidth]{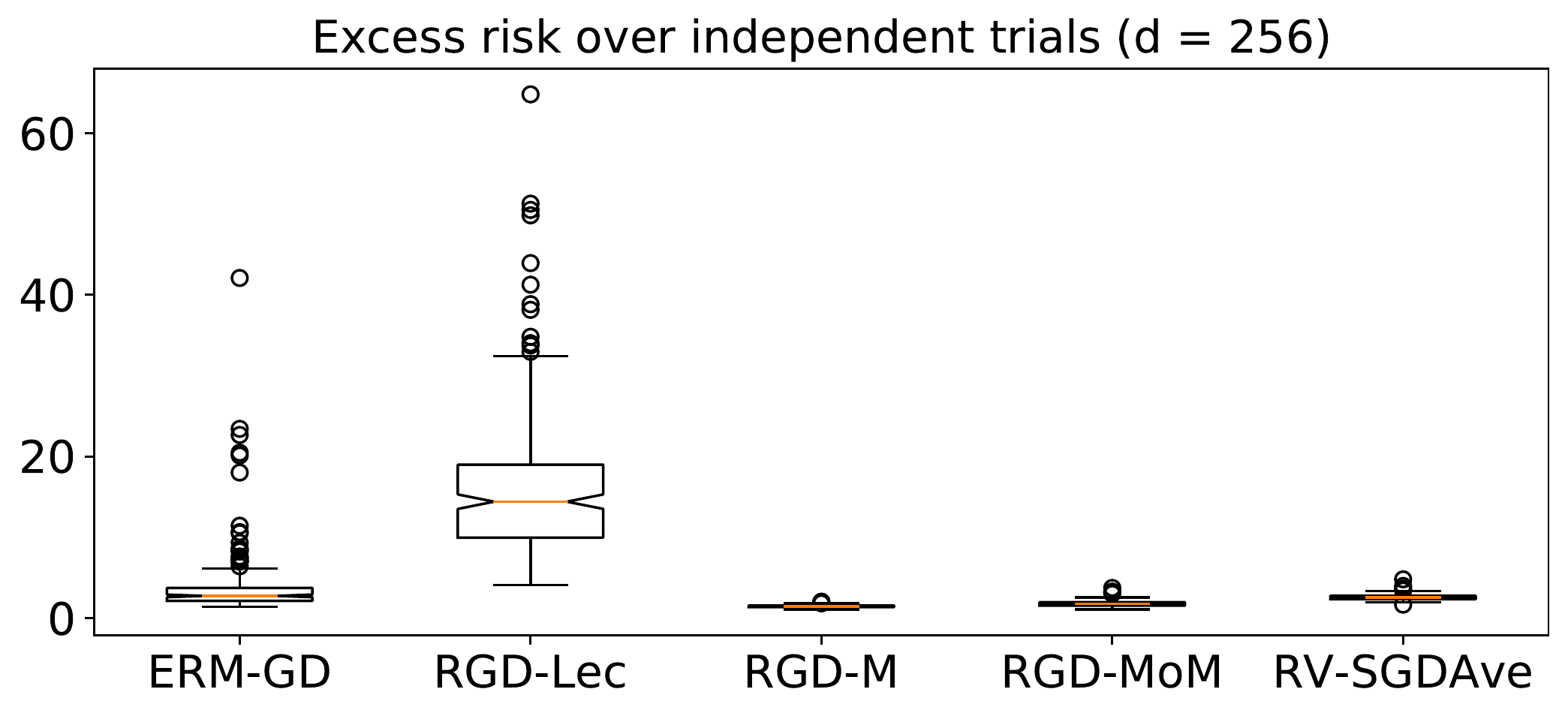}\\
\includegraphics[width=0.5\textwidth]{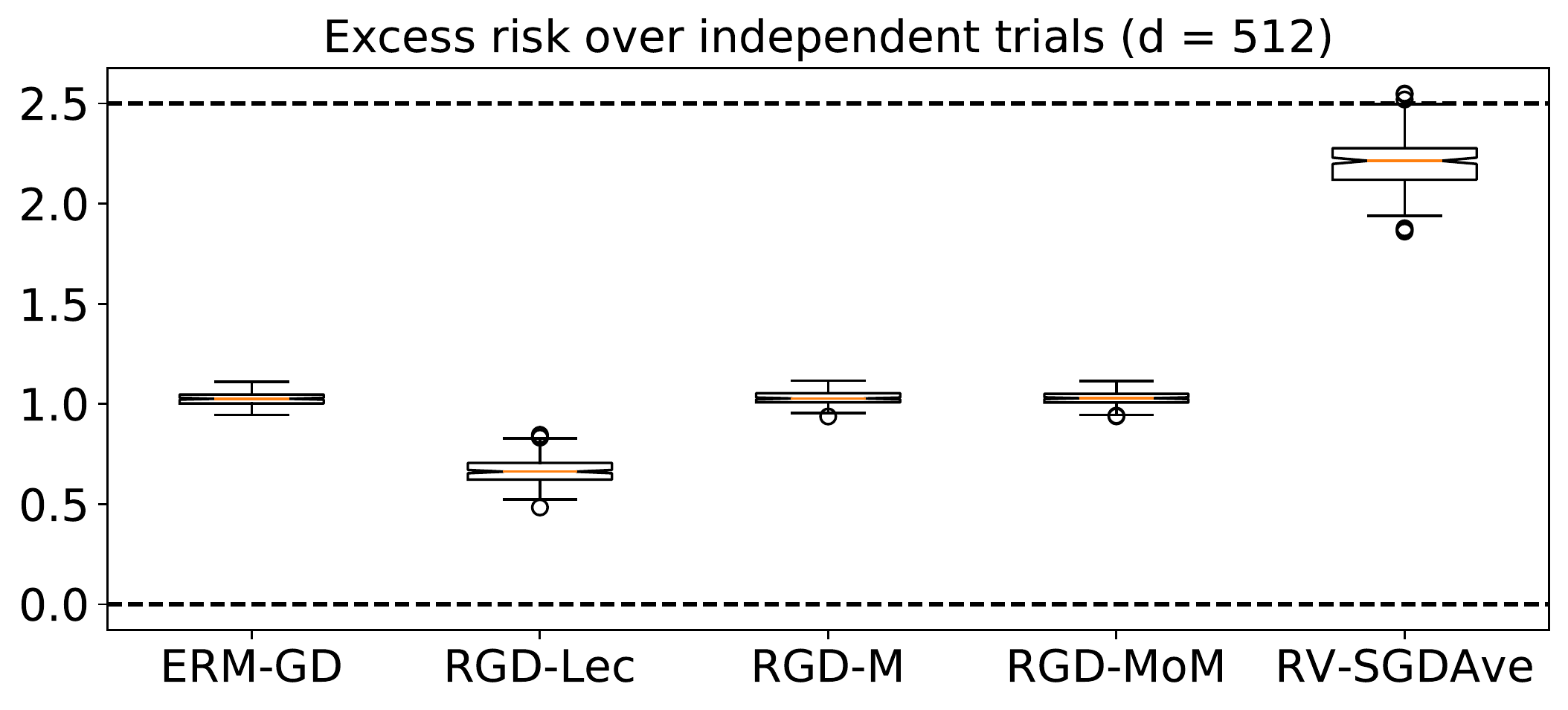}\,\includegraphics[width=0.5\textwidth]{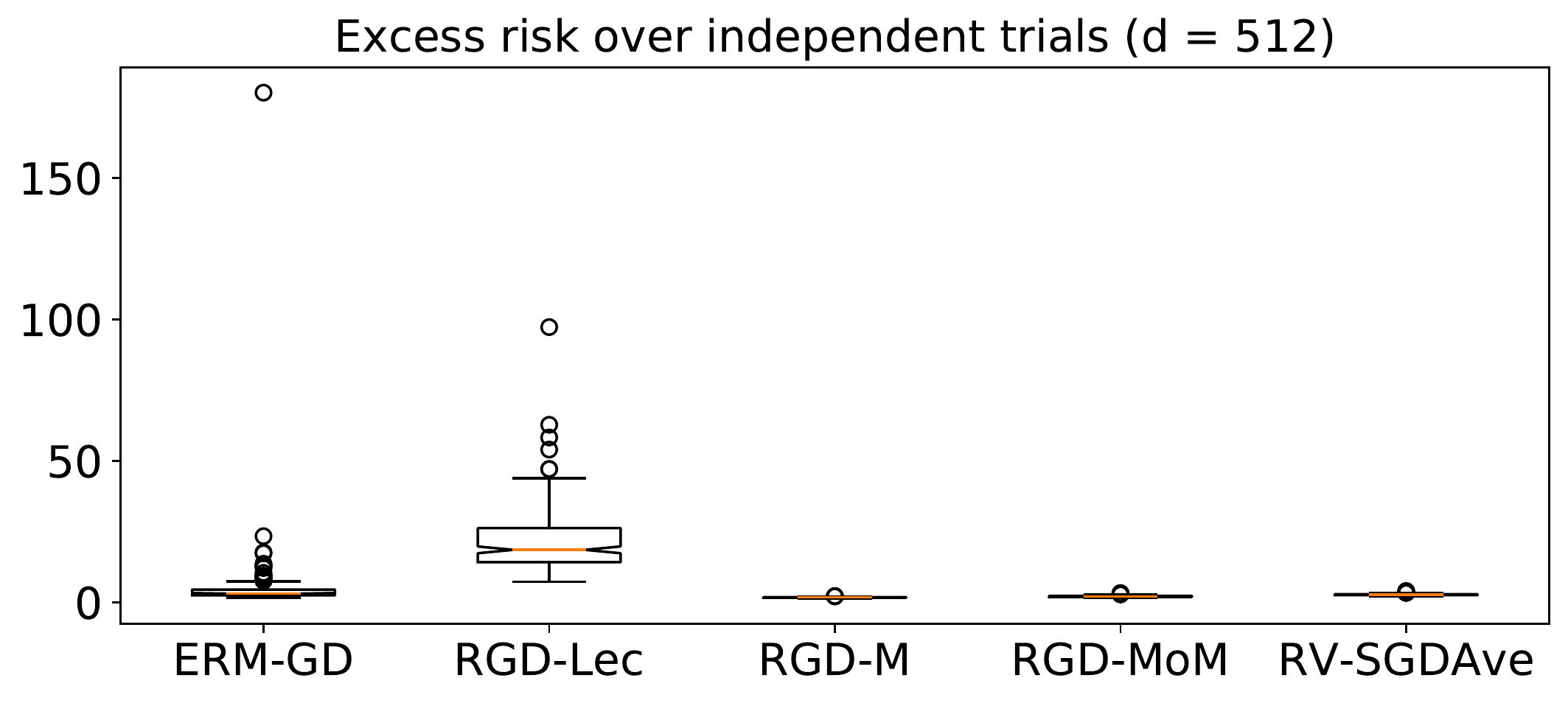}\\
\includegraphics[width=0.5\textwidth]{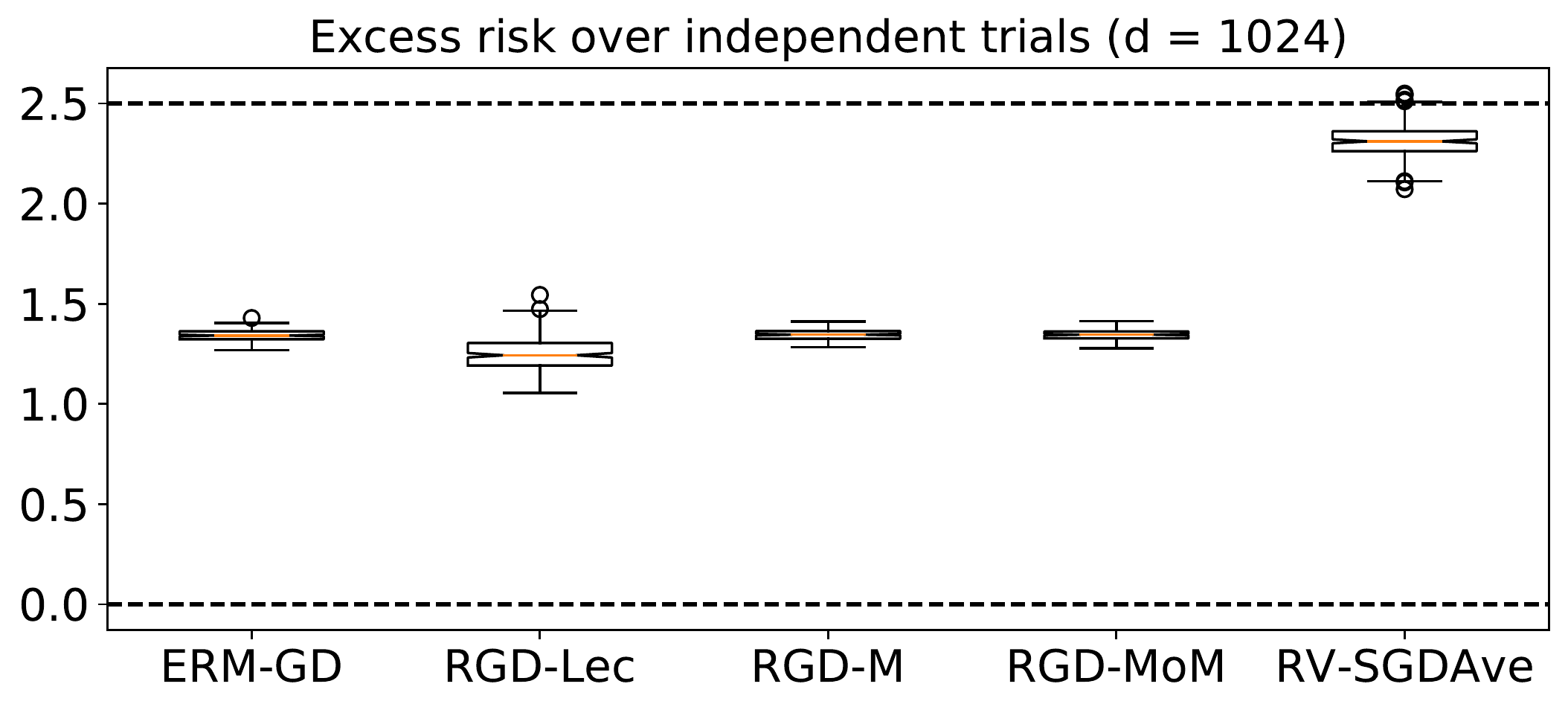}\,\includegraphics[width=0.5\textwidth]{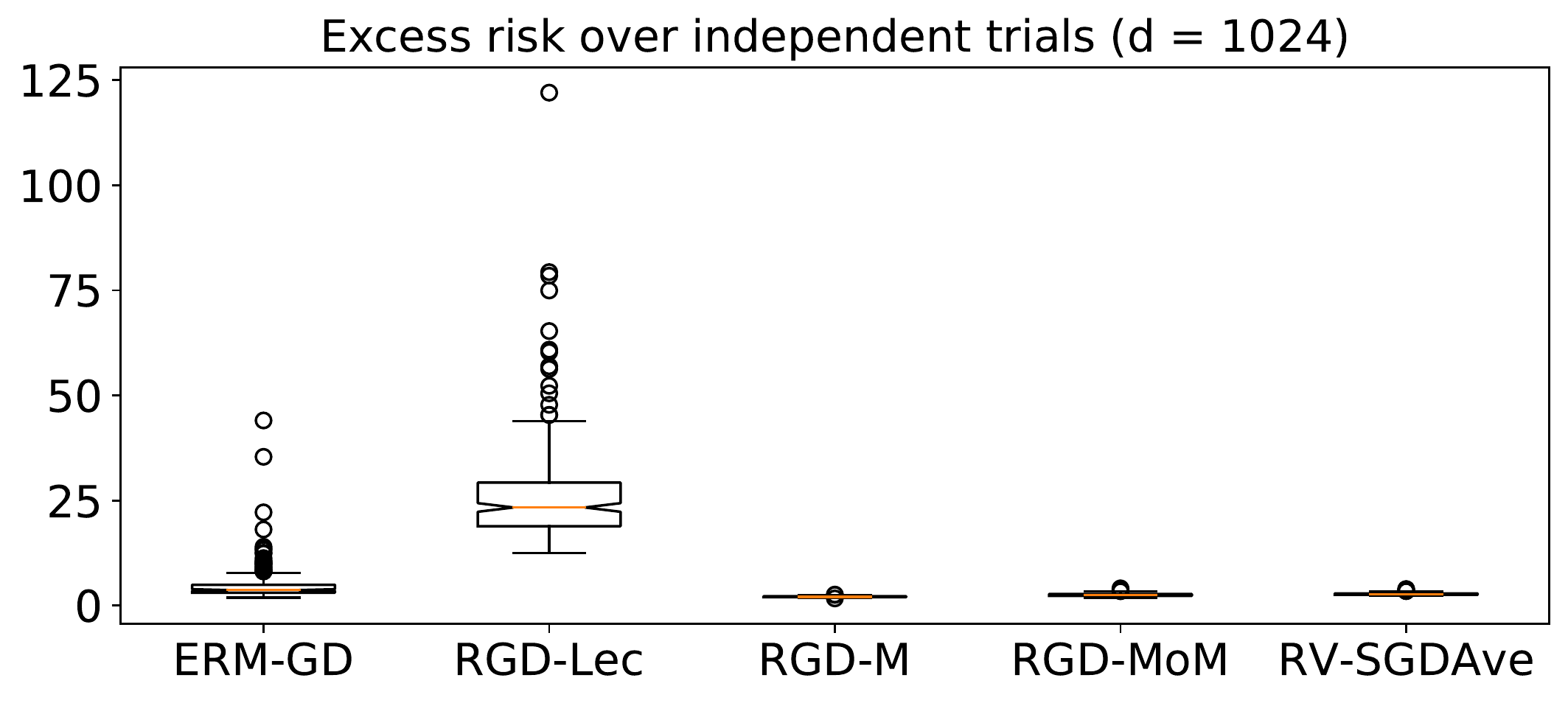}
\caption{Excess risk for many-pass batch methods and single-pass \texttt{RV-SGDAve}. Dimension settings shown are $d \in \{128, 256, 512, 1024\}$. Left column: Normal noise (dashed horizontal rule is fixed to show small relative changes). Right column: log-Normal noise.}
\label{fig:ERROR_DMU_nonsc}
\end{figure}

\paragraph{(E3) How do actual computation times compare as $n$ and/or $d$ grow?}

Tests here are done in the same way as the previous sub-section. First, $n$ and $d$ move together, with $n=4000d$ and $2 \leq d \leq 64$. Second, $n=2500$ is fixed, and dimension ranges over $2 \leq d \leq 1024$ as in (E2). Budget constraints used for stopping rules are exactly as described in (E2). We run $250$ independent trials, and compute the median times for each method. We remark that in comparing the log-Normal versus Normal cases, there is virtually no difference between the computation times for any method, and thus to save space we simply show times for the log-Normal case; these median times for both experimental settings are shown in Figure \ref{fig:lognormal_times_nonsc}.

\begin{figure}[t]
\centering
\includegraphics[width=0.4\textwidth]{nonsc/proof_of_concept_POC_legend}\\
\includegraphics[width=0.5\textwidth]{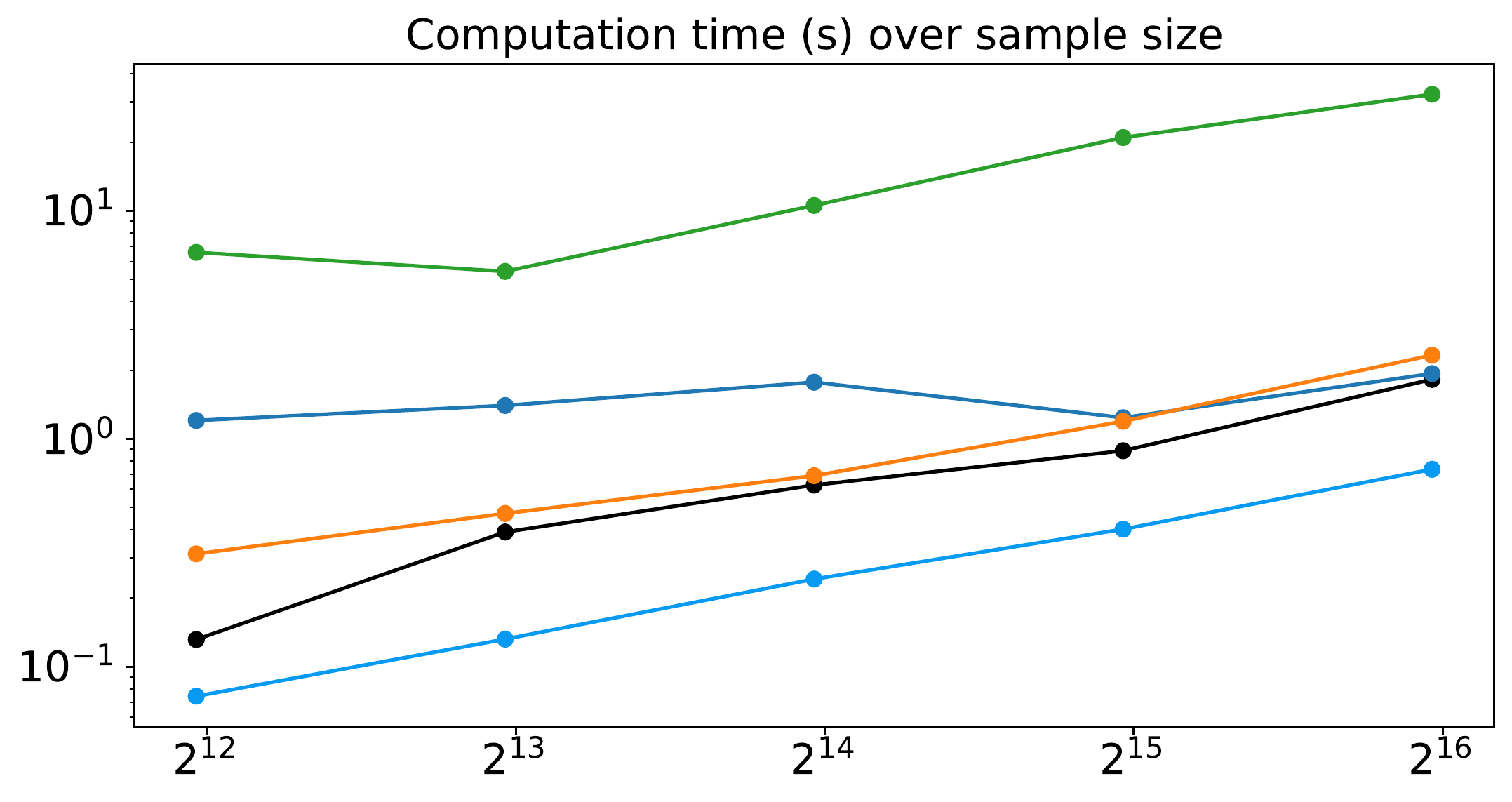}\,\includegraphics[width=0.5\textwidth]{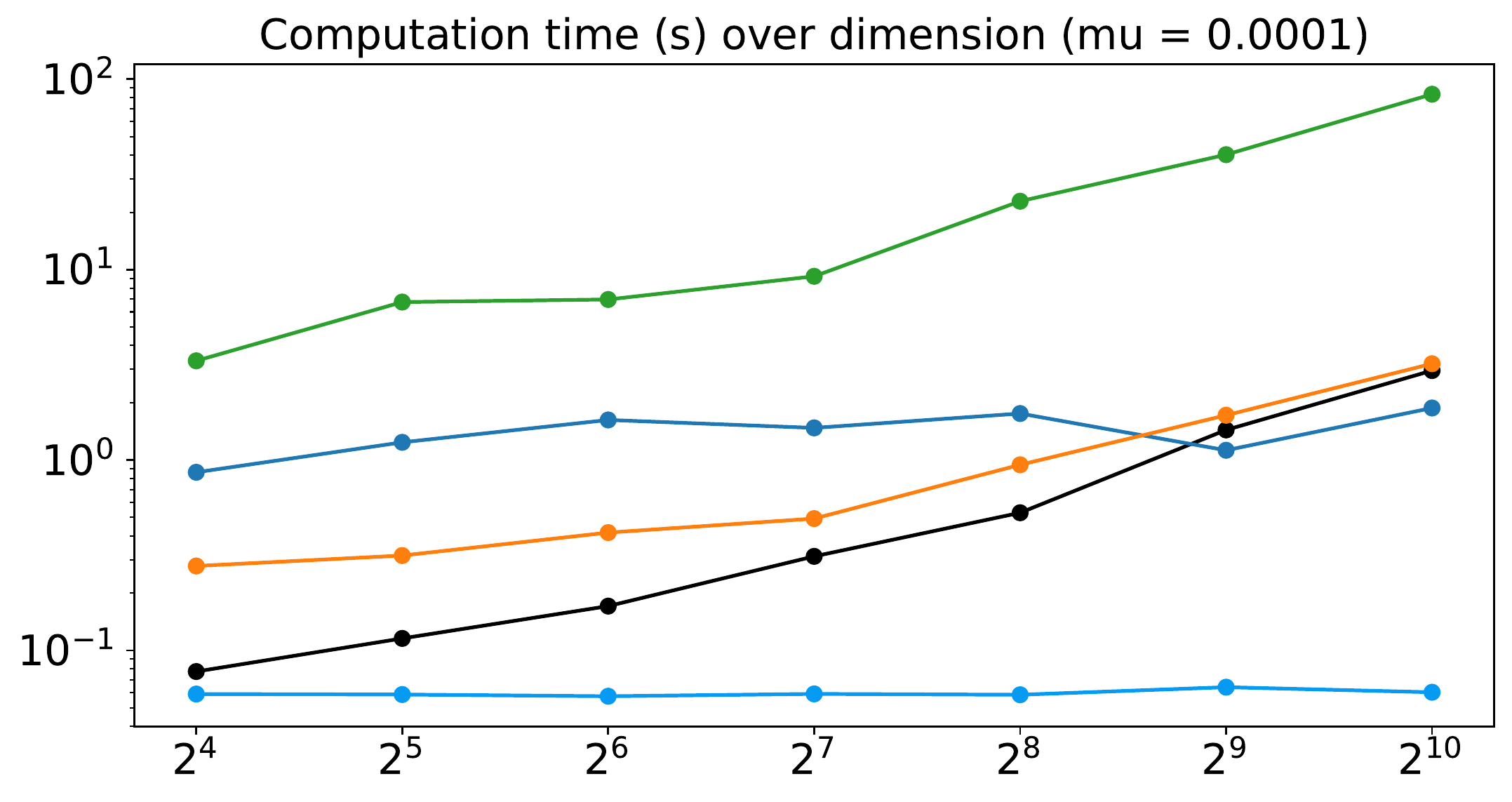}
\caption{Median computation times (log scale, base $10$) as a function of $n$  and $d$ (right; log scale, base $2$). Left: time as a function of $n$ (log scale, base $2$), with $n$ and $d$ growing together. Right: time as a function of $d$ (log scale, base $2$), with $n$ fixed.}
\label{fig:lognormal_times_nonsc}
\end{figure}

\paragraph{(E4) Can robust validation be replaced by cross-validation?}

Finally, it is natural to ask whether the procedure of Algorithm \ref{algo:DandC_valid} could be replaced by a heuristic cross-validation procedure that uses all the data for learning, doubling the effective sample size available to each sub-process. More precisely, say that instead of splitting the $n$-sized sample into $\Z_{n/2}$ and $\Z_{n/2}^{\prime}$ as done by \texttt{RV-SGD} (Algorithm \ref{algo:DandC_valid}), we simply use a full $n$-sized sample $\Z_{n}$, partition into $k$ subsets $\II_{1},\ldots,\II_{k}$, obtaining $k$ independent candidates $\overbar{w}^{(1)},\ldots,\overbar{w}^{(k)}$, now with double the sample size compared with \texttt{RV-SGD}. One might be intuitively inclined to do a cross-validation type of selection, where for each $j \in [k]$, the validation score returned by $\valid$ is computed for each $\overbar{w}^{(j)}$ using the data $Z_{i}$ indexed by $i \in [n] \setminus \II_{j}$, and the winning index $\star$ is selected to be the minimizer of this cross-validation error. Such heuristics break the assumptions used in the theoretical analysis of Algorithm \ref{algo:DandC_valid}, and it is interesting to see how this plays out in practice. Thus, we have re-implemented both \texttt{RV-SGD} and \texttt{RV-SGDAve} in this fashion, respectively denoted \texttt{RV-SGD-CV} and \texttt{RV-SGDAve-CV}. Error trajectories for the same experimental setting as (E1) for all these methods are compared in Figure \ref{fig:CVtest_nonsc}.

\begin{figure}[t]
\centering
\includegraphics[width=0.5\textwidth]{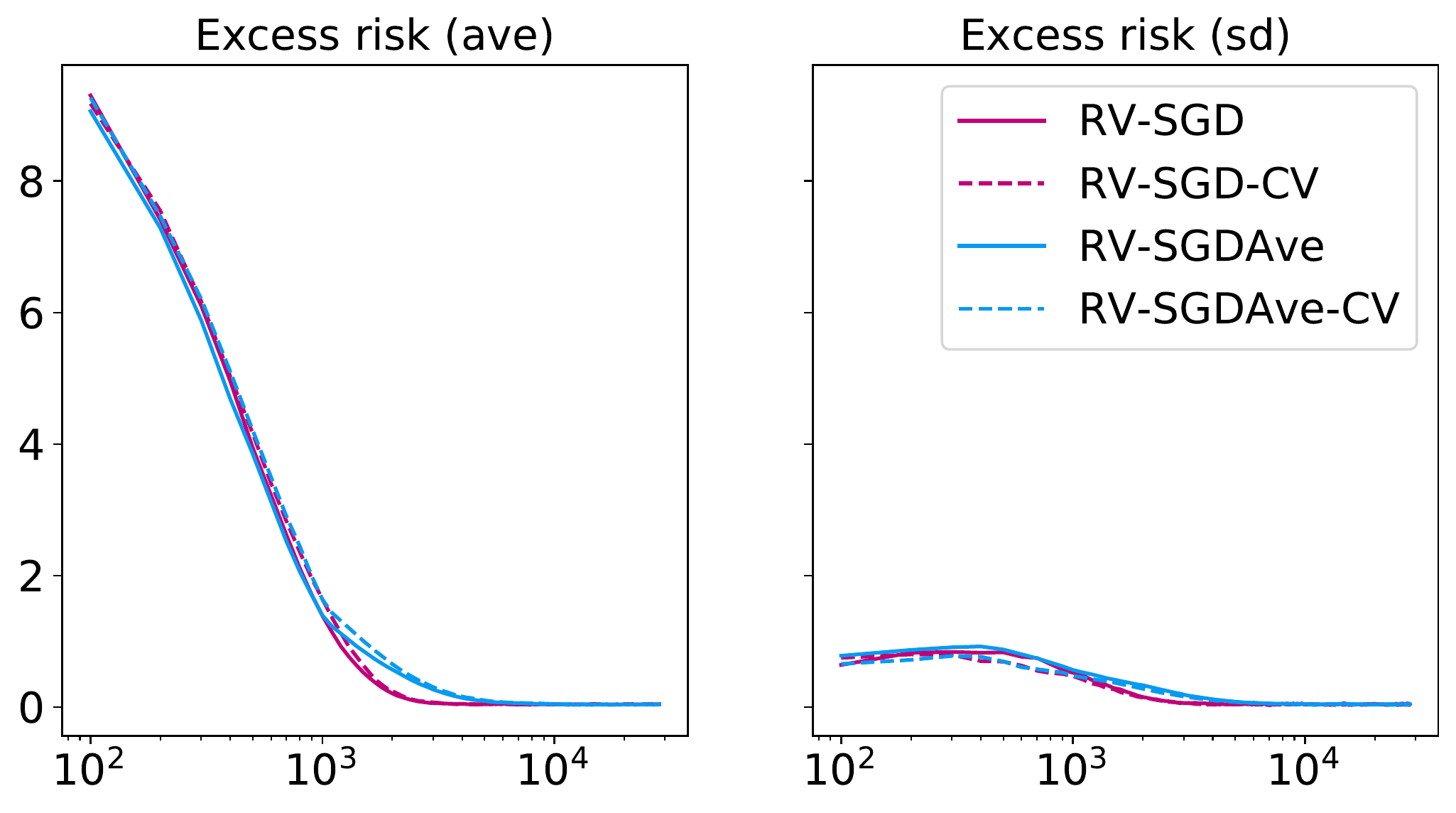}\,\includegraphics[width=0.5\textwidth]{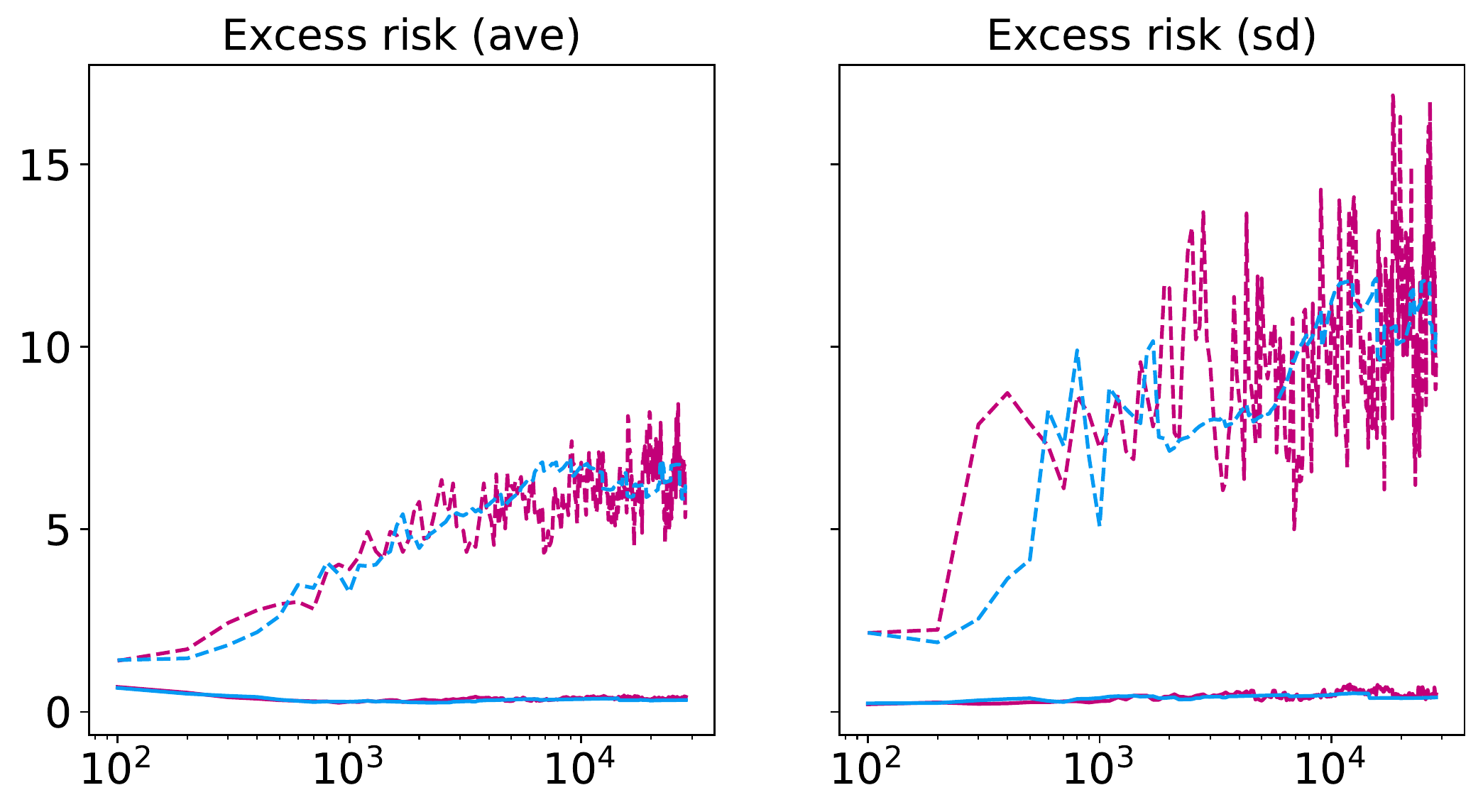}
\caption{The negative impact of trying to modify Algorithm \ref{algo:DandC_valid} to use a cross validation heuristic. Left: Normal noise. Right: log-Normal noise.}
\label{fig:CVtest_nonsc}
\end{figure}

\paragraph{Discussion of results}

From the initial proof-of-concept tests with results given in Figures \ref{fig:baseline_check_normal_nonsc}--\ref{fig:proof_of_concept_nonsc}, we see how even very noisy sub-processes can be ironed out easily using the simple robust validation sub-routine included in Algorithm \ref{algo:DandC_valid}, and that even running the algorithm for much longer than a single pass over the data, risk which is comparable to benchmark RGD methods can be realized at a much smaller cost, with comparable variance across trials, and that this holds under both sub-Gaussian and heavy-tailed data, without any modifications to the procedure being run. A particularly lucid improvement in the cost-performance tradeoff is evident from Figures \ref{fig:ERROR_DMU_nonsc}--\ref{fig:lognormal_times_nonsc}, since near-identical performance can be achieved at a small fraction of the computational cost. Note that under Normal noise, running Algorithm \ref{algo:DandC_valid} for just a single pass leaves room for improvement performance-wise, but as we saw in the low-dimension case, in practice this can be remedied by taking additional passes over the data. Finally, regarding the question of whether or not Algorithm \ref{algo:DandC_valid} can be replaced with a cross-validation heuristic, the answer is clear (Figure \ref{fig:CVtest_nonsc}): while the results are comparable under well-behaved data (the Normal noise case here), when heavy tails are a possibility (e.g., the log-Normal case), the naive cross-validation method fails to get even near the performance of Algorithm \ref{algo:DandC_valid}.

\subsection{Applications to real data}\label{sec:empirical_realdata}

In this sub-section, we consider applications of the proposed learning algorithms to real-world benchmark datasets that are frequently used in the machine learning literature. While there are innumerable avenues of analysis that could shed additional light on the behaviour and utility of the proposed procedures, here we constrain our focus to two main questions:
\begin{itemize}
\item[\textbf{(E1)}] Under a convex setting, how does Algorithm \ref{algo:DandC_SGD} compare with a single-process benchmark over multiple epochs?
\item[\textbf{(E2)}] How does introducing non-linearity into the model impact the performance of Algorithm \ref{algo:DandC_SGD} compared with Algorithm \ref{algo:DandC_valid}?
\end{itemize}
Real-world data is a valuable resource, and spreading it too thinly across cheap sub-processes is naturally bound to result in a performance decrease. From the perspective of the practioner, arguably the simplest and most natural question to ask here is whether it is actually worth the effort of going through the split-train-integrate process utilized in Algorithms \ref{algo:DandC_SGD} and \ref{algo:DandC_valid}. That is, why not reduce the sub-process noise to the limit by giving all the data to a single procedure? This is addressed by \textbf{(E1)}. Equally fundamental is the fact that convexity is quite critical to the $\merge$ sub-routines used in Algorithm \ref{algo:DandC_SGD}, but plays no role at all in the $\valid$ sub-routines used in Algorithm \ref{algo:DandC_valid}. When we introduce non-linearity into our models, we can shrink the ``bias'' due to model error, at the cost of having a more complicated space to search for a solution, typically without guarantees of convexity. It is \textbf{(E2)} that looks at how the complexity of this search space impacts the two proposed algorithms.

\paragraph{Experimental setup}

Summarized at a high level, the experiments we run here involve multiple randomized trials for each benchmark dataset, where for each trial we run the algorithms of interest for multiple passes over the data (or ``epochs''), recording performance at the end of each epoch. To provide additional insights beyond the simulations done in previous sections, instead of regression tasks here we focus on classification tasks. Our implementation relies upon PyTorch (ver.~1.6.0)\footnote{Documentation for \texttt{torch}: \url{https://pytorch.org/docs/1.6.0/index.html}.} to construct model and optimizer objects, essentially entirely using standard objects defined in \texttt{torch.nn}. More precisely, in the convex case we effectively do logistic regression, by passing the raw inputs through a single linear layer (\texttt{nn.Linear}) followed by a log-softmax transformation (\texttt{F.log\_softmax}), where the number of outputs equals the number of classes, and the loss function is the usual negative log likelihood (\texttt{F.nll}). In the non-convex case, the loss is the same, but now the inputs are passed through multiple intermediate layers, with a non-linear transformation. Each intermediate layer has $10$ units, and is passed through a rectified linear unit transformation (\texttt{F.relu}). Once again, the number of outputs in the final layer equals the number of classes, and this output is passed through the log-softmax transform. Recalling that the sub-process run in our proposed procedures is $\SGD$ defined in (\ref{eqn:sgd_defn}), this is implemented using the optimizer object \texttt{optim.SGD} off the shelf, up to modifications made to the step size and mini-batch size, to be discussed shortly.

When we say each trial is ``randomized,'' there are two key elements that are randomly determined each time: first is the initialization of all model parameters (we use the default random initialization of \texttt{nn.Module} objects), and second is the data used. Regarding the latter, say the full dataset has $m$ data points. Then for each trial, we randomly shuffle the order of all $m$ points, before splitting the data into three subsets, for training (\text{tr}), testing (\text{te}), and validation (\text{val}), i.e., $[m] = \II_{\text{tr}} \cup \II_{\text{te}} \cup \II_{\text{val}}$. For all datasets and all trials, we make this dataset such that $\lfloor |\II_{\text{tr}}|/m \rfloor = 0.8$ and $\lfloor |\II_{\text{val}}|/|\II_{\text{tr}}| \rfloor = 0.1$. The remaining points are allocated to $\II_{\text{te}}$. All results given in the figures to follow are statistics (e.g., sample mean, standard deviation, etc.) taken over all the random trials just described.

\paragraph{Description of benchmark datasets}

We use six datasets, identified respectively by the keywords: \texttt{adult},\footnote{\url{https://archive.ics.uci.edu/ml/datasets/Adult}} \texttt{cifar10},\footnote{\url{https://www.cs.toronto.edu/~kriz/cifar.html}} \texttt{cod\_rna},\footnote{\url{https://www.csie.ntu.edu.tw/~cjlin/libsvmtools/datasets/binary.html}} \texttt{emnist\_balanced},\footnote{\url{https://www.nist.gov/itl/products-and-services/emnist-dataset}} \texttt{fashion\_mnist},\footnote{\url{https://github.com/zalandoresearch/fashion-mnist}} and \texttt{mnist}.\footnote{\url{http://yann.lecun.com/exdb/mnist/}} See Table \ref{table:datasets} for a summary. Further background on all datasets is available at the URLs provided in the footnotes of this page. For all datasets, we normalize the inputs in a feature-wise fashion to take values on the unit interval. Dataset size reflects the size after removal of instances with missing values, where applicable. For all datasets with categorical features, the ``number of input features'' given in Table \ref{table:datasets} represents the number of features after doing a one-hot encoding of all such features. 

\begin{table}[t!]
\begin{center}
\begin{tabular}{|l|l|l|l|}
\hline
Dataset & Size & Number of input features & Number of classes \\
\hline\hline
\texttt{adult} & 45,222 & 105 & 2\\
\hline
\texttt{cifar10} & 60,000 & 3072 & 10\\
\hline
\texttt{cod\_rna} & 331,152 & 8 & 2\\
\hline
\texttt{emnist\_balanced} & 131,600 & 784 & 47\\
\hline
\texttt{fashion\_mnist} & 70,000 & 784 & 10\\
\hline
\texttt{mnist} & 70,000 & 784 & 10\\
\hline
\end{tabular}
\end{center}
\vspace{-0.5cm}
\caption{A summary of the benchmark datasets used for performance evaluation.}
\label{table:datasets}
\end{table}

\paragraph{(E1) Under a convex risk, how does Algorithm \ref{algo:DandC_SGD} compare with a single-process benchmark?}

For this point of inquiry, we essentially consider two options: one is running a single benchmark process of $\SGD$ on all the non-test data (i.e., all data indexed by $\II_{\text{tr}} \cup \II_{\text{val}}$), and the other is to disjointly split just the training data (indexed by $\II_{\text{tr}}$) into $k > 1$ subsets, running an $\SGD$ sub-process on each of the $k$ subsets. The step size parameter $\alpha$ (called \texttt{lr} in PyTorch) is the only hyperparameter we need to set manually. In the former case, which we use as a benchmark and denote by \texttt{bench}, we simply use the output of $\SGD$ itself. In the latter case, we use the $k$ sub-processes to fuel Algorithm \ref{algo:DandC_SGD}, once again denoted \texttt{DC-SGD}, where we have again implemented $\merge$ using the geometric median (Algorithm \ref{algo:merge_geomed}). For all these tests, we have fixed $k=20$, which in consideration of Theorem \ref{thm:sc_smooth_SGDlast_roboost}, amounts to asking for $91\%$ confidence intervals, since $\lceil 8\log(1/0.09) \rceil = 20$. Noting that the dimensionality and sample size of the different datasets in Table \ref{table:datasets} is quite varied, this ensures that with a fixed $k$ setting, formal guarantees with varying degrees of ``tightness'' can be evaluated. Furthermore, for both settings, we use mini-batch sizes of $8$, and run each procedure for $15$ epochs, shuffling the training data before each new epoch (test data is untouched). The number of random trials is $50$.

In order to make a fair comparison between the benchmark and our proposed routine, it is important to consider multiple step size settings. With enough fine-tuning, it is possible to achieve both very good and very poor performance with either procedure being compared, but our chief interest is with how \emph{sensitive} the procedures are to changes in step size. We start with baseline step size $\alpha_{\text{base}}$ set for each dataset at follows: $\alpha_{\text{base}} = 0.0025$ for \texttt{cifar10}, $\alpha_{\text{base}} = 0.05$ for \texttt{adult}, $\alpha_{\text{base}} = 0.15$ for \texttt{cod\_rna}, and $\alpha_{\text{base}} = 0.025$ for the rest. These baseline settings were set simply such that datasets with larger input dimensionality were given smaller step sizes, the specific base values were selected based on representative values found in the literature. We then tested both \texttt{bench} and \texttt{DC-SGD}, using step sizes of $\alpha_{\text{base}} \times 2^{p}$ taken over all settings of $p=0,1,2,3$. In Figure \ref{fig:losses_lr}, we give representative results for each dataset, showing the mean and standard deviation (taken over all trials) of the average test loss achieved after the final training epoch, viewed as a function of the step size coefficient $2^{p}$. Overall, it is clear that the proposed procedure can achieve performance as good or better than the costly benchmark using a much larger step size, and at all step sizes enjoys uniformly smaller variance.

A critical merit to the proposed approach is that the procedure can be easily run in parallel, having split the data across multiple cores to run each of the $k$ sub-processes. In Figure \ref{fig:acc_percore}, we plot the mean and standard deviation of the test accuracy as a function of the per-core data cost over the entire learning process, rather than just the final step. That is, the horizontal axis here denotes the number of data points that each core has processed over time. For the case of \texttt{bench}, since there is just one process, it uses all data indexed by $\II_{\text{tr}} \cup \II_{\text{val}}$ at each epoch, whereas in the case of \texttt{DC-SGD}, each core only uses $|\II_{\text{tr}}|/k$ points. For each of the two methods being compared, we are comparing the \emph{strongest} settings, i.e., the results shown here use the step size setting that led to the highest average test accuracy, chosen from the four settings just discussed. For reference, we have also plotted the mean training accuracy. An immediate take-away is that \texttt{DC-SGD} can achieve highly competitive performance with an order of magnitude less time, without paying a price in terms of variance, and while maintaining a superior generalization error, i.e., a smaller gap between the training and testing accuracies. In addition, while these are the best settings for each method, recalling the sensitivities to setting step sizes too large (shown previously in Figure \ref{fig:losses_lr}), our proposed method looks to be more robust.

\begin{figure}[t]
\centering
\includegraphics[width=0.4\textwidth]{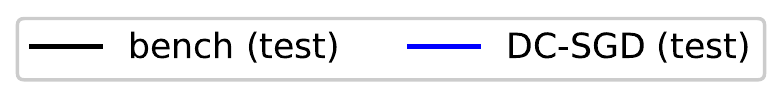}\\
\includegraphics[width=0.45\textwidth]{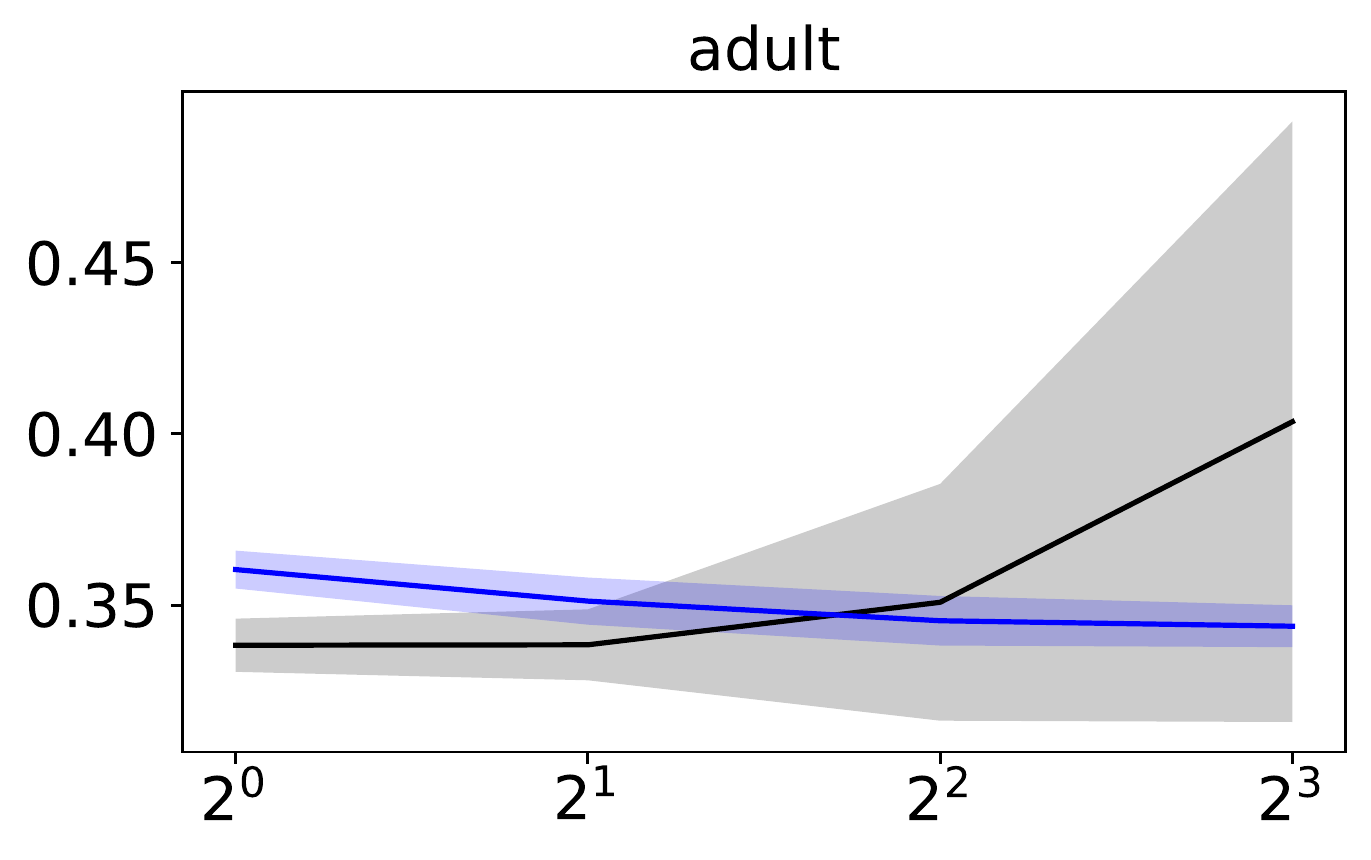}\,\includegraphics[width=0.45\textwidth]{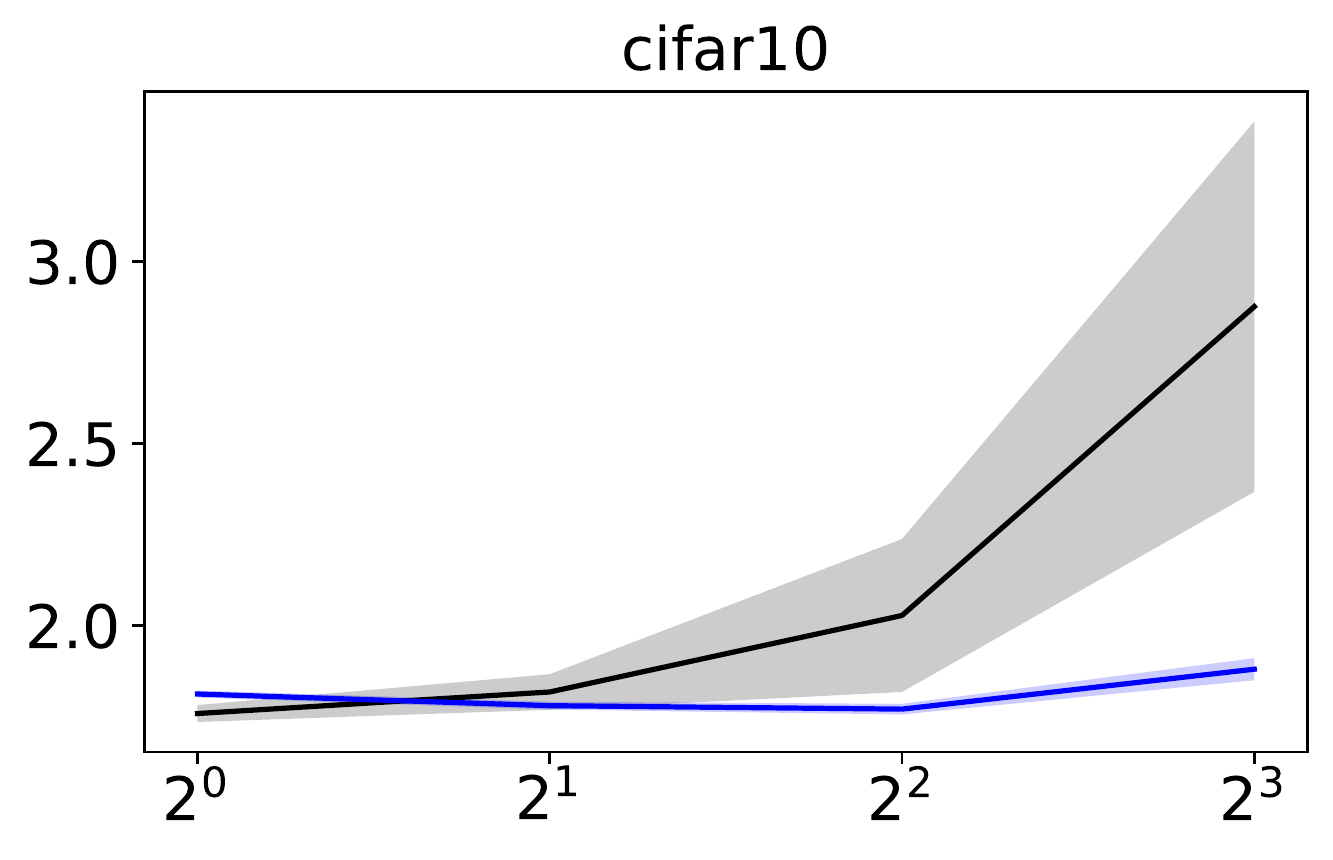}\\
\includegraphics[width=0.45\textwidth]{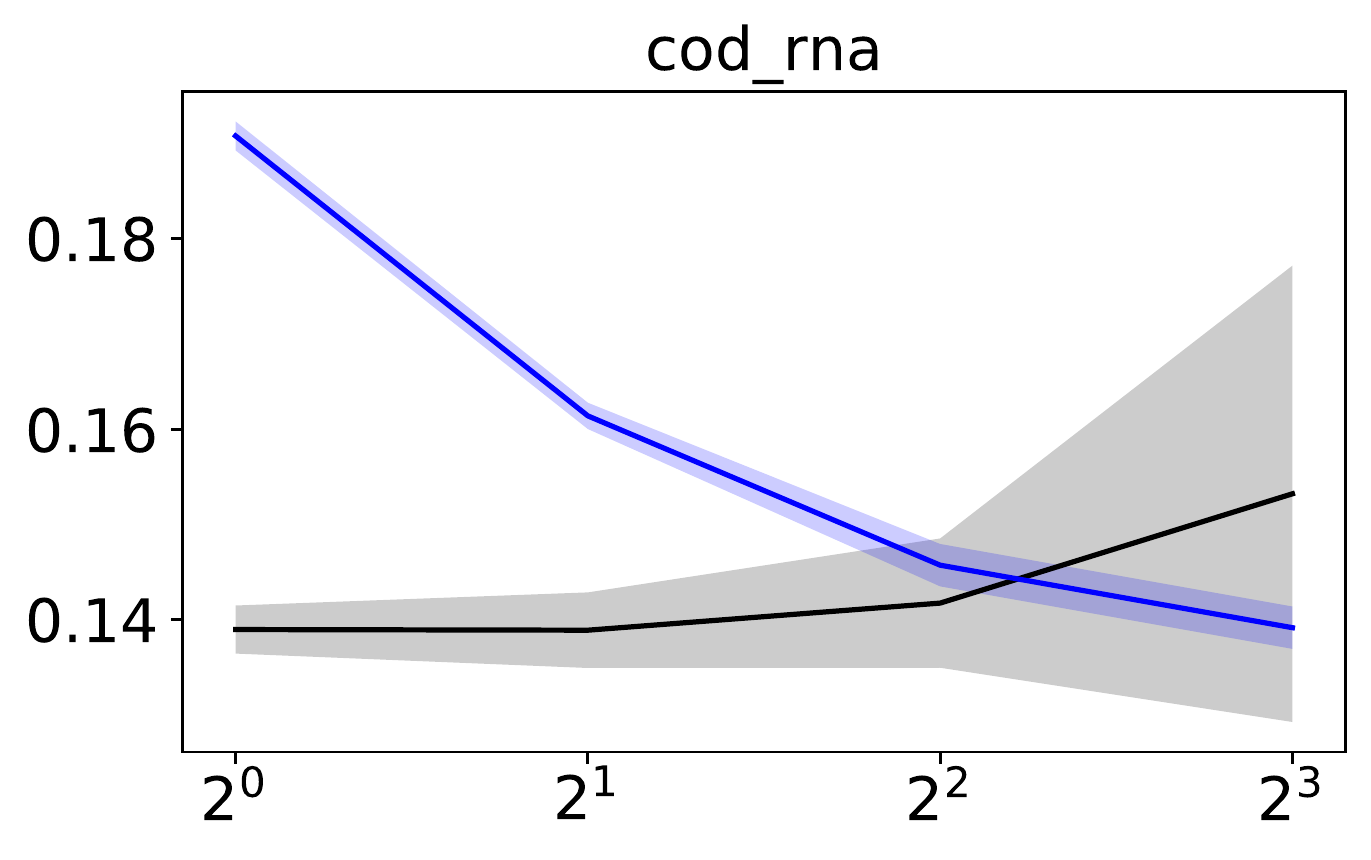}\,\includegraphics[width=0.45\textwidth]{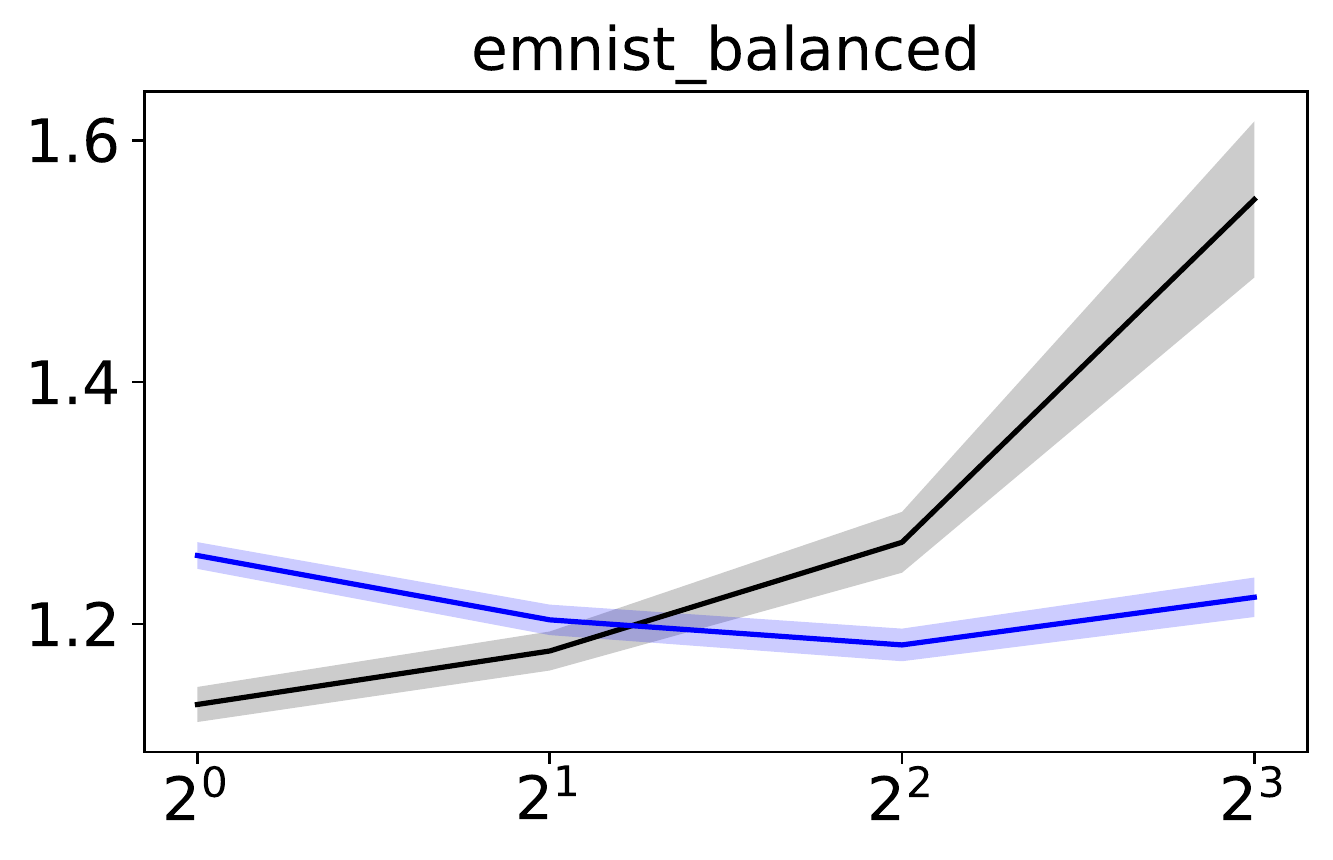}\\
\includegraphics[width=0.45\textwidth]{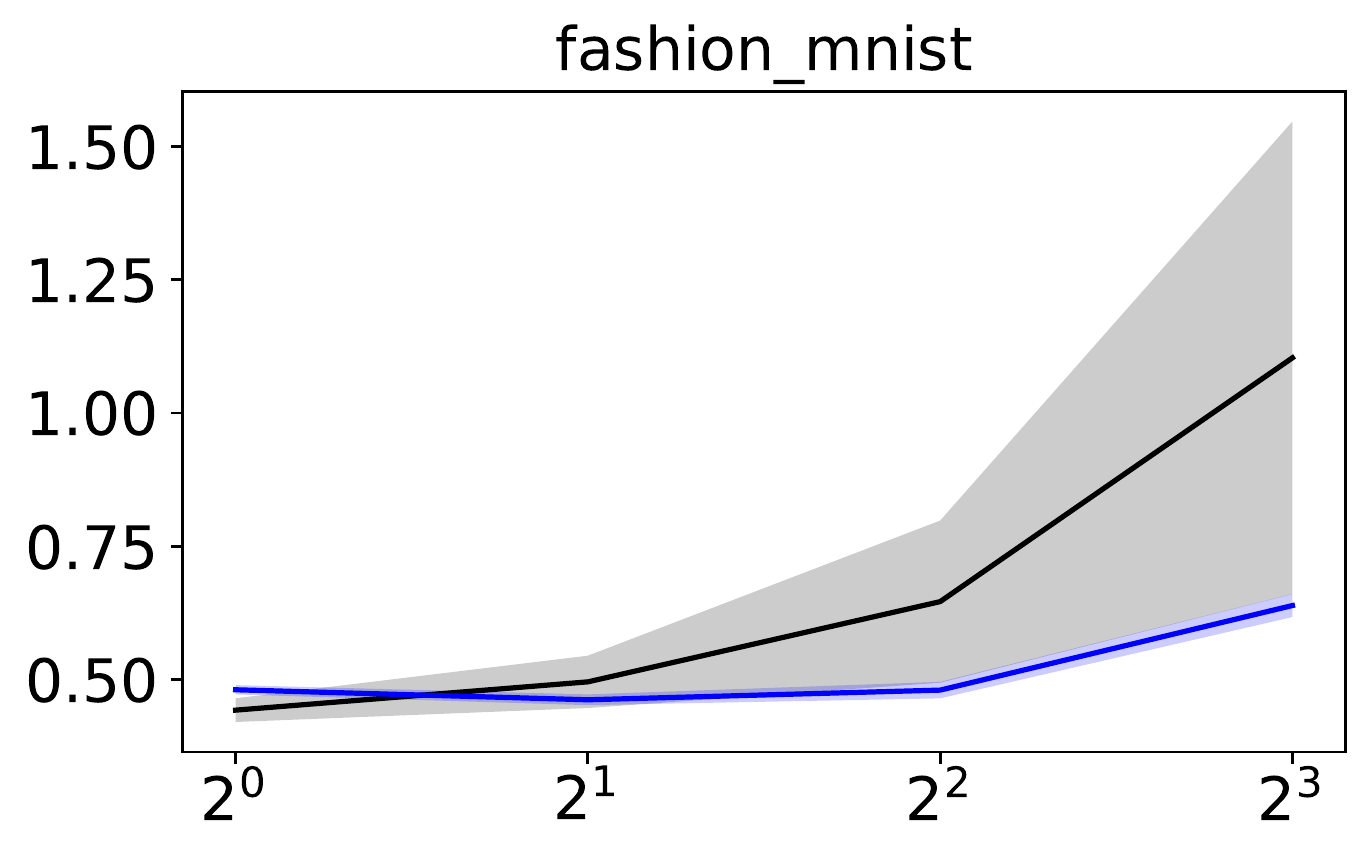}\,\includegraphics[width=0.45\textwidth]{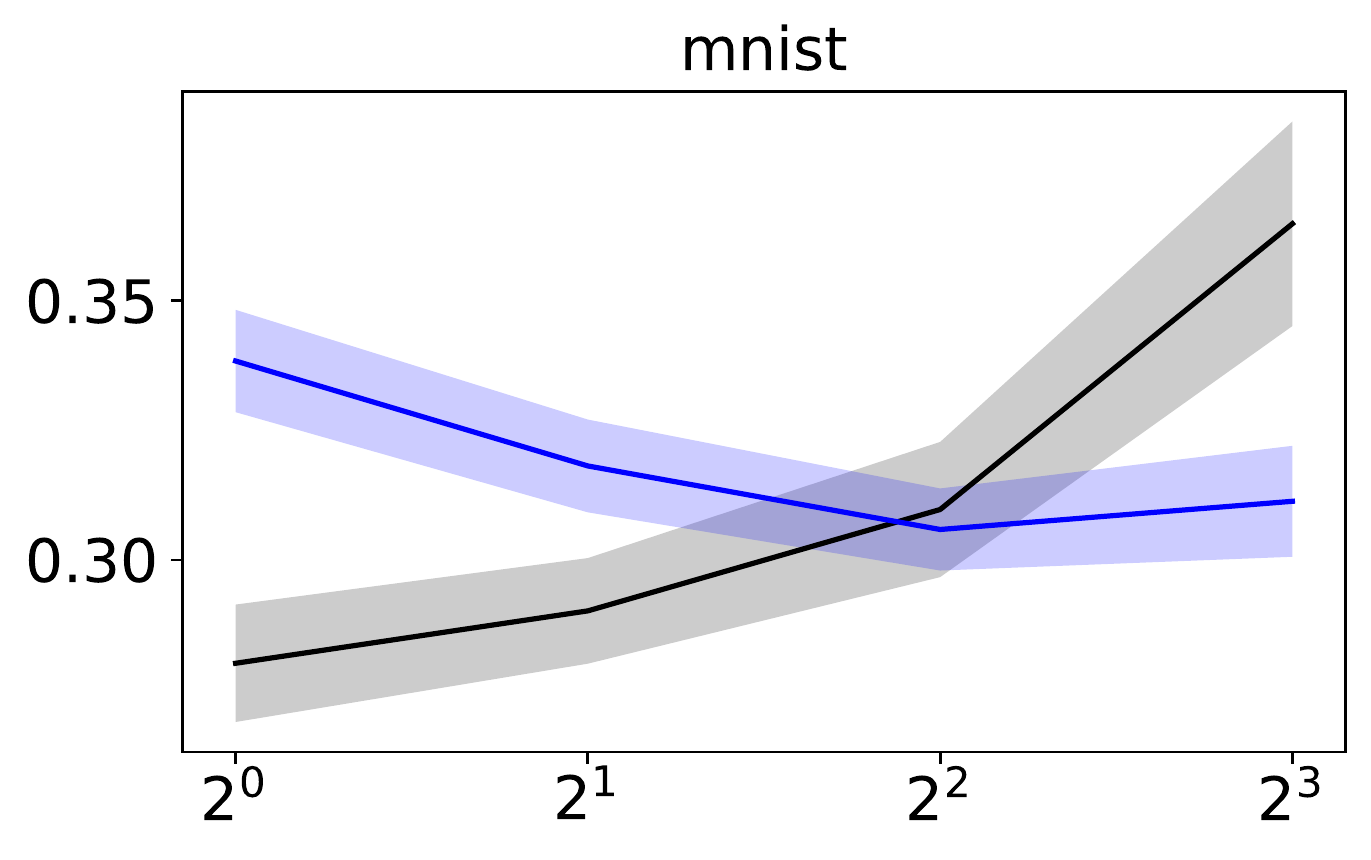}\\
\caption{For each dataset, we plot the mean (over all trials) of the average test loss achieved at the final epoch, as a function of the step size coefficient. The shaded area is the mean $\pm$ standard deviation.}
\label{fig:losses_lr}
\end{figure}

\begin{figure}[t]
\centering
\includegraphics[width=0.4\textwidth]{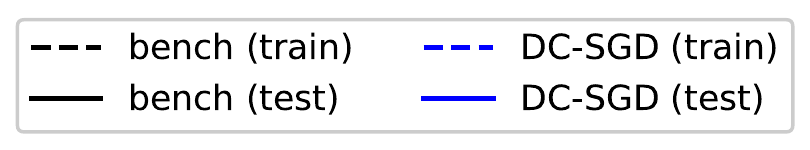}\\
\includegraphics[width=0.45\textwidth]{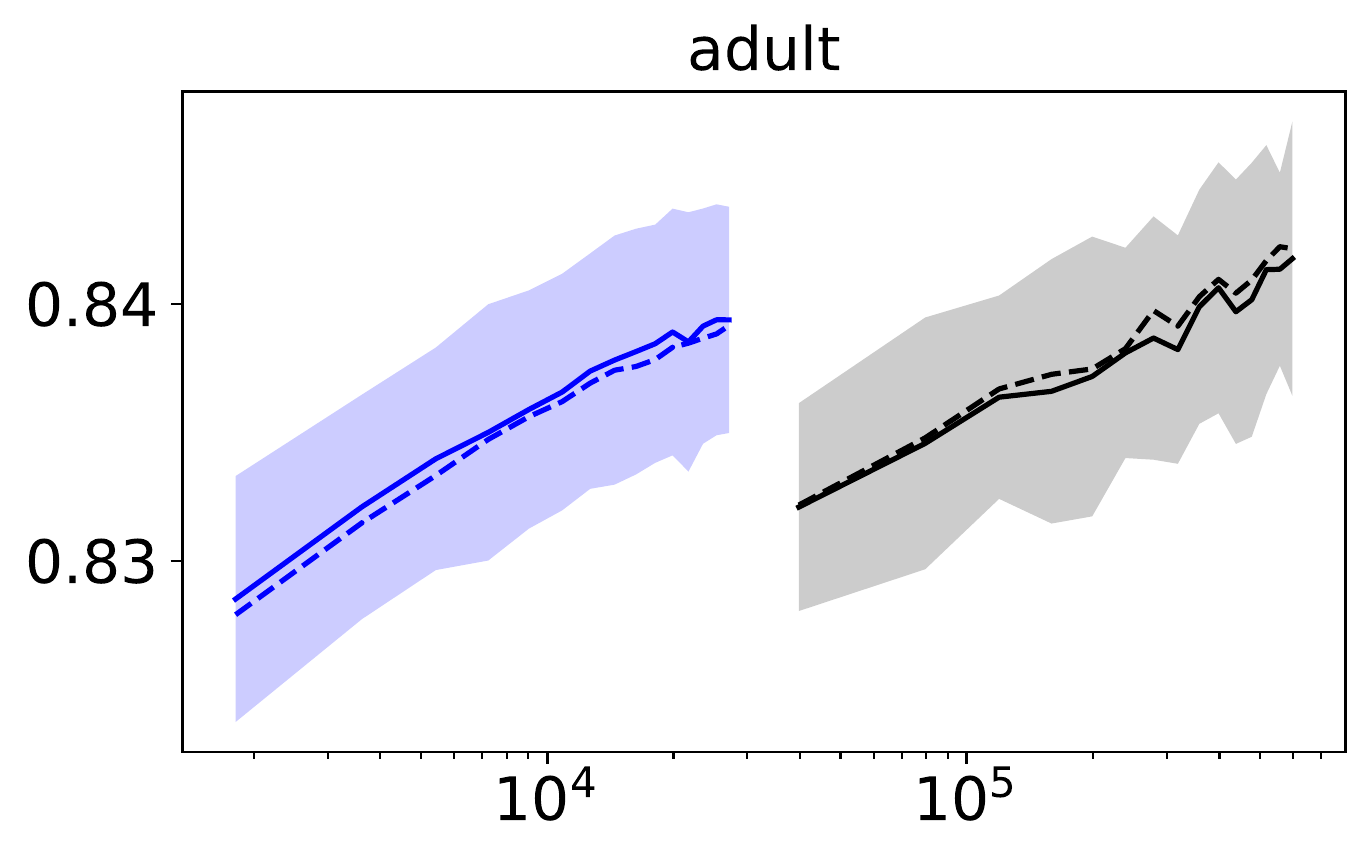}\,\includegraphics[width=0.45\textwidth]{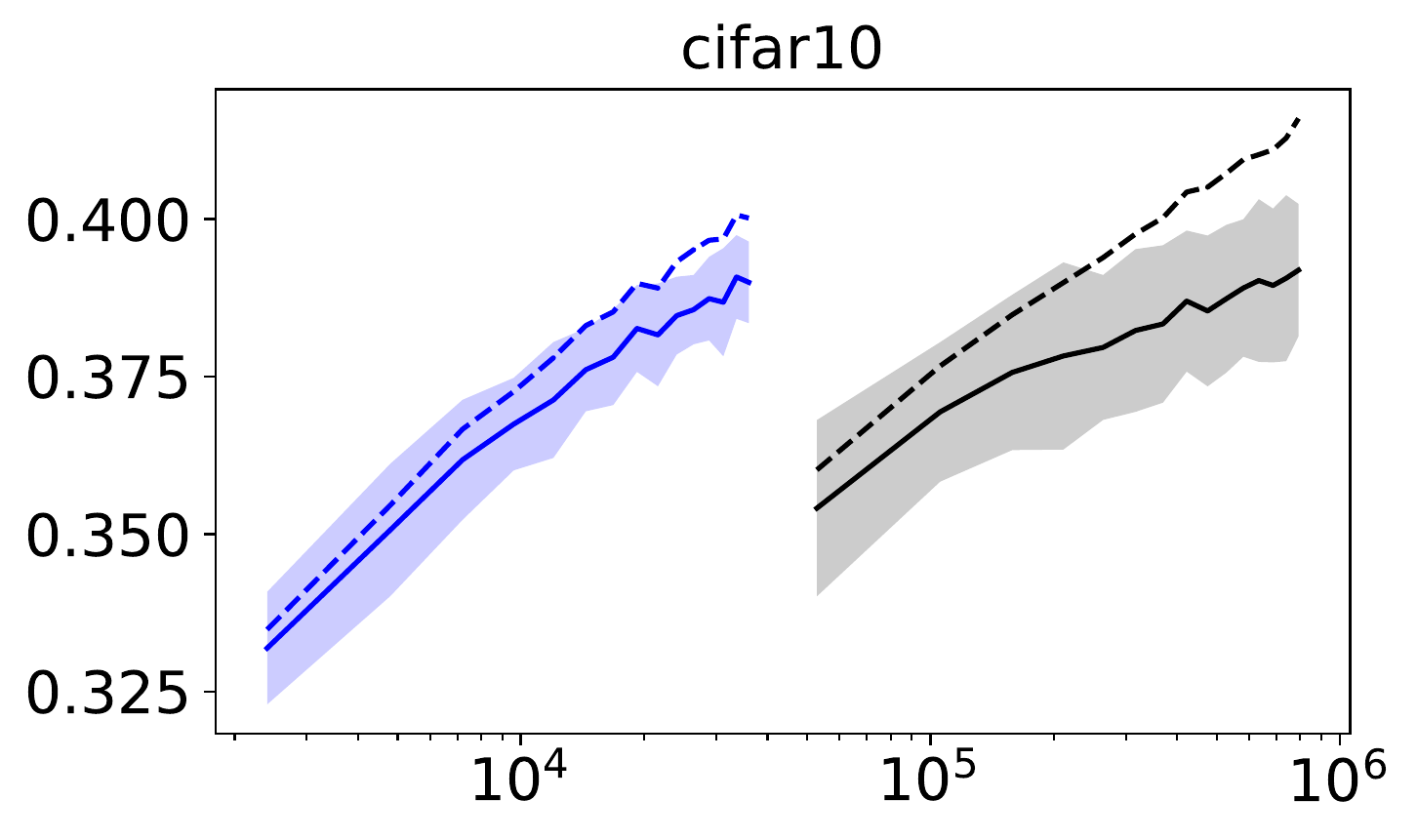}\\
\includegraphics[width=0.45\textwidth]{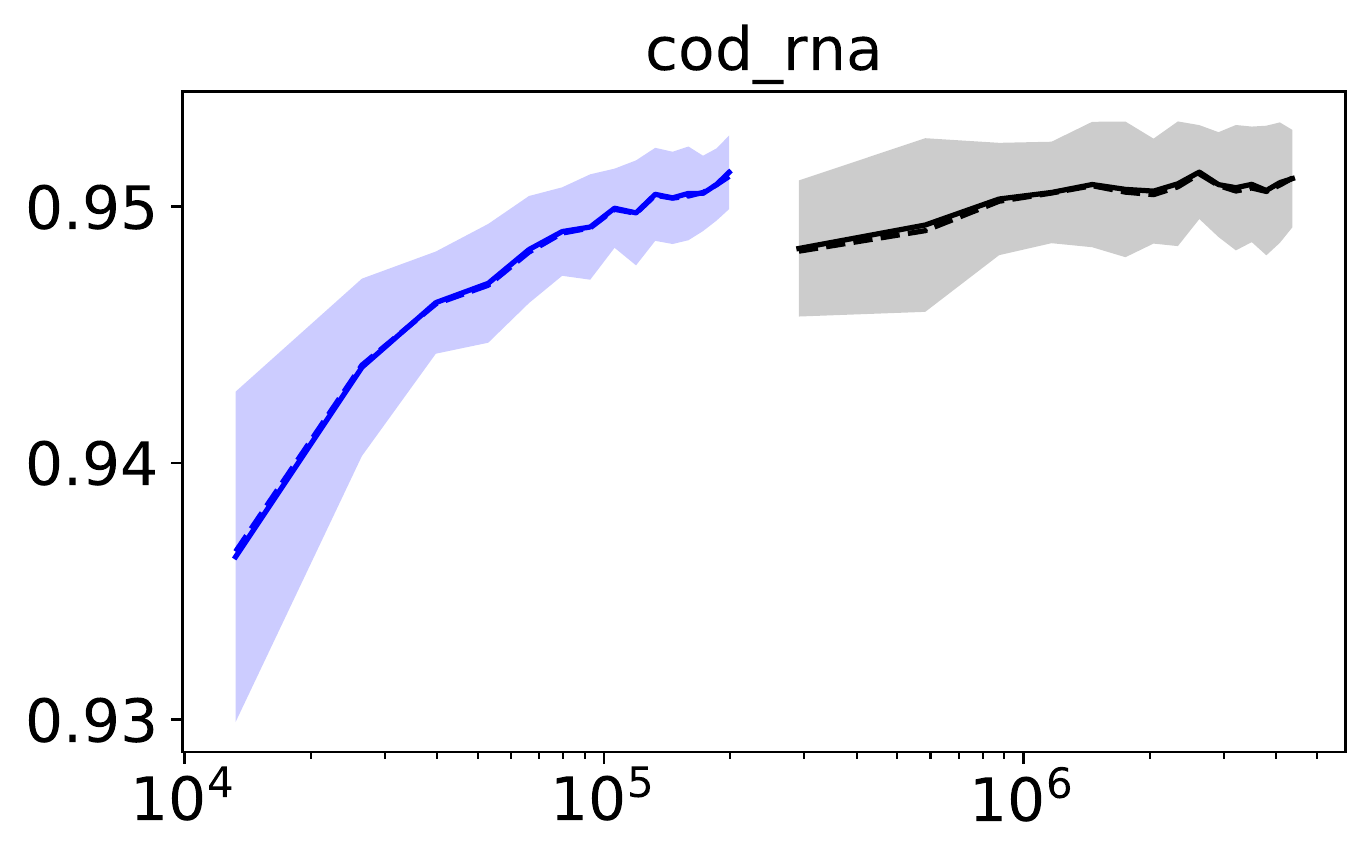}\,\includegraphics[width=0.45\textwidth]{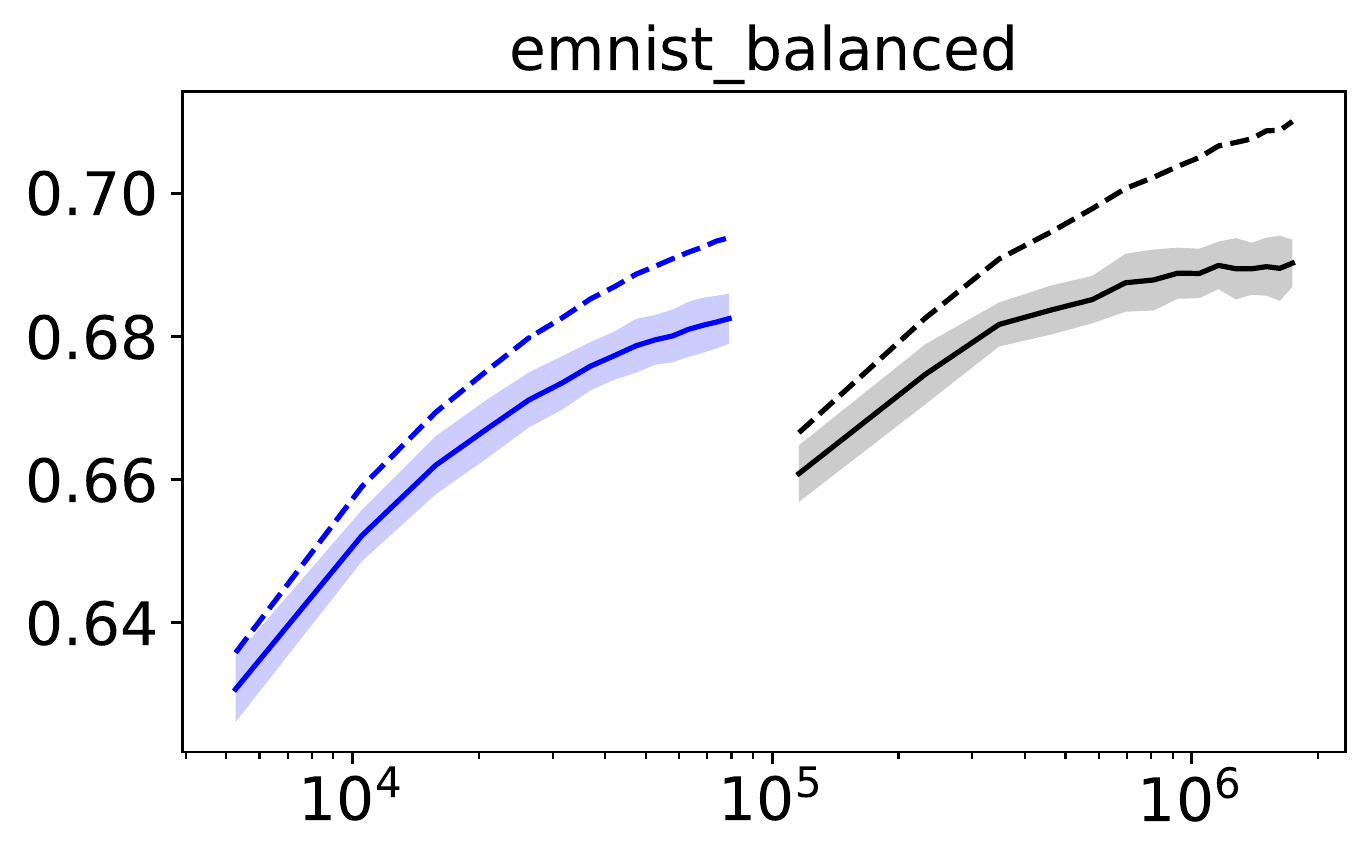}\\
\includegraphics[width=0.45\textwidth]{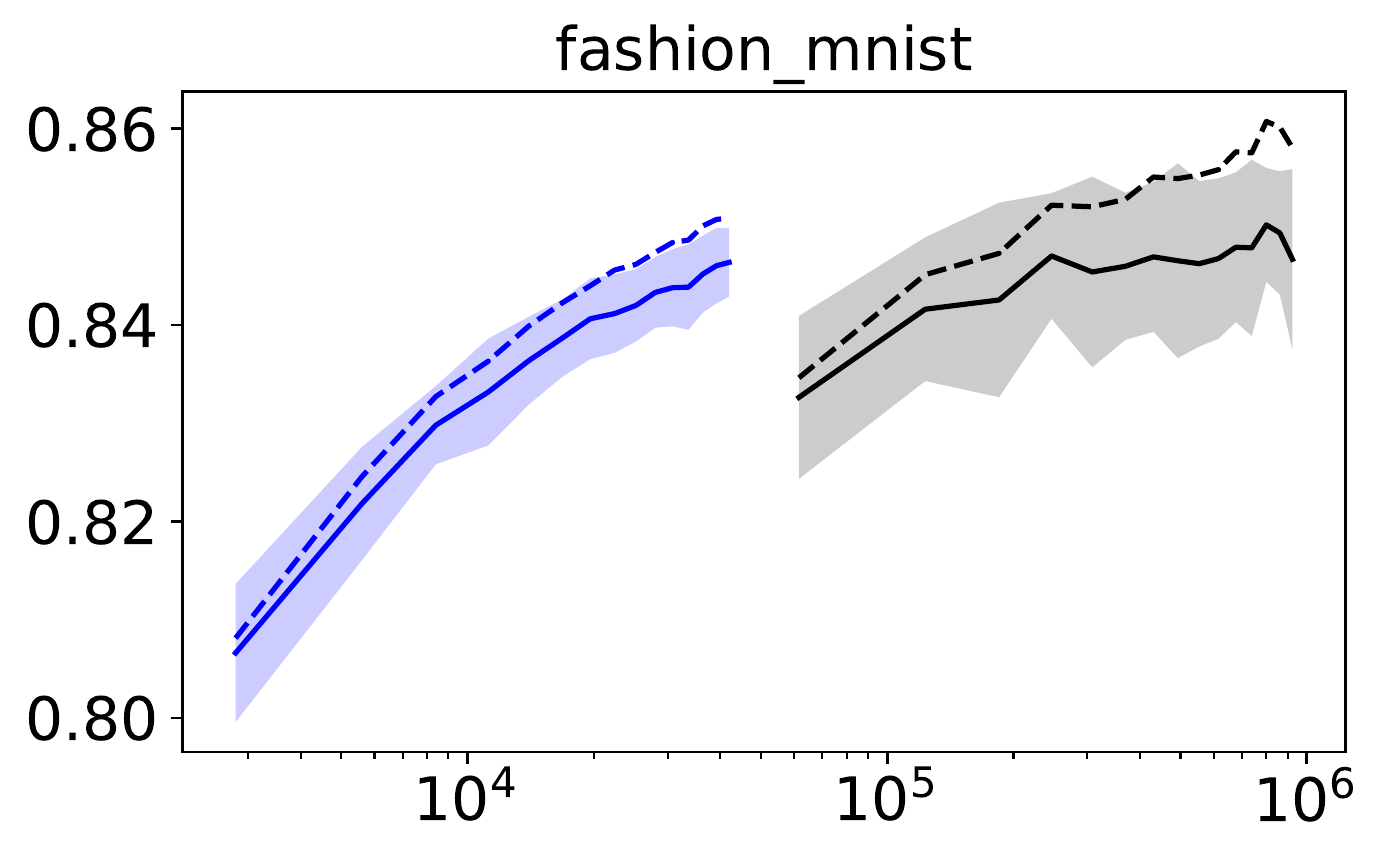}\,\includegraphics[width=0.45\textwidth]{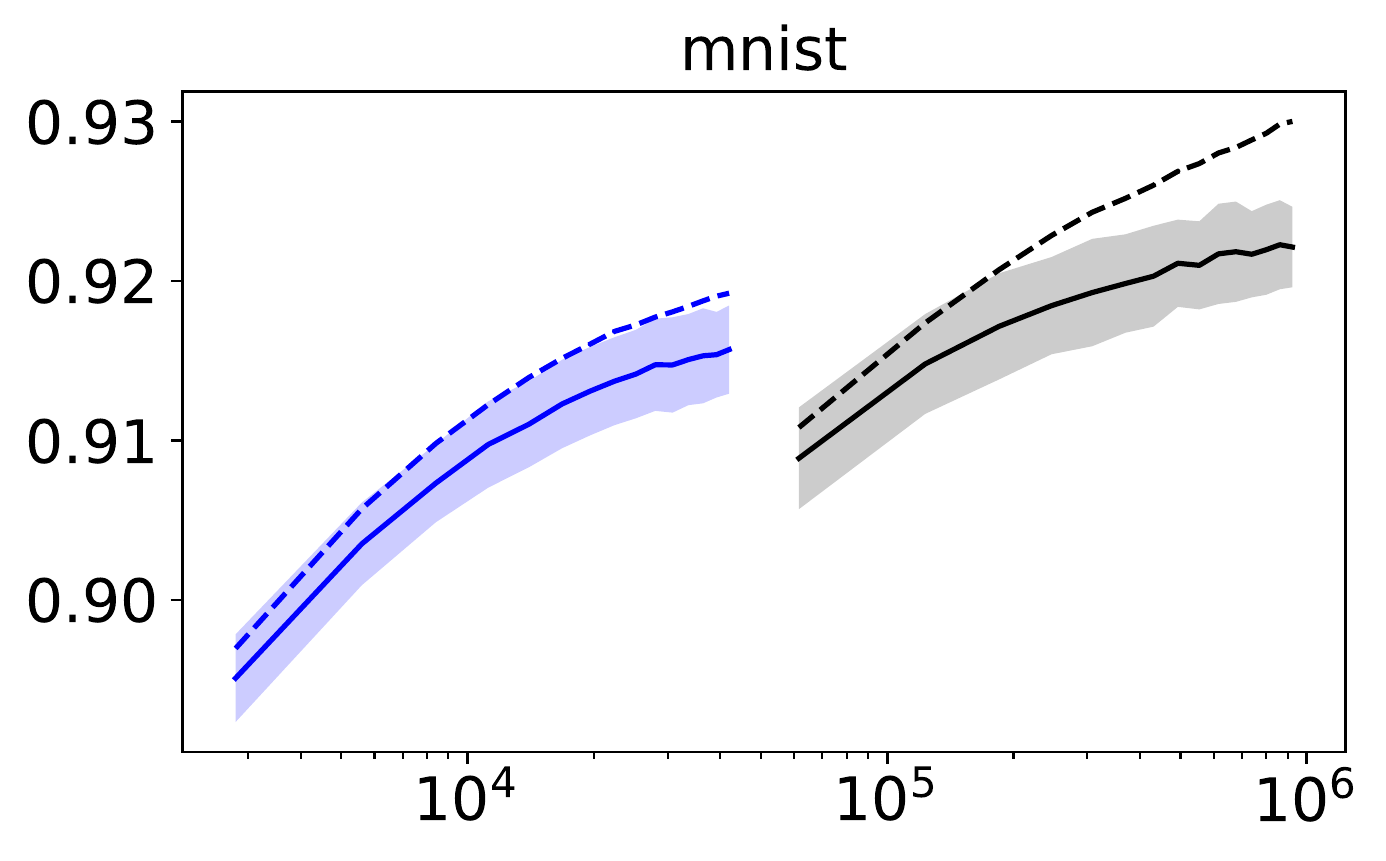}\\
\caption{For each dataset, we plot the mean of the training and test accuracy achieved at each epoch, as a function of the number of data points processed per sub-process. The shaded area is the mean $\pm$ standard deviation for the \emph{test} accuracy.}
\label{fig:acc_percore}
\end{figure}

\paragraph{(E2) How does introducing non-linearity into the model impact the performance of Algorithm \ref{algo:DandC_SGD} compared with Algorithm \ref{algo:DandC_valid}?}

For this point of inquiry, the basic setup is essentially the same as described for \textbf{(E1)}, except that we test non-linear models, which spoils the convexity of the underlying optimization task, and we also run Algorithm \ref{algo:DandC_valid}, denoted \texttt{RV-SGDAve}, in addition to \texttt{bench} and \texttt{DC-SGD}. As done in previous sub-sections, in implementing \texttt{RV-SGDAve} we use a Catoni-type M-estimator for $\valid$ (Algorithm \ref{algo:valid_cat}), and we pass it the validation data indexed by $\II_{\text{val}}$, recalling that \texttt{bench} receives both the training and validation sets together in a single batch, for fairness. Here we run $25$ independent trials for each model and each dataset. As described in the experimental setup exposition, the non-linear models we use are feed-forward neural networks, with the number of hidden layers ranging over $0,1,2,3$, respectively denoted \texttt{logistic}, \texttt{FF\_L1}, \texttt{FF\_L2}, and \texttt{FF\_L3}. In Figure \ref{fig:acc_bar}, we plot the mean (over all trials) of the test accuracy achieved by each method, for each model setting, under their respective strongest step size setting, just as with the previous figure. The immediate take-away is essentially exactly what we would expect; even with an extremely simple non-linearity introduced (moving from \texttt{logistic} $\to$ \texttt{FF\_L1}), the distance-based method of \texttt{DC-SGD} (Algorithm \ref{algo:DandC_SGD}) is critically impacted, falling almost immediately down to no better than chance level, whereas \texttt{RV-SGDAve} (Algorithm \ref{algo:DandC_valid}) is not severely impacted at all. While in an absolute sense the values achieved by \texttt{RV-SGDAve} here are slightly below the costly \texttt{bench}, recall that the per-core costs of \texttt{RV-SGDAve} are identical to \texttt{DV-SGD}, and thus the performance achieved here is done at a small fraction of the cost of \texttt{bench}.

\begin{figure}[t]
\centering
\includegraphics[width=0.4\textwidth]{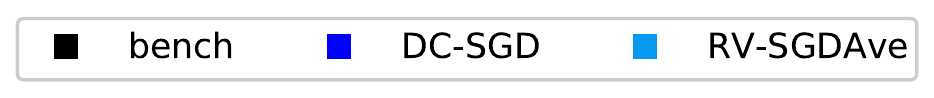}\\
\includegraphics[width=0.45\textwidth]{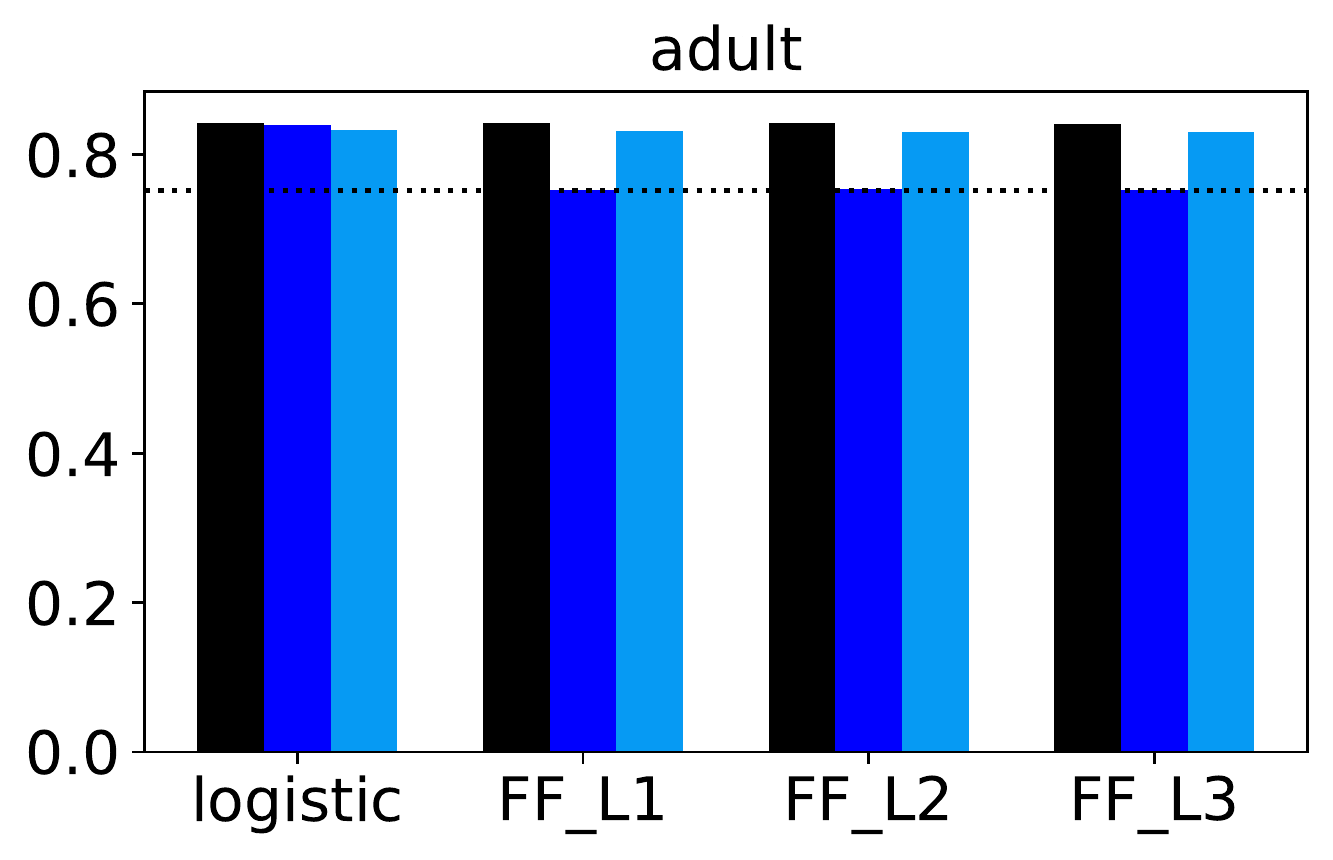}\,\includegraphics[width=0.45\textwidth]{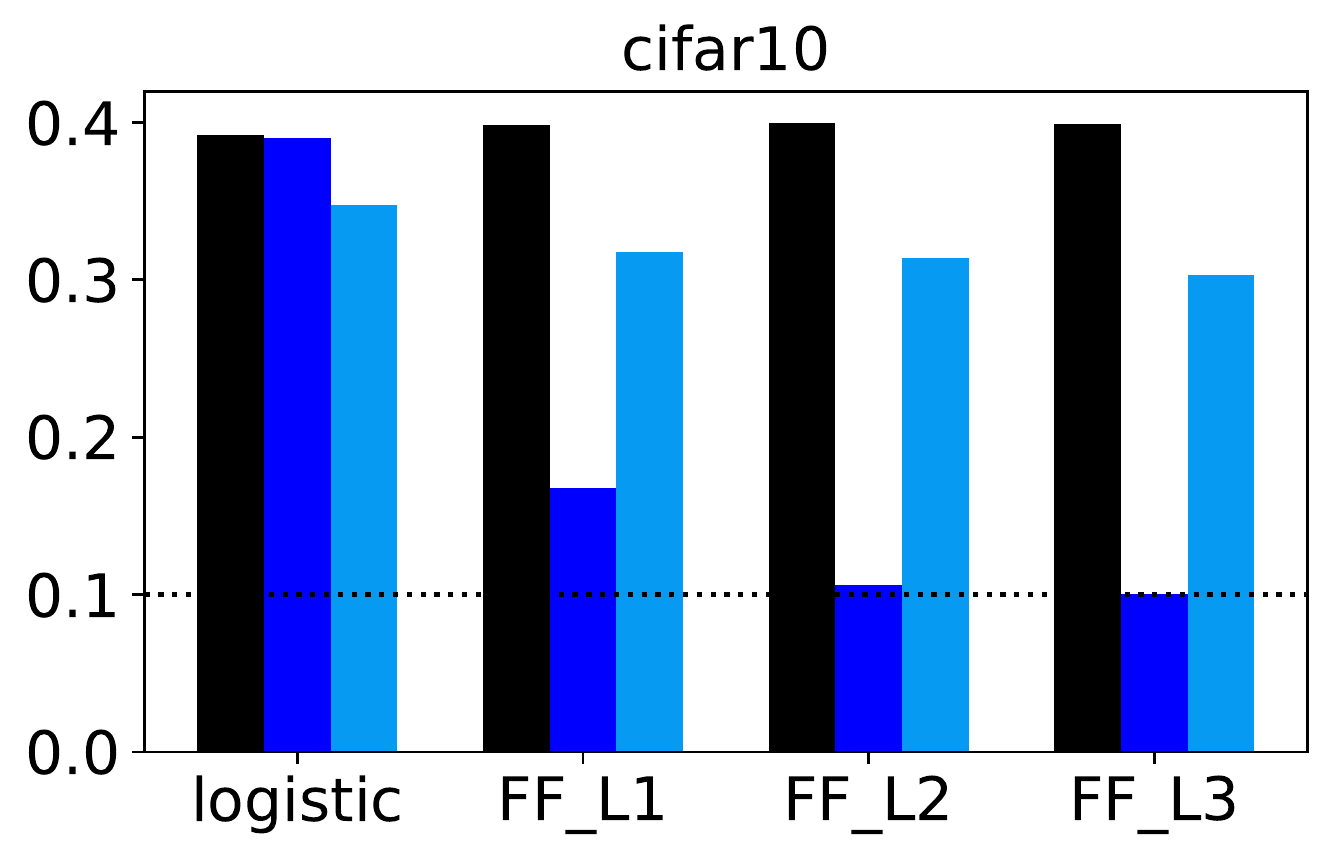}\\
\includegraphics[width=0.45\textwidth]{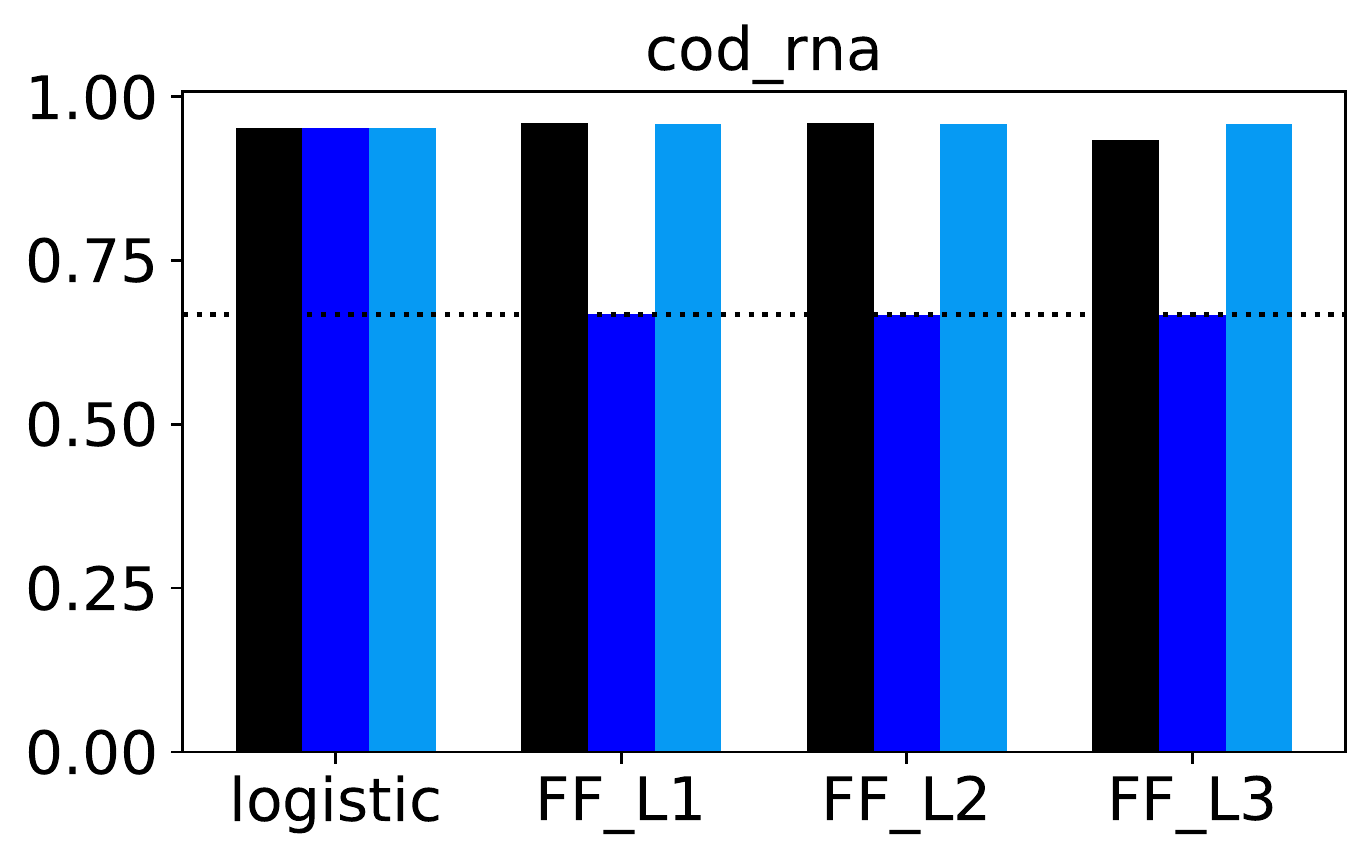}\,\includegraphics[width=0.45\textwidth]{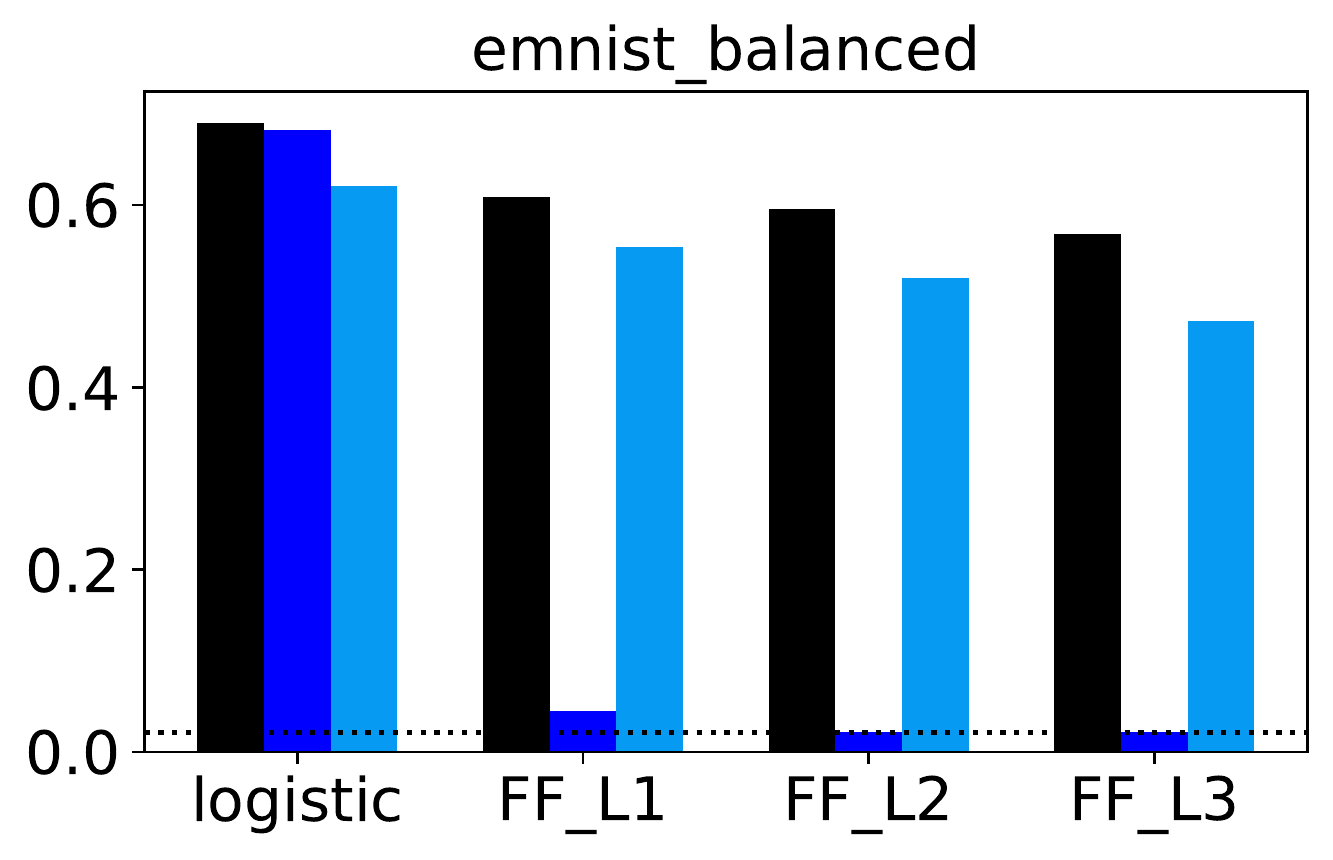}\\
\includegraphics[width=0.45\textwidth]{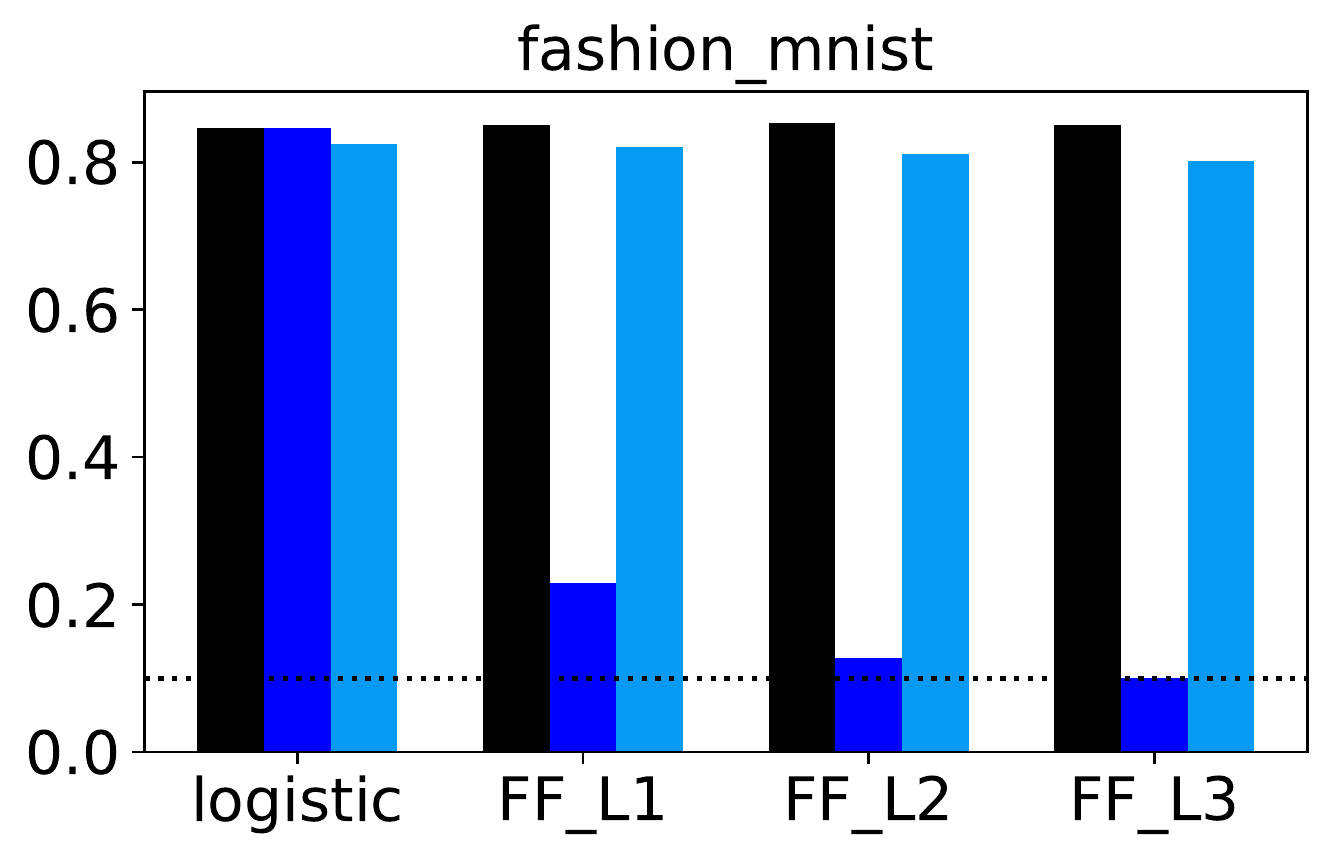}\,\includegraphics[width=0.45\textwidth]{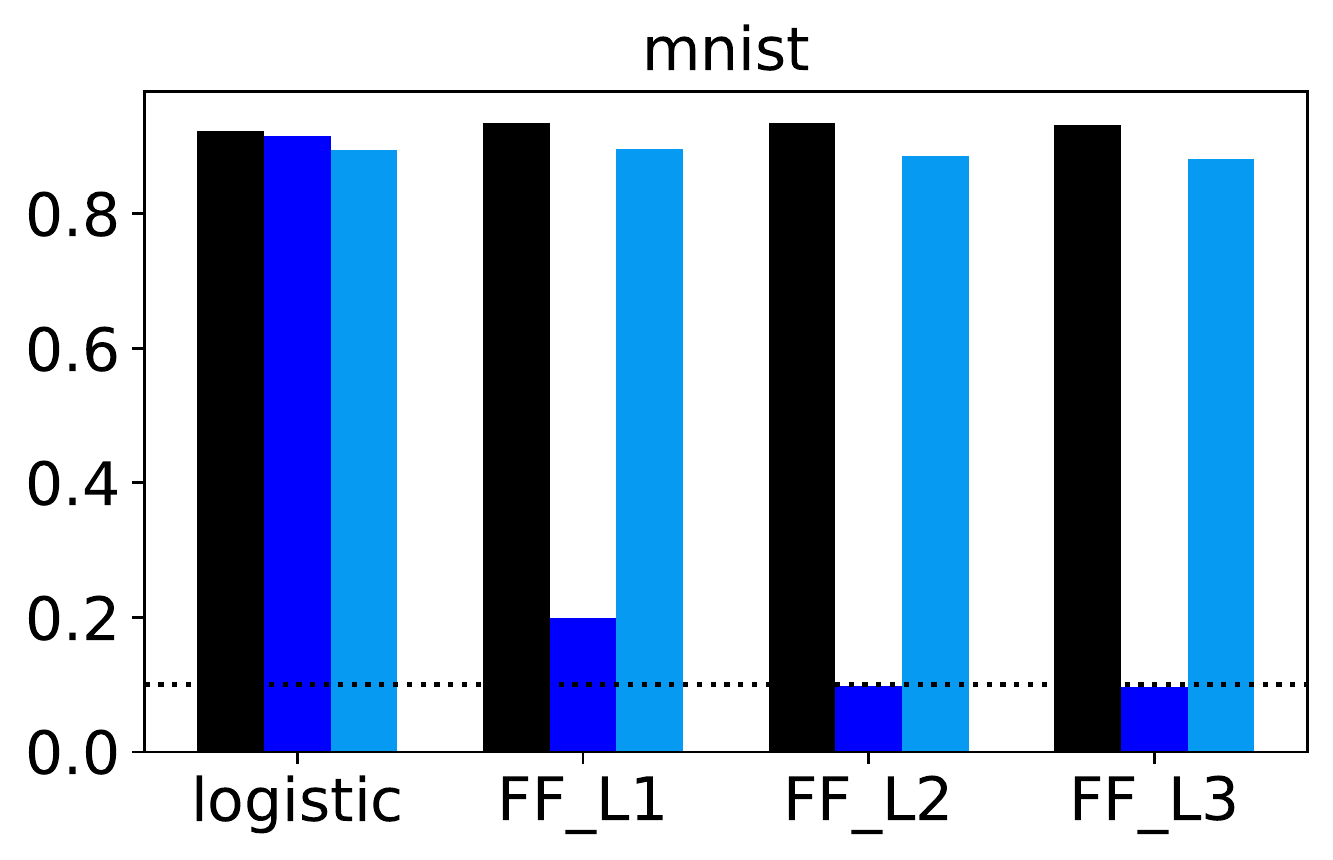}\\
\caption{For each dataset, we plot the mean of the test accuracy achieved after the final epoch for each method and each model. The dotted line represents ``chance level,'' and is set to the fraction of the dataset that belongs to the majority class.}
\label{fig:acc_bar}
\end{figure}

\clearpage

\section{Future directions}

As discussed in section \ref{sec:context_contribs}, this paper presents evidence, both formal and empirical, that general-purpose learning algorithms following the archetype drawn out in Algorithms \ref{algo:DandC_SGD} and \ref{algo:DandC_valid} should be able to improve significantly on the cost-performance tradeoff achieved by current state of the art robust gradient descent methods under potentially heavy-tailed losses and gradients. Furthermore, the latter algorithm allows us to achieve competitive efficiency and sability, without restrictive assumptions of strong convexity. Clearly, this work represents only a first step in this direction. Extending the theory to other algorithms besides vanilla SGD is a straightforward exercise; less straightforward is when we start considering a stage-wise strategy, when partition size $k$ can change from stage to stage. Extending results to allow for multiple passes is also of natural interest; the work of \citet{lin2017a} does this for the squared error, but without heavy tails. We only covered the $\ell_{2}$ norm case here, but extensions to cover other geometries (via stochastic mirror descent for example) are also of interest. In particular, if one considers a stochastic mirror descent type of generalization to the proposed algorithm, it would be interesting to compare the robust validation approach taken here with say the truncation-based approach studied recently by \citet{juditsky2019a}, and how the performance of the respective methods changes under different constraints on prior knowledge of the underlying data-generating distribution.

\appendix

\section{Technical supplement}

\subsection{Helper facts}

\begin{lem}\label{lem:lipschitz_dual_bound_characterization}
Let $f: \VV \to \RR$ and $\VV$ be convex. Then, $f$ is $\parasm$-Lipschitz with respect to norm $\|\cdot\|$ if and only if $\|u\|_{\star} \leq \parasm$ for all $u \in \partial f(v)$ and $v \in \VV$, where the dual norm $\|\cdot\|_{\star}$ is defined by $\|u\|_{\star} \defeq \sup \{\langle u,v \rangle: v \in \VV, \|v\| \leq 1\}$.
\end{lem}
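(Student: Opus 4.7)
The plan is to prove both directions by combining the defining subgradient inequality with the Hölder-type bound $\langle u, h \rangle \leq \|u\|_\star \|h\|$ that follows immediately from the definition of the dual norm. The argument is standard convex analysis, but it is worth spelling out because the geometry of $\VV$ intervenes in the converse direction.

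For the sufficiency direction ($\|u\|_\star \leq \parasm$ on all subgradients $\Rightarrow$ $f$ is $\parasm$-Lipschitz), I would pick arbitrary $v_1, v_2 \in \VV$ along with $u_1 \in \partial f(v_1)$ and $u_2 \in \partial f(v_2)$, whose existence on the relative interior of $\VV$ follows from the standard fact that a finite convex function has a nonempty subdifferential there. The subgradient inequality at $v_1$ reads $f(v_2) \geq f(v_1) + \langle u_1, v_2 - v_1 \rangle$, hence
\begin{equation*}
f(v_1) - f(v_2) \leq \langle u_1, v_1 - v_2 \rangle \leq \|u_1\|_\star \|v_1 - v_2\| \leq \parasm \|v_1 - v_2\|.
\end{equation*}
Swapping the roles of $v_1$ and $v_2$ and applying the analogous inequality at $v_2$ with $u_2$ yields the matching lower bound, giving $|f(v_1) - f(v_2)| \leq \parasm \|v_1 - v_2\|$.

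For the necessity direction ($f$ Lipschitz $\Rightarrow$ subgradients bounded in dual norm), I would fix $v \in \VV$ and $u \in \partial f(v)$, then take any admissible test direction $h$ with $\|h\| \leq 1$. Choose $t > 0$ small enough that $w \defeq v + th \in \VV$, which is possible whenever $v$ lies in the relative interior by convexity. The subgradient inequality gives
\begin{equation*}
t \langle u, h \rangle = \langle u, w - v \rangle \leq f(w) - f(v) \leq \parasm \|w - v\| = \parasm t \|h\| \leq \parasm t.
\end{equation*}
Dividing by $t$ and taking the supremum over admissible $h$ yields $\|u\|_\star \leq \parasm$.

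The hard part will be the necessity direction, specifically ensuring that $v$ can be perturbed in every relevant test direction while remaining in $\VV$. I would handle this by first carrying out the argument for $v$ in the relative interior of $\VV$ and then extending to boundary points by a standard lower-semicontinuity/closure argument on the subdifferential. Since the paper only invokes this lemma to convert the Lipschitz assumption \ref{asmp:lip_loss} into bounded-gradient information (which is used at points where $\partial f(v)$ is meaningful), restricting to the relative interior suffices in every downstream application.
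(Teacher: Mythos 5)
Your argument is the standard one, and it is essentially what lies behind the paper's proof, which consists solely of a citation to Shalev-Shwartz (Lem.~2.6 of the online-learning survey); both directions as you write them (subgradient inequality plus the H\"{o}lder-type bound $\langle u,h\rangle \leq \|u\|_{\star}\|h\|$ in one direction, and perturbing along a near-maximizing test direction in the other) are exactly the textbook proof. Two small points of care: first, with the dual norm as defined in the statement (supremum restricted to $v \in \VV$ with $\|v\|\leq 1$ rather than over the whole space), the inequality $\langle u, v_{1}-v_{2}\rangle \leq \|u\|_{\star}\|v_{1}-v_{2}\|$ in your sufficiency step needs $(v_{1}-v_{2})/\|v_{1}-v_{2}\|$ to be an admissible test vector; this is automatic for the usual dual norm and is best regarded as an imprecision of the statement rather than of your proof, but it is worth noting that you are implicitly using the unrestricted dual norm.

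The one genuine soft spot is your plan to extend the necessity direction to boundary points of $\VV$ ``by a standard lower-semicontinuity/closure argument on the subdifferential.'' That cannot work as stated: if the subdifferential at a boundary point is taken relative to $\VV$ (i.e., for $f$ extended by $+\infty$ outside $\VV$), it contains the normal cone to $\VV$ at that point, hence elements of arbitrarily large dual norm, and the ``only if'' direction is simply false there --- consider $f\equiv 0$ on $\VV=[0,1]$, which is $0$-Lipschitz yet has $\partial f(1)=[0,\infty)$. The claim is not recovered by taking limits of interior subgradients, because the boundary subdifferential is strictly larger than any such limit set. The honest fixes are either to assume $\VV$ open (or state the lemma on the relative interior, as you do in your fallback), or to observe that in this paper the loss is assumed convex and finite on all of $\RR^{d}$, so the relevant subdifferential at points of $\WW$ carries no normal-cone component; even then, Lipschitzness assumed only on $\WW$ controls subgradients only at interior points of $\WW$. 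Since, as you note, the lemma is invoked downstream only to bound $\|\nabla\loss(\what_{t};Z_{t})\|_{2}$ by $\parasm_{0}$, your relative-interior version suffices, but you should state the lemma with that restriction rather than promise a boundary extension that does not exist.
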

\begin{proof}
See \citet[Lem.~2.6]{shalev2012a} for a proof.
\end{proof}

\paragraph{Strong convexity}

Let function $f: \VV \to \RR$ be continuous on closed convex set $C \subseteq \dom(f)$. We say that $f$ is \term{$\parasc$-strongly convex} on $C$ with respect to norm $\|\cdot\|$ if there exists $0 < \parasc < \infty$ such that for all $0 \leq \alpha \leq 1$ and $u, v \in C$ we have
\begin{align}\label{eqn:defn_sc_general}
f(\alpha u + (1-\alpha)v) \leq \alpha f(u) + (1-\alpha)f(v) - \frac{\parasc}{2} \alpha(1-\alpha)\|u-v\|^{2}.
\end{align}
The definition given in (\ref{eqn:defn_sc_general}) is intuitively appealing, as the third term on the right-hand side specifies how large the distance is from the graph of $f$ to the chord between $(u,f(u))$ and $(v,f(v))$. All else equal, when taking $\alpha$ over $[0,1]$, a larger $\parasc$ value means that the graph dips farther down below the chord. Furthermore, this definition is technically appealing in that it does not require $f$ to be differentiable. When $f$ is differentiable, then the more familiar characterizing property is that
\begin{align}\label{eqn:defn_sc_differentiable}
f(u)-f(v) \geq \langle \nabla f(v),u-v\rangle + \frac{\parasc}{2}\|u-v\|^{2}.
\end{align}
for all $u,v \in \VV$. Among other useful properties, (\ref{eqn:defn_sc_general}) implies that $f$ has a unique minimum $u^{\ast}$, and that for any $u \in \VV$,
\begin{align}\label{eqn:sc_prop_0}
f(u)-f(u^{\ast}) \geq \frac{\parasc}{2}\|u-u^{\ast}\|^{2}.
\end{align}
For a proof, see \citet[Lem.~13.5]{shalev2014a}. When $f$ is differentiable, then the following property characterizes $\parasc$-strong convexity \citep[Thm.~2.1.9]{nesterov2004ConvOpt}:
\begin{align}\label{eqn:sc_prop_1}
\langle u-v, \nabla f(u) - \nabla f(v) \rangle \geq \parasc\|u-v\|^{2}, \qquad \forall \, u,v \in \VV.
\end{align}

However, we shall be interested in settings where the objective may be $\parasc$-strongly convex, but not differentiable, and more general results of this nature are useful. For example, the standard defining property (\ref{eqn:defn_sc_differentiable}) holds in an analogous fashion for all sub-gradients, as the following lemma shows.
\begin{lem}\label{lem:sc_subgrad_key_ineq}
Let function $f$ be $\parasc$-strongly convex in the sense that it satisfies (\ref{eqn:defn_sc_general}). Then, for any $u,v \in C$, we have that
\begin{align*}
f(u)-f(v) \geq \langle g_{v}, u-v \rangle + \frac{\parasc}{2}\|u-v\|^{2}, \qquad \forall \, g_{v} \in \partial f(v).
\end{align*}
\end{lem}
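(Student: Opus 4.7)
The plan is a direct limiting argument starting from the defining inequality (\ref{eqn:defn_sc_general}) of $\parasc$-strong convexity, combined with the standard subgradient inequality for convex functions. Since any $\parasc$-strongly convex $f$ is in particular convex, the assumption $g_{v} \in \partial f(v)$ guarantees that the subgradient inequality
\[
f(y) \geq f(v) + \langle g_{v}, y-v \rangle
\]
holds for every $y \in C$. I will instantiate this at the convex combination $y = v + \alpha(u-v)$ and combine it with (\ref{eqn:defn_sc_general}).

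First, I would rewrite (\ref{eqn:defn_sc_general}) by moving the $(1-\alpha)f(v)$ term to the left-hand side, which gives
\[
\alpha\bigl(f(u)-f(v)\bigr) \geq f\bigl(v+\alpha(u-v)\bigr) - f(v) + \frac{\parasc}{2}\alpha(1-\alpha)\|u-v\|^{2}.
\]
Next, I would apply the subgradient inequality at $y = v+\alpha(u-v)$, which yields
\[
f\bigl(v+\alpha(u-v)\bigr) - f(v) \;\geq\; \alpha\,\langle g_{v},\,u-v\rangle.
\]
Chaining these two bounds and dividing by $\alpha \in (0,1]$ produces
\[
f(u)-f(v) \;\geq\; \langle g_{v},u-v\rangle + \frac{\parasc}{2}(1-\alpha)\|u-v\|^{2}.
\]
Finally, sending $\alpha \downarrow 0$ (both sides are continuous in $\alpha$, and in fact the right-hand side is just linear in $\alpha$) yields the claimed inequality.

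There is no real obstacle here; the only point requiring mild care is the justification that the subgradient inequality may be applied at $v+\alpha(u-v)$, which is legitimate because $C$ is convex and $u,v \in C$, so $v+\alpha(u-v) \in C \subseteq \dom(f)$ for every $\alpha \in [0,1]$. Everything else is bookkeeping: rearranging (\ref{eqn:defn_sc_general}), dividing by the positive scalar $\alpha$, and taking a limit. No differentiability of $f$ is used, which is precisely why this extends the familiar gradient form (\ref{eqn:defn_sc_differentiable}) to arbitrary subgradients.
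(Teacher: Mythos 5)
Your argument is correct: rearranging (\ref{eqn:defn_sc_general}), applying the subgradient inequality at $v+\alpha(u-v)$, dividing by $\alpha$, and letting $\alpha \downarrow 0$ gives exactly the claimed bound. The paper itself only cites Shalev-Shwartz's thesis (Lem.~13) for this fact, and your proof is precisely that standard argument, written out in full.
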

\begin{proof}
See \citet[Lem.~13]{shalev2007PhD} for a concise proof.
\end{proof}
\noindent This fact is useful, since for arbitrary $u$ and $v$ we can immediately obtain bounds
\begin{align*}
\langle g_{v}, u-v \rangle & \leq f(u)-f(v) - \frac{\parasc}{2}\|u-v\|^{2}\\
\langle g_{u}, v-u \rangle & \leq f(v)-f(u) - \frac{\parasc}{2}\|u-v\|^{2}.
\end{align*}
Adding up both sides of these inequalities, the sums of the left-hand terms are smaller than the sums of the right-hand terms, immediately yielding
\begin{align}\label{eqn:sc_prop_1_subgrad}
\langle g_{u}-g_{v}, u-v \rangle \geq \parasc \|u-v\|^{2}, \qquad \forall \, g_{u} \in \partial f(u), g_{v} \in \partial f(v).
\end{align}
Clearly, when $f$ is differentiable, the inequality (\ref{eqn:sc_prop_1_subgrad}) reduces to that of (\ref{eqn:sc_prop_1}).

\paragraph{Smoothness}

Assuming $f:\VV \to \RR$ is differentiable, we say that $f$ is \term{$\parasm$-smooth} in norm $\|\cdot\|$ if its gradients are $\parasm$-Lipschitz continuous in the same norm, that is
\begin{align}\label{eqn:defn_smooth_general}
\|\nabla f(u) - \nabla f(v)\| \leq \parasm \|u-v\|
\end{align}
for all $u,v \in \VV$. \citet[Thm.~2.1.5]{nesterov2004ConvOpt} gives many useful characterizations of $\parasm$-smoothness. In particular, we shall utilize the fact that for all $u,v \in \VV$, we have
\begin{align}\label{eqn:smoothness_property}
0 \leq f(u)-f(v)-\langle \nabla f(v),u-v \rangle \leq \frac{\parasm}{2}\|u-v\|^{2}.
\end{align}

\subsection{Additional proofs for section \ref{sec:theory_sc}}\label{sec:theory_sc_appendix}

\begin{proof}[Proof of Lemma \ref{lem:merge_requirement}]
For $\geomed$, the result for any metric space is due to \citet[Thm.~28]{hsu2016a}, extending the key technical result of \citet[Lem.~2.1]{minsker2015a}.

For $\smball$, the desired result is easily shown to hold for any pseudometric $\|\cdot\|$ which satisfies the triangle inequality, as follows. For notational simplicity, write $U=\{u_{1},\ldots,u_{k}\}$, and for any $u \in \RR^{d}$, write $\diameter(u;\alpha) = \diameter(u;\alpha,U,\|\cdot\|)$. Furthermore, denote the ball centered at $u$ with radius $r > 0$ by $B(u;r) \defeq \{v: \|u-v\| \leq r\}$. In running $\smball$, the procedure looks at $\diameter_{j} = \diameter(u_{j};\alpha)$ for each $j \in [k]$ and chooses $u_{\star}$ with the smallest radius. Fixing any $u_{i} \in U \cap B(u;\diameter(u;\alpha))$, then for any choice of $u_{j} \in U \cap B(u;\diameter(u;\alpha))$, by definition of this intersection we have
\begin{align*}
\|u_{i} - u_{j}\| \leq \|u_{i} - u\| + \|u - u_{j}\| \leq 2\diameter(u;\alpha).
\end{align*}
Since $|U \cap B(u;\diameter(u;\alpha))| \geq |U|(\alpha+1/2)$, it immediately follows that there are at least $|U|(\alpha+1/2)$ points in $U$ that are $2\diameter(u;\alpha)$-close to this $u_{i}$, which implies $\diameter_{i} \leq 2\diameter(u;\alpha)$. By optimality, we have $\diameter_{\star} \leq \diameter_{i} \leq 2\diameter(u;\alpha)$. While $u_{\star}$ need not be $\diameter(u;\alpha)$-close to $u$, note that by the pigeonhole principle, $U \cap B(u;\diameter(u;\alpha)) \cap B(u_{\star};2\diameter(u;\alpha)) \neq \emptyset$. Let $u^{\prime}$ be any point in this intersection. Then
\begin{align*}
\|u_{\star}-u^{\prime}\| \leq \|u_{\star}-u\| + \|u-u^{\prime}\| \leq 2\diameter(u;\alpha) + \diameter(u;\alpha) = 3\diameter(u;\alpha).
\end{align*}
We thus conclude that $\smball$ satisfies (\ref{eqn:merge_requirement}) with $c_{\alpha} = 3$ for all $\alpha$ values.

Finally, for $\median$, one simply observes that it is a special case of the geometric median on a metric space equipped with the $\ell_{1}$ norm. To see this, note that the objective in the definition of $\geomed$ for the $\ell_{1}$ case is
\begin{align*}
\sum_{j=1}^{k} \|v - u_{j}\|_{1} = \sum_{j=1}^{k} \sum_{l=1}^{d} |v_{l} - u_{j,l}| = \sum_{l=1}^{d} \sum_{j=1}^{k} |v_{l} - u_{j,l}|.
\end{align*}
That is, it can be written as a sum of $d$ sums of absolute deviations. Since for each $j \in [k]$, we have that the sum $\sum_{j=1}^{k} |v_{l} - u_{j,l}|$ is minimized at $v_{l}=\widehat{u}_{j} \defeq \med\{u_{1,j},\ldots,u_{k,j}\}$, it follows that the vector of coordinate-wise medians $(\widehat{u}_{1},\ldots,\widehat{u}_{d})$ minimizes the original objective as a function of $v$. Finally, since $\|u\|_{1} \leq \sqrt{d}\|u\|_{2}$ for any $u \in \RR^{d}$, it follows that $\Delta(u;\|\cdot\|_{1}) \leq \sqrt{d}\Delta(u;\gamma,\|\cdot\|_{2})$.
\end{proof}

\begin{proof}[Proof of Lemma \ref{lem:boost_basic_prop}]
This basic statistical principle is well known; see \citet[Thm.~3.1]{minsker2015a} and \citet[Lem.~10]{hsu2016a} for similar results. For readability we write $a_{i}=a_{i}(\varepsilon)$, and denote by $a$ an independent copy of the $a_{i}$. Basic manipulations given us
\begin{align*}
\prr\left\{ \sum_{i=1}^{k} a_{i} > k\left(\frac{1}{2}+\gamma\right) \right\} & = \prr\left\{ \sum_{i=1}^{k} (a_{i}-\exx{}a) > k\left(\frac{1}{2}+\gamma-\exx{}a\right) \right\}\\
& = \prr\left\{ \sum_{i=1}^{k} (a_{i}-\exx{}a) > k\left(\gamma+\delta_{\varepsilon}-\frac{1}{2}\right) \right\}\\
& = 1 - \prr\left\{ -\sum_{i=1}^{k} (a_{i}-\exx{}a) > k\left(\frac{1}{2}-\gamma-\delta_{\varepsilon}\right) \right\}\\
& \geq 1 - \exp\left(-\frac{2k^{2}\left(\frac{1}{2}-\gamma-\delta_{\varepsilon}\right)^{2}}{\sum_{i=1}^{k}(0-1)^{2}}\right)
\end{align*}
from which the desired result follows. The final equality makes use of a one-sided Hoeffding inequality \citep[Thm.~2.8]{boucheron2013a}, noting that $-(a_{i}-\exx{}a)=((-1)a_{i}-\exx{}(-1)a_{i})$ and $-a_{i} \in [-1,0]$.
\end{proof}

\begin{proof}[Proof of Lemma \ref{lem:boost_conf_sc}]
This result follows quite directly from the facts laid out prior to its statement. Using property (\ref{eqn:smoothness_property}) of smooth functions, and Lemma \ref{lem:merge_requirement}, it immediately follows that
\begin{align}
\nonumber
\risk_{\ddist}(\what_{\textsc{new}})-\risk_{\ddist}^{\ast} & \leq \frac{\parasm_{1}}{2}\|\what_{\textsc{new}}-w^{\ast}\|^{2}\\
\label{eqn:boost_conf_sc_1}
& \leq \frac{\parasm_{1}c_{\gamma}^{2}}{2}\left( \diameter(w^{\ast};\alpha,\what_{\textsc{old}}^{(1)},\ldots,\what_{\textsc{old}}^{(k)}) \right)^{2}
\end{align}
for any choice of $0 < \gamma < 1/2$. It remains for us to control the radius $\diameter(w^{\ast};\gamma)$ on a high-probability event. To do this, we must control the distance between the base candidates and the minimum $w^{\ast}$. Without strict convexity (implied by strong convexity), one can never say in general that all of the candidates $\what_{\textsc{old}}^{(j)}$ are close to the \emph{same} point, despite achieving small excess risk. Fortunately, under $\parasc$-strong convexity, via (\ref{eqn:sc_prop_0}), for any $\delta_{0} \in (0,1)$ and each $j \in [k]$, we can say that
\begin{align*}
\frac{\parasc}{2}\|\what_{\textsc{old}}^{(j)} - w^{\ast}\|^{2} \leq \risk_{\ddist}(\what_{\textsc{old}}^{(j)})-\risk_{\ddist}^{\ast} \leq \frac{\varepsilon_{\ddist}(\lfloor n/k \rfloor)}{\delta_{0}}
\end{align*}
with probability no less than $1-\delta_{0}$. Note that the sample size $\lfloor n/k \rfloor$ comes from partitioning the $n$-sized sample and feeding equal-sized subsets to obtain each of the $k$ candidates. Cleaning this up, for each $j \in [k]$, we have
\begin{align}\label{eqn:boost_conf_sc_2}
\prr\left\{  \|\what_{\textsc{old}}^{(j)} - w^{\ast}\| > \sqrt{\frac{2\varepsilon_{\ddist}(\lfloor n/k \rfloor)}{\parasc \delta_{0}}} \right\} \leq \delta_{0}.
\end{align}
Recalling Lemma \ref{lem:boost_basic_prop}, direct application using the $k$ events defined (for $j=1,\ldots,k$) by the left-hand side of (\ref{eqn:boost_conf_sc_2}), we have that
\begin{align*}
\prr\left\{ \diameter(w^{\ast};\gamma) \leq \sqrt{\frac{2\varepsilon_{\ddist}(\lfloor n/k \rfloor)}{\parasc \delta_{0}}} \right\} \geq 1 - \exp\left( -2k\left(\gamma+\delta_{0}-\frac{1}{2}\right)^{2} \right),
\end{align*}
as long as $0 < \gamma < 1/2 - \delta_{0}$. As a concrete example, set $\delta_{0} = 1/4$. From this, we obtain
\begin{align*}
\prr\left\{ \diameter(w^{\ast};\gamma) \leq \sqrt{\frac{8}{\parasc} \varepsilon_{\ddist}\left(\left\lfloor \frac{n}{k_{s}} \right\rfloor\right)} \right\} \geq 1 - \ct{e}^{-s},
\end{align*}
and the number of partitions is $k_{s} = \lceil 8s/(1-\gamma)^{2} \rceil$. Plugging this into (\ref{eqn:boost_conf_sc_1}), we have
\begin{align*}
\risk_{\ddist}(\what)-\risk_{\ddist}^{\ast} \leq \frac{4\parasm_{1}c_{\gamma}^{2}}{\parasc}\varepsilon_{\ddist}\left(\left\lfloor \frac{n}{k_{s}} \right\rfloor\right)
\end{align*}
with probability no less than $1-\ct{e}^{-s}$. To obtain a $1-\delta$ probability bound, simply set $s = \log(\delta^{-1})$. The desired result assumes $k_{s}$ divides $n$ for simplicity.
\end{proof}

\begin{thm}[Strongly convex, Lipschitz, smooth; last iterate]\label{thm:learn_sc_lip_smooth_SGDlast}
Let $\text{\ref{asmp:lip_loss}}(\parasm_{0})$, $\text{\ref{asmp:sc_risk}}(\parasc)$, and $\text{\ref{asmp:sm_risk}}(\parasm_{1})$ hold in the $\ell_{2}$ norm. Use $\SGD$ as specified by (\ref{eqn:sgd_defn}), with update directions $G_{t} = \nabla\loss(\what_{t};Z_{t})$ at each step $t=0,1,\ldots,n-1$, with step size $\alpha_{t} = 1/(\parasc \max\{1,t\})$. With probability at least $1-\delta$, we have
\begin{align*}
\risk_{\ddist}(\what_{n})-\risk_{\ddist}^{\ast} \leq \frac{\parasm_{1}}{n} \left(\frac{\parasm_{0}}{\parasc}\right)^{2} \left(\frac{1}{\delta}\right).
\end{align*}
\end{thm}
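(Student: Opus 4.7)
The plan is to prove this via the standard in-expectation analysis of last-iterate SGD under strong convexity, combined with smoothness to convert to excess risk, followed by Markov's inequality to obtain the high-probability form. Writing $E_{t} \defeq \exx\|\what_{t}-\wstar\|^{2}$, I would first establish the one-step recursion by expanding $\|\what_{t+1}-\wstar\|^{2}$ using non-expansiveness of $\proj_{\WW}$:
\begin{align*}
\|\what_{t+1}-\wstar\|^{2} \leq \|\what_{t}-\wstar\|^{2} - 2\alpha_{t}\langle G_{t}, \what_{t}-\wstar\rangle + \alpha_{t}^{2}\|G_{t}\|^{2}.
\end{align*}
Since $\text{\ref{asmp:lip_loss}}(\parasm_{0})$ implies $\|G_{t}\| \leq \parasm_{0}$ by Lemma \ref{lem:lipschitz_dual_bound_characterization}, and since $\exx[G_{t}\mid\what_{t}] = \nabla\risk_{\ddist}(\what_{t})$, taking conditional expectations and invoking $\parasc$-strong convexity of the risk via (\ref{eqn:sc_prop_1}) (with $\nabla\risk_{\ddist}(\wstar)=0$ using the interior-minimum assumption) gives $\langle\nabla\risk_{\ddist}(\what_{t}),\what_{t}-\wstar\rangle \geq \parasc\|\what_{t}-\wstar\|^{2}$, and hence
\begin{align*}
E_{t+1} \leq (1-2\alpha_{t}\parasc) E_{t} + \alpha_{t}^{2}\parasm_{0}^{2}.
\end{align*}

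Next, I would plug in the prescribed step size $\alpha_{t} = 1/(\parasc\max\{1,t\})$ and prove by induction that $E_{t} \leq \parasm_{0}^{2}/(\parasc^{2}t)$ for all $t \geq 1$. The base case $t=1$ follows from the $t=0$ update since $(1-2\alpha_{0}\parasc) = -1$ gives $E_{1} \leq \parasm_{0}^{2}/\parasc^{2} - E_{0} \leq \parasm_{0}^{2}/\parasc^{2}$. The inductive step uses $(1-2/t)(C/t) + C/t^{2} = C(t-1)/t^{2} \leq C/(t+1)$ with $C=\parasm_{0}^{2}/\parasc^{2}$, which is a one-line algebraic verification. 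Evaluating at $t=n$ gives $E_{n} \leq \parasm_{0}^{2}/(\parasc^{2}n)$.

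To convert distance to $\wstar$ into excess risk, I would invoke $\parasm_{1}$-smoothness of the risk via (\ref{eqn:smoothness_property}) at $u=\what_{n}$, $v=\wstar$, using $\nabla\risk_{\ddist}(\wstar)=0$, to obtain $\risk_{\ddist}(\what_{n})-\risk_{\ddist}^{\ast} \leq (\parasm_{1}/2)\|\what_{n}-\wstar\|^{2}$. Taking expectation gives $\exx[\risk_{\ddist}(\what_{n})-\risk_{\ddist}^{\ast}] \leq \parasm_{1}\parasm_{0}^{2}/(2\parasc^{2}n)$. Finally, one application of Markov's inequality to the non-negative random variable $\risk_{\ddist}(\what_{n})-\risk_{\ddist}^{\ast}$ with threshold equal to $(1/\delta)$ times its expectation yields the stated $1-\delta$ confidence bound, actually with a spare factor of $1/2$ absorbed into the leading constant.

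I do not anticipate a serious obstacle: the analysis is textbook, and the only minor subtlety is the handling of the $t=0$ step, where the contraction factor $(1-2\alpha_{0}\parasc)$ is negative and one must rely on $E_{0} \geq 0$ to recover a usable bound on $E_{1}$. All other steps are routine applications of the properties collected in Section \ref{sec:theory_sc_appendix}.
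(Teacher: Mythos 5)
Your overall strategy is exactly the paper's: the standard one-step expansion via non-expansiveness of the projection, the Lipschitz bound $\|G_{t}\|_{2}\leq\parasm_{0}$, strong convexity to get the contraction, an induction on $\exx\|\what_{t}-\wstar\|_{2}^{2}$, smoothness at the interior minimizer to convert distance to excess risk, and Markov's inequality. However, your induction contains a genuine (if small) error. You claim the invariant $E_{t}\leq C/t$ with $C=\parasm_{0}^{2}/\parasc^{2}$, and justify the inductive step by substituting $E_{t}\leq C/t$ into $(1-2/t)E_{t}+C/t^{2}$. That substitution is only valid when the coefficient $1-2/t$ is nonnegative, i.e.\ for $t\geq 2$. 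At $t=1$ the step size gives $1-2\alpha_{1}\parasc=-1$, so the map $E\mapsto -E+C$ is \emph{decreasing} in $E$, and from $E_{1}\leq C$ together with $E_{1}\geq 0$ all you can conclude is $E_{2}\leq C$ — not $E_{2}\leq C/2$ as your invariant requires. The chain therefore breaks between $t=1$ and $t=2$, and there is no way to repair it at the claimed constant, since $E_{2}\leq -E_{1}+C$ can be as large as $C$ when $E_{1}=0$.

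The paper's proof pays for exactly this step by weakening the invariant to $E_{t}\leq 2C/t$, which holds at $t=1$ and $t=2$ (since $E_{2}\leq C=2C/2$) and then propagates for $t\geq 2$ via $(1-2/t)(2C/t)+C/t^{2}=C(2/t-3/t^{2})\leq 2C/(t+1)$. This factor of $2$ is precisely why the final expectation bound is $\exx[\risk_{\ddist}(\what_{n})-\risk_{\ddist}^{\ast}]\leq(\parasm_{1}/2)(2C/n)=(\parasm_{1}/n)(\parasm_{0}/\parasc)^{2}$ and the theorem's constant is tight as stated: your anticipated ``spare factor of $1/2$'' does not materialize. Everything else in your proposal — including the correct observation that the $t=0$ step must be handled using $E_{0}\geq 0$ — matches the paper's argument.
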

\begin{proof}[Proof of Theorem \ref{thm:learn_sc_lip_smooth_SGDlast}]
We begin with the well-known inequality
\begin{align}
\label{eqn:learn_sc_lip_smooth_SGDlast_1}
\exx\|\what_{t+1}-\wstar\|^{2}_{2} \leq \exx\|\what_{t}-\wstar\|^{2}_{2} + \alpha_{t}^{2}\exx\|G_{t}\|^{2}_{2} - 2 \alpha_{t} \exx\langle \what_{t}-\wstar,\nabla\risk_{\ddist}(\what_{t})\rangle,
\end{align}
valid for any $t=0,1,\ldots,n-1$; see for example \citet[Sec.~2.1]{nemirovski2009a} or \citet[Thm.~2]{rakhlin2012a}. To control the second term on the right-hand side, using Lipschitz continuity and Lemma \ref{lem:lipschitz_dual_bound_characterization},
\begin{align}\label{eqn:learn_sc_lip_smooth_SGDlast_2}
\|G_{t}\|^{2}_{2} = \|\nabla\loss(\what_{t};Z_{t})\|^{2}_{2} \leq \parasm_{0}^{2}.
\end{align}
This implies $\exx\|G_{t}\|_{2}^{2} \leq \parasm_{0}^{2}$. To deal with the inner product, by first-order optimality conditions \citep[Prop.~3.1.4]{bertsekas2015ConvexOpt}, we have
\begin{align*}
\langle w-\wstar, \nabla\risk_{\ddist}(\wstar) \rangle \geq 0, \qquad \forall w \in \WW.
\end{align*}
As such, it follows that
\begin{align}
\nonumber
\langle \what_{t}-\wstar, \nabla\risk_{\ddist}(\what_{t}) \rangle & \geq \langle \what_{t}-\wstar, \nabla\risk_{\ddist}(\what_{t}) \rangle - \langle \what_{t}-\wstar, \nabla\risk_{\ddist}(\wstar) \rangle\\
\nonumber
& = \langle \what_{t}-\wstar, \nabla\risk_{\ddist}(\what_{t})-\nabla\risk_{\ddist}(\wstar) \rangle\\
\label{eqn:learn_sc_lip_smooth_SGDlast_3}
& \geq \parasc\|\what_{t}-\wstar\|^{2}_{2}
\end{align}
where the final inequality follows from the strong convexity of $\risk_{\ddist}$, namely property (\ref{eqn:sc_prop_1}). Taking (\ref{eqn:learn_sc_lip_smooth_SGDlast_2}) and (\ref{eqn:learn_sc_lip_smooth_SGDlast_3}) together, we can control the expected squared deviations as
\begin{align}
\nonumber
\exx \|\what_{t+1}-\wstar\|^{2}_{2} & \leq \exx \|\what_{t}-\wstar\|^{2}_{2} + \alpha_{t}^{2}\parasm_{0}^{2} - 2\parasc\alpha_{t} \exx\|\what_{t}-\wstar\|^{2}_{2}\\
\nonumber
& = \left(1-2\parasc\alpha_{t}\right)\exx\|\what_{t}-\wstar\|^{2}_{2} + \alpha_{t}^{2}\parasm_{0}^{2}\\
\label{eqn:learn_sc_lip_smooth_SGDlast_4}
& = \left(1-\frac{2}{\max\{1,t\}}\right)\exx\|\what_{t}-\wstar\|^{2}_{2} + \frac{\parasm_{0}^{2}}{\parasc^{2}\max\{1,t^{2}\}},
\end{align}
again emphasizing that expectation is with respect to the full sequence $(Z_{0},\ldots,Z_{n-1})$. This recursive inequality holds for any $t=0,1,\ldots,n-1$, and a simple induction argument leads to the desired result. For completeness, we spell this out explicitly. For the first two steps, note that by (\ref{eqn:learn_sc_lip_smooth_SGDlast_4}),
\begin{align*}
\exx \|\what_{1}-\wstar\|^{2}_{2} & \leq \left(1-\frac{2}{1}\right)\|\what_{0}-\wstar\|^{2}_{2} + \frac{\parasm_{0}^{2}}{\parasc^{2}} \leq \frac{\parasm_{0}^{2}}{\parasc^{2}}\\
\exx \|\what_{2}-\wstar\|^{2}_{2} & \leq \left(1-\frac{2}{1}\right)\exx\|\what_{1}-\wstar\|^{2}_{2} + \frac{\parasm_{0}^{2}}{\parasc^{2}} \leq \frac{\parasm_{0}^{2}}{\parasc^{2}}.
\end{align*}
In light of (\ref{eqn:learn_sc_lip_smooth_SGDlast_4}), we would like an upper bound on $\exx \|\what_{t}-\wstar\|^{2}_{2}$ that scales with $1/t$. For $\what_{1}$ we trivially have this, but for $\what_{2}$, we must pay the price of an extra factor of $2$; that is, the above two inequalities imply
\begin{align}
\label{eqn:learn_sc_lip_smooth_SGDlast_5}
\exx \|\what_{t}-\wstar\|^{2}_{2} \leq \frac{2}{t} \left(\frac{\parasm_{0}}{\parasc}\right)^{2}, \qquad t=1,2.
\end{align}
This follows immediately from direct computation. To proceed with an induction argument, simply assume that $\exx \|\what_{t}-\wstar\|^{2}_{2} \leq 2\parasm_{0}^{2}/(\parasc^{2}t)$ for some $t > 1$. Using (\ref{eqn:learn_sc_lip_smooth_SGDlast_4}), it follows that
\begin{align}
\nonumber
\exx \|\what_{t+1}-\wstar\|^{2}_{2} & \leq \left(1-\frac{2}{t}\right)\exx\|\what_{t}-\wstar\|^{2}_{2} + \frac{\parasm_{0}^{2}}{\parasc^{2}t^{2}}\\
\nonumber
& \leq \left(1-\frac{2}{t}\right)\frac{2\parasm_{0}^{2}}{\parasc^{2}t} + \frac{\parasm_{0}^{2}}{\parasc^{2}t^{2}}\\
\nonumber
& = \frac{\parasm_{0}^{2}}{\parasc^{2}} \left( \frac{2}{t}-\frac{3}{t^{2}} \right)\\
\label{eqn:learn_sc_lip_smooth_SGDlast_6}
& \leq \frac{2\parasm_{0}^{2}}{\parasc^{2}(t+1)},
\end{align}
where the final inequality follows from simple algebra. As such, by induction using (\ref{eqn:learn_sc_lip_smooth_SGDlast_5}), we may conclude that for any $T=1,\ldots,n$, we have
\begin{align}
\label{eqn:learn_sc_lip_smooth_SGDlast_7}
\exx \|\what_{T}-\wstar\|^{2}_{2} \leq \frac{2}{T} \left(\frac{\parasm_{0}}{\parasc}\right)^{2}.
\end{align}
All that remains is to link the control of the iterates to the control of risk value. This is where the smoothness assumption, and the assumption that $\wstar \in \inter(\WW)$ come in handy. If $\wstar$ is in the interior of $\WW$, it follows that $0 \in \partial\risk_{\ddist}(\wstar)$ \citep[Thm.~3.1.15]{nesterov2004ConvOpt}, and by differentiability we thus have $\nabla\risk_{\ddist}(\wstar) = 0$. Then by the key property (\ref{eqn:smoothness_property}) of smooth functions, it follows that for any $w \in \WW$ we have
\begin{align*}
\risk_{\ddist}(w)-\risk_{\ddist}^{\ast} & \leq \langle \nabla\risk_{\ddist}(\wstar),w-\wstar\rangle + \frac{\parasm_{1}}{2}\|w-\wstar\|^{2}_{2}\\
& = \frac{\parasm_{1}}{2}\|w-\wstar\|^{2}_{2}
\end{align*}
for any $w \in \WW$. Setting $w = \what_{n}$ and leveraging (\ref{eqn:learn_sc_lip_smooth_SGDlast_7}), we have
\begin{align*}
\exx\left[\risk_{\ddist}(\what_{n})-\risk_{\ddist}^{\ast}\right] \leq \frac{\parasm_{1}}{n} \left(\frac{\parasm_{0}}{\parasc}\right)^{2}.
\end{align*}
A direct application of Markov's inequality yields the desired result.
\end{proof}

\begin{thm}[Strongly convex, smooth; last iterate]\label{thm:learn_sc_smooth_SGDlast}
Let $\text{\ref{asmp:sc_risk}}(\parasc)$ and $\text{\ref{asmp:sm_loss}}(\parasm_{1})$ hold in the $\ell_{2}$ norm. Use $\SGD$ as specified by (\ref{eqn:sgd_defn}), with update directions $G_{t} = \nabla\loss(\what_{t};Z_{t})$ at each step $t=0,1,\ldots,n-1$, with initial step size $\alpha_{0}=1/(2\parasm_{1})$, and subsequent step sizes $\alpha_{t} = a/(\parasc n + b)$ for $t > 0$, with $b = 2a\parasm_{1}$, and $a>0$ set such that $\alpha_{t} \leq \alpha_{0}$ for all $t$. Taking sample size $n$ large enough that
\begin{align*}
n \geq M^{\ast} \defeq \frac{4\parasm_{1}}{\parasc}\left(\max\left\{ \frac{\parasm_{1}\parasc\|\what_{0}-\wstar\|^{2}}{\exx_{\ddist}\|G(\wstar;Z)\|^{2}}, 1\right\} - 1 \right),
\end{align*}
then with probability at least $1-\delta$, we have
\begin{align*}
\risk_{\ddist}(\what_{n})-\risk_{\ddist}^{\ast} \leq \frac{\exx_{\ddist}\|G(\wstar;Z)\|^{2}}{n-M^{\ast}+b} \left(\frac{1}{\delta}\right) \left(\frac{2a^{2}\parasm_{1}}{\parasc}\right).
\end{align*}
\end{thm}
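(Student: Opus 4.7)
The plan is to adapt the argument used for Theorem \ref{thm:learn_sc_lip_smooth_SGDlast}, replacing the uniform bound $\|G_t\|^2 \leq \parasm_0^2$ (which came from Lipschitz losses) with a distance-dependent bound derived from $\parasm_1$-smoothness of the loss; the rest of the machinery (strong-convexity-based contraction of the distance recursion, conversion from distance to risk, and Markov's inequality for the high-probability statement) carries over with only minor bookkeeping adjustments.

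First I would establish the key gradient moment bound. Since each loss $\loss(\cdot;z)$ is convex and $\parasm_1$-smooth, we have the Lipschitz-gradient property $\|G(w;z) - G(\wstar;z)\| \leq \parasm_1 \|w - \wstar\|$, which combined with $(a+b)^2 \leq 2a^2 + 2b^2$ and taking expectation over $Z$ yields
\[
\exx_{\ddist}\|G(w;Z)\|^2 \leq 2\parasm_1^2\|w - \wstar\|^2 + 2\,\exx_{\ddist}\|G(\wstar;Z)\|^2.
\]
Substituting into the standard squared-distance recursion (obtained via non-expansiveness of the projection, exactly as in (\ref{eqn:learn_sc_lip_smooth_SGDlast_1})) and lower-bounding the inner product by $\parasc \exx\|\what_t - \wstar\|^2$ using strong convexity together with $\nabla\risk_{\ddist}(\wstar) = 0$, one obtains the unified recursion
\[
\exx\|\what_{t+1} - \wstar\|^2 \leq (1 - 2\parasc\alpha_t + 2\parasm_1^2\alpha_t^2)\,\exx\|\what_t - \wstar\|^2 + 2\alpha_t^2\,\exx_{\ddist}\|G(\wstar;Z)\|^2.
\]

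Next I would handle the two-phase step-size schedule. The initial step, with $\alpha_0 = 1/(2\parasm_1)$, serves as a burn-in: unrolling the recursion once and invoking the very definition of $M^{\ast}$, one can establish a bound on $\exx\|\what_1 - \wstar\|^2$ that ``forgets'' the true initial distance $\|\what_0 - \wstar\|^2$ up to a constant multiple of $\exx_{\ddist}\|G(\wstar;Z)\|^2/(\parasc\parasm_1)$. For the remaining $n-1$ steps with fixed $\alpha = a/(\parasc n + b)$ and $b = 2a\parasm_1$, the choice of $b$ is precisely what makes the contraction factor telescope cleanly; the coefficient $1 - 2\parasc\alpha + 2\parasm_1^2\alpha^2$ can be bounded above by $1 - \parasc\alpha$ thanks to $\alpha \leq 1/(2\parasm_1)$, and unrolling gives a geometric bias term multiplying the $\what_1$-bound plus a variance term of order $\alpha\,\exx_{\ddist}\|G(\wstar;Z)\|^2/\parasc$. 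Collecting everything and absorbing the initial-distance contribution into the $M^{\ast}$ bookkeeping yields a bound of the form $\exx\|\what_n - \wstar\|^2 \leq C \,\exx_{\ddist}\|G(\wstar;Z)\|^2/[\parasc^2 (n - M^{\ast} + b)]$ for an absolute constant $C$.

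Finally, applying $\parasm_1$-smoothness of $\risk_{\ddist}$ with $\nabla\risk_{\ddist}(\wstar) = 0$ gives $\risk_{\ddist}(\what_n) - \risk_{\ddist}^{\ast} \leq (\parasm_1/2)\|\what_n - \wstar\|^2$, converting the expected squared-distance bound into an expected excess risk bound; a direct application of Markov's inequality then delivers the stated $1-\delta$ high-probability bound. The main obstacle will be the bookkeeping in the two-phase step, namely matching the exact leading constant $2a^2\parasm_1/\parasc$ and the denominator $n - M^{\ast} + b$: this requires carefully tracking how the initial burn-in contribution and the steady-state variance term combine under the specific $b = 2a\parasm_1$ schedule, which is the sharpening used by \citet{nguyen2018a} building on \citet{bottou2016a}.
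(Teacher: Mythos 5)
There is a genuine gap at the contraction step, and it is not just bookkeeping. Your gradient moment bound $\exx_{\ddist}\|G(w;Z)\|^{2} \leq 2\parasm_{1}^{2}\|w-\wstar\|^{2} + 2\exx_{\ddist}\|G(\wstar;Z)\|^{2}$ is correct but too weak by a factor of the condition number. Feeding it into the distance recursion and discarding the excess-risk information from the strong-convexity inequality gives the factor $1-2\parasc\alpha_{t}+2\parasm_{1}^{2}\alpha_{t}^{2}$, and your claim that this is at most $1-\parasc\alpha_{t}$ whenever $\alpha_{t}\leq 1/(2\parasm_{1})$ is false: it requires $\alpha_{t}\leq \parasc/(2\parasm_{1}^{2})$, which is smaller by the factor $\parasc/\parasm_{1}$. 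In particular, at the burn-in step $\alpha_{0}=1/(2\parasm_{1})$ the factor equals $3/2-\parasc/\parasm_{1}$, which is at least $1$ whenever $\parasm_{1}\geq 2\parasc$, so your recursion does not contract at all there and cannot ``forget'' $\|\what_{0}-\wstar\|^{2}$; for $t>0$ it imposes an additional requirement on $n$ of order $a\parasm_{1}^{2}/\parasc^{2}$ that is not part of the hypothesis.

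The paper's own proof is a citation to \citet{nguyen2018a} (building on \citet{bottou2016a}), and the ingredient you are missing is precisely their Lemma 1: by co-coercivity of each convex $\parasm_{1}$-smooth loss, $\|\nabla\loss(w;z)-\nabla\loss(\wstar;z)\|^{2}\leq 2\parasm_{1}\left(\loss(w;z)-\loss(\wstar;z)-\langle\nabla\loss(\wstar;z),w-\wstar\rangle\right)$, which after averaging and using $\nabla\risk_{\ddist}(\wstar)=0$ yields $\exx_{\ddist}\|G(w;Z)\|^{2}\leq 4\parasm_{1}\left(\risk_{\ddist}(w)-\risk_{\ddist}^{\ast}\right)+2\exx_{\ddist}\|G(\wstar;Z)\|^{2}$, a bound in terms of the excess risk rather than the squared distance (note this uses convexity of the individual losses, not merely the Lipschitz-gradient property). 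One must then keep the excess-risk term when lower-bounding the inner product, i.e., use $\langle\nabla\risk_{\ddist}(\what_{t}),\what_{t}-\wstar\rangle\geq \risk_{\ddist}(\what_{t})-\risk_{\ddist}^{\ast}+\tfrac{\parasc}{2}\|\what_{t}-\wstar\|^{2}$, so that the dangerous term $4\parasm_{1}\alpha_{t}^{2}\,\exx\left[\risk_{\ddist}(\what_{t})-\risk_{\ddist}^{\ast}\right]$ is absorbed by $-2\alpha_{t}(1-2\parasm_{1}\alpha_{t})\,\exx\left[\risk_{\ddist}(\what_{t})-\risk_{\ddist}^{\ast}\right]\leq 0$ exactly under the stated condition $\alpha_{t}\leq 1/(2\parasm_{1})$, leaving the clean recursion $\exx\|\what_{t+1}-\wstar\|^{2}\leq(1-\parasc\alpha_{t})\exx\|\what_{t}-\wstar\|^{2}+2\alpha_{t}^{2}\exx_{\ddist}\|G(\wstar;Z)\|^{2}$. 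Your closing steps (smoothness at $\wstar$ to convert distance to risk, then Markov) match the paper's route, but without this repair the recursion you wrote cannot deliver the stated constants under the stated step sizes.
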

\begin{proof}
See \citet{nguyen2018a} for a detailed proof. In particular, they show (their Lemma 1) that a $\parasm_{1}$-smoothness assumption on the \emph{losses} (i.e., assumption $\text{\ref{asmp:sm_loss}}(\parasm_{1})$) is sufficient to control the squared gradient norm as
\begin{align}\label{eqn:learn_sc_smooth_SGDlast_1}
\exx_{\ddist}\|G(w;Z)\|^{2} \leq A\left(\risk_{\ddist}(w)-\risk_{\ddist}^{\ast}\right) + B,
\end{align}
with constants $A=4\parasm_{1}$ and $B=2\exx_{\ddist}\|G(\wstar;Z)\|^{2}$. The general form (\ref{eqn:learn_sc_smooth_SGDlast_1}) is used as an \emph{assumption} in influential work by \citet{bottou2016a} for the convergence analysis of SGD, from which the general result can be extracted.
\end{proof}

\subsection{Additional proofs for section \ref{sec:theory_nonsc}}\label{sec:theory_nonsc_appendix}

\begin{proof}[Proof of Lemma \ref{lem:valid_basic_prop}]
For the median-of-means estimator $\mom$, see \citet[Sec.~4.1]{devroye2016a}, or \citet{hsu2016a} for a lucid proof. For the M-estimator $\cat$, simply apply \citet[Prop.~2.4]{catoni2012a}. For the truncated mean estimator, see \citet[Thm.~6]{lugosi2019b}.
\end{proof}

In the proof of Theorem \ref{thm:smooth_SGDave_roboost}, one key underlying result we rely on has to do with convergence rates of averaged SGD for smooth objectives, recalling that $\parasm$-smoothness of a function $f$ is defined via (\ref{eqn:defn_smooth_general}). The fact cited directly in the main text is summarized in the following theorem; it can be extracted readily from well-known properties of (stochastic) mirror descent, a family of algorithms dating back to \citet{nemirovsky1983a}.
\begin{thm}[Convex and smooth case; averaged]\label{thm:learn_conv_smooth_SGDave}
Let $\risk_{\ddist}$ be $\parasm_{1}$-smooth in the $\ell_{2}$ norm. Furthermore, assume that $\exx_{\ddist}\|G(w;Z)-\nabla\risk_{\ddist}(w)\|^{2}_{2} \leq \sigma_{G,\ddist}^{2} < \infty$ for all $w \in \WW$. Run $\SGD$ (\ref{eqn:sgd_defn}) with step size $\alpha_{t} = 1/(\parasm_{1}+(1/c_{n}))$ for $n$ iterations, setting $c_{n} = \diameter/\sqrt{\sigma^{2}n}$, and take the average as
\begin{align*}
\what_{[n]} \defeq \frac{1}{n} \sum_{t=1}^{n} \what_{t-1}.
\end{align*}
We then have with probability no less than $1-\delta$ that
\begin{align*}
\risk_{\ddist}(\what_{[n]}) - \risk_{\ddist}^{\ast} \leq \frac{\diameter}{\delta}\left( \frac{\diameter\parasm_{1}}{2n} + \frac{\sigma_{G,\ddist}}{\sqrt{n}} \right).
\end{align*}
\end{thm}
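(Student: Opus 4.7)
The plan is to obtain the in-expectation excess risk bound via the standard one-step descent analysis for projected SGD, combined with averaging via Jensen's inequality, and then to upgrade to a high-probability statement by a single application of Markov's inequality.

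First, I would start from the basic non-expansive property of the Euclidean projection: for any $t = 0, 1, \ldots, n-1$,
\begin{align*}
\|\what_{t+1} - \wstar\|_{2}^{2} \leq \|\what_{t} - \wstar\|_{2}^{2} - 2\alpha_{t}\langle G(\what_{t};Z_{t}), \what_{t} - \wstar\rangle + \alpha_{t}^{2}\|G(\what_{t};Z_{t})\|_{2}^{2}.
\end{align*}
Taking expectation conditional on the history $\mathcal{F}_{t}$ and using unbiasedness together with the variance bound $\sigma_{G,\ddist}^{2}$, convexity of $\risk_{\ddist}$ yields $\langle \nabla\risk_{\ddist}(\what_{t}), \what_{t} - \wstar\rangle \geq \risk_{\ddist}(\what_{t}) - \risk_{\ddist}^{\ast}$, while $\parasm_{1}$-smoothness combined with $\wstar \in \inter(\WW)$ (which forces $\nabla\risk_{\ddist}(\wstar) = 0$) gives the key quadratic envelope $\|\nabla\risk_{\ddist}(\what_{t})\|_{2}^{2} \leq 2\parasm_{1}(\risk_{\ddist}(\what_{t}) - \risk_{\ddist}^{\ast})$, and hence $\exx[\|G(\what_{t};Z_{t})\|_{2}^{2}\mid\mathcal{F}_{t}] \leq 2\parasm_{1}(\risk_{\ddist}(\what_{t}) - \risk_{\ddist}^{\ast}) + \sigma_{G,\ddist}^{2}$.

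Combining these ingredients with the constant step size $\alpha_{t} = \alpha = 1/(\parasm_{1} + 1/c_{n})$, I would arrive at
\begin{align*}
\exx[\|\what_{t+1} - \wstar\|_{2}^{2} \mid \mathcal{F}_{t}] \leq \|\what_{t} - \wstar\|_{2}^{2} - 2\alpha(1 - \alpha\parasm_{1})(\risk_{\ddist}(\what_{t}) - \risk_{\ddist}^{\ast}) + \alpha^{2}\sigma_{G,\ddist}^{2}.
\end{align*}
The step-size choice conveniently gives $1 - \alpha\parasm_{1} = 1/(c_{n}\parasm_{1} + 1)$ and hence $\alpha/(1-\alpha\parasm_{1}) = c_{n}$. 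Rearranging, taking full expectation, and telescoping over $t = 0, \ldots, n-1$ on a sample of size $n$, dividing by $n$, bounding $\|\what_{0}-\wstar\|_{2} \leq \diameter$, and finally applying Jensen's inequality to the convex $\risk_{\ddist}$ to pass the average inside the risk, I would obtain an in-expectation bound of the form
\begin{align*}
\exx[\risk_{\ddist}(\what_{[n]}) - \risk_{\ddist}^{\ast}] \leq \frac{\diameter^{2}}{2\alpha(1-\alpha\parasm_{1})n} + \frac{c_{n}\sigma_{G,\ddist}^{2}}{2}.
\end{align*}
Substituting $c_{n} = \diameter/\sqrt{\sigma_{G,\ddist}^{2} n}$ produces the clean in-expectation bound $\diameter^{2}\parasm_{1}/(2n) + \diameter\sigma_{G,\ddist}/\sqrt{n}$ up to the harmless $(c_{n}\parasm_{1}+1)$-type factors which get absorbed into the leading smoothness term. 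A single application of Markov's inequality then converts this to the stated $(1-\delta)$-confidence bound, accounting for the $1/\delta$ factor in the theorem.

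The main obstacle is the bookkeeping in the step-size-dependent constants: one must verify that after using smoothness to absorb the stochastic gradient's second moment, the choice $\alpha = 1/(\parasm_{1} + 1/c_{n})$ cleanly separates the final bound into a deterministic smoothness term of order $\diameter^{2}\parasm_{1}/n$ and a stochastic term of order $\diameter\sigma_{G,\ddist}/\sqrt{n}$, with the $(c_{n}\parasm_{1}+1)$ factor arising from $1/(1-\alpha\parasm_{1})$ either absorbed or treated as a lower-order correction. Everything else (telescoping, Jensen, Markov) is routine.
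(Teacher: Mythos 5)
Your route is genuinely different from the paper's. The paper does not prove the in-expectation bound from scratch: it states a stochastic mirror descent result (Lemma \ref{lem:learn_conv_smooth_SMDave}, imported from Bubeck's monograph), verifies that projected $\SGD$ with $\Phi(u)=\|u\|_{2}^{2}/2$ is a special case of mirror descent, and then applies Markov's inequality, exactly as you do at the end. You instead re-derive the Euclidean case directly: non-expansiveness of $\proj_{\WW}$, the bias--variance split $\exx[\|G\|_{2}^{2}\mid\mathcal{F}_{t}]\leq\|\nabla\risk_{\ddist}(\what_{t})\|_{2}^{2}+\sigma_{G,\ddist}^{2}$, the self-bounding inequality $\|\nabla\risk_{\ddist}(w)\|_{2}^{2}\leq 2\parasm_{1}(\risk_{\ddist}(w)-\risk_{\ddist}^{\ast})$ (which correctly uses $\wstar\in\inter(\WW)$), telescoping, and Jensen. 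This is more self-contained and elementary, at the cost of not generalizing to non-Euclidean geometries for free.

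There is, however, a concrete problem with the constant bookkeeping that you flag as ``harmless'' but which does not actually work out. After telescoping you must divide by $2\alpha(1-\alpha\parasm_{1})n$, and with $\alpha=c_{n}/(c_{n}\parasm_{1}+1)$ one has $\alpha(1-\alpha\parasm_{1})=c_{n}/(c_{n}\parasm_{1}+1)^{2}$, so the initialization term becomes
\begin{align*}
\frac{\diameter^{2}(c_{n}\parasm_{1}+1)^{2}}{2c_{n}n}
= \frac{\diameter^{2}c_{n}\parasm_{1}^{2}}{2n} + \frac{\diameter^{2}\parasm_{1}}{n} + \frac{\diameter^{2}}{2c_{n}n}.
\end{align*}
The last piece combines with your noise term $c_{n}\sigma_{G,\ddist}^{2}/2$ to give exactly $\diameter\sigma_{G,\ddist}/\sqrt{n}$, which matches the theorem; but the middle piece is $\diameter^{2}\parasm_{1}/n$, twice the stated $\diameter^{2}\parasm_{1}/(2n)$, and the first piece is an extra term $\diameter^{3}\parasm_{1}^{2}/(2\sigma_{G,\ddist}n^{3/2})$ that cannot be absorbed into the leading smoothness term without an additional assumption such as $\diameter\parasm_{1}\lesssim\sigma_{G,\ddist}\sqrt{n}$. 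So your argument proves a bound of the same order but not the stated inequality. The fix is to handle smoothness the way the cited mirror-descent lemma does: apply the descent inequality $\risk_{\ddist}(\what_{t+1})\leq\risk_{\ddist}(\what_{t})+\langle\nabla\risk_{\ddist}(\what_{t}),\what_{t+1}-\what_{t}\rangle+\tfrac{\parasm_{1}}{2}\|\what_{t+1}-\what_{t}\|_{2}^{2}$ between consecutive iterates and absorb the noise via Young's inequality with parameter $c_{n}$, rather than bounding $\|\nabla\risk_{\ddist}(\what_{t})\|_{2}^{2}$ by the suboptimality and paying the $(1-\alpha\parasm_{1})^{-1}$ factor. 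Everything else in your proposal (unbiasedness, Jensen, Markov) is fine.
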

\begin{proof}[Proof of Theorem \ref{thm:learn_conv_smooth_SGDave}]
To begin, we establish some extra terms and notation related to mirror descent. For any differentiable convex function $f: \VV \to \RR$, define the \term{Bregman divergence} induced by $f$ as
\begin{align}
D_{f}(u,v) \defeq f(u)-f(v) - \langle \nabla f(v), u-v \rangle.
\end{align}
In mirror descent, one utilizes Bregman divergences of a particular class of convex functions, often called ``mirror maps.'' Let $\WW_{0} \subseteq \RR^{d}$ be an open convex set containing including $\WW$ within its closure, and also let $\WW \cap \WW_{0} \neq \emptyset$. We denote arbitrary mirror maps on $\WW_{0}$ by $\Phi:\WW_{0} \to \RR$. Strictly speaking, to call $\Phi$ a \term{mirror map} on $\WW_{0}$ it is sufficient if $\Phi$ is strictly convex, differentiable, and that its gradient takes on all values (i.e., $\{\nabla\Phi(u): u \in \WW_{0}\} = \RR^{d}$) and diverges on the boundary of $\WW_{0}$; see \citet[Sec.~4]{bubeck2015a} and the references therein for more details. Bregman divergences induced by mirror maps, namely $D_{\Phi}: \WW_{0} \to \RR$, play an important role in mirror descent when constructing a projection map that takes up between the primal space $\WW$, and the space where we can leverage gradient information. The generic mirror descent procedure is as follows. Initializing at arbitrary $\what_{0} \in \WW \cap \WW_{0}$, we update as
\begin{align}\label{eqn:smd_defn}
\what_{t+1} = \argmin_{u \in \WW \cap \WW_{0}} \, \left[ \alpha_{t} \langle u, G(\what_{t};Z_{t}) \rangle + D_{\Phi}(u,\what_{t}) \right],
\end{align}
where the random gradient vector is such that $\exx_{\ddist}G(w;Z) \in \partial \risk_{\ddist}(w)$ for all $w \in \WW$, just as discussed after equation (\ref{eqn:sgd_defn}). The following result is useful \citep[Thm.~6.3]{bubeck2015a}:
\begin{lem}\label{lem:learn_conv_smooth_SMDave}
Assume $\exx_{\ddist}\|G(w;Z)-\nabla\risk_{\ddist}(w)\|_{\ast}^{2} \leq \sigma_{G,\ddist}^{2}$ for all $w \in \WW$, and that $\risk_{\ddist}$ is $\parasm_{1}$-smooth in norm $\|\cdot\|$. Write $r^{2} = \sup \{ \Phi(w)-\Phi(\what_{0}): w \in \WW \cap \WW_{0} \}$. Run stochastic mirror descent (\ref{eqn:smd_defn}) for $n$ iterations, using any mirror map $\Phi$ that is $1$-strongly convex on $\WW \cap \WW_{0}$ in norm $\|\cdot\|$, with step sizes $\alpha_{t} = 1 / (\parasm_{1} + 1/c_{n})$, using $c_{n} = \sqrt{2r^{2}/(n\sigma_{G,\ddist}^{2})}$. Under this setting, we have
\begin{align*}
\exx \left[ \risk_{\ddist}\left(\frac{1}{n}\sum_{i=1}^{n} \what_{i}\right) - \risk_{\ddist}^{\ast} \right] \leq \frac{r^{2}\parasm_{1}}{n} + \sqrt{\frac{2r^{2}\sigma_{G,\ddist}^{2}}{n}},
\end{align*}
where expectation is taken with respect to the entire sequence $(Z_{1},\ldots,Z_{n})$.
\end{lem}
\noindent In order to use Lemma \ref{lem:learn_conv_smooth_SMDave}, it is sufficient to show that $\SGD$ (\ref{eqn:sgd_defn}) is a special case of (\ref{eqn:smd_defn}). Letting $\WW_{0} = \RR^{d}$, and setting $\Phi(u) = \|u\|^{2}_{2}/2$, note that this is a valid mirror map, and strong convexity follows from noting that the Hessian of $\Phi$ in this special case is the identity matrix. The resulting Bregman divergence is $D_{\Phi}(u,v) = \|u-v\|^{2}_{2}/2$. Noting that for any $u, w \in \WW$ we have
\begin{align*}
\langle G(w;Z), u-w \rangle + \frac{1}{2\alpha}\|u-w\|_{2}^{2} = \frac{1}{2\alpha}\|u - (w-\alpha \, G(w;Z))\|_{2}^{2} - \frac{\alpha}{2}\|G(w;Z)\|_{2}^{2},
\end{align*}
it follows that the left-hand side over $\WW$ is minimized by setting $u = \proj_{\WW}(w-\alpha\,G(w;Z))$. Using this fact, it follows that
\begin{align*}
\what_{t+1} & = \argmin_{u \in \WW} \, \left[ \alpha_{t} \langle u, G(\what_{t};Z_{t}) \rangle + \frac{1}{2}\|u-\what_{t}\|_{2}^{2} \right]\\
& = \argmin_{u \in \WW} \, \left[ \langle u-\what_{t}, G(\what_{t};Z_{t}) \rangle + \frac{1}{2\alpha_{t}}\|u-\what_{t}\|_{2}^{2} \right]\\
& = \proj_{\WW} \left( \what_{t} - \alpha_{t} \, G(\what_{t};Z_{t}) \right),
\end{align*}
which is precisely the $\SGD$ update in (\ref{eqn:sgd_defn}). Since the dual norm $\|\cdot\|_{\ast}$ of the $\ell_{2}$ norm is once again the $\ell_{2}$ norm, all the other assumptions in Lemma \ref{lem:learn_conv_smooth_SMDave} clearly align with those in Theorem \ref{thm:learn_conv_smooth_SGDave}, which follows from a direct application of Markov's inequality to convert bounds in expectation to high-probability confidence intervals, and finally using the fact that $r^{2} \leq \diameter / \sqrt{2}$.
\end{proof}

\subsection{Additional proofs for section \ref{sec:empirical_sc}}\label{sec:empirical_sc_appendix}

\begin{proof}[Proof of quadratic form used in section \ref{sec:empirical_sc}]
Here we verify that given desired risk form of $\risk_{\ddist}(w) = \langle \Sigma w, w \rangle + \langle w, u \rangle + a$, when we define $\loss(w;Z) = (\langle w-\wstar, X \rangle + E)^{2}/2$ and assume that $X$ and $E$ are independent, we obtain $\exx_{\ddist}\loss(w;Z) = \risk_{\ddist}(w)$. First note that using the independence of $X$ and $E$, we have
\begin{align*}
\exx_{\ddist}\loss(w;Z) = \frac{1}{2}\exx_{\ddist}\left(\langle w-\wstar, X \rangle + E\right)^{2} = \frac{1}{2}\exx_{\ddist}\langle w-\wstar,X \rangle^{2} + \frac{1}{2}\exx_{\ddist}E^{2}.
\end{align*}
Then, noting that
\begin{align*}
\langle w-\wstar, X \rangle^{2} & = \langle w, X \rangle^{2} + \langle \wstar, X \rangle^{2} - 2\langle w, X \rangle\langle \wstar, X \rangle\\
& = \langle XX^{\trans}w, w \rangle + \langle XX^{\trans}\wstar, \wstar \rangle - 2\langle XX^{\trans}\wstar, w \rangle,
\end{align*}
using linearity of the integration and inner product operations, we have $\exx_{\ddist}\loss(w;Z) = \risk_{\ddist}(w)$ with $\Sigma = \exx_{\ddist}XX^{\trans}/2$, $u = -2\Sigma\wstar$, and $a = \langle \Sigma\wstar,\wstar \rangle + \exx_{\ddist}E^{2}/2$.
\end{proof}

\bibliographystyle{apalike}
\bibliography{refs/refs_arxiv}

\end{document}